\newcommand{\E}[1]{\mathbb{E}\left[ #1 \right]}
\DeclareMathOperator{\Ex}{\mathbb{E}}
\DeclareMathOperator{\Bin}{\text{Bin}}
\DeclareMathOperator{\Var}{\text{Var}}
\newcommand{\prob}[1]{\mathbb{P}\left(#1\right)}
\newcommand{\eps}{\epsilon}
\newcommand{\defeq}{\vcentcolon=}
\newcommand{\pihat}{\hat{\pi}}
\newcommand{\vpi}{\bm{\pi}}
\newcommand{\pitwiddle}{\bm{\widetilde{\pi}}}
\newcommand{\Fhat}{\hat{F}}
\newcommand{\F}{\bm{F}}
\newcommand{\p}{\bm{p}}
\renewcommand{\textit}{\emph}
\newcommand{\norm}[1]{\left\lVert#1\right\rVert}
\let\originalmiddle=\middle
\def\middle#1{\mathrel{}\originalmiddle#1\mathrel{}}
\newcommand{\Do}{\text{do}}
\DeclarePairedDelimiter\floor{\lfloor}{\rfloor}
\newcommand\simiid{\mathrel{\overset{\makebox[0pt]{\mbox{\normalfont\tiny iid}}}{\sim}}}
\theoremstyle{plain}
\newtheorem{thm}{Theorem}
\newtheorem{lemma}{Lemma}
\newtheorem{cor}{Corollary}
\newtheorem{defn}{Definition}
\theoremstyle{definition}
\newtheorem{notn}{Notation}
\theoremstyle{remark}
\newtheorem{rk}{Remark}
\newtheorem{eg}{Example}
\newtheorem{asm}{Assumption}
\Crefname{thm}{Theorem}{Theorems}
\Crefname{prop}{Proposition}{Propositions}
\Crefname{rk}{Remark}{Remarks}
\Crefname{eg}{Example}{Examples}
\Crefname{asm}{Assumption}{Assumptions}
\Crefname{defn}{Definition}{Definitions}
\Crefname{cor}{Corollary}{Corollaries}
\newcommand{\co}[1]{\ifthenelse{\boolean{commentsactivated}}{{\color{red} {\em CO: #1 }}}{}}
\newcommand{\ec}[1]{\ifthenelse{\boolean{commentsactivated}}{{\color{teal} {\em EC: #1 }}}{}}
\newcommand{\vc}[1]{\ifthenelse{\boolean{commentsactivated}}{{\color{olive} {\em VC: #1 }}}{}}
\newcommand{\excludeforloft}[1]{\ifthenelse{\boolean{LOFTversion}}{}{#1}}
\newcommand{\excludeforjournal}[1]{\ifthenelse{\boolean{journalversion}}{}{#1}}
\pgfplotsset{compat = 1.18}
\title{Can CDT rationalise the \textit{ex ante} optimal policy via modified anthropics?}
\author{Emery Cooper, Caspar Oesterheld, Vincent Conitzer}
\date{\today}
\begin{document}

\maketitle

\begin{abstract}
In Newcomb's problem, causal decision theory (CDT) recommends two-boxing and thus comes apart from evidential decision theory (EDT) and \textit{ex ante} policy optimisation (which prescribe one-boxing). However, in Newcomb's problem, you should perhaps believe that with some probability you are in a simulation run by the predictor to determine whether to put a million dollars into the opaque box. If so, then causal decision theory might recommend one-boxing in order to cause the predictor to fill the opaque box. In this paper, we study generalisations of this approach. That is, we consider general Newcomblike problems and try to form reasonable self-locating beliefs under which CDT's recommendations align with an EDT-like notion of \textit{ex ante} policy optimisation. We consider approaches in which we model the world as running simulations of the agent, and an approach not based on such models (which we call `Generalised Generalised Thirding', or GGT). For each approach, we characterise the resulting CDT policies, and prove that under certain conditions, these include the \textit{ex ante} optimal policies. %
\end{abstract}

\textbf{Keywords}: Causal decision theory, Newcomb's problem, self-locating beliefs, Sleeping Beauty problem

\section{Introduction}

Consider Newcomb's problem \citep{Nozick}:
\begin{eg}\label{eg:Newcombs}
An agent sees two boxes, one of them transparent, containing a sure \$1000, and the other opaque, containing either \$0 or \$1,000,000. The agent must choose whether to take just the opaque box (to `one-box') or to take both boxes (to `two-box'). Prior to this, a powerful predictor, `Newcomb's Demon', predicted the agent's decision. If Newcomb's Demon predicted that the agent would one-box, the Demon put \$1,000,000 in the opaque box. Otherwise, it left the opaque box empty.
\end{eg}

Two\footnote{Other theories of decision making have also been proposed \cite[e.g.][]{Price2012,Wedgwood2013,Dohrn2015,Levinstein2020}, but are less discussed in the literature.%
}
competing normative theories of decision making have been proposed for dealing with such problems: Evidential Decision Theory (EDT) \cite[e.g.,][]{Horgan1981,Price1986,Ahmed2014} and Causal Decision Theory (CDT) \cite[e.g.,][]{Gibbard1981,Lewis1981,Skyrms1982,Joyce1999,Weirich2016}.
EDT recommends one-boxing in Newcomb's problem, reasoning that conditional on one-boxing the agent expects to earn \$1,000,000, while conditional on two-boxing, it expects to earn only \$1,000. CDT instead recommends two-boxing, on the basis that the agent cannot causally affect the contents of the boxes, and whatever their contents, two-boxing renders the agent \$1,000 better off. Problems in which CDT and EDT come apart are called \emph{Newcomblike}.

We can also evaluate actions from an \textit{ex ante} perspective: what would the agent have wanted to commit to doing at some (hypothetical) earlier point in time, before finding itself in the decision problem \cite[][]{Gauthier1989,Meacham2010}?
The \textit{ex ante} optimal policy in Newcomb's problem is to one-box.\footnote{This comes with some caveats. In general, the CDT \textit{ex ante} optimal policy depends on the \textit{ex ante} perspective taken, and the manner in which the Demon makes its prediction. For instance, if the Demon's prediction is based on observing the agent's DNA at birth, then for any \textit{ex ante} perspective after the agent is born, CDT still recommends two-boxing. Cf.\ \Cref{sec:ex_ante_EU_rk}. Here, we consider an EDT notion of \emph{ex ante} optimality.} Thus, CDT's recommendation in Newcomb's Problem is \textit{ex ante} suboptimal.

Some have argued that the CDT agent ought to have anthropic uncertainty over whether it is the real agent or some kind of simulation of itself that Newcomb's Demon runs in order to predict it (\citealt[p.~12f.]{Neal}; \citealt{Aaronson2005}; \citealt{Taylor2016}; \citealt[p.~37f.]{dese}; \citealt{EaswaranUnpublished}%
).\footnote{The simulation interpretation is one way to rationalise one-boxing on causalist, or perhaps pseudo-causalist, grounds, but other arguments toward the same end have been given \citep{EaswaranUnpublished}. That is, some authors have argued that in some relevant, not merely evidential sense, the decision to one-box influences the content of the opaque box. As a rough example of such an argument, you might imagine that an agent chooses not the physical act, but the logical/Platonic fact of what choice the agent makes, and this choice affects both whether the agent one- or two-boxes, and whether box B contains a million dollars. (Often the goal is to develop a theory that one-boxes, but avoids other (alleged) problems with evidential decision theory, such as misbehaviour in medical Newcomb-like problems.) See, for instance, \citet[Ch.\ 5, 6]{Drescher2006}, \citet{Yudkowsky2010}, \citet{Spohn2012} (cf.\ \citet{Poellinger2013}), \citet{Yudkowsky2018}, \citet{Levinstein2020}. These approaches are similar to the simulation approach in that they use a CDT-like framework and put the agent causally upstream of the prediction. However, the mechanism and the resulting dynamics are quite different. In the simulation approach, the agent is either in a position to influence its physical choice, or to influence the prediction, but not both. In contrast, the approaches discussed in this paragraph have the agent (pseudo-)causally influence both at once.

Outside of the decision theory of Newcomb-like problems, there have been some discussions of the relation between predicting someone and, in some sense, simulating them, see \citet{sep-folkpsych-simulation} for an overview. Since the objective of this literature is so different, it is not clear how it relates to the thesis underlying that present paper: that prediction induces self-location uncertainty.
} If the agent were to assign equal credence to each of these two possibilities, CDT would then recommend one-boxing: If the agent is in Newcomb's Demon's simulation, its action only causally influences whether the opaque box is filled, making one-boxing  \$1,000,000 more valuable than two-boxing in this case. Otherwise, it is the real agent, and one-boxing earns \$1,000 less than two-boxing. Thus, with credence $0.5$ in each, the expected utility of one-boxing is $0.5\times \$1,000,000 - 0.5\times\$1,000 = \$499,500$ more than two-boxing.

Even if Newcomb's Demon does not literally simulate the agent, perhaps the agent should anthropically identify with the prediction anyway. That is, the agent should still think it might be an entity in Newcomb's Demon's mind. One justification for this might be that whether an agent identifies with a particular entity ought to depend only on the behavioural or functional properties of that entity. %
In particular, if the agent believes that something will act exactly as it does (as with Newcomb's Demon's prediction, since the prediction will be equal to its action), perhaps it should treat that thing \emph{as if} it were an exact copy of the agent.

\begin{eg}\label{eg:75Newcomb}
    Consider now a variant of Newcomb's problem in which the predictor is only 75\% accurate -- i.e., Newcomb's Demon has probability 0.75 of filling the opaque box if the agent one-boxes, and probability 0.25 otherwise. We then have that from an \textit{ex ante} perspective, the optimal policy is still to one-box, for an expected utility of \$750,000 compared to \$251,000 for two-boxing. We can again model this as occurring via simulations, for example as follows: Newcomb's Demon flips a coin to determine whether to simulate the agent. If the coin comes up Heads, the Demon simply randomises uniformly to determine whether to fill the opaque box. If the coin comes up Tails, the Demon simulates the agent, and fills the opaque box if and only if the agent one-boxes. How then should the agent divide its credence between the possible worlds, once it finds itself in Newcomb's Problem? In this model, the problem has the same structure as Sleeping Beauty \cite[][]{Elga2000}. There are two possible worlds -- in the Heads world the agent chooses only once (in the real Newcomb's problem), whereas in the Tails world the agent chooses twice (once in the real Newcomb's problem, once in the simulation). In problems such as Sleeping Beauty, probabilities become controversial. On the one hand, the agent faces Newcomb's problem at least once regardless of how the Demon makes its prediction, so its observations would seem to provide no update about which world it is in. According to this line of reasoning -- `Halfing' -- the agent should then assign credence $\frac{1}{2}$ to being in the world where the Demon simply flips a coin (giving probability $\frac{1}{4}$ of being in a simulation). On the other hand, only one third of expected copies of the agent are in the world where the Demon flips the coin. Thus, `Thirding' says that the agent should assign credence $\frac{1}{3}$ to being in that world (giving probability $\frac{1}{3} = \frac{1}{2}\times\frac{2}{3}$ of being in a simulation\footnote{Both approaches divide their credence evenly between copies within a world.}).
\end{eg}

The above example then fits into the setting of \cite{dese} \cite[cf.][]{piccione,Briggs2010}, %
in which all Newcomblike dependence is via exact copies. \citeauthor{piccione} show that, in such a setting, the \textit{ex ante} optimal policy is compatible with Generalised Thirding (GT) combined with CDT (or `CDT+GT').

In this paper, we ask: Can this approach work in general? That is, can we turn any Newcomb-like scenario into an equivalent scenario with simulations (and thus one in which CDT+GT is compatible with the \emph{ex ante} optimal policy)? To illustrate why this question is tricky, consider the following variant on the twin Prisoner's Dilemma (PD) \cite[][]{Brams1975,Lewis1979}:

\begin{eg}\label{eg:SB_PD}
 Dorothea and her identical twin Theodora, who reason about decision problems very similarly, have been caught robbing a bank. They each face a choice of whether to Cooperate (C), and not testify against their twin, or Defect (D), and testify against their twin. There's a catch: Theodora will first be given powerful sleeping medication that prevents her forming new memories, and will twice be awoken to face this decision. If on either occasion Theodora defects, it will incriminate Dorothea -- it is akin to her defecting in the original PD.
\begin{table}
\centering
\begin{tabular}{cc|c|c|c|c|}
  & \multicolumn{1}{c}{} & \multicolumn{4}{c}{Theodora's actions}\\
  & \multicolumn{1}{c}{} & \multicolumn{1}{c}{$C, C$}  & \multicolumn{1}{c}{$C, D$} & \multicolumn{1}{c}{$D, C$} & \multicolumn{1}{c}{$D, D$}\\
  \cline{3-6}
  \multirow{2}*{\shortstack{Dorothea's \\action}}  & $C$ & $3,3$ & $0,5$ & $0,5$ & $0,5$ \\
  \cline{3-6}
  & $D$ & $5,0$ & $2,2$ & $2,2$ & $2,2$ \\
  \cline{3-6}
\end{tabular}
\caption{The utilities for (Dorothea,Theodora) in \Cref{eg:SB_PD}.}\label{tab:utils}
\end{table}
Otherwise, if Theodora cooperates on both awakenings, it is treated as cooperation in the original PD. The rules, including the sleeping medication, are common knowledge. Each twin is self-interested and would like to maximise her own utility (which is decreasing in the length of her prison sentence).
 
 We display the utilities for the twins in \Cref{tab:utils}. We consider mixed strategies for Dorothea -- i.e., we allow Dorothea to randomise between cooperating and defecting. We consider such strategies both because access to randomisation devices is often plausible, and because the (ratificationist) version of CDT that we consider in this paper does not work well when randomisation is not possible, even in much simpler settings (cf.\ \Cref{sec:indep_rand}, and \citet[Section 6.1]{dese}).
Since Theodora reasons very similarly to Dorothea, Dorothea believes that her policy gives her evidence for Theodora's policy. In particular, let's say she thinks that if she cooperates with probability $p$, Theodora will cooperate with probability $F(p)$, for some \textit{dependence function} $F$. We consider two versions of the problem:
    \begin{enumerate*}[leftmargin=*,label=(\roman*)]
    \item Dorothea thinks that Theodora will act similarly to her for middling choices of $p$, but is more indecisive than Dorothea, and as Dorothea's choice tends towards deterministic, Theodora will continue to randomise with some probability. Specifically, $F(p) = \tfrac{1}{6}+2p^2 - \tfrac{4}{3}p^3$, as shown in \Cref{fig:graph_dep}.\label{case:theodorosimple}
    
    \item While Dorothea is making her decision, she will be given medication that with probability 0.2 confuses her and causes her to choose at random, and otherwise leaves her reasoning unaffected. Thus, given that an unmedicated Dorothea would cooperate with probability $p$, the medicated version of Dorothea will instead cooperate with probability $0.8p+0.1$. Then, Dorothea thinks that Theodora will act so as to cooperate overall (i.e., for both her awakenings to cooperate) with the same (\textit{ex ante}) probability as the medicated version of Dorothea, i.e., $F(p) = \sqrt{0.8p+0.1}$.\label{case:theodorocomplex}
\end{enumerate*}
Dorothea can model the medicated version of herself as simulating her with probability 0.8, and otherwise randomising evenly. But with Theodora, the dependency is much more complicated than the linear dependence of examples \Cref{eg:Newcombs,eg:75Newcomb}. Can Dorothea have credence in being somehow instantiated by Theodora such that the CDT recommendation is compatible with the \textit{ex ante} optimal policy?%

\begin{figure}[h]
\centering
\begin{tikzpicture}
\begin{axis}[
title={$F(p)=\frac{1}{6}+2p^2- \frac{4}{3}p^3$},
width = 2.5cm,
height = 2.5cm,
 xlabel={$p$},
ylabel={$F(p)$},
xmin=0, xmax=1,
ymin=0, ymax=1,
ytick={1/6,5/6,1},
yticklabels ={$\sfrac{1}{6}$,$\sfrac{5}{6}$,$1$},
xtick={0,1},
enlargelimits=false,
scale only axis=true
]
\addplot[color=red,domain=0:1]{1/6+2*x^2 - 4*x^3/3};

\end{axis}
\end{tikzpicture}
\caption{The dependence function in the first version of \Cref{eg:SB_PD}.}\label{fig:graph_dep}
\end{figure}
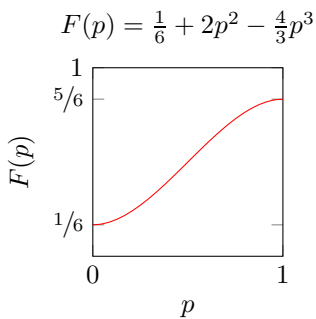
\end{eg}

\subsection*{Contributions}\label{sec:contribs}

In this paper, we analyse formally whether and how self-locating beliefs can be combined with CDT to obtain (EDT-style) \textit{ex ante} optimal policies. We consider a setting in which an agent faces a potentially Newcomblike problem (such as \Cref{eg:Newcombs,eg:75Newcomb,eg:SB_PD}), in which a number of parts of the environment, or `dependants', may depend upon the agent's policy via arbitrary functions (`dependence functions'). We formally introduce this setting in Section \ref{sec:setting}.

\paragraph{Simulation-based approach}\label{para:intro_simulation_based} First, we ask whether we can model Newcomblike scenarios via simulations of the agent, in such a way that CDT applied to the model with simulations gives good results.
Since such a model would then reduce to the (simpler) setting of \citeauthor{piccione}, we could then assign self-locating beliefs in accordance with GT over the simulations, and their result on CDT+GT policies would translate over to our setting. In Section \ref{sec:samples_model}, we consider three different ways to do this:
\begin{enumerate}[leftmargin=*,wide]
    \item The simplest approach is that for a given Newcomblike environment, we give a simulation-based environment that fully models it, and then apply CDT + GT to that, as we did for \Cref{eg:Newcombs,eg:75Newcomb}. More precisely, we model the dependence functions as arising from functions of some number of actions, sampled independently from the agent's policy. This allows us to apply the results on CDT+GT, which gives us that any \textit{ex ante} optimal policy is a CDT+GT policy in the environment with the simulations. We also obtain that the resulting %
    set of policies (of which there may be multiple) does not depend on the details of the model. We give necessary and sufficient conditions for such a simulation model to be possible -- essentially, that the dependence function be expressible as a specific kind of polynomial (\Cref{prop:global_sim,cor:poly_factor_char,cor:A=2,prop:glob_sims_necessary}). This approach, which we discuss in Section \ref{sec:glob_sims_model}, then solves case \ref{case:theodorosimple} of Example \ref{eg:SB_PD}. Consider this case now, recalling that Theodora's policy depends on Dorothea's policy $p$ via $F(p) = \tfrac{1}{6} + 2p^2 - \tfrac{4}{3}p^3$. We may equivalently write this as $\tfrac{1}{6}(1-p)^3 + 3\times\tfrac{1}{6}p(1-p)^2 + 3\times\tfrac{5}{6}p^2(1-p) + \tfrac{5}{6}p^3$. In turn, if $A_1,A_2,A_3\simiid pC + (1-p)D$ (i.e., are random variables taking value $C$ with probability $p$ and value $D$ otherwise), we have that $F(p) = \tfrac{1}{6}\prob{\text{At most 1 of } A_1, A_2, A_3 = C} + \tfrac{5}{6}\prob{\text{At least 2 of } A_1, A_2, A_3 = C}$. Thus, Dorothea can in this case model Theodora as running 3 simulations of her, looking at what action the majority take, and taking the same action as the majority of simulations with probability $\tfrac{5}{6}$ and the other action with probability $\tfrac{1}{6}$. Applying Generalised Thirding, Dorothea thinks she is then herself with probability $\tfrac{1}{2\times3+1}=\tfrac{1}{7}$ and one of Theodora's simulations with probability $\tfrac{6}{7}$ (recall that Theodora chooses an action twice).%
    \item To model a wider range of dependence functions, we look at what happens if we consider a sequence of simulation models that converge (in some sense, which we will specify) to the true environment. We show that this is possible whenever the dependence functions are continuous, and that the \textit{ex ante} optimal policies of the approximate models then converge to \textit{ex ante} optimal policies of the original problem (\Cref{prop:unif_convergence,prop:ex_ante_limit}). As a consequence, there is always a sequence of CDT+GT policies of the approximate problems that converges to the set of \textit{ex ante} optimal policies of the original problem (Corollary \ref{cor:CDTGT_limits}). We discuss this approach in Section \ref{sec:limit_approx}. This approach is applicable to both cases of Example \ref{eg:SB_PD}. %
    \item In our third approach, we again consider modelling the dependence via simulations, but where the nature of these depends non-causally on the agent's policy. We do so in such a way that the simulations capture the \textit{local} dependence on the policy. We introduce and motivate this approach in Section \ref{sec:local_sims}. The resulting set of CDT policies includes the \textit{ex ante} optimal policies and does not depend on the details of the simulation model (Theorem \ref{thm:local_sims_result}). Moreover, when the first simulation-based approach is also applicable, the CDT policies resulting from that approach are identical to those of this approach. %
    We show that we can apply this method exactly when the dependence function is differentiable (Propositions \ref{prop:local_sample_equiv}), and so this solves both cases of Example \ref{eg:SB_PD}.
    \end{enumerate}

\paragraph{General approach} In Section \ref{sec:GGT}, we look at a more parsimonious way to assign self-locating beliefs in our setting, which we call `Generalised Generalised Thirding' (GGT). This approach is applicable whenever the dependence functions are differentiable.
We characterise the GGT+CDT policies (Theorem \ref{thm:main}) and obtain that all \textit{ex ante} optimal policies are GGT+CDT policies (Corollary \ref{cor:main}).
We find that this approach is closely related to the earlier simulation-based approaches: %
It leads to the same set of CDT policies, and the parameters of GGT admit an interpretation in terms of the simulation models (\Cref{prop:local_sample_model}). We discuss this in Section \ref{sec:relating_to_sims}.

\paragraph{Limitations}
Finally, we turn to the limits of our approaches, and to the fundamental limits of trying to obtain the \textit{ex ante} optimal policy via modifying the self-locating beliefs of CDT.

All our approaches require that the dependence function be at least continuous. There exist scenarios with discontinuous dependence functions in which no reasonable theory of self-locating beliefs combined with CDT is always compatible with the \emph{ex ante} optimal policy (cf.\ \Cref{sec:limitations}).

Another assumption we make is that, holding fixed the policies of the different dependants, their actions at different time points are sampled independently at random. In Section \ref{sec:indep_rand}, we show that this assumption also cannot be relaxed.

\section{Setting and background}\label{sec:setting}
\subsection{Our setting}
We are interested in Newcomblike decision problems in which an agent chooses between some set of actions $A$, according to a policy $\pi_0$, which determines (probabilistically) the policies of some number of \textit{dependants}. These are parts of the environment (possibly other agents) that depend (subjectively) on the agent's policy. For instance, this dependence could be via prediction (a la Newcomb's problem or Death in Damascus \cite[][Sect.\ 11]{Gibbard1981}) or via similarity (a la the twin Prisoner's Dilemma, cf.\ \citet{Brams1975}, \citet{Lewis1979}, \citet{Hofstadter1983}). We define \textit{decision problems} for our purposes as follows:

\begin{defn}[Decision problem]
    A \emph{decision problem} is a tuple $(S, S_T, P_0, n, i, A, T, u, \F)$, comprising:
    \begin{itemize}[leftmargin=*, topsep=0pt,itemsep=-1ex,partopsep=1ex,parsep=1ex]
        \item a set of \textit{states} $S$, with a subset $S_T$ of \textit{terminal} states;
        \item an \textit{initial distribution} $P_0 \in \Delta(S-S_T)$ over the non-terminal states;
        \item a number $n \in \mathbb{N}$ of \emph{dependants};
        \item an \textit{index function} $i:S-S_T\rightarrow \{1, \ldots n\}$ mapping non-terminal states to dependants $1, \ldots n$;
        \item a finite set $A$ of \textit{actions} that the agent and dependants choose from;
        \item a \textit{transition function}, a probabilistic mapping $T: (S-S_T)\times A \rightarrow \Delta(S)$, giving, for each non-terminal state $s$ and action $a$, a distribution $T(\cdot \mid s,a)$ over successor states;
        \item a \textit{utility function} $u:S_T\rightarrow \mathbb{R}$ for the agent;
        \item a \textit{dependence function} $\F= (F_1,\ldots,F_n)$, with $\F:\Delta(A)\rightarrow\Delta(A)^n$, mapping the agent's policy onto policies for the dependants.
    \end{itemize}
\end{defn}

Write $\pi_0\in\Delta(A)$ for the agent's policy, which cannot depend on the state (cf.\ \Cref{rk:obs}). Write $\pi_j \in \Delta(A)$ for dependant $j$'s policy, and $\bm{\pi}$ for the joint policy $(\pi_1,\ldots,\pi_n)$ of the dependants. Dependant $j$'s policy depends on the agent's policy via the dependence function: $\pi_j=F_j(\pi_0)$.

    We will write $F_j(a\mid \pi_0) \defeq F_j(\pi_0)(a)$ for the probability of $a$ in the policy $F_j(\pi_0)$.

Decision problems then work as follows:
\begin{enumerate}[nolistsep]
    \item The agent chooses a policy $\pi_0 \in \Delta(A)$.
    \item The dependants' policies are $\bm{\pi} = \F(\pi_0) \in \Delta(A)^n$.\footnote{We assume that dependants all have the same action space as each other and as the agent for convenience only. This assumption is in fact WLOG -- cf.\ \Cref{sec:act_space_WLOG}.}
    \item An initial state $s_0$ is sampled, according to the initial distribution $P_0$.
    \item Dependant $i(s_0)$ is associated with state $s_0$. An action, $a_0$ is drawn from its policy $\pi_{i(s_0)}$.
    \item A successor state $s_1$ is drawn from the distribution given by the transition function $T(\cdot\mid s_0, a_0)$.
    \item The previous two steps are repeated until a terminal state is reached, each time drawing actions and successor states independently of any previous events.
    \item When terminal state $s_t$ is reached, the agent obtains utility $u(s_t)$, and the decision problem ends.
\end{enumerate}

Given a policy $\pi_0$ this induces a stochastic process: a Markov chain (with initial distribution $P_0$ and transition matrix $T_{s,s'} = \E{T(s'\mid s, A_t)\mid A_t\sim \F_{i(s)}(\pi_0)}$).\footnote{Strictly, for this to be a Markov chain, the transition matrix should say what happens at terminal states. We may simply treat these as absorbing states (i.e., set $T_{s,s} =1$ for $s\in S_T$), although we will not be concerned with anything that might happen after a terminal state is reached.}

To be able to apply theories of self-locating beliefs, and for CDT to be well-defined, we need this process to terminate with probability 1. We make the following assumption throughout:

\begin{asm}\label{asm:finite_hist}
For all non-terminal states $s\in S$, and all policies $\bm{\pi} = (\pi_1,\ldots,\pi_n)$, the decision problem almost surely terminates if we start in state $s$ and dependants follow policy $\bm{\pi}$. %
\end{asm}

\begin{rk}[Observations and copies]\label{rk:copies}
    Note that while the agent itself does not directly appear in the decision problem, if dependant $j$'s dependence function $F_j$ is the identity, it is effectively a copy of the agent. That said, our setting does not necessarily have enough information to say what naive CDT (without modified anthropics) would do -- in particular, we do not specify observations associated with each state, which would usually be used by the agent to determine which state it might be at. Thus, our setting does not distinguish exact copies of the agent (who have all the same observations as the agent) with decision theoretic copies of the agent (who act the same as the agent in all relevant situations). We discuss the reasons for not including different observations, and give results for a possible generalisation of our setting to multiple observations in \Cref{rk:obs}.
\end{rk}

Note that our setting is similar to that of a `single-player game of imperfect recall' from \cite{dese}. The main difference is the introduction of dependants whose actions depend probabilistically on the action of the agent, rather than only including exact copies of the agent.

\begin{eg}[continues= eg:SB_PD]\label{eg:SB_PD_formal}
We now formalise \ref{eg:SB_PD} as a decision problem in our setting, illustrated in \Cref{fig:SB_PD}. We have non-terminal states $S-S_T = \{\mathit{Th}_0,\mathit{Th}_C,\mathit{Th}_D,\mathit{Do}_C,\mathit{Do}_D\}$ and terminal states $S_T = \{0,2,3,5\}$. We may interpret $\mathit{Th}_0$ (resp. $\mathit{Th}_C$ or $\mathit{Th}_D$) as the state at which Theodora makes her first (resp. second) decision, and $\mathit{Do}_C$ or $\mathit{Do}_D$ as the state at which Dorothea makes her decision. We model Theodora's decisions as occurring first, followed by Dorothea's decision. The initial distribution  is $P_0(\mathit{Th}_0)=1$. There are two dependants: dependant 1 is Dorothea, and dependant 2 is Theodora. The index function $i$ is 2 on $\{\mathit{Th}_0, \mathit{Th}_C, \mathit{Th}_D\}$ and 1 on $\{\mathit{Do}_C,\mathit{Do}_D\}$. The actions are $A=\{C,D\}$. The transition function is $T(Th_a\mid \mathit{Th}_0,a) = 1$ (for $a=C,D$), $T(\mathit{Do}_C\mid \mathit{Th}_C, C) = 1$, $T(\mathit{Do}_D\mid \mathit{Th}_C,D) = 1$, $T(\mathit{Do}_D\mid \mathit{Th}_D, a) = 1$ for all $a$. $T(3\mid \mathit{Do}_C,C)=T(5\mid \mathit{Do}_C,D)= T(0 \mid \mathit{Do}_D,C)=T(2\mid \mathit{Do}_D,D) = 1$. The utility function is $u(j) = j$ for $j=0,2,3,5$.

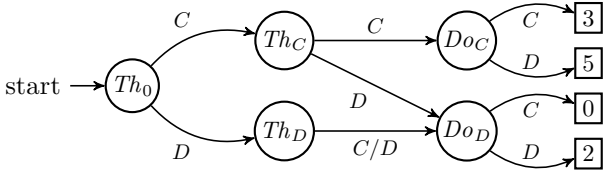
\begin{figure}
\centering
\begin{tikzpicture}[->, >=stealth', auto, semithick, node distance= 0.75cm and 2.75cm, on grid,
	accepting/.style = {shape = rectangle, minimum size=0.5pt, inner sep = 3pt},
	every state/.style = {fill=none,draw=black,thick,text=black, minimum size = 0.5pt, scale=0.9, inner sep = 1pt},
	every text node part/.style={align=center}]
	\node[state,initial]    (Th0) {$\mathit{Th}_0$};
	\node[state]    (ThC)[above right = 0.6cm and 2cm of Th0] {$\mathit{Th}_C$};
	\node[state]    (ThD)[below right = 0.6cm and 2cm of Th0]   {$\mathit{Th}_D$};
	\node[state]    (DoC)[right = 2.4cm of ThC] {$\mathit{Do}_C$};
	\node[state]    (DoD)[right = 2.4cm of ThD] {$\mathit{Do}_D$};
    \node[state,accepting]    (3)[above right = 0.3cm and 1.6cm of DoC]{3};
    \node[state,accepting]    (5)[below right = 0.3cm and 1.6cm of DoC]{5};
    \node[state,accepting]    (0)[above right = 0.3cm and 1.6cm of DoD]{0};
    \node[state,accepting]    (2)[below right = 0.3cm and 1.6cm of DoD]{2};
	\path
	(Th0) edge[bend left, above left]	node[scale = 0.8]{$C$}	(ThC)
	(Th0) edge[bend right, below left]		node[scale = 0.8]{$D$}	(ThD)
	
	(ThC) edge node[scale = 0.8]{$C$} (DoC)
	(ThC) edge node[scale = 0.8, below left, distance = 0.5cm]{$D$} (DoD)
	
	(ThD) edge node[below, scale = 0.8]{$C/D$} (DoD)

	(DoC) edge[bend left, below] node[scale = 0.8]{$C$} (3)
	(DoC) edge[bend right, above] node[scale = 0.8]{$D$} (5)
	(DoD) edge[bend left, below] node[scale = 0.8]{$C$} (0)
	(DoD) edge[bend right, above] node[scale = 0.8]{$D$} (2);
	\end{tikzpicture}
 \caption{A graph of \Cref{eg:SB_PD}. The nodes show the states and the edges show the transitions (all deterministic). The utilities of the terminal states are the same as the states' labels.}\label{fig:SB_PD}
\end{figure}
We considered two versions of the problem, with different dependence functions. We now express these as a function from $\Delta(A)$ to itself, writing probabilities of $C$ before $D$. For the first version, we have $F_1(\pi_0) = \pi_0$, and $F_2(\pi_0) = (\tfrac{1}{6}+2\pi_0(C)^2 - \tfrac{4}{3}\pi_0(C)^3,\tfrac{5}{6}-2\pi_0(C)^2+ \tfrac{4}{3}\pi_0(C)^3)$. For the second, we have $F_1(\pi_0) = 0.2\times(0.5,0.5)+0.8\pi_0$ and $F_2(\pi_0)= (\sqrt{0.1+0.8\pi_0(C)}, 1- \sqrt{0.1+0.8\pi_0(C)})$.

As an example, suppose that in the first version of this problem, Dorothea cooperates deterministically. Then Theodora will cooperate with probability $\tfrac{5}{6}$ at each point when she acts. The initial state will be $s_0 = \mathit{Th}_0$, followed by $s_1 = \mathit{Th}_C$ with probability $\tfrac{5}{6}$ and $s_1 = \mathit{Th}_D$ otherwise. Then, we arrive in state $s_2=\mathit{Do}_C$ with probability $\tfrac{5}{6}\prob{s_1 = \mathit{Th}_C} = \tfrac{25}{36}$, and $s_2=\mathit{Do}_D$ otherwise. Since Dorothea will then cooperate, her \textit{ex ante} expected utility (cf.\ \Cref{defn:ex_ante_EU}, p.\ \pageref{defn:ex_ante_EU}) is then $3\prob{s_2 = \mathit{Do}_C} = \tfrac{25}{12}$.
\end{eg}

\begin{notn}
    For any policy $\tau \in \Delta(A)$, denote by $T(\cdot\mid s,\tau)$ the distribution of successor states induced by choosing an action according to $\tau$. That is, 
    $
        T(s'\mid s, \tau) \defeq \sum_{a\in A}\tau(a) T(s'\mid s,a) = \E{T(s'\mid s, A_t) \mid A_t \sim \tau}.
    $
\end{notn}

\begin{defn}[History]
    A \textit{history} is a finite sequence of states $s_0 s_2 \cdots s_t$, where $s_0,\ldots,s_{t-1}\in S-S_T$ are non-terminal, and $s_t\in S_T$ is terminal. Given dependants' policies $\bm{\pi}= \F(\pi_0)$, our setting induces a distribution over histories:\footnote{
    This is the probability, given $\bm{\pi}$, of the induced Markov chain having this sequence of states (up to and including the time step in which it hits a terminal state).%
    }
    $
        \prob{s_0\cdots s_t\mid \bm{\pi}} = P_0(s_0)T(s_1\mid s_0, \pi_{i(s_0)})\cdots T(s_t\mid s_{t-1}, \pi_{i(s_{t-1})}).
    $
    Write $\mathcal{H}$ for the set of histories.
\end{defn}

\begin{notn}
    By $\mathbb{P}^s$ denote the probability distribution over histories induced by starting the decision problem deterministically at state $s$. Similarly, write  $\Ex^s$ for the expectation operator when %
    starting the decision problem at state $s$ (instead of from the initial state distribution $P_0$).%
\end{notn}

\begin{restatable}{lemma}{exphistlengthfinite}\label{cor:exp_hist_length_finite}
    Given Assumption \ref{asm:finite_hist}, the expected history length is finite.
\end{restatable}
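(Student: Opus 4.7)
Fix any agent policy $\pi_0$ and the resulting dependants' policies $\bm{\pi}=\F(\pi_0)$; this induces a time-homogeneous Markov chain on $S$ with $S_T$ absorbing. Let $L$ denote the history length (the hitting time of $S_T$). Treating $S$ as finite (as in the examples and the informal description of the setting), the plan is to leverage almost-sure termination from every state to extract a \emph{uniform} one-step-type tail bound, and then iterate it via the Markov property to get a geometric tail on $L$.

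\textbf{Step 1: uniform tail bound.} By \Cref{asm:finite_hist}, for every $s\in S-S_T$ we have $\mathbb{P}^s(L<\infty)=1$, so $\mathbb{P}^s(L>N)\downarrow 0$ as $N\to\infty$. Hence for each $s$ there exists $N_s$ with $\mathbb{P}^s(L>N_s)\le\tfrac{1}{2}$. Set $N\defeq\max_{s\in S-S_T}N_s$, which is finite because $S$ is finite. Since $\{L>N\}\subseteq\{L>N_s\}$ whenever $N\ge N_s$, we obtain
$$\sup_{s\in S-S_T}\mathbb{P}^s(L>N)\le\tfrac{1}{2}.$$

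\textbf{Step 2: iterate via the Markov property.} For any $s\in S-S_T$ and integer $k\ge 1$, condition on the state $s_{(k-1)N}$ at time $(k-1)N$; on the event $\{L>(k-1)N\}$ this state is non-terminal, and by the Markov property the conditional probability that the chain has still not been absorbed $N$ steps later equals $\mathbb{P}^{s_{(k-1)N}}(L>N)\le\tfrac{1}{2}$. Thus $\mathbb{P}^s(L>kN)\le\tfrac{1}{2}\,\mathbb{P}^s(L>(k-1)N)$, and by induction $\mathbb{P}^s(L>kN)\le 2^{-k}$.

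\textbf{Step 3: bound the expectation.} Using $\Ex^s[L]=\sum_{t\ge 0}\mathbb{P}^s(L>t)$ and grouping into blocks of length $N$,
$$\Ex^s[L]\;\le\;\sum_{k=0}^{\infty}N\cdot\mathbb{P}^s(L>kN)\;\le\;N\sum_{k=0}^{\infty}2^{-k}\;=\;2N.$$
Averaging over the initial distribution yields $\Ex^{P_0}[L]=\sum_{s}P_0(s)\Ex^s[L]\le 2N<\infty$, as required.

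\textbf{Main obstacle.} The only real subtlety is the move from pointwise to uniform tail decay, i.e.\ getting a single $N$ that works from every starting state; this is exactly where finiteness of $S$ is used (and where the proof would need a compactness/irreducibility argument otherwise). Everything else is the standard ``a.s.\ absorption plus Markov $\Rightarrow$ geometric absorption time'' trick, so no delicate estimation is required once Step 1 is in hand.
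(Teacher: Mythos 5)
Your proof is correct and uses essentially the same argument as the paper: almost-sure termination plus finiteness of $S$ gives a uniform-over-states horizon $N$ with $\sup_s\mathbb{P}^s(L>N)\le\tfrac12$, and iterating via the Markov property yields a geometric tail and hence a finite expectation. The only difference is that the paper's underlying lemma additionally makes the bound uniform over all dependant policies $\bm{\pi}$ (by reducing mixed policies to the finitely many pure ones, at the cost of a factor $|A|^{t}$), a stronger statement it needs elsewhere, whereas your per-policy bound already suffices for the lemma as stated.
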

\excludeforjournal{
\begin{proof}
    See \Cref{sec:hist_length_exp}.
\end{proof}
}

\subsection{Definitions and prior work}
We shall be concerned with the following question:
How should the agent choose its policy so as to maximise its expected utility? There are different ways of interpreting this question. These vary along the dimensions of self-locating beliefs and decision theory.

First, we turn to \textit{ex ante} expected utility:
\begin{defn}[\textit{Ex ante} expected utility]\label{defn:ex_ante_EU}
    Given a decision problem $(S,S_T,P_0,n,i,A,T,u,\F)$, the \textit{ex ante expected utility} of dependants following policy $\bm{\pi}$ is the expected utility of the terminal state in the history: $\E{u\mid \bm{\pi}} \defeq \sum_{s_0\cdots s_t \in \mathcal{H}}u(s_t)\prob{s_0\cdots s_t\mid\bm{\pi}}$.
    Similarly, the \textit{ex ante expected utility} of the agent following policy $\pi_0$ is $\E{u\mid \pi_0} \defeq \E{u\mid \bm{\pi}=\F(\pi_0)}$.
\end{defn}
\begin{rk}
    Note that our definition is the EDT \textit{ex ante} expected utility. That is, it is obtained by simply conditioning on our policy $\pi_0$. For discussion of how this differs from the CDT \textit{ex ante} expected utility (which we do not consider here), see \Cref{sec:ex_ante_EU_rk}.%
\end{rk}

We introduce the following notation. We sometimes use $a$ to refer to the pure policy $\pi\in \
\Delta(A)$ which chooses action $a$ with probability 1. We write $x_{1:n}$ for the tuple $(x_1,\ldots,x_n)$, and  $x_{-j}$ for the tuple $(x_1,\ldots,x_{j-1},x_{j+1},\ldots,x_n)$.

We will now consider \textit{de se} approaches to decision making. In \textit{de se} approaches, the agent forms beliefs about where it is and evaluates the consequences of taking an action at this particular decision point (as opposed to evaluating a policy by looking at the scenario as a whole).

\begin{defn}[(Theory of) self-locating beliefs]
Given a decision problem, \emph{self-locating beliefs} are probability distributions $P$ over non-terminal states $s$.
A \textit{theory of self-locating beliefs} $X$ is then a map from (some subset of) decision problems $D$ and policies $\pi_0$ to self-locating beliefs $P_X(\pi_0,D)$ for $D$. We assume that the self-locating beliefs do not depend on the utility function in $D$.
\end{defn}

We now define \textit{Generalised Thirding}, a theory of self-locating beliefs proposed by \citet{piccione} as \textit{consistency}. Generalised Thirding is also sometimes called the \textit{Self Indication Assumption} \cite[][]{Bostrom2010}.
Roughly, according to Generalised Thirding (GT), one should assign credence to being in a particular state equal to the proportion of expected agents who are in that state. For instance, in \Cref{eg:75Newcomb}, there are three possible states: being the real agent when the coin came up heads, being the real agent when the coin came up tails, and being the simulated agent when the coin came up tails. The number of expected agents in each of these states is $\tfrac{1}{2}$ (note that in the tails world, the agent appears at both states), and so GT assigns an equal credence of $\tfrac{1}{3}$ to each state (and hence $\frac{2}{3}$ to the coin having come up tails). %

For our setting, we define Generalised Thirding such that we treat all dependants as copies of the agent, since we will only apply it in the case where all dependence functions are the identity:

\begin{notn}
Let $\#(s)$ (resp.\ $\#(j)$) be the number of instances of state $s$ (resp.\ dependant $j$) in the history.
\end{notn}

\begin{defn}[Generalised Thirding]
    Generalised Thirding (GT) is a theory of self-locating beliefs defined by, for $s\in S-S_T$, when the agent has policy $\pi_0$, 

    \begin{equation}
    P_{GT}(s \mid \pi_0) \defeq \frac{\E{\#(s)\mid \pi_0}}{\sum\limits_{s'\in S-S_T} \E{\#(s')\mid \pi_0}}.
    \end{equation}
    
\end{defn}

\begin{rk}
    By  \Cref{cor:exp_hist_length_finite} the expected history length is finite, ensuring that the GT credences are well-defined.
\end{rk}

We will now define CDT, to which we will take a \textit{ratificationist} approach \cite[cf.][Sect.\ 3.5]{piccione,GranCanaria,dese,Weirich2016}. %
In our setting, CDT works by considering what would happen if the agent counterfactually took a particular action right now at its current state, without changing anything about what happens at other time steps (in particular, without taking into account any evidence it might gain about its, or its dependants', actions at other time steps). For instance, in \Cref{eg:75Newcomb}, the CDT expected utility of one-boxing minus that of two-boxing in the simulation is $\$1,000,000$, and the CDT expected utility of one-boxing minus that of two-boxing as the real agent is $-\$1,000$. Based on its theory of self-locating beliefs, the CDT agent then aggregates over states to compute the overall expected causal value of taking each action.

Now, in general, the causal impact of an agent taking a particular action depends on the agent's policy -- whether it is good to take a particular action at a particular state can depend on what will happen in the future, and in particular on how the agent will act at future states. Thus, to evaluate the CDT expected utility of the agent taking action $a$ at state $s$, we require beliefs as to how the agent behaves at different points in time. A policy is \textit{ratifiable} if it is causally optimal assuming that the agent follows that same policy. This is an equilibrium-like concept: $\pi_0$ is ratifiable if given that a CDT agent expects to follow $\pi_0$, the agent does not wish to deviate from $\pi_0$. In general there may be multiple ratifiable policies (cf.\ e.g.\ \citet[Section 6.3]{dese}, \citet{aumannhartperry}), or even no ratifiable policies (cf. \citet[Section IV.4]{CDTDutchbook}). We now formally define CDT expected utility (using `$\Do$' notation for causal counterfactuals \citep{Pearl_2009}). %

\begin{defn}[CDT expected utility]
    The \emph{CDT expected utility} of an agent taking action $a$, given a theory of self-locating beliefs $X$ and assuming that the agent follows policy $\pi_0$ at other time points is:
    \begin{align*}
        &\Ex_X[u\mid do(a),\pi_0]\\
        &\defeq \E{\Ex^{s'}[u\mid \pi_0]\middle| s\sim P_X(\cdot\mid \pi_0), s' \sim T(\cdot\mid s, a)}\\
        &\defeq \sum_{s} P_X(s\mid \pi_0)\sum_{s'}T(s'\mid s,a)\Ex^{s'}[u\mid\pi_0]
    \end{align*}
\end{defn}

\begin{notn}
    Given $\gamma \in \Delta(A)$, write $\Ex_X[u\mid \Do(\gamma),\pi_0]$ for $\E{\Ex_X[u\mid \Do(A_1),\pi_0]\mid A_1\sim\gamma} = \sum_a \gamma(a) \Ex_X[u\mid \Do(a),\pi_0]$.
\end{notn}
We are now ready to define CDT policies:
\begin{defn}[CDT+X policy]\label{def:CDT_policies}
    Suppose $X$ is a theory of self-locating beliefs. Then $\pi_0$ is a \emph{CDT+X (compatible) policy} for the agent, or is \emph{ratifiable} if $\Ex_X[u\mid \Do(\pi_0),\pi_0] = \max_a \Ex_X[u\mid \Do(a),\pi_0]$.
    Equivalently, $\pi_0$ is a CDT+X policy if for all $a$ with $\pi_0(a)>0$, we have $\Ex_X[u\mid \Do(a),\pi_0] = \max_{a'} \Ex_X[u\mid \Do(a'),\pi_0]$.
\end{defn}

Mathematically, the CDT+GT expected utility of an action $a$ is closely related to the derivative of the ex ante expected utility in the direction of $a$. We define derivatives on the simplex as follows:

\begin{defn}[Differentiability/derivatives on the simplex]\label{def:differentiability}
    Write $\frac{\partial}{\partial \pi_0(a)}$ for the directional derivative of a function on $\Delta(A)$, in the direction $(a - \pi_0)$, viewed as a vector. That is:
    \begin{equation*}
        \frac{\partial f(\pi_0)}{\partial \pi_0(a)} \defeq \nabla_{a-\pi_0}f(\pi_0) =   \lim_{\eps\rightarrow 0} \frac{f(\pi_0+\eps(a - \pi_0)) - f(\pi_0)}{\epsilon}.
    \end{equation*}
    We say $f$ is \textit{differentiable} at $\pi_0$ if $\frac{\partial f}{\partial \pi_0(a)}$ exists for all $a$, and
    \begin{equation*}
        f(\pi_0+\eps(\pi_0'-\pi_0)) = f(\pi_0) + \eps\sum_a \pi_0'(a) \frac{\partial f}{\partial \pi_0(a)} + o(\eps).
    \end{equation*}

\end{defn}

\cite{piccione} show that, in their setting (without dependence functions), any \textit{ex ante} optimal policy is a CDT+GT policy \cite[cf.][]{Briggs2010}. \cite{dese} give an explicit characterisation of CDT+GT policies, based on \citeauthor{piccione}'s proof. We restate these results here in terms of our setting.

\begin{thm}[\citeauthor{dese}, \citeyear{dese}, Theorem 4]\label{thm:CDT+GT}
Suppose we have a decision problem in which
the dependence functions are all the identity. Then $\pi_0$ is a CDT+GT policy if and only if for all $a \in A$ we have $\frac{\partial}{\partial \pi_0(a)}\E{u\mid \pi_0} \leq 0$.
\end{thm}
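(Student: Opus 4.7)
The plan is to derive an explicit policy-gradient-style identity relating the directional derivative $\frac{\partial}{\partial \pi_0(a)} \Ex[u \mid \pi_0]$ to the CDT+GT expected utilities $\Ex_{GT}[u \mid \Do(a), \pi_0]$ and $\Ex_{GT}[u \mid \Do(\pi_0), \pi_0]$, and then read off the equivalence. The crucial simplification, when every dependence function is the identity, is that at every non-terminal state $s$ an action is drawn independently from $\pi_0$. Hence the induced chain has kernel $T(s' \mid s, \pi_0) = \sum_a \pi_0(a) T(s' \mid s, a)$, and the \emph{ex ante} expected utility depends on $\pi_0$ only through these per-visit mixtures.

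Concretely, I introduce the value function $V(s, \pi_0) \defeq \Ex^s[u \mid \pi_0]$, its action-value $Q(s, a, \pi_0) \defeq \sum_{s'} T(s' \mid s, a) V(s', \pi_0)$, the expected visit count $\mu(s, \pi_0) \defeq \E{\#(s) \mid \pi_0}$, and their total $M \defeq \sum_s \mu(s, \pi_0)$, which is finite by \Cref{cor:exp_hist_length_finite} under \Cref{asm:finite_hist}; by construction $P_{GT}(s \mid \pi_0) = \mu(s, \pi_0)/M$. The target identity is
\[
\frac{\partial}{\partial \pi_0(a)} \Ex[u \mid \pi_0] = \sum_s \mu(s, \pi_0) \bigl[ Q(s, a, \pi_0) - V(s, \pi_0) \bigr].
\]
I expect the main technical obstacle to be justifying this formula rigorously, since the sum over histories $\sum_h u(s_t) P(h \mid \pi_0)$ is generally infinite. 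Two clean routes seem available: differentiate the Bellman equation $V(s, \pi_0) = \sum_a \pi_0(a) Q(s, a, \pi_0)$ and unroll the resulting linear recursion for $\partial V/\partial \pi_0(a)$ using the finite expected hitting times guaranteed by \Cref{asm:finite_hist}; or truncate histories at length $L$, apply the product rule term-by-term (each perturbed factor contributes $T(s_{k+1}\mid s_k, a) - T(s_{k+1}\mid s_k, \pi_0)$ while the rest factorise via the Markov property into $\mu(s_k, \pi_0)$ on the past side and $V(s_{k+1}, \pi_0)$ on the future side), and pass to the limit via dominated convergence using the tail bound on history length.

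Once the identity is in hand, dividing both sides by $M > 0$ yields
\[
\frac{1}{M} \cdot \frac{\partial}{\partial \pi_0(a)} \Ex[u \mid \pi_0] = \Ex_{GT}[u \mid \Do(a), \pi_0] - \Ex_{GT}[u \mid \Do(\pi_0), \pi_0],
\]
where I use $\Ex_{GT}[u \mid \Do(\pi_0), \pi_0] = \sum_s P_{GT}(s\mid \pi_0) V(s, \pi_0)$, which follows by plugging the Bellman equation into the definition of CDT expected utility. The theorem then drops out from unpacking \Cref{def:CDT_policies}: $\pi_0$ is a CDT+GT policy iff $\Ex_{GT}[u \mid \Do(a), \pi_0] \leq \Ex_{GT}[u \mid \Do(\pi_0), \pi_0]$ for every $a$ (the reverse inequality being automatic since $\Ex_{GT}[u \mid \Do(\pi_0), \pi_0]$ is a $\pi_0$-convex combination of the $\Ex_{GT}[u \mid \Do(a), \pi_0]$), which by the identity is equivalent to $\frac{\partial}{\partial \pi_0(a)} \Ex[u \mid \pi_0] \leq 0$ for every $a$.
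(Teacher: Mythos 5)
Your proposal is correct, and its architecture matches the paper's: the theorem is read off from a policy-gradient identity equating $\frac{\partial}{\partial \pi_0(a)}\E{u\mid\pi_0}$ with $M\bigl(\Ex_{GT}[u\mid\Do(a),\pi_0]-\Ex_{GT}[u\mid\Do(\pi_0),\pi_0]\bigr)$, plus the observation that the ratifiability condition in \Cref{def:CDT_policies} is equivalent to all action counterfactuals being weakly dominated by the $\pi_0$-mixture. Note that the paper does not reprove this statement directly (it is imported from \citeauthor{dese}); the closest in-paper argument is \Cref{lemma:ex_ante_deriv} together with \Cref{lemma:main}, which establish a strict generalisation to arbitrary differentiable dependence functions, and the identity proved there specialises exactly to yours when $\F$ is the identity (since then $\Ex^{T(a,s)}[u\mid\pi_0]=Q(s,a,\pi_0)$ and $\Ex^{T(\pi_0,s)}[u\mid\pi_0]=V(s,\pi_0)$). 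Where you differ is in how the identity is established: the paper uses a probabilistic deviation coupling — i.i.d.\ Bernoulli$(\eps)$ indicators marking at which time steps the chain follows the perturbed policy, conditioning on the number $D$ of deviations, showing the $D\geq 2$ contribution is $O(\eps^2)$ via the uniform bound on expected history length, and extracting the first-order term from the $D=1$ event via the strong Markov property — whereas you propose either differentiating and unrolling the Bellman recursion or a truncate-and-product-rule argument with dominated convergence. Your truncation route is morally the analytic twin of the paper's $D=1$ extraction, while the Bellman route is cleaner but requires you to first justify differentiability of $V(\cdot,\pi_0)$ and convergence of the unrolled series; both are viable given \Cref{asm:finite_hist} and the uniform expected-length bound in \Cref{cor:exp_hist_length_finite}. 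The one thing to flag is that you leave this central step as a sketch with two candidate routes rather than carrying one out — that is where essentially all of the work in the paper's proof lives — but the surrounding steps ($M\geq 1>0$, the Bellman identity $\Ex_{GT}[u\mid\Do(\pi_0),\pi_0]=\sum_s P_{GT}(s\mid\pi_0)V(s,\pi_0)$, and the convex-combination argument) are all correct as stated.
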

\begin{cor}[\citeauthor{piccione}, \citeyear{piccione}, Proposition 3]\label{cor:CDT+GT}
Suppose we have a decision problem in which
the dependence functions are all the identity. Then all ex ante optimal policies are CDT+GT policies.
\end{cor}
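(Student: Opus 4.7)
The plan is to deduce the corollary directly from \Cref{thm:CDT+GT} via the first-order optimality condition for maximisation over the simplex $\Delta(A)$. Since \Cref{thm:CDT+GT} gives a clean characterisation of CDT+GT policies as those $\pi_0$ for which every directional derivative $\frac{\partial}{\partial\pi_0(a)}\E{u\mid \pi_0}$ is non-positive, it suffices to show that an ex ante optimal policy satisfies exactly this condition.

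Let $\pi_0^\star \in \arg\max_{\pi_0\in\Delta(A)}\E{u\mid\pi_0}$. For any $a\in A$, the pure policy $a\in\Delta(A)$ yields a segment $\pi_0^\star + \eps(a-\pi_0^\star) = (1-\eps)\pi_0^\star + \eps a$ that lies in $\Delta(A)$ for every $\eps\in[0,1]$, since $\Delta(A)$ is convex. By optimality of $\pi_0^\star$, we have
\begin{equation*}
    \E{u \mid \pi_0^\star + \eps(a - \pi_0^\star)} - \E{u\mid \pi_0^\star} \leq 0
\end{equation*}
for all such $\eps$. Dividing by $\eps>0$ and taking $\eps\to 0^+$ (using the directional derivative in \Cref{def:differentiability}) gives $\frac{\partial}{\partial\pi_0^\star(a)}\E{u\mid \pi_0^\star}\leq 0$. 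Since $a$ was arbitrary, \Cref{thm:CDT+GT} then yields that $\pi_0^\star$ is a CDT+GT policy.

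The main obstacle, and the only thing that is not entirely routine, is verifying that the directional derivative actually exists at $\pi_0^\star$, since \Cref{thm:CDT+GT} is phrased as if these derivatives are well-defined objects. When all dependence functions are the identity, the probability of each history $s_0\cdots s_t$ is a monomial in the entries $\pi_0(a)$, so $\E{u\mid \pi_0} = \sum_{s_0\cdots s_t \in \mathcal{H}} u(s_t)\,\mathbb{P}(s_0\cdots s_t\mid \pi_0)$ is a (generally infinite) sum of polynomials in $\pi_0$. I would justify differentiation under the sum by using \Cref{cor:exp_hist_length_finite}: the expected history length is finite, which gives a dominating bound uniform in a neighbourhood of $\pi_0^\star$ (strictly, one should work in the relative interior, or restrict attention to small $\eps\geq 0$ along the segment to $a$, which is all that the one-sided directional derivative requires). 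With this technical justification in place, the first-order argument above goes through unchanged and the corollary follows immediately from \Cref{thm:CDT+GT}.
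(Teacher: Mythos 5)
Your proposal is correct and is essentially the route the paper takes: the corollary is treated as an immediate consequence of \Cref{thm:CDT+GT} via the first-order optimality condition on the convex set $\Delta(A)$ (the same one-line argument the paper uses to get \Cref{cor:main} from \Cref{thm:main}). Your worry about existence of the directional derivatives is legitimate but already covered by the paper's machinery — \Cref{lemma:ex_ante_deriv} (resting on \Cref{cor:exp_hist_length_finite}) establishes differentiability of the ex ante utility, which is the same ingredient your dominated-convergence argument would supply.
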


\subsection{Generalised theories of self-locating beliefs}
We will sometimes want a more complex model of what happens given that we are at a particular state. For instance, we might think that a dependant's action depends on our action in some way other than just doing what we do (as in \Cref{eg:SB_PD}). Alternatively, the dependant might be a decision-theoretic copy who faces a decision isomorphic to our own, but where the analogous actions have different labels. We thus introduce the notion of generalised theories of self-locating beliefs that allow for such models:

\begin{defn}[Transformation functions]
    Given a decision problem with $n$ dependants, action set $A$, and a policy $\pi_0$, \emph{transformation functions} are functions $\tau_1,\ldots,\tau_n$ from actions to policies: $\tau_i(\cdot, \pi_0):A\rightarrow \Delta(A)$.
\end{defn}

\begin{defn}[Generalised theory of self-locating beliefs]
    A \emph{(generalised) theory of self-locating beliefs} is a map from (some subset of) decision problems $D$ and policies $\pi_0$ to self-locating beliefs and sets of transformation functions, independent of the utility function in $D$.
\end{defn}

In a generalised theory of self-locating beliefs, given that the agent thinks it is at a state associated with dependant $j$, it believes that when it takes a particular action, its action will be transformed according to the relevant transformation function to obtain the action of the dependant at the state. The special case where all transformation functions are always just the identity is then equivalent to a non-generalised theory of self-locating beliefs.

We then define the CDT expected utility given a generalised theory of self-locating beliefs as follows:

\begin{defn}[CDT expected utility under generalised self-locating beliefs]
        If $X$ is a generalised theory of self-locating beliefs with transformation functions $\tau_{1:n}$, the \emph{CDT+$X$ expected utility} is:
    \begin{align*}
        &\Ex_X[u\mid \Do(a),\pi_0] \defeq\\ %
        &\sum_{s} P_X(s\mid\pi_0)\sum_{s'}T(s'\mid s,\tau_{i(s)}(a,\pi_0))\Ex^{s'}[u\mid\pi_0].
    \end{align*}
\end{defn}

For a generalised theory of self-locating beliefs $X$, the CDT+$X$ policies are defined as in \ref{def:CDT_policies} (with respect to the CDT+$X$ expected utility as defined above).

\section{Self-locating beliefs based on simulation models}\label{sec:samples_model}
\subsection{Global simulation models}\label{sec:glob_sims_model}
Consider a Newcomblike decision problem. Suppose that we can write each dependence function $F_i(\pi_0)$ as sampling from $\pi_0$ and then applying a function to the samples. That is, for some $N\in\mathbb{N}$ and $g:A^N\rightarrow \Delta(A)$, we have that $F_i(\pi_0)$ gives the strategy resulting from applying $g$ to $N$ samples of actions from $\pi_0$:
\begin{equation}\label{eq:global_samples}
\begin{split}
    F_i(\pi_0) &= \sum_{a_{1:N}} \prob{A_{1:N}=a_{1:N}\mid A_{1:N}\simiid \pi_0} g(a_{1:N})\\
    &= \E{g(A_{1:N})\mid A_{1:N}\simiid \pi_0}.
\end{split}
\end{equation}

Then we may consider a modified version of the decision problem, where we replace each state associated with a dependant with a series of states associated with each of the simulations, and modify the transition function accordingly. In the resulting problem, our dependence functions are then all the identity, and we may apply the results for CDT+GT.

\begin{eg}[continues= eg:SB_PD_formal]\label{eg:glob_sim_early}
We now apply this approach to \Cref{eg:SB_PD}, which we previously formalised in our setting (p.\ \pageref{eg:SB_PD_formal}). We'll restrict our attention to the first version of this problem for now. We have dependence functions: $F_1(\pi_0) = \pi_0$ and $F_2(\pi_0) = (\tfrac{1}{6}+2\pi_0(C)^2 - \tfrac{4}{3}\pi_0(C)^3,\tfrac{5}{6}-2\pi_0(C)^2+ \tfrac{4}{3}\pi_0(C)^3)$. Clearly, $F_1$ can be viewed as the distribution arising from taking a single sample of an action from $\pi_0$, and and outputting that same action. Can we find $g$ such that $F_2(\pi_0)$ is the distribution arising from applying $g$ to some number of samples from $\pi_0$? Yes -- set $g(a_1,a_2,a_3) = (\tfrac{5}{6},\tfrac{1}{6})$ when at least two of $a_{1:3} = C$, and $(\tfrac{1}{6}, \tfrac{5}{6})$ otherwise. We then have $F_2(\pi_0) = \E{g(A_1,A_2,A_3) \mid A_{1:3}\simiid \pi_0}$. (Cf.\ \Cref{sec:contribs}, \nameref{para:intro_simulation_based}. See also \Cref{sec:ThDo_1_details} for further algebraic details.) Thus, we can model Dorothea as simulating herself once, and Theodora as simulating Dorothea three times on each awakening.

Now, we sketch how to apply CDT+GT to the resulting simulation model (see \Cref{sec:glob_sims_eg_construction} for a formal construction of the model). We first compute the GT probabilities. Since each awakening of Theodora runs 3 copies of Dorothea, and there are two such awakenings, there are 6 copies of Dorothea simulated by Theodora, and only 1 copy of Dorothea as herself. Thus, GT gives credence $\tfrac{6}{7}$ of Dorothea being in one of Theodora's simulations.

We now compute the causal expected utility of Theodora choosing $C$ over $D$ in each possible state.
First, suppose Dorothea is in one of Theodora's simulations, and expects to act according to policy $\pi_0$. If she plays action $a$, she will expect Theodora to choose according to strategy $\E{g(a,A_2,A_3)\mid A_{2:3}\simiid \pi_0}$ (using symmetry of $g$) that awakening. The difference in cooperation probability for Theodora this awakening between $a=C$ and $a=D$ is then is equal to $(\tfrac{5}{6}-\tfrac{1}{6})\prob{A_{2:3}= (C,D) \text{ or } (D,C)\mid \pi_0} = \tfrac{4}{3}\pi_0(C)\pi_0(D)$. Then, the increase in probability of \emph{both} copies of Theodora cooperating from Dorothea cooperating rather than defecting is given by $F_2(\pi_0)_1\times \tfrac{4}{3}\pi_0(C)\pi_0(D)$. When this happens, Dorothea's utility increases by 3 compared to defecting. If Dorothea is instead herself, cooperating reduces her utility by 2 compared to defecting.

We now find the CDT+GT policies in the simulation model. Recall that these are the policies $\pi_0$ such that each action in the support of $\pi_0$ has maximal CDT+GT expected utility given $\pi_0$. %
We have that the overall difference in expected utility of cooperating rather than defecting given policy $\pi_0$ is $\tfrac{6}{7}\times4F_2(\pi_0)_1\pi_0(C)\pi_0(D)-\tfrac{1}{7}\times 2$. We can see from this that if $\pi_0$ is a pure strategy, cooperating has expected CDT+GSGT utility $-\tfrac{2}{7}$ compared to defecting. Thus, cooperating is not a CDT+GT strategy in the simulation model, but defecting is. The CDT+GT mixed strategies have full support, and so must have $\Ex_{GT}[u\mid do(C), \pi_0] - \Ex_{GT}[u\mid do(D),\pi_0] = 0$. These are then the roots of a quintic, and are not analytic, but can be computed as $\pi_0(C) \approx 0.36$ and $\pi_0(C) \approx 0.88$. We verify that the CDT+GT policies for the simulation-based decision problem include the \textit{ex ante} optimal policy. We have \textit{ex ante} expected utility $\E{u\mid \pi_0} = 2(1-\pi_0(C)) + 3 (\tfrac{1}{6}+2\pi_0(C)^2-\tfrac{4}{3}\pi_0(C)^3)^2$,
which is maximised at $\pi_0(C) \approx 0.88$. $\pi_0(C)\approx0.36$ is instead the global minimum.
\end{eg}

\begin{rk}
    One immediate question is why we ought to have to write the dependence function as depending only on the \emph{actions} of the simulated agent, rather than directly on its policy. In settings where a predictor can run an agent's source code, for instance, it is natural to imagine that when the agent randomises, the predictor might be able to simply run a single simulation, and directly compute the probabilities with which the agent takes each action, rather than only being able to sample from it. One possible approach might then be to use the original dependence function in place of transformation functions. %
    We discuss this approach in \Cref{sec:N_sample_AO}, but find it unpromising -- its recommendations are neither compatible with the \textit{ex ante} optimal policy, nor are they intuitive. %
\end{rk}

We do not directly give a construction of a simulation-based model from a decision problem in the main text. Instead, we describe self-locating beliefs for the original problem which are equivalent to applying GT to a modified problem based purely on action simulations. We give an explicit construction of the relevant simulation model, and a proof of this equivalence, in the proof of \Cref{thm:global_sims_result} (in \Cref{pf:globalsimsresult}), and for \Cref{eg:SB_PD} in \Cref{sec:glob_sims_eg_construction}.

\begin{defn}[Global simulation-based self-locating beliefs]
    Let $(S,S_T,P_0,n,i,A,T,u,\F)$ be a decision problem. Suppose that for $j= 1,\ldots, n$, there exists $N_j\in \mathbb{N}$ and $g_j:A^{N_j}\rightarrow \Delta(A)$ such that $F_j(\pi_0) = \E{g_j(A_{1:N_j})\mid A_{1:N_j}\simiid \pi_0}$ for all $\pi_0$.
    Then a set of generalised self-locating beliefs comprising self-locating beliefs $P$ and transformation functions $\tau_{j}$ defined by:
    \begin{align*}
        P(s \mid \pi_0) &= \frac{N_{i(s)}\E{\#(s)\mid \pi_0}}{\sum_{s'\in S-S_T}N_{i(s')}\E{\#(s')\mid \pi_0}}\\
        \tau_j(a, \pi_0) &= \sum_{k=1}^{N_j}\frac{1}{N_j}\E{g_j(A_{1:N})\mid A_k = a, A_{-k} \simiid \pi_0}.
    \end{align*}
    is a set of \emph{global-simulation based Generalised Thirding (GSGT)} beliefs. These are the beliefs corresponding to applying GT to the model in which dependant $j$ runs $N_j$ simulations of the agent.
\end{defn}
\begin{rk}
    Note that GSGT beliefs are not unique. If a function can be expressed as arising from $N$ simulations, it can also be expressed as arising from $N+1$ simulations (e.g., by ignoring the first simulation), and so any decision problem that admits GSGT beliefs admits infinitely many such beliefs. 
\end{rk}

\begin{restatable}{thm}{globalsimsresult}\label{thm:global_sims_result}
    Let $(S,S_T,P_0,n,i,A,T,u,\F)$ be a decision problem admitting GSGT beliefs. Then, for any set of GSGT beliefs, the CDT+GSGT policies are precisely those $\pi_0$ for which, for all $a$, we have $\frac{\partial}{\partial \pi_0(a)}\E{u\mid \pi_0} \leq 0$.
    In particular, they include the ex ante optimal policies.
\end{restatable}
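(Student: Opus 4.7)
The plan is to reduce the theorem to Theorem \ref{thm:CDT+GT} (and its Corollary \ref{cor:CDT+GT}) by constructing an explicit decision problem $D'$ in which all dependence functions are the identity, and showing that (a) the \textit{ex ante} expected utility in $D'$ as a function of $\pi_0$ matches that of the original problem $D$, and (b) CDT+GT on $D'$ is pointwise equal to CDT+GSGT on $D$ for the given parameters $(N_j, g_j)_{j=1}^n$.

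To construct $D'$, for each non-terminal state $s$ of $D$ with $i(s)=j$, I would replace $s$ with a chain of $N_j$ states $s^{(1)}, \ldots, s^{(N_j)}$ whose transitions each sample a fresh action from the agent's policy, pass through all $N_j$ states collecting actions $a_1, \ldots, a_{N_j}$ deterministically, and finally transition to the successor states of $s$ in $D$ under the distribution induced by drawing the ``effective'' dependant action from $g_j(a_{1:N_j})$. By construction, the distribution over successor ``real'' states (and hence over terminal states and utilities) induced by the agent's policy $\pi_0$ in $D'$ matches exactly the distribution in $D$ where the dependant at state $s$ acts according to $F_j(\pi_0) = \Ex[g_j(A_{1:N_j}) \mid A_{1:N_j} \simiid \pi_0]$. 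This gives $\E{u\mid \pi_0}_{D'} = \E{u\mid \pi_0}_D$ for all $\pi_0$.

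Next, I would verify that applying GT to $D'$ coincides with applying GSGT to $D$. For any original non-terminal state $s$ in $D$, the expected number of visits in $D'$ to any one of its $N_{i(s)}$ simulation states equals $\E{\#(s)\mid \pi_0}$ (each visit to $s$ in $D$ corresponds to exactly one visit to each $s^{(k)}$), so the total expected count over all simulation states of $D'$ is $\sum_{s'} N_{i(s')}\E{\#(s')\mid \pi_0}$. The GT credence of being at any specific simulation state $s^{(k)}$ is therefore $\E{\#(s)}/\sum_{s'}N_{i(s')}\E{\#(s')}$; aggregating the $N_{i(s)}$ simulation states that ``come from'' $s$ gives exactly $P_{GSGT}(s\mid \pi_0)$. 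Moreover, the causal effect of $\Do(a)$ at a single simulation state $s^{(k)}$ in $D'$ is to fix $A_k = a$ while averaging the remaining $N_{i(s)}-1$ sampled actions over $\pi_0$; summing (with weight $1/N_{i(s)}$) over $k = 1, \ldots, N_{i(s)}$ to aggregate the $N_{i(s)}$ simulation states that share the original state $s$ recovers precisely the transformation function $\tau_{i(s)}(a,\pi_0)$ in the GSGT definition. So $\Ex_{GT}[u\mid \Do(a), \pi_0]_{D'} = \Ex_{GSGT}[u\mid \Do(a), \pi_0]_D$ for every $a$ and $\pi_0$, and hence the CDT+GSGT policies of $D$ coincide with the CDT+GT policies of $D'$.

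Applying Theorem \ref{thm:CDT+GT} to $D'$ now characterises these as the $\pi_0$ satisfying $\frac{\partial}{\partial \pi_0(a)}\E{u\mid \pi_0}_{D'}\leq 0$ for all $a$; by (a) this is the stated condition on $\E{u\mid \pi_0}_D$. The ``in particular'' clause follows either from Corollary \ref{cor:CDT+GT} applied to $D'$, or directly: at an \textit{ex ante} optimum $\pi_0^*$, every feasible direction $a - \pi_0^*$ yields a non-positive directional derivative of $\E{u\mid \pi_0}_D$. The main obstacle I anticipate is the bookkeeping in step (a)--(b): I would need to be careful that the transformation functions defined in GSGT really do correspond to averaging uniformly over ``which simulation slot'' one is at within the block $s^{(1)},\ldots,s^{(N_{i(s)})}$, and that the symmetrisation implicit in $\tau_j$ (which is needed because $g_j$ need not be symmetric in its arguments) matches the symmetric treatment of simulation slots by GT. Expected history length finiteness (\Cref{cor:exp_hist_length_finite}) in $D'$, which is needed for GT to be well-defined there, follows from \Cref{asm:finite_hist} applied to $D$ together with the fact that each $D$-step expands to $N_{i(s)}+1$ deterministic $D'$-steps, giving an expected $D'$-history length bounded by $(\max_j N_j + 1)$ times the expected $D$-history length.
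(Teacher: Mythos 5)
Your proposal is correct and follows essentially the same route as the paper's proof: construct an expanded decision problem $D'$ with identity dependence functions, verify that the \textit{ex ante} utilities, the aggregated GT credences, and the slot-averaged $\Do(a)$ counterfactuals all coincide with their GSGT counterparts in $D$, and then invoke Theorem \ref{thm:CDT+GT} and Corollary \ref{cor:CDT+GT}. The one detail to tighten is that your ``chain of $N_j$ states'' must in fact be a tree of states $(s,a_{1:k})$ recording the actions sampled so far -- otherwise the final transition cannot depend on $g_j(a_{1:N_j})$ within the Markovian state formalism -- which is precisely how the paper's construction resolves the bookkeeping issue you yourself flagged.
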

\excludeforjournal{
\begin{proof}
    See \Cref{pf:globalsimsresult}, p.\ \pageref{pf:globalsimsresult}.
\end{proof}
}

So if the dependence can be written as a function of action samples (as in \Cref{eq:global_samples}), we can justify the \textit{ex ante} optimal policy from a \textit{de se} perspective. In particular, it is a CDT+GSGT policy, or a CDT+GT policy in a simulation-based model of the original decision problem. Now we are left with the question of when the dependence functions can be written in this fashion. First, we give a necessary and sufficient condition: that the dependence functions can be written as polynomials with non-negative coefficients (where we say a vector is non-negative if all of its entries are non-negative).

\begin{restatable}{prop}{globalsim}\label{prop:global_sim}
    Let $F:\Delta(A) \rightarrow \Delta(A)$, and $N\in \mathbb{N}$. The following are equivalent:
    \begin{enumerate}[label=(\roman*),ref=(\roman*)]
   \item There exists $g:A^N \rightarrow \Delta(A)$ such that
    $
        F(\pi_0) = \E{g(A_1,\ldots,A_N)\mid A_1,\ldots,A_N \sim \pi_0}.
    $
    \label{global_sims}
   \item $F$ is a polynomial and may be written with only non-negative coefficients and degree at most $N$. \label{poly}
  \end{enumerate}
    
\end{restatable}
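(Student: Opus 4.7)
The plan is to prove each direction of the equivalence separately, with the forward direction (i) $\Rightarrow$ (ii) immediate and (ii) $\Rightarrow$ (i) the substantive one.

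For (i) $\Rightarrow$ (ii), I simply expand the expectation componentwise: for each $b \in A$,
\[
F(b\mid\pi_0) = \sum_{a_{1:N}\in A^N} g(b\mid a_{1:N}) \prod_{j=1}^N \pi_0(a_j),
\]
which is manifestly a polynomial of degree $N$ in the variables $\{\pi_0(a)\}_{a\in A}$ with non-negative coefficients $g(b\mid a_{1:N})\geq 0$.

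For (ii) $\Rightarrow$ (i), I first homogenize. Using that $\sum_a \pi_0(a) = 1$ on $\Delta(A)$, any monomial of degree $d<N$ with a non-negative coefficient can be multiplied by $(\sum_a \pi_0(a))^{N-d}$ and expanded; this preserves non-negativity and yields, for each $b\in A$, a representation
\[
F(b\mid\pi_0) = \sum_{M:\,|M|=N} c^{(b)}_M \prod_a \pi_0(a)^{M(a)}
\]
where $M$ ranges over multisets on $A$ of size $N$ and $c^{(b)}_M\geq 0$. I then define $g(a_{1:N})(b) \defeq c^{(b)}_{m(a_{1:N})}/\binom{N}{m(a_{1:N})}$, where $m(a_{1:N})$ is the multiset of entries of the tuple $a_{1:N}$. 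Grouping the $\binom{N}{M}$ ordered tuples that share each multiset $M$ then immediately gives $\E{g(A_{1:N})\mid A_{1:N}\simiid \pi_0} = F(\pi_0)$.

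The main obstacle is showing that $g(a_{1:N})$ actually lies in $\Delta(A)$, i.e.\ that $\sum_b c^{(b)}_M = \binom{N}{M}$ for every $M$ with $|M|=N$. Since $F$ maps into $\Delta(A)$, we have $\sum_b F(b\mid\pi_0) = 1$ on $\Delta(A)$, and the multinomial identity $1 = (\sum_a \pi_0(a))^N = \sum_M \binom{N}{M} \prod_a \pi_0(a)^{M(a)}$ also holds there; subtracting gives $\sum_M \bigl(\sum_b c^{(b)}_M - \binom{N}{M}\bigr) \prod_a \pi_0(a)^{M(a)} = 0$ on $\Delta(A)$. The desired equality then follows from the linear independence of the homogeneous monomials $\{\prod_a \pi_0(a)^{M(a)} : |M|=N\}$ as functions on $\Delta(A)$, which I would establish as follows: if a homogeneous polynomial $q$ of degree $N$ in $|A|$ variables vanishes on the simplex, then for any $x$ with $\sum_a x_a \neq 0$ homogeneity gives $q(x) = (\sum_a x_a)^N q(x/\sum_a x_a) = 0$, so $q$ vanishes on a dense set and hence identically. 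This linear independence is the one non-trivial ingredient and closes the argument.
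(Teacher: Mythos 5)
Your proof follows essentially the same route as the paper's: the forward direction by expanding the expectation componentwise, and the converse by homogenising to degree $N$ using $\sum_a \pi_0(a)=1$, defining $g$ as the normalised coefficients $c^{(b)}_M/\binom{N}{M}$, and reducing the claim $g(a_{1:N})\in\Delta(A)$ to the fact that a homogeneous degree-$N$ polynomial vanishing on $\Delta(A)$ has all coefficients zero. The paper proves that last fact (its Lemma on coefficient uniqueness) by an explicit dehomogenise/rehomogenise bijection with polynomials in one fewer variable; you instead use a scaling argument, which is more elementary but as written has a small slip. For a point $x$ with $\sum_a x_a\neq 0$ you may only conclude $q(x/\sum_a x_a)=0$ when the rescaled point actually lies in $\Delta(A)$, i.e.\ when all coordinates of $x$ are non-negative (or all non-positive); so your argument shows $q$ vanishes on the open cone $\{x: x_a>0\ \forall a\}$, not on a dense subset of $\mathbb{R}^{|A|}$. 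This still suffices --- a polynomial vanishing on a nonempty open set vanishes identically --- but the justification should invoke openness of that cone rather than density. With that one-line correction your proof is complete and, in the key lemma, somewhat cleaner than the paper's.
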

\excludeforjournal{
\begin{proof}
    See \Cref{pf:globalsim}, p.\ \pageref{pf:globalsim}.
\end{proof}
}

We may further characterise the conditions under which \ref{global_sims} holds. We give our main such result below, and the rest in \Cref{sec:more_poly_stuff}.

\begin{restatable}{cor}{polyfactorchar}\label{cor:poly_factor_char}
   Suppose that $F:\Delta(A) \rightarrow \Delta(A)$ is a polynomial with no zeros on the simplex (i.e., none of the entries of $F(\pi_0)$ are zero for $\pi_0 \in \Delta(A))$. Then $F$ may be written as a distribution arising from samples of actions, as in equation (\ref{eq:global_samples}).
\end{restatable}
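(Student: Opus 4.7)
By \Cref{prop:global_sim}, it suffices to show that $F$ admits a representation as a polynomial in the variables $(\pi_0(a))_{a \in A}$ with only non-negative coefficients, of some common degree $N$. The key flexibility to exploit is that on the simplex $\sum_{a \in A}\pi_0(a) = 1$, so $F$ has many polynomial representations --- given any one, we may multiply by $(\sum_a \pi_0(a))^k$ without altering its values --- and we need only find \emph{some} representation with non-negative coefficients.

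The tool I would invoke is \emph{P\'olya's positivity theorem}: if $P(x_1,\ldots,x_k)$ is a homogeneous polynomial strictly positive on $\{x : x_i \geq 0,\ \sum_i x_i = 1\}$, then there exists $N \in \mathbb{N}$ such that $(x_1 + \cdots + x_k)^N P$ has only non-negative coefficients. The plan is to apply this componentwise to $F$.

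In detail, fix $b \in A$ and let $d_b$ denote the degree of $F(b \mid \cdot)$. First homogenise: replace each monomial of degree $m < d_b$ by the same monomial multiplied by $(\sum_a \pi_0(a))^{d_b - m}$, yielding a homogeneous polynomial $\tilde F_b$ of degree $d_b$ that agrees with $F(b \mid \cdot)$ on the simplex. Since $F$ has no zeros on the simplex by hypothesis, $\tilde F_b > 0$ there, so P\'olya's theorem produces $N_b$ such that $(\sum_a \pi_0(a))^{N_b}\tilde F_b$ has only non-negative coefficients. Setting $N \defeq \max_b(d_b + N_b)$ and padding each component with additional factors of $\sum_a \pi_0(a)$ (a step that preserves non-negativity of coefficients, since $\sum_a \pi_0(a)$ itself has coefficients all equal to $1$) yields, simultaneously for every $b \in A$, a representation of $F(b \mid \pi_0)$ as a polynomial of degree exactly $N$ with non-negative coefficients. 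Applying \Cref{prop:global_sim} then supplies the desired $g : A^N \to \Delta(A)$.

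The only real obstacle is invoking P\'olya's theorem as a black box --- it is classical but non-trivial, and I would simply cite it. The remaining bookkeeping is minor: (i) homogenising via the relation $\sum_a \pi_0(a) = 1$ so that P\'olya's hypotheses apply, and (ii) harmonising the degree $N$ across all $b \in A$ so that a single number of simulations works for the whole vector-valued dependence function $F$.
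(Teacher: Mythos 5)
Your proposal is correct and follows essentially the same route as the paper: the paper's proof of \Cref{cor:poly_factor_char} reduces to its Lemma~\ref{lemma:heat_diffusion}, which is precisely the content of P\'olya's positivity theorem (a polynomial strictly positive on the simplex can be rewritten with non-negative coefficients by raising its degree via powers of $\sum_a \pi_0(a)$), and then invokes \Cref{prop:global_sim} exactly as you do. The only difference is that the paper proves this P\'olya-type statement from scratch by induction on $|A|$ rather than citing it as a classical result, so your homogenisation and degree-harmonisation bookkeeping is sound and matches what the paper's lemma already absorbs.
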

\excludeforjournal{
\begin{proof}
See \Cref{pf:polyfactorchar}, p.\ \pageref{pf:polyfactorchar}.
\end{proof}
}

\begin{eg}[continues= eg:SB_PD_formal]\label{eg:GSGT}
Consider now the second version of \Cref{eg:SB_PD}. The dependence function for this version involves a square root, which is not a polynomial. Thus \Cref{prop:global_sim} implies that this dependence function cannot be expressed as a function of action simulations, and so GSGT cannot be applied here.
\end{eg}

\subsection{Global approximation in the limit}\label{sec:limit_approx}

Perhaps we can \emph{approximate} the true dependence functions by functions of action simulations, even if we cannot express them precisely in this way. We could then look at the CDT+GSGT policies for a sequence of increasingly good approximations to the decision problem. Do these converge in some sense to some set of policies, and if so, does this include the \textit{ex ante} optimal policy? In this section, we explore these questions.

We approximate dependence functions $F$ by the distribution obtained by applying $F$ to the empirical distribution of some number of independent samples of the agent's policy. %
For instance, we might sample from the policy four times. If we sample actions $C$, $D$, $C$, $C$, we then apply $F$ to the empirical distribution $0.75C + 0.25D$.
We get that whenever $F$ is continuous, the approximation converges uniformly to $F$ as the number of samples goes to infinity.

\begin{restatable}{prop}{unifconvergence}\label{prop:unif_convergence}
    Suppose that $F:\Delta(A)\rightarrow\Delta(A)$ is continuous. Then there exists a sequence of functions $(g_N)_{N\in\mathbb{N}}$ with $g_N:A^N\rightarrow\Delta(A)$ such that setting $\Fhat_N(\pi_0) \defeq \E{g_N(A_1,\ldots,A_N)\mid A_1,\ldots,A_N \sim \pi_0}$, $\Fhat_N$ converges uniformly to $F$. That is, $\sup_{\pi_0} \norm{\hat{F}_N(\pi_0) - F(\pi_0)} \rightarrow 0 $ as $N\rightarrow\infty$. In particular, this holds for $g_N(a_1,\ldots,a_N) \defeq F(\frac{1}{N}\sum_{k=1}^N a_k)$, where the sum denotes a mixture.
\end{restatable}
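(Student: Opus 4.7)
The plan is to recognise $\Fhat_N$ as a multivariate Bernstein-type approximant to $F$. With the prescribed choice $g_N(a_{1:N}) = F(\tfrac{1}{N}\sum_{k=1}^N a_k)$, and writing $\pihat_N \defeq \tfrac{1}{N}\sum_{k=1}^N A_k$ for the (random) empirical distribution when $A_1,\ldots,A_N \simiid \pi_0$, we have $\Fhat_N(\pi_0) = \E{F(\pihat_N)}$. The task thus reduces to showing that $\sup_{\pi_0}\E{\norm{F(\pihat_N) - F(\pi_0)}} \to 0$ as $N\to\infty$.

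First I would invoke compactness: $\Delta(A)$ is a compact subset of $\mathbb{R}^{|A|}$, so the continuous map $F$ is uniformly continuous on it and bounded in norm by some constant $M$ (e.g., the diameter of $\Delta(A)$). Fix $\eps > 0$ and pick $\delta > 0$ such that $\norm{\pi - \pi'} \leq \delta$ implies $\norm{F(\pi) - F(\pi')} \leq \eps/2$ for all $\pi, \pi' \in \Delta(A)$.

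Next I would split $\E{\norm{F(\pihat_N) - F(\pi_0)}}$ according to the event $E_\delta \defeq \{\norm{\pihat_N - \pi_0} \leq \delta\}$. On $E_\delta$, uniform continuity bounds the integrand by $\eps/2$; on $E_\delta^c$, boundedness of $F$ bounds it by $2M$. To handle the tail uniformly in $\pi_0$, I would apply Chebyshev's inequality using the elementary variance computation
\begin{equation*}
    \E{\norm{\pihat_N - \pi_0}^2} = \sum_{a\in A}\Var(\pihat_N(a)) = \frac{1}{N}\sum_{a\in A}\pi_0(a)(1-\pi_0(a)) \leq \frac{|A|}{4N},
\end{equation*}
which gives $\prob{E_\delta^c} \leq \tfrac{|A|}{4N\delta^2}$, a bound independent of $\pi_0$. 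Choosing $N$ large enough that $2M \cdot \tfrac{|A|}{4N\delta^2} \leq \eps/2$ then produces $\sup_{\pi_0}\norm{\Fhat_N(\pi_0) - F(\pi_0)} \leq \eps$, as required.

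The only delicate point is ensuring that every estimate is uniform in $\pi_0$: uniform continuity on the compact simplex yields a single $\delta$ that works for all $\pi_0$; the per-coordinate bound $\pi_0(a)(1-\pi_0(a)) \leq 1/4$ makes the concentration estimate $\pi_0$-free; and $F$ is globally bounded on $\Delta(A)$. Once these uniformities are in place there is no real obstacle — the argument is essentially the multivariate Bernstein approximation theorem, with actions $a \in A$ playing the role of simplex vertices and $\pihat_N$ playing the role of a sample mean that concentrates at its mean $\pi_0$.
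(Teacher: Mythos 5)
Your proposal is correct and follows essentially the same route as the paper's proof: reduce to $\E{\norm{F(\pihat_N)-F(\pi_0)}}$ via Jensen, invoke Heine--Cantor for uniform continuity on the compact simplex, split on the concentration event for the empirical distribution, and bound the tail by a Chebyshev/Markov estimate on $\Var(\pihat_N)$ that is uniform in $\pi_0$. The only differences are cosmetic (your per-coordinate bound $\pi_0(a)(1-\pi_0(a))\leq 1/4$ versus the paper's $\sum_a \pi_0(a)(1-\pi_0(a))\leq 1$), so no further comment is needed.
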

\excludeforjournal{
\begin{proof}
See \Cref{pf:unifconvergence}, p.\ \pageref{pf:unifconvergence}.
\end{proof}
}

\begin{rk}
    Note that if $F$ is not continuous, we can't possibly write $F$ as the uniform limit of polynomials, by the uniform limit theorem. Thus, the condition in \Cref{prop:unif_convergence} is also necessary.
\end{rk}

In turn, if $F$ may be expressed as such a uniform limit, we have that the \textit{ex ante} optimal policies of the decision problems where $F$ is replaced with its approximation, converge to \textit{ex ante} optimal policies of the original problem:

\begin{restatable}{prop}{exantelimit}\label{prop:ex_ante_limit}
    Let $\F = (F_1, \ldots, F_n)$. Suppose that we have a sequence of functions $(\Fhat_N)$, with $\Fhat_N:\Delta(A)\rightarrow\Delta(A)^{n}$ continuous, converging uniformly to $\F$. Let $\Theta \subseteq \Delta(A)$ be the set of ex ante optimal policies in the original problem, and $\Theta_N$ the set of ex ante optimal policies in the problem with $\F$ replaced with $\Fhat_N$.
    
    Then $\Theta$ is non-empty and $\Theta_N$ is non-empty for all $N$, and the maximum distance of policies %
    in $\Theta_N$ to the nearest policy in $\Theta$ goes to 0 as $N$ goes to $\infty$. Formally, $
        \lim_{N\rightarrow \infty}\sup_{\pi \in \Theta_N} d(\pi, \Theta) =
        0$, where $d(x,Y) \defeq \inf_{y\in Y} \norm{x-y}_2$. Equivalently, any sequence $(\pi_{0,N})_{N\in \mathbb{N}}$ with $\pi_{0,N} \in \Theta_N$ has $d(\pi_{0,N},\Theta) \rightarrow 0$.
\end{restatable}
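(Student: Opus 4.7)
The plan is to factor the ex ante expected utility through the dependence function and reduce the statement to a standard argmax-continuity argument for uniformly convergent continuous functions on a compact domain. Let $V : \Delta(A)^n \to \mathbb{R}$ send a tuple of dependant policies $\bm{\pi}$ to $\E{u \mid \bm{\pi}}$; the ex ante expected utility in the original problem is then $J(\pi_0) \defeq V(\F(\pi_0))$, while in the $N$-th approximate problem it is $J_N(\pi_0) \defeq V(\Fhat_N(\pi_0))$. The set $\Theta$ (resp.\ $\Theta_N$) is the argmax of $J$ (resp.\ $J_N$) over $\Delta(A)$.

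The first step, and the step I expect to require the most care, is establishing that $V$ is continuous on the compact cube $\Delta(A)^n$. The subtlety is that histories can have unbounded length, so one must control the truncation error uniformly in $\bm{\pi}$. The cleanest route is to write $V(\bm{\pi}) = \sum_{s_0\cdots s_t} u(s_t)\,\prob{s_0\cdots s_t\mid\bm{\pi}}$, truncate at history length $T$ (a polynomial in $\bm{\pi}$, hence continuous), and bound the tail by $\norm{u}_\infty\,\mathbb{P}(\text{length} > T \mid \bm{\pi})$; Assumption~\ref{asm:finite_hist} together with \Cref{cor:exp_hist_length_finite} (in a uniform form, obtained because the transition kernel depends polynomially on $\bm{\pi}$) make this tail uniformly small. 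Since each $\Fhat_N$ is continuous and $\Fhat_N \to \F$ uniformly, the uniform limit theorem gives continuity of $\F$; hence $J$ and each $J_N$ are continuous on the compact set $\Delta(A)$ and attain their maxima, so $\Theta$ and $\Theta_N$ are non-empty. Being level sets of continuous functions at their maxima, they are also closed.

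Next, I show $J_N \to J$ uniformly. Since $V$ is continuous on the compact set $\Delta(A)^n$ it is uniformly continuous there; given $\delta>0$, choose $\eta>0$ with $\norm{y-y'}<\eta \Rightarrow |V(y)-V(y')|<\delta$, and then pick $N_0$ so that $\sup_{\pi_0}\norm{\Fhat_N(\pi_0)-\F(\pi_0)}<\eta$ for $N\geq N_0$. This gives $\sup_{\pi_0}|J_N(\pi_0)-J(\pi_0)|<\delta$, and in particular $\max J_N \to \max J$.

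Finally, I run the standard compactness argument for argmaxes. Suppose, for contradiction, that $\sup_{\pi\in\Theta_N} d(\pi,\Theta) \not\to 0$. Then there exist $\eps>0$ and a subsequence $(\pi_{0,N_k})$ with $\pi_{0,N_k}\in\Theta_{N_k}$ and $d(\pi_{0,N_k},\Theta)\geq\eps$. By compactness of $\Delta(A)$, pass to a further subsequence converging to some $\pi_0^*\in\Delta(A)$; by closedness of $\Theta$, $d(\pi_0^*,\Theta)\geq\eps$, so $\pi_0^*\notin\Theta$. On the other hand $J_{N_k}(\pi_{0,N_k})=\max J_{N_k}\to\max J$ by the previous step, while $|J_{N_k}(\pi_{0,N_k})-J(\pi_0^*)|\leq \sup|J_{N_k}-J|+|J(\pi_{0,N_k})-J(\pi_0^*)|\to 0$ by uniform convergence and continuity of $J$ at $\pi_0^*$. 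Hence $J(\pi_0^*)=\max J$, i.e.\ $\pi_0^*\in\Theta$, a contradiction. The equivalent reformulation in terms of arbitrary sequences $\pi_{0,N}\in\Theta_N$ is then just an unpacking of $\sup_{\pi\in\Theta_N}d(\pi,\Theta)\to 0$.
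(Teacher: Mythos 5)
Your proposal is correct and follows essentially the same route as the paper's proof: continuity of the \textit{ex ante} utility as a function of the dependants' policies, the uniform limit theorem for continuity of $\F$, compactness to get non-emptiness of the argmax sets, and then the standard subsequence/contradiction argument to show the limit point of near-optimal policies is optimal. The only (harmless) difference is that you establish continuity of $V(\bm\pi)=\E{u\mid\bm\pi}$ directly via a truncation argument using the uniform bound on expected history length, whereas the paper invokes its Lipschitz-type bound (\Cref{cor:ex_ante_Lipschitzish}), and you route the final step through uniform convergence of $J_N$ to $J$ rather than evaluating $EU\circ\Fhat_{m_N}$ along the subsequence directly.
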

\excludeforjournal{
\begin{proof}
See \Cref{pf:exantelimit}, p.\ \pageref{pf:exantelimit}.
\end{proof}
}

Putting these two results together, we have that whenever the dependence functions are continuous, we may construct a sequence of increasingly good approximations to the decision problem in which we may apply GSGT. There is then a sequence of CDT+GSGT policies for these problems that converges to the set of %
\textit{ex ante} optimal policies of the original problem.

\begin{restatable}{cor}{CDTGTlimits}\label{cor:CDTGT_limits}
    Suppose that dependence function $\F:\Delta(A)\rightarrow\Delta(A)^n$ is continuous. Then there exists a sequence of GSGT compatible dependence functions $(\hat{\F}_N)$ uniformly converging to $\bm{F}$. For any such $(\hat{\F}_N)$, there then exists a sequence of policies $(\pi_0^N)$ such that: (i) for all $N$, we have that $\pi_0^N$ is a CDT+GSGT policy for the decision problem where $\F$ is replaced by $\hat{\F}_N$ and (ii) $(\pi_0^N)$ converges to the set of ex ante optimal policies for the original problem as $N\rightarrow\infty$.
\end{restatable}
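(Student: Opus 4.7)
The plan is to assemble the corollary directly from the three earlier results, with essentially no new work needed beyond verifying that the right hypotheses hold at each step.

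First, I would construct the sequence $(\hat{\F}_N)$. Since $\F = (F_1,\ldots,F_n)$ is continuous iff each component $F_j$ is continuous, I would apply \Cref{prop:unif_convergence} to each $F_j$ separately to obtain functions $g_{j,N}:A^N \to \Delta(A)$ such that $\hat{F}_{j,N}(\pi_0) := \E{g_{j,N}(A_{1:N})\mid A_{1:N}\simiid \pi_0}$ converges uniformly to $F_j$. Setting $\hat{\F}_N := (\hat{F}_{1,N},\ldots,\hat{F}_{n,N})$ gives a sequence that converges uniformly to $\F$ (uniform convergence is preserved under finite Cartesian products, e.g.\ via the max of component norms). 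By construction each $\hat{\F}_N$ is of the form required for GSGT, so the decision problem obtained by replacing $\F$ with $\hat{\F}_N$ admits GSGT beliefs.

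Next, for the second claim, fix any sequence $(\hat{\F}_N)$ of GSGT-compatible dependence functions converging uniformly to $\F$ (not necessarily the one just constructed). Each $\hat{\F}_N$ is a polynomial by \Cref{prop:global_sim}, hence continuous, so \Cref{prop:ex_ante_limit} applies: the set $\Theta_N$ of \emph{ex ante} optimal policies for the $N$-th approximate problem is non-empty, and for any choice $\pi_0^N \in \Theta_N$ we have $d(\pi_0^N, \Theta) \to 0$. I would choose $\pi_0^N$ to be any such \emph{ex ante} optimal policy. By \Cref{thm:global_sims_result}, since the problem with $\hat{\F}_N$ admits GSGT beliefs, every \emph{ex ante} optimal policy there is a CDT+GSGT policy; hence $\pi_0^N$ is a CDT+GSGT policy for the $N$-th approximate problem, establishing (i). Property (ii) is precisely the conclusion $d(\pi_0^N,\Theta) \to 0$ from \Cref{prop:ex_ante_limit}.

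There is no real obstacle here — the result is essentially a bookkeeping corollary that chains the three ingredients together. The only mild subtlety is noting that uniform convergence of the vector-valued $\hat{\F}_N$ to $\F$ follows from component-wise uniform convergence (which is immediate in finite dimensions), and that \Cref{thm:global_sims_result} gives us \emph{ex ante} optimality $\Rightarrow$ CDT+GSGT compatibility (the inclusion we need), so picking $\pi_0^N$ from $\Theta_N$ automatically satisfies both (i) and (ii) simultaneously.
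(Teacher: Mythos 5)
Your proposal is correct and follows exactly the paper's route: the paper's proof is a one-line citation of \Cref{prop:unif_convergence}, \Cref{prop:ex_ante_limit}, and \Cref{thm:global_sims_result}, chained together precisely as you describe (component-wise approximation, continuity of the polynomial approximants to invoke \Cref{prop:ex_ante_limit}, and the inclusion of \emph{ex ante} optimal policies among CDT+GSGT policies from \Cref{thm:global_sims_result}). You have simply made explicit the bookkeeping the paper leaves implicit.
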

\excludeforjournal{
\begin{proof}
See \Cref{pf:CDTGTlimits}, p.\ \pageref{pf:CDTGTlimits}.
\end{proof}
}

\begin{rk}
    Many policies $\pi_0$ are the limit of a sequence $(\pi_0^N)$ of CDT+GSGT policies for \emph{some} (potentially very odd) sequence of approximating functions $(\hat{\F}_N)$ (cf. \Cref{app:some_approx_weak}).
    By contrast, the \textit{ex ante} optimal policy is the limit of some sequence $(\pi_0^N)$ of CDT+GSGT policies for \emph{every} sequence of approximating dependence functions. While we do not give a characterisation of policies for which this latter condition holds, it appears likely to be far more restricted.
\end{rk}

\begin{eg}[continues = eg:SB_PD]\label{eg:glob_approx}
Let us now apply this approach to the second case of our example. Recall that we have $F_1(\pi_0) = 0.2\times(0.5,0.5)+0.8\pi_0$ and $F_2(\pi_0)= (\sqrt{0.1+0.8\pi_0(C)}, 1- \sqrt{0.1+0.8\pi_0(C)})$. First, let's compute the \textit{ex ante} optimal policy. We have that \textit{ex ante}, if Dorothea cooperates with probability $p$, the medicated version of her cooperates with probability $0.8p+0.1$, and the two copies of Theodora overall cooperate with probability $0.8p+0.1$. The \textit{ex ante} expected utility is then $(3-2)(0.8p+0.1)+2$, since Dorothea cooperating costs her 2, but Theodora cooperating gains Dorothea 3. %
 Thus, the unique \textit{ex ante} optimal policy is to cooperate deterministically.

Now, let's find a sequence of GSGT compatible dependence functions converging to the original dependence functions. $F_1$ may be directly expressed as arising from a single simulation, so we have no need to approximate it:
$
    F_1(\pi_0) = \E{(0.1,0.1)+0.8A_1 \mid A_1\sim \pi_0}.
$

But this is not possible for $F_2$. Instead, we may approximate $F_2$ via 
$
\Fhat^N_2(C\mid\pi_0) \defeq \E{F_2(C\mid(\hat{p}_N,1-\hat{p}_N))} = \E{\sqrt{0.1+0.8\hat{p}_N}},
$
where $\hat{p}_m \sim \frac{1}{m}\Bin(m,\pi_0(C))$ (where $\Bin$ refers to the binomial distribution, so $(\hat{p}_N,1-\hat{p}_N)$ is distributed as the empirical distribution of $N$ samples from $\pi_0$). \Cref{prop:unif_convergence} gives that $\Fhat^N_2 $ converges uniformly to $F_2$.

Now, we compute the CDT+GSGT policies for the problem where $F_2$ is replaced by $\Fhat_2^N$. When Dorothea is one of Theodora's $N$ simulations, the increase in probability of that copy of Theodora cooperating from Dorothea cooperating rather than defecting is given by $\E{\sqrt{0.1 + 0.8(\tfrac{N-1}{N}\hat{p}_{N-1}+\tfrac{1}{N})}-\sqrt{0.1 + 0.8(\tfrac{N-1}{N}\hat{p}_{N-1})}}$, which is in turn equal to 
$
 \frac{0.4}{N\sqrt{0.1+0.8\pi_0(C)}} + o(N^{-1}),
$
where $o(N^{-1})$ is little-o notation, denoting a term whose ratio to $N^{-1}$ tends to zero as $N\rightarrow \infty$.
For full details, see \Cref{sec:glob_approx_eg_deets}. Thus, the overall increase in probability of both copies of Theodora cooperating is then
$
 \frac{0.4}{N\sqrt{0.1+0.8\pi_0(C)}}\sqrt{0.1+0.8\pi_0(C)} + o(N^{-1}) = \frac{0.4}{N} + o(N^{-1})
$.
There are then $2N$ copies of Dorothea as simulations by Theodora, and 1 copy as a simulation by the medicated Dorothea. Hence, the CDT+GSGT utility of cooperating over defecting in the $N$-sample simulation model is given by $3\times\frac{2N}{2N+1}\times\frac{0.4}{N} - 2\times 0.8 \times \frac{1}{2N+1} + o(N^{-1}) = \frac{0.8}{2N+1}+ o(N^{-1})$, which is positive for sufficiently large $N$.

  So for sufficiently large $N$, the only CDT+GSGT policy is deterministically cooperating, the \textit{ex ante} optimal policy.
\end{eg}

\subsection{Local approximation}\label{sec:local_sims}

In \Cref{sec:glob_sims_model}, we tried to model \emph{all} dependence via simulations. However, we know from Theorem \ref{thm:CDT+GT} that the CDT+GT policies are determined by the \emph{local} properties of the \textit{ex ante} utility (i.e., its derivative). In particular, when all dependence functions are the identity, the CDT+GT counterfactuals for taking action $a$ are proportional to the the rate of change in the \textit{ex ante} expected utility under local policy deviations in direction $a$ \cite[cf.][Lemma 17]{dese}. Then, given a policy we expect to follow, we could construct a simulation model which accurately models the change in expected utility under local policy deviations via simulations. The resulting CDT+GT action counterfactuals should also reflect the effect of local deviations on the \textit{ex ante} expected utility. In particular, the \textit{ex ante} optimal policy should then be ratifiable (i.e., optimal given that the agent expects to act according to it). One way to view this is that the agent expects what simulations are run to depend only evidentially but not causally on its policy. Note that whilst this approach needn't guarantee reasonable \emph{policy} counterfactuals (i.e., beliefs about what would happen if the agent deviated to a whole different policy, far from its original policy), CDT does not need such beliefs -- rather, it needs reasonable action counterfactuals given its policy.

The idea here is an equilibrium-like concept (as with ratificationism in general). The agent's beliefs are relevantly correct (including its counterfactual beliefs) given its policy, and the policy is correct (i.e., optimal) given its beliefs. Recall that, as usual, we are looking for beliefs that \emph{justify} good policies, not simply for good policies themselves.

Can we then model the effects of small changes to the policy via simulations, without capturing all dependence in this way? As in section \Cref{sec:glob_sims_model}, we do not explicitly construct the simulation model here, but instead describe the self-locating beliefs corresponding to such a model.

\begin{defn}[Local-simulation-based self-locating beliefs]
Let $(S,S_T,P_0,n,i,A,T,u,\F)$ be a decision problem. Suppose that we can \emph{locally approximate} $\F$ via some number of samples of the agent's actions. That is, suppose that for all $j$ and $\pi_0$, there exists $N_j=N_j(\pi_0)$ and $g_j = g_j(\cdot\mid\pi_0):A^{N_j}\rightarrow \Delta(A)$ such that for $\pi_0'\in \Delta(A)$:
    \begin{align}\label{eq:local_sim}
        &F_j(\pi_0 + \eps(\pi_0' - \pi_0)) \\
        &= \E{g_j(A_{1:N_j}) \mid A_{1:N_j} \simiid \pi_0 + \eps(\pi_0' - \pi_0)}+ o(\eps).\nonumber
    \end{align}

    Then a set of \emph{local-simulation-based Generalised Thirding (LSGT)} beliefs is a set of generalised self-locating beliefs defined by:
    \begin{align*}
        P(s\mid \pi_0) &= \frac{N_{i(s)}(\pi_0)\E{\#(s)\mid \pi_0}}{\sum_{s'\in S-S_T}N_{i(s')}(\pi_0)\E{\#(s')\mid \pi_0}}\\
        \tau_j(a, \pi_0) &= \sum_{k=1}^{N_j}\frac{1}{N_j}\E{g_j(A_{1:N}\mid \pi_0)\mid A_k = a, A_{-k} \simiid \pi_0}
    \end{align*}
\end{defn}
Note that the LSGT beliefs are the same as GSGT beliefs, except that the $N_j$ and $g_j$s now may depend on $\pi_0$. Thus, GSGT beliefs are a special case of LSGT beliefs. LSGT beliefs correspond to applying GT to the model in which dependant $j$ runs $N_j(\pi_0)$ simulations of the agent, and uses these according to $g_j(\cdot\mid\pi_0)$.

When we can model the problem in this way, the LSGT+CDT policies indeed include the \textit{ex ante} optimal policy:

\begin{restatable}{thm}{localsimsresult}\label{thm:local_sims_result}
    Let $(S,S_T,P_0,n,i,A,T,u,\F)$ be a decision problem admitting LSGT beliefs. %
    Then the CDT+LSGT policies are precisely those $\pi_0$ for which $\frac{\partial}{\partial \pi_0(a)}\E{u\mid \pi_0} \leq 0$ for all $a$. Thus, they include the ex ante optimal policies.
\end{restatable}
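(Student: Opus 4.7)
The plan is to reduce to \Cref{thm:CDT+GT} by constructing, for each policy $\pi_0$, an auxiliary ``simulation'' decision problem $D'(\pi_0)$ whose dependence functions are all the identity and whose ex ante expected utility agrees with the original to first order at $\pi_0$. Concretely, I would replace each original state $s$ with $i(s)=j$ by a block of $N_j(\pi_0)$ ``simulation'' states in which the agent acts directly, followed by a transition that draws the dependant's action from $g_j(\cdot\mid\pi_0)$ applied to the simulated actions and then proceeds as in the original problem. Crucially, the parameters $N_j(\pi_0)$ and $g_j(\cdot\mid\pi_0)$ are frozen at the reference policy $\pi_0$, so $D'(\pi_0)$ is an honest decision problem in our setting with identity dependence functions. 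This mirrors the construction used in the proof of \Cref{thm:global_sims_result}, but now applied only locally.

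Next I would verify two matching properties. First, the CDT+LSGT expected utility of any action $a$ at $\pi_0$ in the original problem equals the CDT+GT expected utility of $a$ at $\pi_0$ in $D'(\pi_0)$: the expected number of visits to all simulation states associated with a given $s$ is $N_{i(s)}(\pi_0)\E{\#(s)\mid\pi_0}$, so GT in $D'(\pi_0)$ reproduces the LSGT probability $P(s\mid\pi_0)$; and the distribution over the dependant's action, conditional on the agent taking $a$ in a uniformly chosen simulation slot, is exactly $\tau_j(a,\pi_0)$. Second, letting $\widetilde F_j(\pi_0')\defeq \E{g_j(A_{1:N_j}\mid\pi_0)\mid A_{1:N_j}\simiid \pi_0'}$ denote the effective dependence function induced by $D'(\pi_0)$ as the agent's policy varies, condition (\ref{eq:local_sim}) says precisely that $\widetilde F_j(\pi_0+\eps(\pi_0'-\pi_0)) = F_j(\pi_0+\eps(\pi_0'-\pi_0))+o(\eps)$, with $\widetilde F_j(\pi_0)=F_j(\pi_0)$. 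Since the ex ante expected utility is smooth in the dependants' policies, the two ex ante expected utility functions agree to first order at $\pi_0$, and hence share identical directional derivatives $\frac{\partial}{\partial \pi_0(a)}\E{u\mid\pi_0}$ there.

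Applying \Cref{thm:CDT+GT} to $D'(\pi_0)$ then gives that $\pi_0$ is CDT+GT in $D'(\pi_0)$ iff $\frac{\partial}{\partial \pi_0(a)}\E{u\mid\pi_0}\leq 0$ for all $a$, where by the derivative-matching this inequality holds in $D'(\pi_0)$ iff it holds in the original problem. Combined with the first matching property, this is exactly the claimed characterisation of CDT+LSGT policies. That every ex ante optimal policy lies in this set is the standard first-order necessary condition for a maximum on $\Delta(A)$: at a maximum, the directional derivative toward any vertex $a$ must be non-positive.

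The main obstacle is the second matching step, namely propagating the $o(\eps)$ closeness of $\widetilde F_j$ and $F_j$ through the distribution over possibly unboundedly long histories to conclude that the ex ante expected utilities have identical first-order behaviour at $\pi_0$. Writing $\E{u\mid \pi_0}$ as a sum over histories weighted by products of $T(\cdot\mid s, F_{i(s)}(\pi_0))$-terms and differentiating termwise requires \Cref{cor:exp_hist_length_finite} to justify dominated convergence, so that the derivative can be exchanged with the sum. The rest of the argument --- unpacking the CDT+$X$ expected utility under transformation functions and tallying GT weights in $D'(\pi_0)$ --- is routine bookkeeping that mirrors the proof of \Cref{thm:global_sims_result}.
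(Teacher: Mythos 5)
Your proposal is correct and follows essentially the same route as the paper's own (sketch) proof: freeze the local approximation $g_j(\cdot\mid\pi_0), N_j(\pi_0)$ at the reference policy to obtain a GSGT-compatible problem whose beliefs and whose first-order ex ante behaviour at $\pi_0$ coincide with the original, then reduce to the CDT+GT characterisation (\Cref{thm:CDT+GT} via \Cref{thm:global_sims_result}). The technical obstacle you flag --- propagating the $o(\eps)$ agreement of the dependence functions to the ex ante expected utility --- is exactly what the paper discharges via \Cref{lemma:ex_ante_deriv} and \Cref{cor:ex_ante_Lipschitzish}, so your plan matches the paper's argument in both structure and substance.
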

\excludeforjournal{
\begin{proof}
See \Cref{pf:localsimsresult}, p.\ \pageref{pf:localsimsresult}.
\end{proof}
}

When can we model the dependence in this way? Exactly when the dependence function is differentiable:
\begin{restatable}{prop}{localsampleequiv}\label{prop:local_sample_equiv}
    We can locally approximate $f:\Delta(A)\rightarrow\Delta(A)$ at $\pi_0$ (as in \Cref{eq:local_sim}) if and only if $f$ is differentiable at $\pi_0$.
\end{restatable}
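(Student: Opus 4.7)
The plan is to prove each direction separately.

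\textbf{Forward direction.} Suppose $f$ admits a local sample approximation at $\pi_0$, and write $\hat{F}(q) := \E{g(A_{1:N})\mid A_{1:N}\simiid q}$. Then $\hat{F}$ is a polynomial in $q$ of degree at most $N$, hence smooth. Evaluating the local-sim identity at $\epsilon=0$ (using $o(\epsilon)\to 0$) gives $f(\pi_0) = \hat{F}(\pi_0)$. Expanding $\hat{F}(\pi_0 + \epsilon(\pi_0'-\pi_0))$ as a polynomial in $\epsilon$ and isolating the first-order term yields
\begin{equation*}
    f(\pi_0 + \epsilon(\pi_0' - \pi_0)) = f(\pi_0) + \epsilon\sum_a \pi_0'(a)\,\nabla_{a-\pi_0}\hat{F}(\pi_0) + o(\epsilon),
\end{equation*}
which is precisely the defining condition for $f$ to be differentiable at $\pi_0$, with $\tfrac{\partial f}{\partial \pi_0(a)} = \nabla_{a-\pi_0}\hat{F}(\pi_0)$.

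\textbf{Backward direction.} Given $f$ differentiable at $\pi_0$, I will construct a sample-based polynomial $\hat{F}$ with $\hat{F}(\pi_0) = f(\pi_0)$ and the same directional derivatives as $f$ at $\pi_0$; this yields the local-sim equation immediately, since the first-order Taylor expansion of $\hat{F}$ at $\pi_0$ then matches that of $f$. The obvious candidate is the linearisation
\begin{equation*}
    L(q) := f(\pi_0) + \sum_a (q(a)-\pi_0(a))\,\tfrac{\partial f}{\partial \pi_0(a)},
\end{equation*}
which has the right value and directional derivatives at $\pi_0$ but can have negative entries away from $\pi_0$. To fix this I interpolate with $f(\pi_0)$: let $\hat{F}(q) := \alpha(q) L(q) + (1-\alpha(q)) f(\pi_0)$, where $\alpha$ is a polynomial with $\alpha(\pi_0) = 1$ and $\nabla \alpha(\pi_0) = 0$, for instance $\alpha(q) = (1 - \|q-\pi_0\|^2/D)^k$ with $D := \max_{q\in\Delta(A)}\|q-\pi_0\|^2$ and $k$ a large integer. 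These properties of $\alpha$ force $\hat{F}$ to inherit the value and derivative of $L$ at $\pi_0$.

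It remains to show that $\hat{F}$ maps into $\Delta(A)$ and is a sample-based polynomial. Since $L$ is bounded on $\Delta(A)$ and $\alpha\to 0$ uniformly outside any fixed neighbourhood of $\pi_0$ as $k\to\infty$, for $k$ large enough $\hat{F}(q)\in\Delta(A)$ for every $q\in\Delta(A)$. When $f(\pi_0)$ lies in the interior of $\Delta(A)$, the same bound keeps $\hat{F}$ in the interior of $\Delta(A)$, and \Cref{cor:poly_factor_char} then yields that $\hat{F}$ is a distribution over $N$ action samples for some $N$, completing the construction.

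The main obstacle is the boundary case, in which $f(\pi_0)$ has zero entries (or $\pi_0\in\partial\Delta(A)$), since then $\hat{F}$ necessarily has zeros on the simplex and \Cref{cor:poly_factor_char} does not apply directly. For an interior $\pi_0$ with $f(\pi_0)(b) = 0$, the nonnegativity of $f$ together with differentiability forces $\tfrac{\partial f(b\mid\pi_0)}{\partial \pi_0(a)} = 0$ for every $a$ (since $\pi_0$ is then an interior minimum of $f(b\mid \cdot)$). Hence $L(q)(b)\equiv 0$, so $\hat{F}(q)(b)\equiv 0$, and one can restrict $\hat{F}$ to the support of $f(\pi_0)$, apply \Cref{cor:poly_factor_char} on that sub-simplex, and extend the resulting $g$ by putting probability zero on the coordinates outside the support. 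The case $\pi_0\in\partial\Delta(A)$ is analogous, using one-sided directional derivatives.
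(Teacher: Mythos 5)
Your route is genuinely different from the paper's: rather than constructing $g$ directly on action tuples (the paper's proof uses empirical-frequency thresholds, a local-limit estimate for binomial point masses, and a Sherman--Morrison inversion to solve for the output distributions), you build a polynomial surrogate $\hat F$ that agrees with $f$ to first order at $\pi_0$ and then outsource the hard work to the global machinery (\Cref{cor:poly_factor_char}, hence \Cref{lemma:heat_diffusion} and \Cref{prop:global_sim}). The forward direction is fine. The backward direction is also sound when $\pi_0$ is interior: $\alpha$ has the right value and vanishing derivative at $\pi_0$, $\hat F$ sums to one, the bound $\sup_q \alpha(q)\norm{q-\pi_0}\to 0$ as $k\to\infty$ keeps every coordinate of $\hat F$ strictly positive wherever $f(\pi_0)$ is positive, and your observation that an interior $\pi_0$ forces $\tfrac{\partial f(b\mid\pi_0)}{\partial \pi_0(a)}=0$ whenever $f(\pi_0)(b)=0$ correctly disposes of the zero coordinates. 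Where it works, this is a cleaner argument than the paper's.

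The gap is the case $\pi_0\in\partial\Delta(A)$, which you dismiss as ``analogous''. It is not: at a boundary $\pi_0$, a coordinate $b$ with $f(\pi_0)(b)=0$ can have strictly positive directional derivatives (take $f(q)(b)=q(a_1)$ and $\pi_0=a_0$, so $\tfrac{\partial f(b\mid \pi_0)}{\partial\pi_0(a_1)}=1$), because the minimum argument only yields $\tfrac{\partial f(b\mid\pi_0)}{\partial\pi_0(a)}\ge 0$, with equality forced only for $a$ in the support of $\pi_0$. Then $L(\cdot)(b)$ is not identically zero, $\hat F(\cdot)(b)=\alpha(\cdot)L(\cdot)(b)$ is a nonnegative polynomial vanishing at $\pi_0$ itself, and neither \Cref{cor:poly_factor_char} nor your restriction-to-the-support trick applies. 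The construction can be rescued: $L(\cdot)(b)=\sum_a q(a)\,\tfrac{\partial f(b\mid\pi_0)}{\partial\pi_0(a)}$ has nonnegative coefficients, and one can check that the degree-two homogenisation of $1-\norm{q-\pi_0}^2/D$ does too (the $q_a^2$ coefficient is $1-\norm{a-\pi_0}^2/D\ge 0$ and the cross terms reduce to $D\ge\norm{\pi_0}^2-\pi_0(a)-\pi_0(b)$), so \Cref{prop:global_sim}\ref{poly} applies componentwise after padding to a common degree. But that verification is precisely the missing step, and the boundary case is not a dismissible corner: deterministic policies are boundary points, and those are exactly the policies at which the paper most needs LSGT beliefs to exist (cf.\ the verification in \Cref{eg:LSGT} that $\pi_0=C$ is a CDT+LSGT policy).
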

\excludeforjournal{
\begin{proof}
See \Cref{pf:localsampleequiv}, p.\ \pageref{pf:localsampleequiv}.
\end{proof}
}

\begin{eg}[continues= eg:SB_PD_formal]\label{eg:LSGT}
    Let us now apply LSGT to the second version of our example. Recall that our dependence functions were $F_1(\pi_0) = 0.2\times (0.5,0.5)+0.8\pi_0$ (for the medicated copy of Dorothea) and $F_2(\pi_0) = (\sqrt{0.1+0.8\pi_0(C)}, 1- \sqrt{0.1+0.8\pi_0(C)})$ (for Theodora). We have that the unique \textit{ex ante} optimal policy is to cooperate deterministically (cf.\ p.\ \pageref{eg:glob_approx}).

    Since both dependence functions are differentiable, we may apply LSGT to this decision problem. Let's verify that deterministically cooperating ($\pi_0 = (1,0)$) is a CDT+LSGT policy. In order to do this, we only need to form LSGT beliefs at this policy, so we begin by constructing a choice of $N_j$ and $g_j$ (for $j=1,2$) at $\pi_0 = C$. Now, for $j=1$, we may simply take $N_1 = 1$ and $g_1(a) = F_1(a)$, since $F_1$ is already linear. For $j=2$, consider taking $N_2 = 1$, $g_2(C) = F_2(C)$, and $g_2(D) = \tfrac{5}{9}F_2(C) + (0,\tfrac{4}{9})$. Note that to verify this satisfies \Cref{eq:local_sim}, we need only check with $\pi_0' = C$ or $\pi_0' = D$. For $\pi_0' = C$, this amounts to verifying $g_2(C) = F_2(C)$, which holds by construction. For $\pi_0' = D$, we instead need to check that $F_2((1-\eps)C + \eps D) = (1-\eps)g_2(C)+\eps g_2(D) + o(\eps)$. By differentiating, we obtain $F_2((1-\eps)C + \eps D) = F_2(C) + \eps(-\tfrac{0.4}{\sqrt{0.9}},\tfrac{0.4}{\sqrt{0.9}}) + o(\eps)$. Thus, it suffices to check $g_2(D)-g_2(C) = (-\tfrac{0.4}{\sqrt{0.9}},\tfrac{0.4}{\sqrt{0.9}})$, which may be easily verified.

    In order to apply LSGT, we now compute the LSGT probabilities and transformation functions. Since $N_1 = N_2 = 1$, the probabilities are the same as the GT probabilities would be: $\tfrac{1}{3}$ of being the medicated copy of Dorothea, $\tfrac{2}{3}$ of being at one the two awakenings of Theodora. Again since $N_1 = N_2 = 1$, the transformation functions are just $\tau_j(a) = g_j(a)$ (for $\pi_0 = C$).

    To check that $\pi_0 = C$ is a CDT+LSGT policy, it remains only to check that the CDT+LSGT utility of cooperating is at least as high as that of defecting, at $\pi_0 = C$. When we are at the state associated with Dorothea, the utility of cooperating is $2(\tau_2(C)_1-\tau_2(D)_1) = 2\times 0.8$ \emph{less} than that of defecting. When we are at a state associated with Theodora, the utility of cooperating is $3F_2(C)_1(\tau_2(C)_1-\tau_2(D)_1) = 3 \times \sqrt{0.9}\times \frac{0.4}{\sqrt{0.9}}=\tfrac{6}{5}$ \emph{more} than that of defecting. Putting this all together, the CDT+LSGT utility of cooperating at $\pi_0 = C$ is $\tfrac{2}{3}\times \tfrac{6}{5} - \tfrac{1}{3}\times\tfrac{8}{5} > 0$ more than defecting, so $\pi_0 = C$ is indeed a CDT+LGST policy.
\end{eg}

\section{Generalised Generalised Thirding}\label{sec:GGT}

In the previous section, we considered various ways to model the decision problem as involving explicit simulations of the agent, thereby enabling us to apply Generalised Thirding. We now introduce a (generalised) theory of self-locating beliefs that allows CDT to handle Newcomblike decision problems without using such models. We will call this theory of self-locating beliefs, which generalises Generalised Thirding, Generalised Generalised Thirding (GGT).

\subsection{Defining Generalised Generalised Thirding}\label{sec:defGGT}

\begin{restatable}[Equivalent definition of differentiability]{prop}{eqdiff}\label{prop:eq_diff}
    $F: \Delta(A) \rightarrow \Delta(A)$ is differentiable at $\pi_0$ if and only if there exists $\rho \geq 0$ and $\tau: A \rightarrow \Delta(A)$ such that for all $\pi_0'$:
    \begin{align}\label{eq:rho_tau_char}
        &F(\pi_0 + \eps(\pi_0' -\pi_0))\\
        &\quad= F(\pi_0) + \eps \rho \left(\sum_{a'} \tau(a')\pi_0'(a') - F(\pi_0)\right) + o(\eps).\nonumber
    \end{align}
\end{restatable}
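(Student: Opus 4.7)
The proposition is an if-and-only-if, and the strategy is to match the two descriptions of the first-order behaviour of $F$ near $\pi_0$ coordinate by coordinate. Let $D_a \defeq \frac{\partial F}{\partial \pi_0(a)} \in \mathbb{R}^A$ denote the directional derivatives (viewed as vectors indexed by $A$).

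\textbf{Easy direction ($\Leftarrow$).} Assume the right-hand side of \eqref{eq:rho_tau_char} holds. Specialising to $\pi_0' = a$ (pure strategies) immediately identifies $D_a = \rho(\tau(a) - F(\pi_0))$; linearity in $\pi_0'$ of the displayed expression and the constraint $\sum_a \pi_0'(a) = 1$ then match the form required by \Cref{def:differentiability}, so $F$ is differentiable at $\pi_0$.

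\textbf{Hard direction ($\Rightarrow$).} Assume $F$ is differentiable at $\pi_0$. The plan is to \emph{guess} $\tau(a) = F(\pi_0) + D_a / \rho$ and then choose $\rho$ large enough that each $\tau(a)$ is a genuine probability distribution. Before that, I need two feasibility facts about the derivatives coming from $F$ taking values in $\Delta(A)$:
\emph{(i) zero column sums:} each vector $D_a$ sums to $0$. This follows because $\sum_{a'} F(\pi_0 + \eps(a - \pi_0))(a') \equiv 1$, so differentiating in $\eps$ kills the first-order term;
\emph{(ii) sign constraint at the boundary:} if $F(\pi_0)(a') = 0$, then $D_a(a') \geq 0$ for every $a$. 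This follows because $F(\pi_0 + \eps(a - \pi_0))(a') = \eps D_a(a') + o(\eps)$ must be $\geq 0$ for small $\eps > 0$.

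With (i) and (ii) in hand, set
\[
\rho \defeq \max\Bigl\{0, \ \max_{a, a' : F(\pi_0)(a') > 0} \tfrac{-D_a(a')}{F(\pi_0)(a')}\Bigr\},
\]
and (if $\rho > 0$) define $\tau(a) \defeq F(\pi_0) + D_a / \rho$; if $\rho = 0$ then all $D_a$ vanish (since (ii) handles the zero entries of $F(\pi_0)$ and the definition of $\rho$ handles the positive ones), and we can take $\tau$ arbitrary. By construction $\tau(a)(a') \geq 0$ for every $a, a'$, and using (i), $\tau(a)$ sums to $1$, so $\tau(a) \in \Delta(A)$.

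It remains to verify \eqref{eq:rho_tau_char}. Substituting the definition of $\tau$ and using $\sum_{a'} \pi_0'(a') = 1$ for $\pi_0' \in \Delta(A)$, the right-hand side telescopes to $F(\pi_0) + \eps \sum_{a'} \pi_0'(a') D_{a'} + o(\eps)$, which is exactly the first-order expansion guaranteed by differentiability. This closes both directions.

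\textbf{Main obstacle.} The interesting content of the argument is the boundary case where some entries of $F(\pi_0)$ vanish: without fact (ii) there would be no way to choose a finite $\rho$ making $\tau(a) \geq 0$ coordinate-wise, so I expect the bulk of the write-up to justify (ii) carefully and to handle the degenerate subcase $\rho = 0$ cleanly. Everything else is bookkeeping on the simplex.
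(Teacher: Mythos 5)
Your proof is correct and follows essentially the same route as the paper's, which factors the argument through \Cref{lemma:eq_diff}: the same two feasibility facts (zero column sums of the derivatives, and $D_a(a')\geq 0$ wherever $F(\pi_0)(a')=0$) and the same choice $\rho=\Gamma$, $\tau(a)=F(\pi_0)+D_a/\rho$. The only cosmetic remark is that your $\rho=0$ subcase implicitly also needs fact (i) to upgrade ``all entries of $D_a$ are nonnegative'' to ``$D_a=0$'', which you have available.
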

\excludeforjournal{
\begin{proof}
See \Cref{pf:eqdiff}, p.\ \pageref{pf:eqdiff}.
\end{proof}
}

\begin{defn}[GGT weights and transformation functions]\label{defn:GGT_wts_transf}
Given a decision problem $(S,S_T,P_0,n,i,A,T,u,\F)$, a set of \textit{GGT weights and transformation functions} is a set of \emph{weights} $\rho_j:\Delta(A) \rightarrow \mathbb{R}_{\geq 0}$ and transformation functions $\tau_j:A\times \Delta(A) \rightarrow\Delta(A)$ for each $j = 1, \ldots, n$ satisfying, for all $\pi_0$:
\begin{align}\label{eq:rho_tau_char_j}
        F_j(\pi_0& + \eps(\pi_0' -\pi_0)) = F_j(\pi_0)\\
        &+\eps \rho_j(\pi_0) \left(\sum_{a} \tau_j(a, \pi_0)\pi_0'(a) - F_j(\pi_0)\right) + o(\eps)\nonumber.
    \end{align}
\end{defn}
By \Cref{prop:eq_diff}, a choice of GGT weights and transformation functions exists whenever $\F$ is differentiable everywhere. We will write $\tau(\pi_0',\pi_0)$ for $\sum_a \pi_0'(a)\tau(a,\pi_0)$, and $\tau(a'\mid a, \pi_0)$ for the $a'$th component of $\tau(a,\pi_0)$.

Intuitively, $\rho_j$ and $\tau_j$ are such that as we move from policy $\pi_0$ towards $a$ at some rate, $F_j$ moves away from its original value and towards $\tau_j(a,\pi_0)$ at $\rho_j(\pi_0)$ times that rate. GGT will function similarly to GT, except it will weight the probability of being dependant $j$ by $\rho_j(\pi_0)$, and will treat taking action $a$ when one is dependant $j$ as instead randomising according to policy $\tau_j(a,\pi_0)$. For an equivalent, more explicit/constructive, definition of the GGT weights and transformation functions, see \Cref{sec:GGT_alt_def}.

\begin{defn}[Generalised Generalised Thirding]\label{def:GGT}
    Given a decision problem with differentiable dependence function, and a set of GGT weights $(\rho_j)_{j=1}^n$, define the Generalised Generalised Thirding (GGT) credences as follows:
    \begin{equation*}
        P_{GGT}(s\mid \pi_0) \defeq \frac{\rho_{i(s)}(\pi_0) \E{\#(s)\mid \pi_0}}{\sum_{s'}\rho_{i(s')}(\pi_0) \E{\#(s')\mid \pi_0}} %
    \end{equation*}
    provided that the denominator is non-zero. If instead $\sum_j \rho_{j}(\pi_0)\E{\#(j)\mid \pi_0} = 0$, divide credence arbitrarily between states $s$ with $\E{\#(s)\mid \pi_0}>0$.
\end{defn}
    
    We can see this is similar to the usual Generalised Thirding credences (in the single observation case), but with each expected instance of dependant $j$ weighted according to $\rho_j$. This also bears striking similarity to LSGT/GSGT credences, as we discuss further in Section \ref{sec:relating_to_sims}.

\excludeforjournal{
\begin{rk}
    Note that there are infinitely many choices of $\rho_j$ for each decision problem, and thus infinitely many sets of GGT beliefs. Thus, for GGT to be a theory of self-locating beliefs, strictly speaking we need a map from decision problems to GGT weights. We will however only ever specify the GGT weights with respect to a given decision problem, since choices of weights for other problems will not be relevant.
\end{rk}
}

\subsection{Main result}\label{sec:main}
We are now ready to give a characterisation of CDT+GGT compatible policies, which gives as an immediate corollary that the \textit{ex ante} optimal policy is CDT+GGT compatible.
\begin{restatable}{thm}{thmmain}\label{thm:main}
    Suppose that $(S,S_T,P_0,n,i,A,T,u,\F)$ is a decision problem and that $\F$ is differentiable. Then, a policy $\pi_0$ is CDT+GGT compatible if and only if for all $a\in A$ we have $\frac{\partial}{\partial \pi_0(a)}\E{u\mid\pi_0} \leq 0$.
\end{restatable}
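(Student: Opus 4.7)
The plan is to show that $\Ex_{GGT}[u\mid \Do(a),\pi_0] - \Ex_{GGT}[u\mid \Do(\pi_0),\pi_0]$ equals a non-negative scalar multiple of $\tfrac{\partial}{\partial \pi_0(a)}\E{u\mid \pi_0}$; since CDT+GGT compatibility of $\pi_0$ is equivalent to this difference being non-positive for every $a\in A$, the characterisation follows immediately. The key ingredient is a derivative formula for the ex ante expected utility as a function of the dependants' policies. Writing $V(\bm{\pi}) \defeq \E{u\mid \bm{\pi}}$, I claim that for $b\in\Delta(A)$, the directional derivative of $V$ with respect to $\pi_j$ (holding the other $\pi_{j'}$ fixed) in direction $b-\pi_j$ equals
\begin{equation*}
\sum_{s:\,i(s)=j}\E{\#(s)\mid \bm{\pi}}\left(\textstyle\sum_{s'}T(s'\mid s,b)\Ex^{s'}[u\mid \bm{\pi}] - \Ex^s[u\mid \bm{\pi}]\right).
\end{equation*}
This follows from the product rule applied to the history probability $\prob{s_0\cdots s_t\mid \bm{\pi}} = P_0(s_0)\prod_k T(s_{k+1}\mid s_k, \pi_{i(s_k)})$: perturbing $\pi_j$ changes each factor with $i(s_k)=j$, and summing the first-order contributions across all such visits across all histories yields the expression, with termwise differentiation of the infinite sum justified by the finite expected history length (\Cref{cor:exp_hist_length_finite}).

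Applying the chain rule to $\E{u\mid \pi_0} = V(\F(\pi_0))$ and substituting the expansion $F_j(\pi_0 + \eps(a-\pi_0)) = F_j(\pi_0) + \eps\rho_j(\pi_0)(\tau_j(a,\pi_0) - F_j(\pi_0)) + o(\eps)$ from \Cref{eq:rho_tau_char_j} then gives
\begin{equation*}
\tfrac{\partial}{\partial \pi_0(a)}\E{u\mid \pi_0} = \sum_s \rho_{i(s)}(\pi_0)\E{\#(s)\mid \pi_0}\left(\textstyle\sum_{s'} T(s'\mid s, \tau_{i(s)}(a,\pi_0))\Ex^{s'}[u\mid \pi_0] - \Ex^s[u\mid \pi_0]\right).
\end{equation*}
Setting $\pi_0' = \pi_0$ in \Cref{eq:rho_tau_char_j} gives $\rho_j(\pi_0)(\tau_j(\pi_0,\pi_0) - F_j(\pi_0))=0$, so whenever $\rho_j(\pi_0)>0$ we have $\tau_j(\pi_0,\pi_0) = F_j(\pi_0)$. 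Substituting the GGT credence into $\Ex_{GGT}[u\mid \Do(a),\pi_0]$ and $\Ex_{GGT}[u\mid \Do(\pi_0),\pi_0]$ and applying this identity, the same weighted sum emerges, yielding
\begin{equation*}
\Ex_{GGT}[u\mid \Do(a),\pi_0] - \Ex_{GGT}[u\mid \Do(\pi_0),\pi_0] = \tfrac{1}{Z(\pi_0)}\,\tfrac{\partial}{\partial \pi_0(a)}\E{u\mid \pi_0},
\end{equation*}
where $Z(\pi_0) \defeq \sum_{s'} \rho_{i(s')}(\pi_0)\E{\#(s')\mid \pi_0}$, provided $Z(\pi_0) > 0$.

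Since $Z(\pi_0)>0$, CDT+GGT ratifiability of $\pi_0$ is equivalent to the derivative condition, giving the theorem. The edge case $Z(\pi_0)=0$ forces $\rho_{i(s)}(\pi_0) = 0$ on the support of the GGT credence, so the derivative formula shows the derivative vanishes identically (and the condition holds trivially); since \Cref{eq:rho_tau_char_j} imposes no constraint on the relevant $\tau_j$ there, we may take $\tau_j(a,\pi_0)=\pi_0$ to make the CDT+GGT EU independent of $a$, so $\pi_0$ is trivially ratifiable. The main obstacle is rigorously justifying the derivative formula for $V$: each history contributes a polynomial in $\bm{\pi}$, but the sum over histories is infinite, so termwise differentiation must be justified carefully using \Cref{cor:exp_hist_length_finite} together with a dominated-convergence-type argument to bound the tail contributions uniformly in $\eps$.
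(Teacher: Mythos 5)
Your proposal is correct and follows essentially the same route as the paper: the paper's proof reduces the theorem to exactly the identity you derive (its Lemma~\ref{lemma:main}, $\frac{\partial}{\partial\pi_0(a)}\E{u\mid\pi_0}=\bigl(\sum_j\rho_j(\pi_0)\E{\#(j)\mid\pi_0}\bigr)\bigl(\Ex_{GGT}[u\mid\Do(a),\pi_0]-\Ex_{GGT}[u\mid\Do(\pi_0),\pi_0]\bigr)$), disposes of the degenerate case $\sum_j\rho_j(\pi_0)\E{\#(j)\mid\pi_0}=0$ by the same constant-transformation observation, and rests on the same derivative formula for the \textit{ex ante} utility (Lemma~\ref{lemma:ex_ante_deriv}). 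The one point of divergence is the justification of that formula: where you propose termwise differentiation of the history sum with a dominated-convergence argument, the paper couples the perturbed and unperturbed chains via i.i.d.\ Bernoulli$(\eps)$ deviation indicators and the strong Markov property, showing the two-or-more-deviation contribution is $O(\eps^2)$ --- a cleaner way to obtain the uniform-in-$\eps$ tail control that you correctly identify as the main obstacle.
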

\excludeforjournal{
\begin{proof}
See \Cref{pf:thmmain}, p.\ \pageref{pf:thmmain}.
\end{proof}
}

\begin{cor}\label{cor:main}
    If the dependence function is differentiable, and $\pi_0^*$ is ex ante optimal, then $\pi_0^*$ is a CDT+GGT policy.
\end{cor}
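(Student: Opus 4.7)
The plan is to deduce the corollary directly from Theorem \ref{thm:main} via the standard first-order necessary condition for a maximum on the simplex. Theorem \ref{thm:main} characterises the CDT+GGT policies as exactly those $\pi_0$ satisfying $\frac{\partial}{\partial \pi_0(a)}\E{u\mid \pi_0} \leq 0$ for all $a \in A$, so it suffices to show that every \emph{ex ante} optimal $\pi_0^*$ meets this condition.

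First I would check that $\E{u\mid \pi_0}$ is differentiable at $\pi_0^*$ in the sense of Definition \ref{def:differentiability}, so that the theorem is applicable. Since $\F$ is assumed differentiable and the dependants' policies $\bm{\pi} = \F(\pi_0)$ enter the history probabilities $\prob{s_0 \cdots s_t \mid \bm{\pi}}$ polynomially, each history term is a differentiable function of $\pi_0$. Assumption \ref{asm:finite_hist} together with Lemma \ref{cor:exp_hist_length_finite} (finite expected history length) lets one justify interchanging the differentiation with the (infinite) sum over histories, yielding differentiability of $\pi_0 \mapsto \E{u \mid \pi_0}$. This is presumably established already inside the proof of Theorem \ref{thm:main}, so in the corollary's proof I would simply cite it.

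Next, the key substantive step: for any $a \in A$, the point $\pi_0^* + \eps(a - \pi_0^*) = (1-\eps)\pi_0^* + \eps a$ lies in $\Delta(A)$ for every $\eps \in [0,1]$, so by \emph{ex ante} optimality of $\pi_0^*$ we have
\begin{equation*}
\E{u \mid \pi_0^* + \eps(a - \pi_0^*)} - \E{u \mid \pi_0^*} \leq 0
\end{equation*}
for all sufficiently small $\eps > 0$. Dividing by $\eps$ and taking $\eps \downarrow 0$ gives $\frac{\partial}{\partial \pi_0(a)}\E{u \mid \pi_0^*} \leq 0$ directly from the definition of the directional derivative in Definition \ref{def:differentiability}.

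Applying Theorem \ref{thm:main} then concludes the argument. The only obstacle worth flagging is the differentiability issue in the first paragraph: one wants to ensure $\E{u \mid \pi_0}$ inherits differentiability from $\F$, which, as noted, should already be part of the machinery used to state and prove Theorem \ref{thm:main}. Everything else is the textbook ``gradient points inward at a maximum on a convex set'' argument specialised to the simplex.
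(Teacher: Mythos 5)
Your proposal is correct and matches the paper's (implicit) argument: the paper states \Cref{cor:main} as an immediate consequence of \Cref{thm:main}, relying on exactly the first-order condition you give — for an \textit{ex ante} optimal $\pi_0^*$ and any $a$, the point $(1-\eps)\pi_0^*+\eps a$ stays in $\Delta(A)$, so the directional derivative is $\leq 0$. The differentiability of $\E{u\mid\pi_0}$ that you flag is indeed established inside the proof machinery of \Cref{thm:main} (via \Cref{lemma:main} and \Cref{lemma:ex_ante_deriv}), so citing it is the right move.
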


Our proofs of these results use similar  main ideas to \citeauthor{piccione}'s proof of our \Cref{cor:CDT+GT}.

\begin{eg}[continues=eg:SB_PD]\label{eg:GGT}
We may now apply GGT to the second case of our example. Recall that we have $F_1(\pi_0) = 0.2\times(0.5,0.5)+0.8\pi_0$ and $F_2(\pi_0)= (\sqrt{0.1+0.8\pi_0(C)}, 1- \sqrt{0.1+0.8\pi_0(C)})$. As we calculated on p.\ \pageref{eg:glob_approx}, the unique \textit{ex ante} optimal policy is to cooperate deterministically.

Let's apply GGT to this example. Let $\rho_1=0.8$, and $\tau_1(a,\pi_0) = a$. We can immediately then see that $F_1(\pi_0 + \eps(\pi_0'-\pi_0)) = F_1(\pi_0) + 0.8\eps (\pi_0'-\pi_0) = F_1(\pi_0) + \eps\rho_1(\tau_1(\pi_0',\pi_0)-F_1(\pi_0))$, so this is a valid choice of GGT weight and transformation function for dependant 1 (the medicated version of Dorothea). Meanwhile, let $\rho_2=2$, and
\begin{equation*}
\tau_2(C,\pi_0) = \left(\frac{0.3+0.6\pi_0(C)}{\sqrt{0.8\pi_0(C)+0.1}},1-\frac{0.3+0.6\pi_0(C)}{\sqrt{0.8\pi_0(C)+0.1}}\right)
\end{equation*}
\begin{equation*}
\tau_2(D,\pi_0) = \left(\frac{0.1+0.6\pi_0(C)}{\sqrt{0.8\pi_0(C)+0.1}},1-\frac{0.1+0.6\pi_0(C)}{\sqrt{0.8\pi_0(C)+0.1}}\right).
\end{equation*}
We may verify that $\tau_2$ then always lies in $\Delta(A)$. Meanwhile, we have that
$
\sqrt{0.8(p+\eps)+0.1} = \sqrt{0.8p+0.1} + \frac{0.8}{2\sqrt{0.8p+0.1}}\eps + o(\eps).
$
From this, we may verify that $F_2(\pi_0+\eps(\pi_0'-\pi_0))=F_2(\pi_0)+\eps\rho_2(\tau_2(\pi_0',\pi_0)-F_2(\pi_0))+o(\eps)$. Applying GGT then, Dorothea has credence $\frac{0.8}{2\times2+0.8} = \frac{1}{6}$ in being the copy of herself, and $\frac{5}{6}$ in being one of the copies of Theodora. When she is herself, cooperating loses her an expected $2$ utility (recall that $\rho_1=0.8$, so $\tau_1$ is just the identity). When she is Theodora, cooperating gains her 3 utility when she causes Theodora to cooperate when the other copy is already cooperating, which happens with probability $(\tau_2(C\mid C,\pi_0)-\tau_2(C\mid D,\pi_0))F_2(C\mid\pi_0) = 0.2$. Thus, by GGT lights, cooperating gains Dorothea an expected $3\times0.2\times\frac{5}{6} - 2\times\frac{1}{6} = \frac{1}{6}$ utility.
\end{eg}

\subsection{Relating GGT to the simulation models}\label{sec:relating_to_sims}

In \Cref{sec:samples_model}, we discussed theories of self-locating beliefs based on writing (and later, locally approximating) $\F$ as the distribution arising from applying some function to independent samples of $\pi_0$.
How do these approaches relate to GGT?

We obtain that whenever it is possible to approximate $F_i$ locally as arising from $N_i(\pi_0)$ independent samples of the agent's policy, we may also apply GGT with $\rho_i = N_i$. Moreover, the GGT beliefs then exactly correspond to GT beliefs in the modified problem where dependant $i$ simulates copies of the agent (i.e., to LSGT beliefs). Thus, the GGT+CDT policies are exactly the LSGT+CDT policies, or equivalently, the GT+CDT policies in the modified problem with the simulations. The same holds in the special case where we can apply GSGT.

\begin{restatable}{prop}{localsamplemodel}
\label{prop:local_sample_model}
    Suppose that a decision problem admits LSGT (resp.\ GSGT) beliefs via $g_i$ and $N_i$, for $i=1,\ldots n$. Then we may apply GGT with $\rho_i = N_i$ for each $i$, and the resulting GGT beliefs are identical to the LSGT (resp.\  GSGT) beliefs.

\end{restatable}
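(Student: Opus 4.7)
The plan is to verify two things: (a) the pair $(\rho_i,\tau_i)=(N_i,\tau_i^{\mathrm{LSGT}})$ satisfies the defining equation (\ref{eq:rho_tau_char_j}) of GGT weights and transformation functions, and (b) the resulting GGT credences coincide with the LSGT credences. Claim (b) is immediate: substituting $\rho_{i(s)}(\pi_0)=N_{i(s)}(\pi_0)$ into the definition of $P_{GGT}(s\mid\pi_0)$ reproduces the LSGT expression verbatim. All the work is therefore in (a).

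For (a), the key step is to expand the LSGT representation (\ref{eq:local_sim}) to first order in $\eps$. Since $A_1,\ldots,A_{N_j}$ are i.i.d.\ from $\pi_0+\eps(\pi_0'-\pi_0)$, the joint probability factorises as $\prod_{k=1}^{N_j}\bigl(\pi_0(a_k)+\eps(\pi_0'(a_k)-\pi_0(a_k))\bigr)$, whose first-order expansion picks out exactly one factor at a time. Multiplying by $g_j(a_{1:N_j})$, summing over $a_{1:N_j}$, and swapping the order of summation turns the $O(\eps)$ contribution into $\sum_{k=1}^{N_j}\sum_a(\pi_0'(a)-\pi_0(a))\,\E{g_j(A_{1:N_j})\mid A_k=a,\,A_{-k}\simiid\pi_0}$. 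By the definition of $\tau_j$ in the LSGT beliefs, the sum over $k$ equals $N_j\,\tau_j(a,\pi_0)$.

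Finally I would observe that $\sum_a\pi_0(a)\tau_j(a,\pi_0)=F_j(\pi_0)$, since averaging each inner conditional expectation against $\pi_0$ returns $\E{g_j(A_{1:N_j})\mid A_{1:N_j}\simiid\pi_0}=F_j(\pi_0)$ for every $k$. Substituting back gives
\[
F_j(\pi_0+\eps(\pi_0'-\pi_0)) = F_j(\pi_0)+\eps N_j\bigl(\tau_j(\pi_0',\pi_0)-F_j(\pi_0)\bigr)+o(\eps),
\]
which is exactly (\ref{eq:rho_tau_char_j}) with $\rho_j=N_j$. The GSGT case requires no changes: the representation is then exact for all $\eps$, so the same derivation goes through with $o(\eps)$ replaced by $O(\eps^2)$. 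The main (and still minor) obstacle is the bookkeeping in the product-rule expansion; once one recognises the sum over $k$ as $N_j$ copies of $\tau_j$ and that $\tau_j$ averaged against $\pi_0$ returns $F_j(\pi_0)$, the identification $\rho_j=N_j$ is forced.
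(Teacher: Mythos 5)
Your proposal is correct and follows essentially the same route as the paper: the paper factors the key first-order expansion through Lemma~\ref{lemma:g_equiv} (proved there via a Bernoulli latent-variable coupling), but that lemma's content is exactly your product-rule expansion of $\prod_k\bigl(\pi_0(a_k)+\eps(\pi_0'(a_k)-\pi_0(a_k))\bigr)$, the identification of the sum over $k$ with $N_j\tau_j$, and the observation that $\tau_j$ averaged against $\pi_0$ recovers $F_j(\pi_0)$. The only detail worth making explicit is that $\tau_j(a,\pi_0)$, being an average of $\Delta(A)$-valued expectations, indeed lies in $\Delta(A)$ as \Cref{defn:GGT_wts_transf} requires.
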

\excludeforjournal{
\begin{proof}
See \Cref{pf:localsamplemodel}, p.\ \pageref{pf:localsamplemodel}.
\end{proof}
}

Recall that in order to apply GGT we must have that the dependence functions are differentiable, and hence can be written as functions of \textit{some} number of iid action samples (Proposition \ref{prop:local_sample_equiv}). Proposition \ref{prop:local_sample_model} then gives that the minimum number of simulations we can use for this is at least the minimum possible GGT weight. What about the converse -- is it the case that when we can use GGT weight $\rho_i$, we can always write $F_i$ as depending on $\rho_i$ expected simulations, thus giving that the minimum GGT weight is equal to the minimum number of simulations?

It is clear that we can do this when $\rho_i(\pi_0) \leq 1$: simply take a single simulation with probability $\rho_i$, and then apply $g_i = \tau_i$ to it. The same holds at deterministic policies $\pi_0 = a$. In such cases, we can simply take $g(A_1, \ldots A_N)$ to be $F(\pi_0)$ if all the $A_j$ are $a$, and $\tau(a',\pi_0)$ in the case all but one is $a$ and the other $a'$, and otherwise choose $g$ arbitrarily. However, in general the minimum number of simulations can exceed the minimum GGT weight:

\begin{restatable}{prop}{Nrho}\label{prop:N>rho}
    There exists $F$ differentiable on $\Delta(A)$, and $\pi_0 \in \Delta(A)$, for which for all $a\in A$,
    \begin{equation*}
        F(\pi_0+ \eps(a-\pi_0)) = F(\pi_0) + \eps\rho(\tau(a)-F(\pi_0)) + o (\eps)
    \end{equation*}
    with $\rho>0$ (and $\rho \in \mathbb{N}$) and $\tau:A\rightarrow \Delta(A)$, but where there exists no (potentially random) $N$ and $g_N:A^N \rightarrow \Delta(A)$ with $\E{N} \leq \rho$ forming a local approximation of $F$ at $\pi_0$.
\end{restatable}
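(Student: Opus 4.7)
The plan is to exhibit a concrete two-action counterexample and rule out all simulation approximations with $\E{N}\le\rho$ via a Bernstein-polynomial / LP bound. Take $A=\{0,1\}$, $\pi_0=(\tfrac{1}{2},\tfrac{1}{2})$, and $F((p,1-p))=(f(p),1-f(p))$ for any smooth $f:[0,1]\to[0,1]$ with $f(\tfrac12)=\tfrac12$ and $f'(\tfrac12)=2$ (for concreteness, the sigmoid $f(p)=1/(1+e^{-8(p-1/2)})$). Then $F$ is differentiable on $\Delta(A)$, and a direct computation of directional derivatives verifies that $F$ satisfies the stated hypothesis with $\rho=2\in\mathbb{N}$ and $\tau(0)=(1,0)$, $\tau(1)=(0,1)$; what remains is to rule out any local simulation $(N,g_N)$ with $\E{N}\le 2$.

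Since $A_1,\dots,A_N$ are i.i.d.\ given $N$, symmetrizing $g_n$ in its arguments preserves $\Ex[g_n(A_{1:n})]$ and all directional derivatives, so I may assume $g_n(a_{1:n})=x_k^{(n)}\in\Delta(A)$ depends only on $k:=|\{i:a_i=1\}|$. Writing $p_k^{(n)}:=x_k^{(n)}(0)\in[0,1]$, the first coordinate of the approximation becomes the binomial mixture $\hat f(p)=\sum_n \prob{N=n}\sum_{k=0}^n\binom{n}{k}(1-p)^k p^{n-k}\,p_k^{(n)}$, and differentiating at $p=\tfrac12$ reduces matching $\hat f'(\tfrac12)=f'(\tfrac12)=2$ to the scalar equation
\[ \sum_n \prob{N=n}\,2^{-n}\sum_k \binom{n}{k}(n-2k)\,p_k^{(n)}=1. \]
For each fixed $n$ the inner sum is a linear program in $p_k^{(n)}\in[0,1]$, maximized by $p_k^{(n)}=1$ for $k<n/2$ and $0$ for $k>n/2$ (with $p_{n/2}^{(n)}=\tfrac12$ when $n$ is even), a choice that also automatically enforces $F_n(\pi_0)=F(\pi_0)$ so it is feasible. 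The maximum equals $M_n:=2^{-n}\sum_{k<n/2}\binom{n}{k}(n-2k)=\E{|K-n/2|}$ for $K\sim\Bin(n,\tfrac12)$.

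The crux of the argument is the arithmetic inequality $M_n\le n/2$, with strict inequality for $n\ge 2$: since $|K-n/2|\le n/2$ almost surely, equality in expectation would force $K\in\{0,n\}$ a.s., which happens only when $n=1$ (and $M_1=\tfrac12$). Chaining the per-$n$ bound with this inequality gives $1\le\sum_n \prob{N=n}\,M_n\le\sum_n \prob{N=n}\,\tfrac{n}{2}=\tfrac{\E{N}}{2}$, so $\E{N}\ge 2=\rho$. Equality would force the support of $N$ into $\{n:M_n=n/2\}=\{1\}$, but then the leftmost sum collapses to $M_1=\tfrac12<1$, a contradiction; hence $\E{N}>\rho$ strictly. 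The main obstacle is the symmetrization reduction together with identifying the sharp LP maximum $M_n$; once the tight inequality $M_n\le n/2$ is in hand, the strict separation between the minimum GGT weight $\rho$ and the minimum expected number of simulations drops out immediately.
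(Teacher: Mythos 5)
Your proposal is correct, and while your example plays exactly the same role as the paper's (the paper uses $F(p)=\tfrac{16p^4}{1+16p^4}$ at $p=\tfrac12$, but only the local data $F(\tfrac12)=\tfrac12$, $F'(\tfrac12)=2$, $\Gamma=\rho=2$ is ever used, so your sigmoid is interchangeable), your impossibility argument is genuinely different and in fact sharper. The paper invokes its Lemma~\ref{lemma:g_equiv} to write the matched derivative as $\sum_m \prob{N=m}\,\tfrac{m}{2}\bigl(\E{g_m(1,A_{2:m})}-\E{g_m(0,A_{2:m})}\bigr)$, bounds the bracket crudely by $1$ to get $1\le \tfrac12\E{N}$, and then needs a separate boundary analysis (equality forces $\E{g_m(a,A_{2:m})}=a$, which is only possible for $m=1$) to rule out the equality case. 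You instead symmetrise, pass to the Bernstein/binomial representation, and solve the resulting LP exactly, identifying the maximal achievable derivative with $n$ samples as $M_n=\E{\lvert K-n/2\rvert}$ for $K\sim\Bin(n,\tfrac12)$. Since $M_n<n/2$ strictly for all $n\ge 2$ (and $M_n=\Theta(\sqrt{n})$), your bound not only closes the equality case more cleanly but actually shows a much larger gap than the statement demands --- e.g.\ a deterministic $N$ would need $M_N\ge 1$, i.e.\ $N\ge 7$, not merely $N>2$. Two cosmetic points: the set $\{n: M_n=n/2\}$ is $\{0,1\}$ rather than $\{1\}$ (harmless, since a mixture supported on $\{0,1\}$ still yields at most $\tfrac12<1$), and strictly speaking both your argument and the paper's interchange a limit with the sum over $n$ when $N$ has unbounded support, a technicality neither treatment addresses.
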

\excludeforjournal{
\begin{proof}
See \Cref{pf:Nrho}, p.\ \pageref{pf:Nrho}.
\end{proof}
}

\section{Limitations}\label{sec:limitations}
For all our results, we made two substantial assumptions:
\begin{enumerate}[nolistsep]
    \item That the dependence function is continuous (for many results, also differentiable).
    \item That, given dependants' policies, they randomise independently each time they choose an action. For instance, in \Cref{eg:SB_PD}, given a policy for Theodora (say, $(1/2,1/2)$), we assume that her action on the first awakening is independent of her action on the second awakening.
\end{enumerate}
In \Cref{sec:indep_rand}, we show that we cannot drop the second of these assumptions. If we allow discontinuous dependence functions, is there any generalised theory of self-locating beliefs $X$ such that the \textit{ex ante} optimal policy is always a CDT+$X$ policy? The answer is no, provided that $X$ satisfy the following condition:

\begin{defn}[Faithful]
    Call a generalised theory of self-locating beliefs $X$ \emph{faithful} if whenever $F_j$ is the identity map for some dependant $j$ and dependant $j$ has positive ex ante probability under $\pi_0$ ($\E{\#(j)\mid \pi_0}> 0$), we have that $X$ assigns positive credence to being dependant $j$ (i.e., $P_X(j\mid \pi_0) > 0$ ), and $\tau_j(a,\pi_0)_a>\tau_j(a',\pi_0)_a$ for all $a\neq a'$.
\end{defn}

    Essentially, this says that whenever a decision problem contains an exact copy of the agent, which exists with positive probability, then $X$ has to assign positive probability to being said copy, and that when it is that copy, it should think that taking action $a$ makes it strictly more likely, compared to taking any other action $a'$, that the copy takes action $a$. We define one further property that we would like our theories to avoid:

\begin{defn}[Fanciful]
    Call a generalised theory of self-locating beliefs $X$ \emph{fanciful} if there exists a decision problem $D$ (on which $X$ is defined) such that for some policy $\pi_0$, there are non-terminal states $s$ with zero probability of occurring ex ante to which $X$ assigns non-zero credence (i.e.,$P_X(s\mid\pi_0)>0$).
\end{defn}

We have that GGT is faithful and not fanciful:
\begin{restatable}{prop}{GGTfaithful}\label{prop:GGT_faithful}
    For any (utility independent) function from decision problems with differentiable dependence functions to GGT weights, the resulting version of GGT is faithful and is not fanciful.
\end{restatable}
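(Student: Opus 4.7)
The plan is to verify the two conditions separately. Non-fancifulness follows almost directly from Definition \ref{def:GGT}: whenever $\E{\#(s)\mid \pi_0} = 0$, the numerator $\rho_{i(s)}(\pi_0)\E{\#(s)\mid\pi_0}$ of $P_{GGT}(s\mid\pi_0)$ vanishes, so either the denominator is positive (forcing $P_{GGT}(s\mid\pi_0) = 0$) or the denominator also vanishes, in which case the definition explicitly allocates credence only among states with $\E{\#(s)\mid\pi_0} > 0$. Either way, no state of zero expected count receives positive credence.

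Faithfulness requires more work. Fix a dependant $j$ whose dependence function $F_j$ is the identity map, and fix a policy $\pi_0$ with $\E{\#(j)\mid \pi_0} > 0$. I would substitute $F_j(\pi_0) = \pi_0$ into the defining equation from Definition \ref{defn:GGT_wts_transf}; after dividing by $\eps$ and sending $\eps \to 0$, this yields the exact (non-asymptotic) identity $\pi_0' - \pi_0 = \rho_j(\pi_0)\bigl(\tau_j(\pi_0', \pi_0) - \pi_0\bigr)$ for every $\pi_0' \in \Delta(A)$. Choosing any $\pi_0' \neq \pi_0$ (which exists as soon as $|A| \geq 2$) forces $\rho_j(\pi_0) > 0$, and solving for $\tau_j$ then gives the rigid formula $\tau_j(a, \pi_0) = \pi_0 + \rho_j(\pi_0)^{-1}(a - \pi_0)$ for each pure action $a$.

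From here both remaining conditions are immediate. Since $\rho_j(\pi_0) > 0$, and $\E{\#(j)\mid\pi_0} > 0$ entails $\E{\#(s)\mid\pi_0} > 0$ for some $s$ with $i(s) = j$, the contribution of this $s$ makes both the numerator of $P_{GGT}(s\mid\pi_0)$ and the full denominator strictly positive, so $P_{GGT}(j\mid\pi_0) \geq P_{GGT}(s\mid\pi_0) > 0$. For the transformation condition, reading off components of the explicit formula for $\tau_j$ gives $\tau_j(a, \pi_0)(a) = \pi_0(a) + \rho_j(\pi_0)^{-1}(1 - \pi_0(a))$ and $\tau_j(a', \pi_0)(a) = \pi_0(a)\bigl(1 - \rho_j(\pi_0)^{-1}\bigr)$ for $a' \neq a$, whose difference simplifies to $\rho_j(\pi_0)^{-1} > 0$, as required.

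The only real obstacle is the rigidity step: showing that once $F_j$ is the identity, the first-order expansion in Definition \ref{defn:GGT_wts_transf} pins down $\tau_j$ up to the single scalar $\rho_j(\pi_0)$ (and forces that scalar to be strictly positive). Once that rigidity is established, both halves of faithfulness reduce to elementary arithmetic, and non-fancifulness is read off the definition.
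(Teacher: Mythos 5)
Your proposal is correct and follows essentially the same route as the paper's proof: non-fancifulness is read off Definition \ref{def:GGT}, and for faithfulness one uses that $F_j = \mathrm{id}$ forces $\rho_j(\pi_0) > 0$ and pins down $\tau_j(a,\pi_0) = \pi_0 + \rho_j(\pi_0)^{-1}(a - \pi_0)$, from which both the positive credence and the strict inequality $\tau_j(a,\pi_0)_a - \tau_j(a',\pi_0)_a = \rho_j(\pi_0)^{-1} > 0$ follow. Your write-up is merely more explicit about the rigidity step than the paper's terse version.
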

\excludeforjournal{
\begin{proof}
See \Cref{pf:GGTfaithful}, p.\ \pageref{pf:GGTfaithful}.
\end{proof}
}

However, when the dependence function may be discontinuous, we obtain the following impossibility result:
\begin{restatable}{thm}{faithfulimposs}\label{prop:faithful_imposs}
    There exists a decision problem for which the unique ex ante optimal policy is not a CDT+$X$ policy for any faithful non-fanciful theory of self-locating beliefs $X$.
\end{restatable}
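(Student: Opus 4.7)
The plan is to exhibit a single, very simple decision problem with discontinuous $F_2$ whose unique \emph{ex ante} optimum is provably not ratifiable by any faithful non-fanciful $X$. I would engineer the problem so that, on the ex ante optimal path, non-fancifulness eliminates $X$'s freedom in both $P_X$ and the non-copy transformation, while faithfulness then forces the only remaining CDT contribution in the wrong direction.

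Concretely, take $A=\{C,D\}$; two dependants with $F_1=\mathrm{id}$ and $F_2(\pi_0)=C$ if $\pi_0$ is the pure-$C$ policy and $F_2(\pi_0)=D$ otherwise; non-terminal states $s_0$ (with $i(s_0)=1$) and $s_1$ (with $i(s_1)=2$); and terminals $t_C, t_a, t_b$. From $s_0$: action $C$ transitions deterministically to $t_C$ with $u(t_C)=10$, and action $D$ to $s_1$. From $s_1$: action $C$ goes to $t_a$ with $u(t_a)=100$, and $D$ to $t_b$ with $u(t_b)=0$. The initial distribution $P_0$ concentrates on $s_0$.

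First I would verify ex ante optimality directly: $\Ex[u\mid \pi_0=C]=10$, $\Ex[u\mid \pi_0=D]=0$, and for any mixed $\pi_0$ with $p:=\pi_0(C)\in(0,1)$ the discontinuity yields $F_2(\pi_0)=D$, so $\Ex[u\mid \pi_0]=10p<10$. Hence $\pi_0^\ast=C$ is the unique ex ante optimum. For the core argument: under $\pi_0^\ast=C$, the copy plays $C$ at $s_0$ and terminates at $t_C$, so $s_1$ is never visited. Non-fancifulness forces $P_X(s_1\mid C)=0$, hence $P_X(s_0\mid C)=1$, and $X$'s freedom over $\tau_2$ becomes irrelevant. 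Faithfulness, applied to dependant $1$ (identity, with positive ex ante probability), forces $\alpha := \tau_1(C\mid C,C)-\tau_1(C\mid D,C)>0$. Computing $\Ex^{s_1}[u\mid C]$: starting at $s_1$ under $\pi_0=C$, dependant $2$ plays $F_2(C)=C$ and reaches $t_a$, giving $\Ex^{s_1}[u\mid C]=100$. Therefore
\begin{equation*}
\Ex_X[u\mid \Do(C),C] - \Ex_X[u\mid \Do(D),C] = \alpha(10-100) = -90\alpha < 0,
\end{equation*}
so $D$ strictly dominates $C$ at $\pi_0^\ast=C$, and $\pi_0^\ast=C$ fails to be CDT$+X$ ratifiable. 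This argument applies uniformly to every faithful non-fanciful $X$, so the theorem follows.

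The main obstacle is conceptual rather than computational: engineering the problem so that $X$'s freedom is entirely nullified on the optimal path. Non-fancifulness takes care of $\tau_2$ (since $s_1$ is unvisited under $\pi_0^\ast$) and pins down $P_X$ (since $s_0$ is the only non-terminal state visited), and faithfulness then locks the sign of the residual copy term. The discontinuity of $F_2$ supplies the crucial asymmetry: $\Ex^{s_1}[u\mid \pi_0=C]=100$ is a ``phantom'' value that CDT at $s_0$ uses when counterfactually considering action $D$, yet any \emph{actual} deviation from $\pi_0=C$ would flip $F_2$ to $D$ and collapse the value of $s_1$ to $0$.
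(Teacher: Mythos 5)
Your proof is correct and follows essentially the same strategy as the paper's: a two-dependant problem with one identity copy and one discontinuous, indicator-style dependant, where non-fancifulness zeroes out the unreachable state under the unique pure \textit{ex ante} optimum and faithfulness then forces a strict CDT preference for deviating at the copy's state. The only difference is cosmetic --- the paper has the discontinuous predictor act first (poisoning the wine iff $p>0$) while you have the copy act first and the discontinuous dependant sit on the never-reached branch --- but the underlying mechanism (CDT evaluating the deviation against the dependence function's value at the optimal policy rather than at the deviated policy) is identical.
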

\excludeforjournal{
\begin{proof}
See \Cref{pf:faithful_imposs}, p.\ \pageref{pf:faithful_imposs}.
\end{proof}
}

This result actually holds for a broader definition of self-locating beliefs -- cf.\ \Cref{sec:GSH}.

\bibliography{refs}

\onecolumn
\appendix

\section{Different notions of \textit{ex ante} expected utility}\label{sec:ex_ante_EU_rk}
\begin{rk}
\Cref{defn:ex_ante_EU} is, as mentioned, the EDT \textit{ex ante} expected utility. The CDT \textit{ex ante} expected utility can differ from the EDT \textit{ex ante} expected utility in cases where even from the \textit{ex ante} position taken, the evidential effects of one's actions are not fully captured via causal effects of one's actions. For instance, consider a version of Newcomb's problem (cf.\ Example \ref{eg:Newcombs}) in which the prediction is made based on the agent's genes (and perhaps the state of the universe at the point it was born). Ignoring questions of anthropic uncertainty, if the \textit{ex ante} perspective taken is at any time after the agent is born, comitting to one-box would have no \textit{causal} impact on the action of the predictor. Thus, the CDT \textit{ex ante} optimal policy would still be to two-box, while the EDT policy (\textit{ex ante} optimal or otherwise) would be to one-box, as usual. Additionally, the CDT \textit{ex ante} utility differs in this case depending on the precise point in time that we consider to be the \textit{ex ante} one -- one could in principle take an \textit{ex ante} perspective prior to the agent being born, and imagine that it can causally intervene on its own genes in order to determine whether they one-box or two-box. We neglect to define by what mechanism, causal or otherwise, the dependence occurs in our setting, and so we do not discuss the CDT \textit{ex ante} expected utility insofar as it differs from our definition of the (EDT) \textit{ex ante} expected utility. This is equivalent to the CDT \textit{ex ante} expected utility where we assume an \textit{ex ante} perspective such that all dependence is causal.
\end{rk}

\section{Action spaces}\label{sec:act_space_WLOG}
We assume that each dependant has the same action space as the agent. This is for simplicity -- our results would also go through if the dependants had different action spaces from the agent. It is also without loss of generality: First, note that if a dependants has at most as many actions as the agent, we may make them have the same number actions by adding actions that do the same thing as other actions, and relabelling the actions as needed. If instead a dependant has more actions than the agent (say, it has $k$ actions, $a_1,\ldots a_k$, and the agent only has $l<k$ actions) , we may split up each state associated with that agent into at most $(k+1)/l$ states, each associate with a dependant that has only two actions. The first state can distinguish between $a_1,\ldots a_{l-1}$ and $\{a_l,\ldots a_k\}$, the second between $a_l,\ldots a_{2l-1}$ and $\{a_{2l},\ldots a_k$, and so on. The dependence functions can then be constructed for each of the dependants such that it allocates appropriate overall probability to each of the $k$ actions.

\section{Multiple possible observations}\label{rk:obs}

Previous work (e.g. \cite{piccione} and \cite{dese}) considers different states being associated with different \textit{observations}, and has the agent distributing its credence over states with the same observations it has. We omit to consider different observations here, for the following reasons:
    \begin{itemize}
        \item We will be interested in theories of self-locating beliefs that don't restrict the agent to assigning positive credence only to states naturally associated with the same observations, for the following reasons: Firstly, when there is a discrepancy between the set of entities that the agent thinks it might be and the set of entities whose policies subjectively depend on that of the agent, this can prevent CDT from being compatible with the \textit{ex ante} optimal policy. For instance, this is the case in a twin Prisoner's Dilemma in which the twins have differently coloured shoes, say. (This can also be a problem for EDT, cf.\ \cite{Conitzer_EDT_shoes}). Moreover, we want to leave the mechanism of dependence open, rather than assuming from the outset that a dependant runs some known number of simulations of the agent with particular observations.
        \item In settings with simulation, there will often not be a unique observation associated with a particular state. For instance, an agent might be simulated multiple times with various different observations at different states. We could of course split states up such that each only has a single observation, but this would presuppose a simulation model.
        \item We could instead incorporate different observations for the agent by having different sets of dependence functions associated with different observations, and consider optimising the agent's policy over all possible observation states. In this case, holding fixed the agent's policy for other observation states, the problem for a given observation reduces to the single observation case. Thus, our results carry over straightforwardly to this multiple observation case. We discuss this in more detail below. For simplicity, we therefore restrict ourselves to the single observation case in the main text.
    \end{itemize}

    \subsection{Results for multiple possible observations}
    We could modify our setting to allow for observations as follows: First, we have a set $O$ of \textit{observations}. Then, we take the agent's policy to be in $\Delta(A)^{|O|}$, returning, for each observation $o\in O$, a distribution over actions $\pi_0(\cdot\mid o) \in \Delta(A)$. We then have $\pi_0 = (\pi_0(\cdot\mid 1), \ldots, \pi_0(\cdot\mid |O|))$. The dependence function $F$ is then a function from $\pi_0 \in \Delta(A)^{|O|}$ to $\bm{\pi}\in \Delta(A)^n$.

    Note that we don't need to consider the different dependants also having different observations, since we may WLOG just associate different observations with different dependants. Moreover, we don't need to specify a mapping from states to observations, for our purposes -- this has no impact on the \textit{ex ante} optimal policy, and will also have no impact on GGT: In non-Newcomblike environments, the \textit{ex ante} expected utility is affected by the mapping from states to observations because it is assumed that the agent's policy at a particular observation determines its actions at that observation, and not at any other observations. By contrast, in our setting, the dependence function determines how the agent's policy affects the actions taken at the various states. If the dependence function were given by $F_j(\pi_0) = \pi_0(\cdot\mid o(j))$, this would be equivalent to a non-Newcomblike setting in which the agent observes $o(i(s))$ at state $s$.

    GGT then generalises straightforwardly to this setting: we simply apply GGT separately for each possible observation, holding the policy at other observations fixed. That is, for each dependant $j$ and observation $o$, we choose $\rho_j(\pi_0,o)\geq 0$ and $\tau_j(\cdot,\pi_0, o):A\rightarrow\Delta(A)$ such that for all $o$, $\pi_0$, and $\pi_0'$ with $\pi_0'(\cdot\mid o')=\pi_0(\cdot\mid o')$ for $o'\neq o$, we have
    \begin{equation*}
        F_j(\pi_0+\eps(\pi_0'-\pi_0)) = F_j(\pi_0) + \eps\rho_j(\pi_0,o)\left(\sum_{a}\tau_j(a,\pi_0, o)\pi_0'(a\mid o)-F_j(\pi_0)\right) +o(\eps)
    \end{equation*}

    and $\tau_j(a,\pi_0,o) = F_j(\pi_0)$ whenever $\rho_j(\pi_0,o) = 0$.

    Then, we define GGT credences via transformation functions $\tau_j(\cdot,\pi_0,o)$ for each $o$, and

    \begin{equation*}
        P_{GGT}(s\mid \pi_0,o) = \frac{\rho_{i(s)}(\pi_0,o)\E{\#(s)\mid \pi_0}}{\sum_{s'}\rho_{i(s')}(\pi_0,o)\E{\#(s')\mid \pi_0}},
    \end{equation*}

    again dividing credence arbitrarily between states $s$ with $\E{\#(s)\mid \pi_0} >0$ when the denominator above would be zero.

    Similarly to before, we may then define the CDT+$X$ utility of choosing action $a$ given observation $o$ as

    \begin{equation*}
        \Ex_X[u\mid \Do(a),o,\pi_0] = \sum_s P_X(s\mid \pi_0, o) \sum_{s'} T(s'\mid s, \tau_{i(s)}(a,\pi_0,o))\Ex^{s'}[u\mid \pi_0].
    \end{equation*}
    Also, as before, write $\Ex_X[u\mid \Do(\pi_0'),o,\pi_0]$ for $\sum_a \pi_0'(a)\Ex_X[u\mid \Do(a),o,\pi_0]$.

    Finally, we may then say that $\pi_0$ is a CDT+$X$ policy if for each $o\in O$, we have $\Ex_X[u\mid \Do(\pi_0(\cdot\mid o)),\pi_0] = \max_{a} \Ex_X[u\mid \Do(a),\pi_0]$.

    Now, we obtain the following result in this setting with multiple possible observations:

    \begin{cor}
    Suppose that $(S,S_T,P_0,n,i,A,T,u,\F,O)$ is a decision problem with observation set $O$ and that the partial derivative of $\F$ with respect to $\pi_0(\cdot\mid o)$ exists for all $o\in O$. Then, a policy $\pi_0$ is CDT+GGT compatible if and only if for all $a\in A$ and $o\in O$ we have $\frac{\partial}{\partial \pi_0(a\mid o)}\E{u\mid\pi_0} \leq 0$.
\end{cor}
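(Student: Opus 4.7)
The plan is to reduce the multi-observation case to the single-observation result (\Cref{thm:main}) by fixing the policy at all observations other than a given one and viewing the induced problem as a single-observation decision problem. Concretely, fix $o \in O$ and fix $\pi_0(\cdot \mid o')$ for all $o' \neq o$. Define an induced decision problem $D_o$ with the same state space, transition structure, and utility function as the original problem, but with dependence function $\widetilde{\F}_o : \Delta(A) \rightarrow \Delta(A)^n$ given by $\widetilde{\F}_o(\sigma) \defeq \F(\pi_0^{\sigma})$, where $\pi_0^{\sigma}$ is the full multi-observation policy agreeing with $\pi_0$ off of $o$ and equal to $\sigma$ at $o$. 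The hypothesis that the partial derivatives of $\F$ with respect to $\pi_0(\cdot\mid o)$ exist at $\pi_0$ is exactly the statement that $\widetilde{\F}_o$ is differentiable at $\sigma = \pi_0(\cdot\mid o)$, in the sense of \Cref{def:differentiability}.

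Next, I would verify that the GGT machinery commutes with this reduction. The multi-observation GGT weights $\rho_j(\pi_0, o)$ and transformation functions $\tau_j(\cdot, \pi_0, o)$ are, by their very definition, precisely valid single-observation GGT weights and transformation functions for $\widetilde{\F}_o$ at $\sigma = \pi_0(\cdot\mid o)$: the defining equation
\begin{equation*}
    F_j(\pi_0+\eps(\pi_0'-\pi_0)) = F_j(\pi_0) + \eps \rho_j(\pi_0,o)\left(\textstyle\sum_a \tau_j(a,\pi_0,o)\pi_0'(a\mid o) - F_j(\pi_0)\right) + o(\eps)
\end{equation*}
for $\pi_0'$ agreeing with $\pi_0$ off of $o$ is exactly \Cref{eq:rho_tau_char_j} applied to $\widetilde{\F}_o$. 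Consequently, the multi-observation GGT credences $P_{GGT}(s\mid \pi_0, o)$ and CDT+GGT utilities $\Ex_{GGT}[u\mid\Do(a),o,\pi_0]$ coincide with the single-observation GGT credences and CDT+GGT utilities for $D_o$ at policy $\sigma = \pi_0(\cdot\mid o)$ (noting that expectations $\E{\#(s)\mid \pi_0}$ and $\Ex^{s'}[u\mid\pi_0]$ depend on $\pi_0$ only through $\bm{\pi} = \F(\pi_0)$, which is the same object in both formulations).

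Given this, the ``for each $o$'' clause in the definition of CDT+GGT compatibility in the multi-observation setting says exactly that, for each $o$, the component $\pi_0(\cdot\mid o)$ is a CDT+GGT policy for the induced single-observation problem $D_o$. By \Cref{thm:main}, this holds iff $\frac{\partial}{\partial \sigma(a)}\E{u \mid \sigma}_{D_o} \leq 0$ for all $a\in A$, where the expectation is under $D_o$ at policy $\sigma = \pi_0(\cdot\mid o)$. Unpacking the definition of $D_o$, this directional derivative equals $\frac{\partial}{\partial \pi_0(a\mid o)} \E{u \mid \pi_0}$ (the partial derivative in the multi-observation problem, moving $\pi_0(\cdot\mid o)$ towards $a$ while holding $\pi_0(\cdot\mid o')$ fixed for $o'\neq o$). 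Quantifying over $o\in O$ and $a\in A$ yields the claimed characterisation.

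The main obstacle is bookkeeping rather than substantive mathematics: I need to check carefully that (i) the induced problem $D_o$ genuinely fits the single-observation framework — in particular, that the fixed components $\pi_0(\cdot\mid o')$ for $o'\neq o$ enter only through $\widetilde{\F}_o$ and do not otherwise break the hypotheses of \Cref{thm:main} (e.g.\ \Cref{asm:finite_hist} is preserved because the Markov chain for $D_o$ at $\sigma$ is identical to that for the original problem at $\pi_0^\sigma$); and (ii) the ``divide credence arbitrarily'' clause in \Cref{def:GGT} is handled consistently across both formulations, which is immediate because the set of states with $\E{\#(s)\mid \pi_0}>0$ is the same in the original and induced problems. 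Once these identifications are made, the corollary is a direct consequence of \Cref{thm:main} applied observation-by-observation.
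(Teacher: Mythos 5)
Your proposal is correct and follows essentially the same route as the paper's own proof: fix an observation $o$, hold the policy at all other observations fixed, observe that the restricted GGT weights and transformation functions are valid for the induced single-observation problem, apply Theorem \ref{thm:main}, identify the resulting condition with the per-observation clause in the multi-observation definition of CDT+GGT compatibility, and quantify over $o$. The additional bookkeeping you flag (preservation of Assumption \ref{asm:finite_hist}, consistency of the credence-splitting clause) is handled implicitly in the paper but is correctly resolved as you describe.
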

\begin{proof}
    Choose some GGT weights and transformation functions $\rho_j(\pi_0,o)$ and $\tau_j(\cdot,\pi_0,o)$, defined as above. Fix some policy $\pi_0\in \Delta(A)$, and some observation $o\in O$. Consider the decision problem where we hold $\pi_0(\cdot\mid o')$ fixed for $o'\neq o$, and consider varying $\pi_0(\cdot\mid o)$. Note that $\rho_j^o(\pi_0'(\cdot\mid o)) = \rho_j(\pi_0'(\cdot\mid o),\pi_0(\cdot\mid -o),o)$ and $\tau_j^o(a,\pi_0'(\cdot\mid o)) = \tau_j(a,\pi_0'(\cdot\mid o),\pi_0(\cdot\mid -o),o)$ are a valid choice of GGT weights and transformation functions for this decision problem.
    
    We may thus apply \Cref{thm:main} to this decision problem, to obtain that $\pi_0(\cdot\mid o)$ is a CDT+GGT policy for it if and only if $\frac{\partial}{\partial \pi_0(a\mid o)}\E{u\mid\pi_0} \leq 0$ for all $a$.

    Now, we may observe that $\pi_0'(\cdot\mid o)$ is a CDT+GGT policy with respect to this problem if and only if $\sum_s P_{GGT}(s\mid \pi_0'(\cdot\mid o))\Ex^s[u\mid \Do(\tau^o_{i(s)}(\pi_0'(\cdot\mid o),\pi_0'(\cdot\mid o))),\pi_0'(\cdot\mid o)] = \max_a\sum_s P_{GGT}(s\mid \pi_0'(\cdot\mid o))\Ex^s[u\mid \Do(\tau^o_{i(s)}(a,\pi_0'(\cdot\mid o))),\pi_0'(\cdot\mid o)]$. Applying this for $\pi_0' = \pi_0$, we then obtain that $\pi_0(\cdot\mid o)$ is a CDT+GGT policy for the single observation problem if and only if, in the original problem
    
    \begin{align*}
        &\sum_s P_{GGT}(s\mid \pi_0, o)\Ex^s[u\mid \Do(\tau_{i(s)}(\pi_0(\cdot\mid o),\pi_0,o)),\pi_0] = \max_a\sum_s P_{GGT}(s\mid \pi_0, o)\Ex^s[u\mid \Do(\tau_{i(s)}(a,\pi_0,o)),\pi_0],
    \end{align*}

    or equivalently, $\Ex_{GGT}[u \mid \Do(\pi_0(\cdot\mid o)), o,\pi_0] = \max_a \Ex_{GGT}[u \mid \Do(a), o,\pi_0]$.

    Putting everything together, $\frac{\partial}{\partial \pi_0(a\mid o)}\E{u\mid\pi_0} \leq 0$ for all $a$ if and only if $\Ex_{GGT}[u \mid \Do(\pi_0(\cdot\mid o)), o,\pi_0] = \max_a \Ex_{GGT}[u \mid \Do(a), o,\pi_0]$. Then, $\pi_0$ is a CDT+GGT policy if and only if this holds for every $o$, i.e., if and only if $\frac{\partial}{\partial \pi_0(a\mid o)}\E{u\mid\pi_0} \leq 0$ for all $a\in A$ and $o\in O$.
\end{proof}
 We obtain the same immediate corollary in this setting:
\begin{cor}
    If the dependence function has partial derivatives with respect to each $\pi_0(\cdot\mid o)$, and $\pi_0^*$ is ex ante optimal, then $\pi_0^*$ is a CDT+GGT policy.
\end{cor}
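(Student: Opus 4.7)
The plan is to deduce this corollary directly from the preceding theorem (the multi-observation analogue of \Cref{thm:main}), which characterises CDT+GGT policies as exactly those $\pi_0$ satisfying $\frac{\partial}{\partial \pi_0(a\mid o)}\E{u\mid\pi_0} \leq 0$ for every $a\in A$ and $o\in O$. So all the work reduces to a standard first-order optimality argument on the product of simplices $\Delta(A)^{|O|}$.

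First I would observe that ex ante optimality of $\pi_0^*$ means $\pi_0^*$ maximises $\E{u\mid \pi_0}$ over $\Delta(A)^{|O|}$. Fix any observation $o\in O$ and any action $a\in A$. For $\epsilon\in[0,1]$, the policy $\pi_0^*+\epsilon(a-\pi_0^*)$ (where, as per the notational convention, $a$ denotes the pure policy on action $a$ for observation $o$ and matches $\pi_0^*$ at other observations) remains inside $\Delta(A)^{|O|}$, since on observation $o$ we are just moving along the segment from $\pi_0^*(\cdot\mid o)$ to the vertex $a$. Hence the function $\epsilon \mapsto \E{u\mid \pi_0^*+\epsilon(a-\pi_0^*)}$ attains a maximum at $\epsilon=0$ on $[0,1]$.

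By hypothesis the partial derivative with respect to $\pi_0(\cdot\mid o)$ exists at $\pi_0^*$, so the one-sided derivative
\[
\frac{\partial}{\partial \pi_0(a\mid o)}\E{u\mid \pi_0^*} = \lim_{\epsilon\downarrow 0}\frac{\E{u\mid \pi_0^*+\epsilon(a-\pi_0^*)}-\E{u\mid \pi_0^*}}{\epsilon}
\]
is well-defined and, by the previous paragraph, non-positive. Since $a$ and $o$ were arbitrary, this gives $\frac{\partial}{\partial \pi_0(a\mid o)}\E{u\mid \pi_0^*} \leq 0$ for every $a\in A$ and $o\in O$.

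Applying the multi-observation characterisation of CDT+GGT policies then immediately yields that $\pi_0^*$ is a CDT+GGT policy. There is no real obstacle here; the only thing to be slightly careful about is that the derivative is taken in the direction $(a-\pi_0^*(\cdot\mid o))$, so that the admissibility of the perturbation for $\epsilon\in[0,1]$ matches exactly the definition of $\frac{\partial}{\partial \pi_0(a\mid o)}$ given in \Cref{def:differentiability}. Once this is lined up, the argument is a one-line invocation of first-order conditions followed by a one-line invocation of the preceding theorem.
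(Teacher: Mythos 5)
Your proposal is correct and matches the paper's (implicit) argument: the paper states this as an immediate corollary of the preceding multi-observation characterisation theorem, with exactly the first-order-optimality reasoning you spell out — ex ante optimality forces the directional derivatives $\frac{\partial}{\partial \pi_0(a\mid o)}\E{u\mid\pi_0^*}$ to be non-positive since perturbations toward any vertex $a$ at any observation $o$ stay within the feasible set. Your care about the one-sided nature of the limit in \Cref{def:differentiability} is appropriate but does not change anything.
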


The simulation approaches generalise similarly -- we may simply construct simulation models for each possible observation, while holding the agent's policy at other observations fixed.

\section{Additional results for Section \ref{sec:glob_sims_model}}\label{sec:more_poly_stuff}

In this section, we give further results on when dependence functions may be written as arising from functions of action samples, as in \Cref{eq:global_samples}. Recall \Cref{prop:global_sim}:

\globalsim*

First, we obtain a sufficient condition for condition \ref{poly} from \Cref{prop:global_sim} to hold:
\begin{restatable}{lemma}{heatdiffusion}\label{lemma:heat_diffusion}
    Let $\Delta^{k-1}$ be the simplex in $\mathbb{R}^k$. Suppose that $f:\Delta^{k-1}\rightarrow \mathbb{R}$ is a polynomial satisfying $f(\p)>0$ for all $\p \in \Delta^{k-1}$. Then $f$ may be written as a polynomial with non-negative coefficients.
\end{restatable}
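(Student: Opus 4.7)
The plan is to reduce the statement to Pólya's classical positivstellensatz for homogeneous polynomials positive on the simplex. Pólya's theorem (Pólya, 1928) states that if $h(x_1,\ldots,x_k)$ is a homogeneous polynomial that is strictly positive on $\Delta^{k-1}$, then for some $N \in \mathbb{N}$ the polynomial $(x_1+\cdots+x_k)^N\, h(x_1,\ldots,x_k)$ has all non-negative coefficients. The strategy is to first homogenize $f$, then apply Pólya, then observe that on the simplex multiplication by $(p_1+\cdots+p_k)^N$ is the identity.

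First I would pick any polynomial representative of $f$, say of total degree $d$, and replace every monomial $p_{i_1}\cdots p_{i_j}$ of degree $j<d$ by $p_{i_1}\cdots p_{i_j}\cdot (p_1+\cdots+p_k)^{d-j}$. Because $\sum_i p_i = 1$ on $\Delta^{k-1}$, this substitution does not change the function on the simplex, and the result is a polynomial $\tilde f$ that is homogeneous of degree $d$ and agrees with $f$ on $\Delta^{k-1}$. In particular $\tilde f$ is strictly positive on $\Delta^{k-1}$ by hypothesis on $f$.

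Next I would apply Pólya's theorem to $\tilde f$: there exists $N \in \mathbb{N}$ such that
\begin{equation*}
 g(p_1,\ldots,p_k) \;\defeq\; (p_1+\cdots+p_k)^{N}\, \tilde f(p_1,\ldots,p_k)
\end{equation*}
has all non-negative coefficients. On the simplex $(p_1+\cdots+p_k)^N = 1$, so $g(p_1,\ldots,p_k) = \tilde f(p_1,\ldots,p_k) = f(p_1,\ldots,p_k)$ for all $\mathbf{p}\in \Delta^{k-1}$. Hence $g$ is a polynomial representation of $f$ with non-negative coefficients, which is what we wanted.

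The main conceptual step is the invocation of Pólya's theorem; the homogenization is routine bookkeeping, and the fact that multiplication by a power of $p_1+\cdots+p_k$ is harmless on $\Delta^{k-1}$ is immediate. The only thing to be careful about is that the theorem requires \emph{strict} positivity of $\tilde f$ on the simplex, which is inherited directly from the strict positivity of $f$, so there is no issue at the boundary of $\Delta^{k-1}$.
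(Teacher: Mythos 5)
Your proof is correct, and it is a legitimately different route from the paper's: you reduce the claim to Pólya's Positivstellensatz and invoke it as a black box, whereas the paper's appendix proof is essentially a self-contained, from-scratch proof of (the simplex version of) Pólya's theorem. Concretely, the paper multiplies $f$ by $(p_1+\cdots+p_k)^{M-N}$ for large $M$ and then shows directly that every coefficient $b_{n_{1:k}}$ of the resulting degree-$M$ representation is positive, splitting into a ``central'' case (where $b_{n_{1:k}}$ is related to the value $f(n_1/M,\ldots,n_k/M)$ via multinomial estimates, which is the core of the standard Pólya argument) and a ``boundary'' case handled by induction on $k$. Your homogenization step is the right bookkeeping (multiplication by powers of $p_1+\cdots+p_k$ is the identity on $\Delta^{k-1}$, and strict positivity of $\tilde f$ on the closed simplex is inherited from $f$, so Pólya applies with no boundary issues), and the conclusion that $(p_1+\cdots+p_k)^N\tilde f$ is a non-negative-coefficient representative of $f$ on the simplex is exactly what the lemma asks for. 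What your approach buys is brevity and a clean pointer to a classical result; what the paper's approach buys is self-containedness and explicit, quantitative control over how large the degree $M$ must be, which a bare citation of Pólya does not immediately provide. Either proof is acceptable.
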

\begin{proof}
    See \Cref{pf:heat_diffusion}, p.\ \pageref{pf:heat_diffusion}.
\end{proof}
This gives the following immediate corollary:
\polyfactorchar*
Note that \Cref{cor:poly_factor_char} also goes through if $F$ could be written as a product of a polynomial with only non-negative coefficients and a polynomial with no zeros on the simplex.

In the case where there are only two possible actions, we obtain similar conditions that are also necessary. It is enough that the polynomial have no zeros on the interior of the simplex:

\begin{restatable}{cor}{Aequalstwo}\label{cor:A=2}
    Suppose $|A|=2$. Then $f:\Delta(A)\rightarrow\Delta(A)$ may be written as arising from samples of actions as in equation (\ref{eq:global_samples}) if and only if it is a polynomial such that either:
    \begin{itemize}
        \item $f(\pi_0)$ is in the interior of the simplex (i.e., is not $(1,0)$ or $(0,1)$) for all $\pi_0$ in the interior of the simplex.
        \item $f$ is a constant function, equal to either $(1,0)$ or $(0,1)$ on the entire simplex.
    \end{itemize}
    These conditions amount to: if the polynomial touches the edges of the simplex on the interior of the simplex, it must be constant.
\end{restatable}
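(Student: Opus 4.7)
The plan is to reduce the corollary to Proposition \ref{prop:global_sim} and then characterise the non-negative Bernstein representations involved. Identify $\Delta(A)$ with $[0,1]$ via $p := \pi_0(a_1)$ and write $f(p) = (f_1(p), 1 - f_1(p))$. By Proposition \ref{prop:global_sim}, $f$ admits a representation of the form (\ref{eq:global_samples}) with $N$ samples iff there exist $c_0, \dots, c_N \in [0,1]$ with $f_1(p) = \sum_{k=0}^N c_k B_k^N(p)$, where $B_k^N(p) := \binom{N}{k} p^k (1-p)^{N-k}$. Equivalently, both $f_1$ and $1 - f_1$ admit non-negative Bernstein representations at a common level $N$.

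For the $\Rightarrow$ direction, suppose $f_1 = \sum_k c_k B_k^N$ with $c_k \in [0,1]$. Any interior zero $p_0 \in (0,1)$ of $f_1$ forces every $c_k$ to vanish, since $B_k^N(p_0) > 0$ for all $k$ and $c_k \geq 0$; so $f_1 \equiv 0$. Symmetrically, if $f_1(p_0) = 1$ for some $p_0 \in (0,1)$, applying the same argument to $1 - f_1 = \sum (1-c_k) B_k^N$ gives $f_1 \equiv 1$. Hence either $f$ is constant equal to $(1,0)$ or $(0,1)$, or $f$ sends the interior of $\Delta(A)$ into its interior.

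For the $\Leftarrow$ direction, the constant cases are trivial (take $N = 1$ with $g$ constant), so assume $f_1(p) \in (0,1)$ for all $p \in (0,1)$. I would factor out any boundary zeros: write $f_1(p) = p^a (1-p)^b \, h(p)$ with $h > 0$ on $[0,1]$, and likewise $1 - f_1(p) = p^{a'}(1-p)^{b'} h'(p)$ with $h' > 0$ on $[0,1]$. Apply Lemma \ref{lemma:heat_diffusion} (P\'olya's theorem on the interval) to $h$ and $h'$ to obtain non-negative Bernstein representations. The identities
\begin{equation*}
p \cdot B_k^M = \tfrac{k+1}{M+1} B_{k+1}^{M+1}, \qquad (1-p) \cdot B_k^M = \tfrac{M+1-k}{M+1} B_k^{M+1}
\end{equation*}
show that multiplying a non-negative Bernstein representation by $p$ or by $1-p$ yields a non-negative Bernstein representation one level higher, so $f_1$ and $1 - f_1$ themselves admit non-negative Bernstein representations, possibly at different levels. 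The degree-elevation identity
\begin{equation*}
B_k^N = \tfrac{N+1-k}{N+1} B_k^{N+1} + \tfrac{k+1}{N+1} B_{k+1}^{N+1}
\end{equation*}
replaces each Bernstein coefficient by a convex combination of adjacent ones, so non-negativity is preserved under elevation and we may put both representations at a common level $N$. There $f_1 = \sum c_k B_k^N$ and $1 - f_1 = \sum d_k B_k^N$ with $c_k, d_k \geq 0$; since $\sum_k B_k^N \equiv 1$ and $\{B_k^N\}_{k=0}^N$ is a basis for polynomials of degree at most $N$, linear independence forces $c_k + d_k = 1$, giving $c_k \in [0,1]$ as required.

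The main obstacle is the boundary-zero case: Lemma \ref{lemma:heat_diffusion} only yields non-negative coefficients for polynomials strictly positive on the \emph{closed} simplex, which may fail at $p = 0$ or $p = 1$ even when $f_1 \in (0,1)$ on the open interval. The factor-out-then-multiply-back trick above circumvents this, and the simultaneous level $N$ for $f_1$ and $1 - f_1$ is then obtained by degree elevation.
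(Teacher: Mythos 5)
Your proposal is correct and follows essentially the same route as the paper's proof: the necessity direction uses the same observation that every sample tuple has positive probability at an interior $\pi_0$ (forcing $g\equiv 0$ or $g\equiv 1$ in the relevant coordinate), and the sufficiency direction uses the same factorisation $f_1(p)=p^a(1-p)^b h(p)$ followed by Lemma \ref{lemma:heat_diffusion}. The only difference is presentational: you carry out the recombination and common-degree bookkeeping explicitly in the Bernstein basis, where the paper delegates it to (the extended form of) \Cref{cor:poly_factor_char} and the normalisation argument inside \Cref{prop:global_sim}.
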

\excludeforjournal{
\begin{proof}
See \Cref{pf:Aequalstwo}, p.\ \pageref{pf:Aequalstwo}.
\end{proof}
}
We can easily see why the above condition is necessary with an example:
\begin{eg}
    Suppose $f(p,1-p) = (4p(1-p),1-4p(1-p))$, as shown in \Cref{fig:4p(1-p)}. Can we write $f$ in the form $f(p,1-p) = \E{g(A_{1:N})\mid A_{1:N}\simiid (p,1-p)}$ for some $N$, $g$? In order to do so, we must have $g(0,0,\ldots,0) = (0,1)$, so that $g_2(0,0,\ldots,0) = 1 > 0$. But when $p=\tfrac{1}{2}$, we then have that with probability $p^N$, all $A_i=0$. Hence, we would need $f_2(\tfrac{1}{2}) \geq p^Ng_2(0,0,\ldots,0) > 0$, a contradiction.
    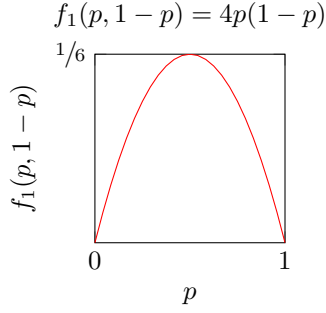
\begin{figure}[h]
\centering
\begin{tikzpicture}
\begin{axis}[
title={$f_1(p,1-p)=4p(1-p)$},
width = 2.5cm,
height = 2.5cm,
 xlabel={$p$},
ylabel={$f_1(p,1-p)$},
xmin=0, xmax=1,
ymin=0, ymax=1,
ytick={1},
yticklabels ={$\sfrac{1}{6}$,$\sfrac{5}{6}$,$1$},
xtick={0,1},
enlargelimits=false,
scale only axis=true
]
\addplot[color=red,domain=0:1]{4*x*(1-x)};

\end{axis}
\end{tikzpicture}
\caption{A graph of $4p(1-p)$.}\label{fig:4p(1-p)}
\end{figure}
\end{eg}

Now, the a generalisation of the condition in \Cref{cor:A=2} is also necessary for $|A|>2$: Whenever $f_i(p) = 0$, and $q$ is zero wherever $p$ is zero ($p_j = 0 \implies q_j=0$), then $f_i(q)= 0$. (Or equivalently, whenever $f_i$ is zero on the interior of some sub-simplex, $f_i$ must be zero on the whole sub-simplex). Is this sufficient? In fact, we can give a stronger necessary condition for $|A|>2$, which will show it is not:
\begin{restatable}{prop}{globsimsnecessary}\label{prop:glob_sims_necessary}
    Suppose that $f:\Delta(A)\rightarrow\Delta(A)$ may be written as in \Cref{eq:global_samples}. Then, for all $i$, for all $t\in(0,1]$, there exists some $\lambda(t) > 0$ such that whenever $p,q\in \Delta(A)$ are such that $p_j\geq t q_j$ for all $j$, $f_i(p)\geq \lambda(t) f_i(q)$.
\end{restatable}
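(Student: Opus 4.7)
The plan is to exploit the explicit representation of $f$ guaranteed by the hypothesis, and observe that the domination $p_j \geq t q_j$ for all $j$ propagates through monomials to give $f_i(p) \geq t^N f_i(q)$. The non-negativity of $g$, which comes for free since $g$ takes values in $\Delta(A)$, is what makes this work.

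In more detail, I would first write
\begin{equation*}
f_i(p) = \sum_{a_{1:N} \in A^N} \left(\prod_{k=1}^N p_{a_k}\right) g_i(a_{1:N}),
\end{equation*}
and similarly for $q$. Next, I would note that if $p_j \geq t q_j$ for all $j \in A$, then for any tuple $a_{1:N} \in A^N$ we have
\begin{equation*}
\prod_{k=1}^N p_{a_k} \;\geq\; t^N \prod_{k=1}^N q_{a_k}.
\end{equation*}
Since $g_i(a_{1:N}) \geq 0$ for every $a_{1:N}$ (because $g$ maps into $\Delta(A)$), multiplying the above inequality by $g_i(a_{1:N})$ preserves the direction and, summing over $a_{1:N}$, yields $f_i(p) \geq t^N f_i(q)$. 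Taking $\lambda(t) \defeq t^N$ (which is positive for $t\in(0,1]$) gives the conclusion.

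The argument is essentially a one-line monomial comparison, so I do not expect any real obstacle. The only point worth emphasising in the write-up is that $\lambda$ can be chosen uniformly in $i$ and independently of $p,q$, because the bound $t^N$ depends only on $t$ and on the number $N$ of samples used in the representation of $f$. Note also that this immediately shows the sufficient-sounding generalisation of the $|A|=2$ condition alluded to in the preceding discussion is not sufficient: $\lambda(t)$ must be bounded below polynomially in $t$ (with exponent at most $N$), not merely positive, so functions that approach zero faster than any polynomial along such comparable rays cannot be realised by finitely many samples.
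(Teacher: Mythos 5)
Your proof is correct and is essentially the paper's argument in different clothing: the paper obtains the same bound $\lambda(t)=t^N$ by writing $p = tq+(1-t)r$ as a mixture and conditioning on the event that all $N$ samples are drawn from $q$, which is just the probabilistic phrasing of your term-by-term monomial comparison $\prod_k p_{a_k}\geq t^N\prod_k q_{a_k}$ combined with $g_i\geq 0$. No gaps; your closing remark about why this strengthens the zero-set condition is also consistent with the paper's Example~\ref{eg:k>2notsuff}.
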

\excludeforjournal{
\begin{proof}
See \Cref{pf:globsimsnecessary}, p.\ \pageref{pf:globsimsnecessary}.
\end{proof}
}

We may now construct an example such that whenever $f_i(p)=0$, and $q$ is such that $q_j=0$ whenever $p_j=0$, we have $f_i(q)=0$, but which does not satisfy the above condition.

\begin{eg}\label{eg:k>2notsuff}
    Let $f_1(\p) = ((p_1-\tfrac{1}{2})^2+\frac{1}{4}p_3^2)p_3$. Let $f_2(\p) = f_3(\p) = \frac{1-f_1(\p)}{2}.$ 
    
    First, we show that whenever $f_i(\p)=0$, and $\bm{q}$ satisfies $p_j = 0 \implies q_j=0$, we have $f_i(\bm{q})=0$: We have that $f_2$ and $f_3$ have no zeros, so here the condition trivially holds. Then, $f_1(p)$ is zero exactly when $p_3=0$. Hence, if $f_1(p)=0$ and $q_j=0$ whenever $p_j=0$, we have $p_3=0$, and so $q_3=0$, and so $f(q)= 0$.

    Now, we will show that there exists some $t>0$ such that for all $\lambda> 0$, there exist $\p$ and $\bm{q}$ with $p_j\geq tq_j$ for all $j$, but $f_1(\p)< \lambda f_1(\bm{q})$. Hence, by \Cref{prop:glob_sims_necessary}, we will deduce that $f_1$ cannot be expressed as arising from simulations of actions. Let $t=\frac{1}{3}$, and let $\p(\eps) = (\tfrac{1}{2},\tfrac{1}{2}-\eps,\eps)$, and $\bm{q}(\eps) = (0,1-\eps,\eps)$, for $0< \eps\leq\frac{1}{4}$, so that $\p\geq t\bm{q}$. Then $f_1(\p) = \frac{1}{4}\eps^3$, while $f_1(\bm{q})= \frac{1}{4}(1+\eps^2)\eps> \frac{1}{4}\eps$. Hence, $f_1(\p)/f_1(\bm{q}) < \eps^2\rightarrow0$ as $\eps\rightarrow0$.
\end{eg}

\section{Why many policies are the limit of CDT+GSGT policies for \emph{some} sequence of approximating dependence functions}\label{app:some_approx_weak}

We show in \Cref{cor:CDTGT_limits} that for \emph{any} sequence of GSGT compatible approximating functions $(\hat{\F}_N)$ to the dependence function, there is some sequence of CDT+GSGT policies $(\pihat_0^N)$ converging to the set of \emph{ex ante} optimal policies. It is less remarkable for a policy $\pi_0$ to be the limit of CDT+GSGT policies for \emph{some} sequence of GSGT compatible approximating function to the dependence function.

To see this, consider a decision problem with only one non-terminal state, two actions, and utility 0 for choosing action 0, utility 1 for choosing action 1. By \Cref{thm:global_sims_result}, any point $\pi_0^*$ at which the derivative of the \textit{ex ante} utility is zero (that is, any stationary point of the \textit{ex ante} utility) is a CDT+GSGT policy. Writing the dependence function as a function from $[0,1]$ to itself, the \textit{ex ante} utility is just $F(\pi_0)$. Thus, every stationary point of the dependence function is a CDT+GSGT policy. Then, by making our approximating dependence functions have small wiggles in the right places (without moving far from the original dependence function), we can make any policies we wish be CDT+GSGT policies of every approximating problem (see \Cref{fig:wiggles}).

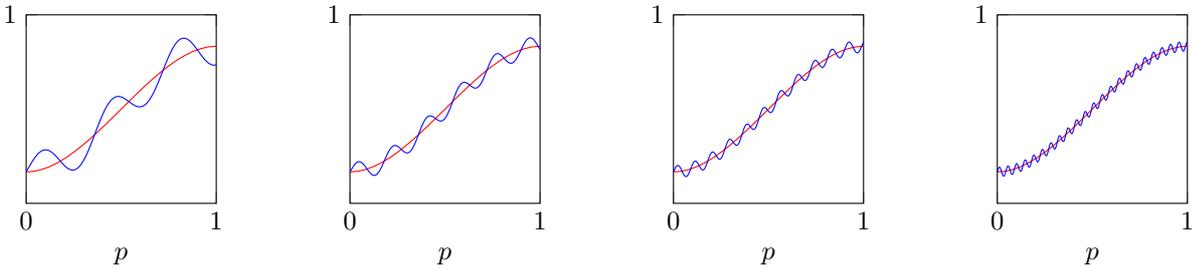
\begin{figure}[h]
    \centering
    \begin{minipage}{0.23\textwidth}
        \begin{tikzpicture}
            \begin{axis}[
                title={},
                width = 2.5cm,
                height = 2.5cm,
                xlabel={$p$},
                ylabel={},
                xmin=0, xmax=1,
                ymin=0, ymax=1,
                ytick={1},
                yticklabels ={$1$},
                xtick={0,1},
                enlargelimits=false,
                scale only axis=true
            ]
            \addplot[color=red,domain=0:1]{1/6+2*x^2 - 4*x^3/3};
            \addplot[color=blue,domain=0:1,samples = 1000]{1/6+2*x^2 - 4*x^3/3+sin(1000*x)/10};
            \end{axis}
        \end{tikzpicture}
    \end{minipage}
    \begin{minipage}{0.23\textwidth}
        \begin{tikzpicture}
            \begin{axis}[
                title={},
                width = 2.5cm,
                height = 2.5cm,
                xlabel={$p$},
                ylabel={},
                xmin=0, xmax=1,
                ymin=0, ymax=1,
                ytick={1},
                yticklabels ={$1$},
                xtick={0,1},
                enlargelimits=false,
                scale only axis=true
            ]
            \addplot[color=red,domain=0:1]{1/6+2*x^2 - 4*x^3/3};
            \addplot[color=blue,domain=0:1,samples = 1000]{1/6+2*x^2 - 4*x^3/3+sin(2000*x)/20};
            \end{axis}
        \end{tikzpicture}
    \end{minipage}
    \begin{minipage}{0.23\textwidth}
        \begin{tikzpicture}
            \begin{axis}[
                title={},
                width = 2.5cm,
                height = 2.5cm,
                xlabel={$p$},
                ylabel={},
                xmin=0, xmax=1,
                ymin=0, ymax=1,
                ytick={1},
                yticklabels ={$1$},
                xtick={0,1},
                enlargelimits=false,
                scale only axis=true
            ]
            \addplot[color=red,domain=0:1]{1/6+2*x^2 - 4*x^3/3};
            \addplot[color=blue,domain=0:1,samples = 1000]{1/6+2*x^2 - 4*x^3/3+sin(4000*x)/30};
            \end{axis}
        \end{tikzpicture}
    \end{minipage}
    \begin{minipage}{0.23\textwidth}
        \begin{tikzpicture}
            \begin{axis}[
                title={},
                width = 2.5cm,
                height = 2.5cm,
                xlabel={$p$},
                ylabel={},
                xmin=0, xmax=1,
                ymin=0, ymax=1,
                ytick={1},
                yticklabels ={$1$},
                xtick={0,1},
                enlargelimits=false,
                scale only axis=true
            ]
            \addplot[color=red,domain=0:1]{1/6+2*x^2 - 4*x^3/3};
            \addplot[color=blue,domain=0:1,samples = 1000]{1/6+2*x^2 - 4*x^3/3+sin(8000*x)/40};
            \end{axis}
        \end{tikzpicture}
    \end{minipage}
    \caption{In red, we show the original dependence function $F(p)$. In blue, we show four increasingly good approximations to it. While the approximations get close to the original function, they become increasingly wiggly, and therefore have increasingly many new stationary points (that aren't stationary points of the original dependence function). Each of these would be a CDT+GSGT policy in our example problem. Using this idea, we can create a sequene of approximating dependence functions such that every policy in $[0,1]$ is the limit of some sequence of CDT+GSGT policies for the decision problems where the dependence function is replaced by its approximations.}\label{fig:wiggles}
\end{figure}

We may show this formally as follows. For simplicity, assume that $F(\pi_0)\in(0,1)$ for all $\pi_0$. Suppose that we have a sequence of approximating dependence functions $(\hat{F}_N)$, which we may WLOG assume all lie in $[\delta, 1-\delta]$ for some $\delta>0$. Assume that the derivative of $\hat{F}_N$ (as a function from $[0,1]$ to $[0,1]$) has maximum magnitude at most $\lambda_N$, where WLOG $\lambda_N \geq 1$ (note that the derivative must be bounded since $\hat{F}_N$ is a polynomial by \Cref{prop:global_sim}). Now, consider $f_N(p) = \frac{1}{N} \sin(3N\lambda_N p)$, which has derivate $f'_N(p) = 3\lambda_N\cos(3N\lambda_N p)$. Then, we may find a polynomial, $q_N$, approximating $f_N$ and its derivative such that $\norm{q_N-f_N}_\infty< \tfrac{1}{N}$ and $\norm{q'_N-f'_N}_\infty< \lambda_N$. Then we have that the derivative of $q_N(p)$ at $p = 0, \tfrac{2\pi}{3N\lambda_N}, \tfrac{4\pi}{3N\lambda_N},\ldots$ is at least $2\lambda_N$, and its derivative at $p = \tfrac{\pi}{3N\lambda_N}, \tfrac{3\pi}{3N\lambda_N},\ldots$ is at most $-2\lambda_N$. Then consider $\tilde{F}_N = \Fhat_N + q_N$. We have that for $N$ large enough, $\tilde{F}_N$ lies in $(0,1)$, and is a polynomial, and hence is GSGT compatible by \Cref{cor:poly_factor_char}. Moreover, $\norm{\tilde{F}_N-\Fhat_N}_\infty = \norm{q_N}_\infty \leq \norm{f_N}_\infty + \norm{q_N-f_N}_\infty <\tfrac{2}{N} \rightarrow 0$ as $N\rightarrow \infty$. Hence, $\tilde{F}_N$ converges uniformly to $F$. Meanwhile, at $p = 0, \tfrac{2\pi}{3N\lambda_N}, \tfrac{4\pi}{3N\lambda_N},\ldots$, we have $\tilde{F}'_N(p) \geq 2\lambda_N - \norm{\Fhat_N'}_\infty \geq \lambda_N \geq 1$. Similarly, at $p = \tfrac{\pi}{3N\lambda_N}, \tfrac{3\pi}{3N\lambda_N},\ldots$, we have $\tilde{F}'_N(p)\leq -1$. Thus, since $\tilde{F}_N$ is a polynomial, the intermediate value theorem gives that it must have stationary points in the intervals $(0,\tfrac{2\pi}{3N \lambda_N}), (\tfrac{2\pi}{3N \lambda_N},\tfrac{4\pi}{3N \lambda_N}),\ldots$. Then, for any policy $\pi_0^*\in [0,1]$, we may find a sequence $(\pi_0^N)$ such that $\pi_0^N$ is a stationary point of $\tilde{F}_N$ such that $|\pi_0^N-\pi_0^*| \leq \tfrac{2\pi}{3N\lambda_N}$ (by choosing $\pi_0^N$ to lie in the interval containing $\pi_0^*$). Thus, $\pi_0^N\rightarrow \pi_0^*$, and for all $N$, we have that $\pi_0^N$ is a CDT+GSGT policy in the decision problem where $F$ is replaced with $\tilde{F}_N$.

More generally, if a point $\pi_0^*$ satisfies $\frac{\partial F_j(\pi_0^*)}{\partial\pi_0(a)} = 0$ for all $j$ and $a$, then we must have that $\frac{\partial \E{u\mid \pi_0^*}}{\partial\pi_0(a)} = 0$ for all $a$, by \Cref{cor:ex_ante_Lipschitzish} or the proof of \Cref{lemma:main}. Thus, by \Cref{thm:global_sims_result}, if $\hat{\F}_N$ has a stationary point at $\pi_0^*$ (i.e., its derivative is 0 at $\pi_0^*$), then $\pi_0^*$ is a CDT+GSGT policy for the decision problem where $\F$ is replaced by $\hat{\F}_N$.

\section{Alternative definition of GGT weights and transformation functions}\label{sec:GGT_alt_def}
\begin{notn}
    Provided $F_j$ is differentiable on the simplex, write
    $
        \delta_{j}(a, \pi_0) \defeq \frac{\partial F_j}{\partial\pi_0(a)}
    $
    and 
    $
        \delta_{j}(\pi_0', \pi_0) \defeq \sum_a\pi_0'(a)\frac{\partial F_j}{\partial\pi_0(a)}.
    $
    Also write
    $
        \delta_{j}(a' \mid a, \pi_0) \defeq \frac{\partial F_j(a' \mid \pi_0)}{\partial\pi_0(a)}
    $
    for the component of $\delta_{j}(a, \pi_0)$ in the direction of $a'$.
\end{notn}

\begin{restatable}[Equivalent definition of GGT weights and transformation functions]{lemma}{equivwtstransform}\label{lemma:equiv_wts_transf}
    Suppose $\F$ is differentiable everywhere.
    
    Let
    \begin{equation*}
        \Gamma_j(\pi_0) = \max_{a',a: F_j(a'\mid \pi_0) \neq 0} \frac{-\delta_j(a' \mid a, \pi_0)}{F_j(a'\mid \pi_0)}.
    \end{equation*}
    
    Then the GGT weights and transformation functions are exactly those $\rho_j$ and $\tau_j$ for which $\rho_j(\pi_0) \geq \Gamma_j(\pi_0)$ for all $\pi_0, j$, and for which $\tau_j(a, \pi_0)= F_j(\pi_0) + \frac{\delta_j(a, \pi_0)}{\rho_j(\pi_0)}$ when $\rho_j(\pi_0) \neq 0$, and $\tau_j(a,\pi_0) \in \Delta(A)$ when $\rho_j(\pi_0)=0$.
    
\end{restatable}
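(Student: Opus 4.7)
The plan is to unfold the defining condition using differentiability of $F_j$ and then read off exactly which $(\rho_j,\tau_j)$ satisfy it. By the definition of $\delta_j$ and differentiability, $F_j(\pi_0 + \eps(\pi_0' - \pi_0)) = F_j(\pi_0) + \eps \sum_a \pi_0'(a)\delta_j(a,\pi_0) + o(\eps)$. Matching this against the right-hand side of (\ref{eq:rho_tau_char_j}), dividing by $\eps$, and letting $\eps\to 0$, the defining relation is equivalent to $\sum_a \pi_0'(a)\delta_j(a,\pi_0) = \rho_j(\pi_0)\sum_a \pi_0'(a)[\tau_j(a,\pi_0) - F_j(\pi_0)]$ holding for every $\pi_0'\in\Delta(A)$. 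Both sides are linear in $\pi_0'$, so it suffices to impose the equation at each deterministic $\pi_0' = a$, reducing it to $\delta_j(a,\pi_0) = \rho_j(\pi_0)[\tau_j(a,\pi_0) - F_j(\pi_0)]$ for every $a\in A$; the converse implication back to the $o(\eps)$ form follows at once from the same expansion.

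When $\rho_j(\pi_0) > 0$, this forces $\tau_j(a,\pi_0) = F_j(\pi_0) + \delta_j(a,\pi_0)/\rho_j(\pi_0)$ uniquely, so the remaining content of the lemma is a characterisation of when this candidate lies in $\Delta(A)$ for every $a$. The sum-to-one constraint is automatic: since $F_j$ takes values in $\Delta(A)$, $\sum_{a'}\delta_j(a'\mid a,\pi_0) = 0$. Non-negativity $\tau_j(a'\mid a,\pi_0)\geq 0$ is equivalent to $\rho_j(\pi_0)F_j(a'\mid\pi_0) + \delta_j(a'\mid a,\pi_0)\geq 0$, which when $F_j(a'\mid\pi_0) > 0$ rearranges to $\rho_j(\pi_0)\geq -\delta_j(a'\mid a,\pi_0)/F_j(a'\mid\pi_0)$; maximising over such $(a',a)$ produces exactly $\rho_j(\pi_0)\geq \Gamma_j(\pi_0)$. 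The case $\rho_j(\pi_0) = 0$ is then a short clean-up: the equation forces all $\delta_j(a,\pi_0) = 0$, so every defining ratio of $\Gamma_j$ equals $0$ and $\Gamma_j(\pi_0) = 0\leq \rho_j(\pi_0)$ trivially, while $\tau_j(a,\pi_0)\in\Delta(A)$ is the only remaining constraint, exactly as stated.

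The main obstacle is justifying that the maximum defining $\Gamma_j$ can be restricted to pairs with $F_j(a'\mid\pi_0)\neq 0$, since otherwise non-negativity would read $\delta_j(a'\mid a,\pi_0)\geq 0$ for boundary pairs and this is not obvious a priori. I would dispatch this with a one-sided derivative argument: because $F_j(a'\mid\cdot)\geq 0$ on $\Delta(A)$ and vanishes at $\pi_0$, the direction $a - \pi_0$ is inward feasible, hence $\delta_j(a'\mid a,\pi_0) = \lim_{\eps\downarrow 0} F_j(a'\mid \pi_0 + \eps(a-\pi_0))/\eps \geq 0$. Thus the non-negativity constraint is automatic at such pairs and imposes no restriction on $\rho_j$, which both validates the form of $\Gamma_j$ in the statement and shows $\Gamma_j(\pi_0)\geq 0$ in general, so that the displayed inequality $\rho_j(\pi_0)\geq \Gamma_j(\pi_0)$ also encodes the non-negativity of $\rho_j$ built into Definition \ref{defn:GGT_wts_transf}.
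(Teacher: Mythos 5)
Your proposal is correct and follows essentially the same route as the paper: the paper proves this lemma by reducing it to Lemma~\ref{lemma:eq_diff}, whose proof is exactly your coefficient-matching argument plus the two simplex facts you isolate, namely $\sum_{a'}\delta_j(a'\mid a,\pi_0)=0$ for the sum-to-one constraint and the one-sided-derivative observation $\delta_j(a'\mid a,\pi_0)\geq 0$ when $F_j(a'\mid\pi_0)=0$ for restricting the maximum in $\Gamma_j$. The only slight compression is your claim that the boundary argument alone ``shows $\Gamma_j(\pi_0)\geq 0$''; strictly this also needs the zero-sum identity (if not all $\delta_j$ vanish, some component is negative and must sit over a positive $F_j(a'\mid\pi_0)$), but both ingredients are already in your write-up.
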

\begin{proof}
    This follows immediately from Lemma \ref{lemma:eq_diff}.
\end{proof}

\section{Why not model dependants as directly applying the dependence function?}\label{sec:N_sample_AO}
Why model dependant $j$ as taking some number of \textit{samples} of the agent's policy and applying some function to these, as opposed to just directly observing the agent's policy and applying $F_j$ to it?

Here, we will give an example in which our approaches work, but this approach would not work.

Consider the following version of the Adversarial Offer \citep{CDTDutchbook}:
\begin{itemize}
    \item An agent faces the following decision: Pay \$1 to take a single box — either box 1 or box 2 — or pay \$0 and take neither box.
    \item Prior to this, a predictor predicted which box the agent would take, and put \$3 in the other box, and nothing in the predicted box.
    \begin{itemize}
        \item So, if it predicted the agent would (deterministically) take box 1, it put \$3 in box 2, and \$0 in box 1.
        \item Similarly, if it predicted the agent would take box 2, it put \$3 in box 1, and nothing in box 2.
        \item If it predicted the agent would take neither box, it simply randomises which box it puts the \$3 in.
        \item Finally, if it predicts the agent will randomise between different actions, it puts \$0 in both boxes.
    \end{itemize}
    \item Specifically, assume here that the probability of the predictor predicting the agent will take a particular action $i$, given that the agent takes that action with probability $\pi_0(i)$, is given by $\pi_0(i)^4$. With the remaining probability, it predicts that the agent randomises.
\end{itemize}
Note that this is equivalent to the predictor running four simulations of the agent, observing only what actions the simulated agents take, and then if not all simulations take the same action, filling no boxes, and if all simulations do take the same action, filling boxes according to that action.

More formally, let $A= \{a_1,a_2,a_3\}$ be the set of actions, where $a_1$ corresponds to taking/filling box 1 for the agent/predictor (resp.), $a_2$ to taking/filling box 2, and $a_3$ to taking/filling neither box. Let dependant $1$ be the agent and dependant 2 be the predictor. Let $S=\{x_0,x_1,x_2,x_3\}\cup S_T$, where $S_T = \{0,-1,3\}$, where $x_0$ is the initial state ($P_0 = x_0$), at which the prediction is made, $x_1,x_2,x_3$ are states at which the game is played, given that box 1, 2 or neither box, respectively, was filled. The transition function is deterministic, with $T(x_0,a_i) = x_i$, $T(x_i,a_3) = 0$, $T(x_3,a_1) = T(x_3,a_2)=-1$, $T(x_1,a_2)=T(x_2,a_1)=3$, and $T(x_1,a_1) = T(x_2,a_2)=-1$. The transitions are shown in \Cref{fig:AO}. The index function is $i(x_0) = 2$, $i(x_j) = 1$. The utility function is simply given by $u(j) = j$.

\begin{figure}[h]
    \centering
	\begin{tikzpicture}[->, >=stealth', auto, semithick, node distance= 0.75cm and 2.75cm, on grid,
	terminal/.style ={},
	accepting/.style = {shape = rectangle},
	every state/.style = {fill=none,draw=black,thick,text=black},
	every text node part/.style={align=center}]
	\node[state,initial]    (s0) {$x_0$};
	\node[state]    (s1)[above right = 2cm and 3cm of s0] {$x_1$};
    \node[state]    (s3)[right = 3cm of s0]   {$x_3$};
	\node[state]    (s2)[below right = 2cm and 3cm of s0]   {$x_2$};
    \node[state,accepting]    (0)[right = 4cm of s3]{$0$};
    \node[state,accepting]    (-1)[right = 3cm of 0]{$-1$};
    \node[state,accepting]    (3)[right = 3cm of -1]{$3$};
    \node[state,accepting]    (-1b)[above right = 1 and 1cm of s3]{$-1$};
	\path
	(s0) edge[bend left, above left]	node{$a_1$}	(s1)
	(s0) edge[bend right, below left]		node{$a_2$}	(s2)
    (s0) edge[below]		node{$a_3$}	(s3)
    (s1) edge [bend left, below left] node{$a_3$} (0)
    (s3) edge [below] node{$a_3$} (0)
    (s2) edge [bend right, above left] node{$a_3$} (0)
    (s1) edge [bend left, below left] node{$a_1$} (-1)
    (s3) edge [bend left, above left] node{$a_1/a_2$} (-1b)
    (s2) edge [bend right, above left] node{$a_2$} (-1)
    (s1) edge [bend left, above right] node{$a_2$} (3)
    (s2) edge [bend right, below right] node{$a_1$} (3)
    ;
	\end{tikzpicture}
	\caption{A graph of (our version of) the adversarial offer, with states shown as nodes, and transitions shown as edges.}\label{fig:AO}
\end{figure}
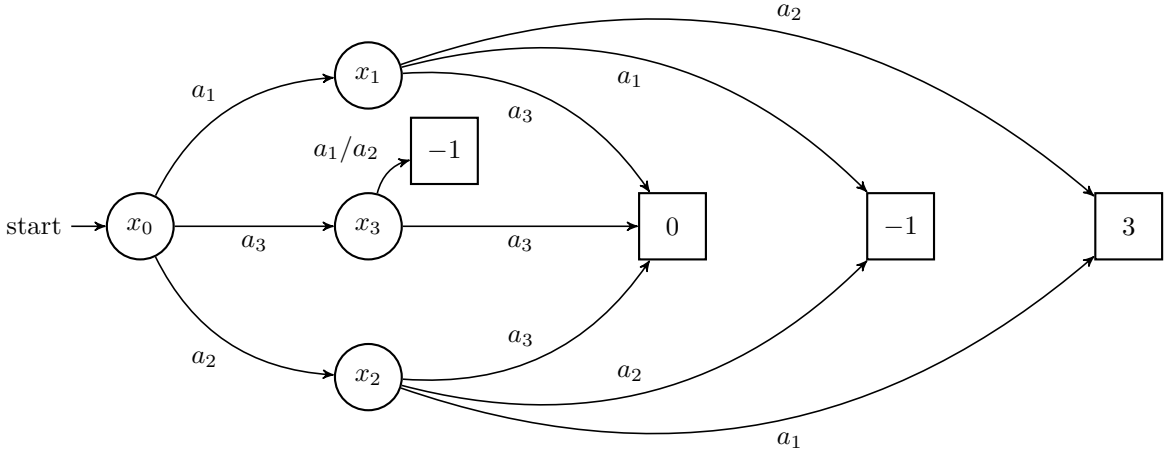

The dependence functions are $F_1(\pi_0)=\pi_0$, and $F_2$ defined as follows:

\begin{align*}
    F_2(\pi_0) &= (\pi_0(a_2)^4 + \frac{1}{2}\pi_0(a_3)^4)a_1 + (\pi_0(a_1)^4 + \frac{1}{2}\pi_0(a_3)^4)a_2 + (1- \pi_0(a_1)^4 - \pi_0(a_2)^4 - \pi_0(a_3)^4)a_3\\
    &= \pi_0(a_1)^4 a_2 + \pi_0(a_2)^4 a_1 + \pi_0(a_3)^4 \left(\tfrac{1}{2}a_1 + \tfrac{1}{2}a_2\right) + (1-\pi_0(a_1)^4-\pi_0(a_2)^4-\pi_0(a_3)^4)a_3\\
    &= \E{g(A_1,\ldots A_4)\mid A_1,\ldots,A_4 \simiid \pi_0 }
\end{align*}
where $g$ is defined as follows:
\begin{equation*}
    g(A_1,\ldots A_4) =
    \begin{cases*}
        a_2 & if $A_1 = A_2 = A_3 = A_4 = a_1$\\
        a_1 & if $A_1 = A_2 = A_3 = A_4 = a_2$\\
        \tfrac{1}{2}a_1 + \tfrac{1}{2}a_2 & if $A_1 = A_2 = \cdots = A_4 = a_3$\\
        a_3 & otherwise.
    \end{cases*}
\end{equation*}

Note that this is a simple decision problem (cf.\ \Cref{sec:GSH}), and so we will assume that theories of self-locating beliefs do not update about actions at other states from their existence.

What is the \textit{ex ante} optimal policy here? Parameterise the agent's policy by $p$, the probability of taking a box, and $\theta$, the probability of taking box 1 given that the agent takes a box. I.e., $\pi_0 = (\theta p, (1-\theta) p, 1-p)$. In any case, the cost of taking a box contributes expected utility $-p$. Then the predictor fills a random box with probability $(1-p)^4$, in which case the agent obtains an additional $\tfrac{3}{2}p$ expected utility. Similarly, the predictor fills box $1$ with probability $\theta^4p^4$ and box $2$ with probability $(1-\theta)^4p^4$, then giving additional expected utility $3(1-\theta)p$, or $3\theta p$, respectively. Otherwise, if the predictor fills no boxes, no additional utility is obtained. Thus, the overall expected utility is
\begin{align*}
    \E{u\mid \pi_0 = (\theta_1p,\theta_2p,1-p)} &= -p + \tfrac{3}{2}p(1-p)^4 + 3\theta^4p^4(1-\theta)p + 3(1-\theta)^4p^4\theta p\\
    &= -p + \tfrac{3}{2}p(1-p)^4 + 3p^{5}\theta(1-\theta)(\theta^{3}+(1-\theta)^{3}).
\end{align*}

Now, for any positive $p$, the maxima for $\theta$ are the maxima of $\theta(1-\theta)(\theta^3+(1-\theta)^3)$, which by differentiating and factorising, we may verify occur at $\theta = \tfrac{1}{2}-\tfrac{1}{2\sqrt{3}} \approx 0.21$ and $\theta = \tfrac{1}{2}+\tfrac{1}{2\sqrt{3}} \approx 0.79$. At this value of $\theta$, we then have $\theta(1-\theta)(\theta^3+(1-\theta)^3)= \tfrac{1}{12}$. Given this value for $\theta$, the \textit{ex ante} expected utility is $-p+\tfrac{3}{2}p(1-p)^4 + \tfrac{1}{4}p^5$, which is maximised near $p\approx 0.046$.

Now, suppose the agent instead assigns total credence $q_1$ to being the real agent, and its action applying as normal, and credence $q_2$ to being the simulated agent, in which case its policy is transformed directly according to $F_2$. What policies are then ratifiable? We have that if $\pi_0(a_1)>\pi_0(a_2)$, the strictly optimal policy as the simulated agent is then to take box 2 (deterministically). Meanwhile, if the policy attains positive \textit{ex ante} utility (as the \textit{ex ante} optimal policy does), then it must be strictly better to take either box 1 or box 2, as the real agent, than to take neither box (since randomising according to the prior probabilities gives the \textit{ex ante} utility, and by linearity). Hence, as the real agent, the strictly optimal policy is also to take box 2 deterministically. Thus, overall the unique optimal policy is to take box 2 deterministically. Thus, no policy with $\pi_0(a_1)\neq \pi_0(a_2)$ can be ratifiable, given these beliefs -- in particular, the \textit{ex ante} optimal policy cannot be ratifiable.

Meanwhile, we know that our approaches do work here (see \Cref{thm:global_sims_result,thm:main}). This shows that to achieve the \textit{ex ante} optimal policy we need to use some kind of transformation function — we can’t just directly use the dependance function.

\section{Constructing the simulation model for \Cref{eg:SB_PD}}\label{sec:glob_sims_eg_construction}
In the main text, we only sketched how a simulation model of the first version of \Cref{eg:SB_PD} would work. Here, we give a formal construction (for the same for general decision problems, see the proof of \Cref{thm:global_sims_result} on page \pageref{pf:globalsimsresult}, \Cref{pf:globalsimsresult}). Recall that we have dependence functions: $F_1(\pi_0) = \pi_0$ and $F_2(\pi_0) = (\tfrac{1}{6}+2\pi_0(C)^2 - \tfrac{4}{3}\pi_0(C)^3,\tfrac{5}{6}-2\pi_0(C)^2+ \tfrac{4}{3}\pi_0(C)^3)$. We again take, as a simulation model of $F_2$, $g(a_1,a_2,a_3) = (\tfrac{5}{6},\tfrac{1}{6})$ when at least two of $a_{1:3} = C$, and $(\tfrac{1}{6}, \tfrac{5}{6})$ otherwise.

We now construct the simulation model. Write $D$ for the original decision problem, and $D'$ for the new decision problem, in which all the dependence functions will be the identity. Construct $D'$ as follows:
\begin{itemize}[nolistsep]
    \item For each state $s$ in the original problem associated with Theodora ($\mathit{Th}_0, \mathit{Th}_C,\mathit{Th}_D$), `expand' state $s$ to create a depth-three tree of new states $s$, $(s,C)$, $(s,D)$, $(s,CC)$, $(s,CD)$, $(s,DC)$, $(s,DD)$. These are each simulations of Dorothea, and the labels keep track of the actions of the previous simulations at this awakening of Theodora (hence why we need more than three of these states per each state corresponding to Theodora). Leave the states associated with Dorothea, and the terminal states, unchanged.
    \item The initial distribution is just $P_0'((\mathit{Th}_0,0)) = 1$.
    \item The index function is just 1 everywhere (every state is now just a copy of Dorothea), and the dependence function is just the identity.
    \item The set of actions and the utility function is as before.
    \item The transition function $T'$ works as follows. If $s$ is one of the original states associated with Theodora, $T'(\cdot\mid s,a)$ maps deterministically to $(s,a)$, and $T'(\cdot\mid(s,a_1),a_2)$ maps deterministically to $(s,a_1a_2)$. $T'(\cdot\mid (s,a_1a_2),a_3)$ is instead distributed as $T(\cdot \mid s, g(a_1,a_2,a_3))$. In words, we progress through the tree of simulations keeping track of simulation actions for this awakening of Theodora, until all three simulations have been run. We then apply $g$ to the simulations, and transition to the next state as in the original problem, in accordance with $g$. For instance, $T'(\cdot\mid ((\mathit{Th_0},CD),C))$ is distributed as $T(\cdot \mid \mathit{Th_0}, g(C,D,C)) = T(\cdot \mid \mathit{Th_0}, \tfrac{1}{6}D + \tfrac{5}{6}D) = \tfrac{1}{6}\mathit{Th}_D + \tfrac{5}{6}\mathit{Th}_C$.
\end{itemize}
We illustrate the first part of the decision problem, with the `expanded' version of state $\mathit{Th}_0$ in figure \Cref{fig:expanded_state}.

\begin{figure}
\centering
\begin{tikzpicture}[->, >=stealth', auto, semithick, node distance= 0.75cm and 2.75cm, on grid,
	accepting/.style = {shape = rectangle, minimum size=0.5pt, inner sep = 3pt},
	every state/.style = {fill=none,draw=black,thick,text=black, minimum size = 0.5pt, scale=0.9, inner sep = 1pt},
	every text node part/.style={align=center}]
	\node[state,initial]    (Th0) {$\mathit{Th}_0$};
	\node[state]    (Th0C)[above right = 1.5cm and 2cm of Th0] {$\mathit{Th}_0,C$};
	\node[state]    (Th0D)[below right = 1.5cm and 2cm of Th0]   {$\mathit{Th}_0,D$};
    \node[state]    (Th0CC)[above right = 0.75cm and 2cm of Th0C] {$\mathit{Th}_0,CC$};
    \node[state]    (Th0CD)[below right = 0.75cm and 2cm of Th0C] {$\mathit{Th}_0,CD$};
    \node[state]    (Th0DC)[above right = 0.75cm and 2cm of Th0D] {$\mathit{Th}_0,DC$};
    \node[state]    (Th0DD)[below right = 0.75cm and 2cm of Th0D] {$\mathit{Th}_0,DD$};
	\path
	(Th0) edge[bend left, above left]	node[scale = 0.8]{$C$}	(Th0C)
	(Th0) edge[bend right, below left]		node[scale = 0.8]{$D$}	(Th0D)
	
	(Th0C) edge[bend left, above left]	node[scale = 0.8]{$C$}	(Th0CC)
	(Th0C) edge[bend right, below left]		node[scale = 0.8]{$D$}	(Th0CD)

 	(Th0D) edge[bend left, above left]	node[scale = 0.8]{$C$}	(Th0DC)
	(Th0D) edge[bend right, below left]		node[scale = 0.8]{$D$}	(Th0DD)
    ;
	\end{tikzpicture}
 \caption{A graph of the start of the simulation model of \Cref{eg:SB_PD}, showing the expansion of $\mathit{Th}_0$ into a tree of new states. The nodes show the states and the edges show the transitions. All transitions shown are deterministic.}\label{fig:expanded_state}
\end{figure}
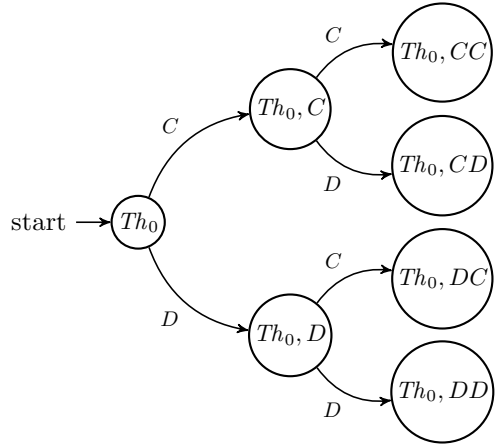

Since this new all dependence functions are the identity in this new decision problem, we may apply CDT+GT. It remains to show that the new decision problem is equivalent to the old one, and that CDT+GT on the new problem is equivalent to CDT+GSGT on the old one. In particular, we will show that:
\begin{enumerate}[nolistsep,label = (\roman*)]
    \item The \textit{ex ante} utility is the same (for all policies $\pi_0$) for the two problems.\label{item:ex_ante_same}
    \item The GSGT probability of state $s$ in $D$ is the same as the total GT probability of the entire expanded tree of states associated with $s$ in $D'$, given any policy $\pi_0$.\label{item:same_probs}
    \item The CDT+GSGT expected value of taking action $a$ in state $s$ in $D$ is the same as the average CDT+GT expected value of taking action $a$ in states associated with $s$ in $D'$, where the average is taken with states weighted by their GT probability in $D'$.\label{item:same_counterfactuals}
    \item The dynamics of the decision problems are the same, with states replaced by trees of states. I.e. for all states $s$ in the original problem $D$, the probability of transitioning directly from $s$ to $s'$ given policy $\pi_0$ is the same as the probability of transitioning from $s$ to $s'$ (via only states in the tree associated with $s$) given policy $\pi_0$ in $D'$.\label{item:dynamics_same}.
\end{enumerate}

Since the distribution over initial states is the same, it is easy to see that \ref{item:ex_ante_same} follows from \ref{item:dynamics_same}. We start by showing \ref{item:dynamics_same}. By construction of $T'$ and of the expanded state trees, we have that for each state $s$ associated with Theodora, starting at $s$, the chance of arriving at state $(s,a_1a_2)$ and then choosing action $a_3$ is given by $T'((s,a_1)\mid s, \pi_0)T'((s,a_1a_2)\mid (s,a_1),\pi_0)\pi_0(a_3) = \pi_0(a_1)\pi_0(a_2)\pi_0(a_3)$. The probability of then transitioning to state $s'$ is $T'(s'\mid (s,a_1a_2),a_3) = T(s'\mid s, g(a_1,a_2,a_3))$. Thus, the overall probability of transitioning from the $s$-tree to the $s'$ tree is $\sum_{a_{1:3}}\pi_0(a_1)\pi_0(a_2)\pi_0(a_3) T(s'\mid s, g(a_1,a_2,a_3)) = \E{T(s'\mid s, g(A_{1:3}))\mid A_{1:3} \simiid \pi_0} = T(s'\mid s, \E{g(A_{1:3})\mid A_{1:3}\simiid \pi_0} = T(s'\mid s, F_2(\pi_0))$. This is just the probability of transitioning from state $s$ to $s'$, so we are done.

We now turn to \ref{item:same_probs}. The number of expected copies of Dorothea in $D'$ is just $7$, since there are 7 instances of Dorothea in every history (6 as simulations, 1 as herself). Then, the number of expected copies of Dorothea in the tree of states associated with $\mathit{Th}_0$ is 3, since there are always exactly 3 such states (not always the same ones). Meanwhile, by the previous analysis, the probability of reaching state $\mathit{Th}_a$, for each $a$, is as in the original problem, and there are then always three states passed through in the associated tree. Thus, the expected number of copies of Dorothea in the tree of states associated with $\mathit{Th}_C$ is $3F_2(\pi_0)_1$ and for the tree associated with $\mathit{Th}_D$ is $3F_2(\pi_0)_2$. Thus, the GT probabilities of these trees of states are $\tfrac{3}{7}$, $\tfrac{3F_2(\pi_0)_1}{7}$ and $\tfrac{3F_2(\pi_0)_2}{7}$, the same as the GSGT probabilities of the corresponding states in the original problem. The states associated with Dorothea just have the same number of expected copies as in the original problem, and so similarly the GT probabilities of these states in the new problem are the same as the GSGT probabilities in the original problem. This is because the number of states passed through in the new problem (the GT denominator) is the same as the simulation-sample weighted number of states passed through in the original problem (the GSGT denominator).

Finally, we turn to \ref{item:same_counterfactuals}. For the states associated with Dorothea, this is easy to see. Consider then a state $s$ associated with Theodora. We have that the CDT+GT average expected utility of taking action $a$ in the GT-probability-weighted tree of states associated with $s$ in $D'$ is:
\begin{align*}
    &\sum_{k=0}^2 \frac{P_{GT}(s,a_{1:k}\mid\pi_0)}{\sum_{k',a_{1:k'}'}P_{GT}(s,a'_{1:k'}\mid\pi_0)}\Ex^{(s,a_{1:k})}_{D'}[u\mid \Do(a),\pi_0]\\
    &= \frac{1}{3}\Ex^s_{D'}[u\mid \Do(a),\pi_0] + \sum_{a_1}\frac{\pi_0(a_1)}{3}\Ex^{(s,a_1)}_{D'}[u\mid \Do(a),\pi_0]+\sum_{a_1,a_2}\frac{\pi_0(a_1)\pi_0(a_2)}{3}\Ex^{(s,a_1a_2)}_{D'}[u\mid \Do(a),\pi_0]
\end{align*}

Now, $\Ex_{GT,D'}^s[u\mid \Do(a),\pi_0] = \Ex_{D'}^{(s,a)}[u\mid \pi_0] = \sum_{a_2}^{(s,aa_2)}\pi_0(a_2)\Ex_{D'}[u\mid \pi_0] = \sum_{a_2,a_3}\pi_0(a_2)\pi_0(a_3)\sum_{s'}T'(s'\mid (s,aa_2),a_3)\Ex_{D'}^{s'}[u\mid \pi_0] = \sum_{a_2,a_3}\pi_0(a_2)\pi_0(a_3)\sum_{s'}T(s'\mid s,g(a,a_2,a_3))\Ex_{D'}^{s'}[u\mid \pi_0]$ which is just $\sum_{s'}T\left(s'\mid s, \E{g(a,A_{2:3}\mid A_{2:3}\simiid \pi_0}\right)\Ex_{D'}^{s'}[u\mid \pi_0]$. Similarly, $\sum_{a_1}\pi_0(a_1)\Ex_{D'}^{(s,a_1)}[u\mid \Do(a),\pi_0] = \sum_{a_1,a_3}\pi_0(a_1)\pi_0(a_3)\sum_{s'}T(s'\mid s,g(a_1,a,a_3))\Ex_{D'}^{s'}[u\mid \pi_0] =\sum_{s'}T\left(s'\mid s, \E{g(A_{1},a,A_{3}\mid A_{1},A_3\simiid \pi_0}\right)\Ex_{D'}^{s'}[u\mid \pi_0]$ and $\sum_{a_1,a_2}{\pi_0(a_1)\pi_0(a_2)}\Ex_{D'}^{(s,a_1a_2)}[u\mid \Do(a),\pi_0] = \sum_{s'}T\left(s'\mid s, \E{g(A_{2:3},a\mid A_{2:3}\simiid \pi_0}\right)\Ex_{D'}^{s'}[u\mid \pi_0].$

Thus, the CDT+GT average expected utility of taking action $a$ in the GT-probability-weighted tree associated with $s$ in $D'$ is just $\sum_{s'}T\left(s'\mid s, \tfrac{1}{3} \sum_{k=1}^3 \E{g(A_{1:3})\mid A_k = a, A_{-k}\simiid \pi_0}\right)\Ex_{D'}^{s'}[u\mid \pi_0] = \sum_{s'}T\left(s'\mid s, \tau_{GSGT,2}(a,\pi_0)\right)\Ex_{D'}^{s'}[u\mid \pi_0] = \Ex_{GSGT,D}^{s}[u\mid \Do(a),\pi_0]$, and we are done.
\section{Independent randomisation}\label{sec:indep_rand}

We assume that given dependants' policies, actions are sampled at random. Can we relax this assumption?

First note that if a CDT agent cannot randomise at all, even in the simpler setting of \citet[Section 6.1]{dese}, there may exist no ratifiable policies, or only Dutch book policies.

What if we allow randomisation, but with dependence? We still run into issues, as the following example shows:

\begin{eg}\label{eg:AO_no_random}
Consider a variant of the adversarial offer \citep{CDTDutchbook}: An agent faces a choice between paying \$1 to take a box (either box 1 or box 2), or taking neither box. Prior to this, a prediction of the agent is made, assumed here to be via a simulation of the agent. If that simulation takes box 1 (resp. 2), the predictor puts $\$3$ in box 2 (resp. 1), and box nothing in box 1 (resp. 2). Otherwise, if the simulated agent takes no boxes, the predictor randomises which box contains $\$3$. The simulated agent is just a copy of the real agent, and in the same observation state, so both choose the same policy. If the two instances of the agent can independently randomise, then the \textit{ex ante} optimal policy is to randomise between taking each box, which is the only CDT+GT policy. Suppose then that the simulated agent and real agent can't randomise independently from one another. The unique \textit{ex ante} optimal policy is then to take no boxes. However, whatever the agent believes the simulated agent does, taking whichever box the simulated agent is less likely to take (if either) dominates taking no boxes as a policy for the real agent. Meanwhile, if the agent believes that the real agent takes no boxes (in accordance with \Cref{asm:zeroimposs}), it is neutral between all policies. Thus, provided that the transformation function for the real agent is the identity, and the agent assigns non-zero credence to being this agent, it then prefers to deterministically take a box.
\end{eg}
So, it seems that the assumption of independent randomisation is necessary.
\subsection{CDT without ratificationism}

What if we give up hope of a ratifiable optimal policy, and instead ask whether we can find a policy where under \textit{some} prior beliefs about its actions, CDT can choose the \textit{ex ante} optimal policy?

In \Cref{eg:AO_no_random}, this still does not help us, at least under the assumption that the agent does not update from its existence about the actions of the other copy of it, and assuming that the transformation function for exact copies of the agent is the identity. Indeed, suppose that the agent believes \textit{a priori} that it will choose box 1 with probability $p_1$ and box 2 with probability $p_2$, WLOG $p_1\geq p_2$. Now, if the agent knew it were the simulated agent, it is either neutral between all actions (in the case $p_1 = p_2$), or strictly prefers to take box 2. Meanwhile, if it knew it were the real agent, it either strictly prefers to take box 2, or is neutral about which box it takes but strictly prefers taking a box over not taking a box. Thus, if it assigns non-zero credence to being the real agent, it will prefer to take a box.

To conclude, when randomisation is not possible, even with modified anthropics, there may be no beliefs for which the \textit{ex ante} optimal policy is recommended.

\section{Extending impossibility results to single-halfing and similar theories}\label{sec:GSH}
\subsection{Preliminaries}

Previously, we defined the CDT+$X$ expected utility of taking action $a$ as ${\Ex_X[u\mid \Do(a),\pi_0]} \defeq\sum_{s} {P_X(s\mid\pi_0)}\sum_{s'}{T(s'\mid s,\tau_{i(s)}(a,\pi_0))}\Ex^{s'}[u\mid\pi_0]$. This assumes that, given we believe we are in state $s$, we make no further anthropic updates about future events (in other words, the update we make from our existance is screened off by the knowledge of our state). This is in contrast to some theories of self-locating beliefs -- most notably Generalised Single Halfing (GSH)\footnote{Also known as the `self-sampling assumption' \cite{Bostrom2010}, GSH is implicitly assumed in the \textit{doomsday argument} \cite{carter}. GSH is described in the context of Sleeping Beauty in \cite{Lewis2001}, where it is the assumed halfer position (Generalised Double Halfing is not considered).} which does, even given its current state, update from its observations about future events if the future proportion of agents in its reference class with its observations is stochastic (cf.\ \citet{Lewis2001} and \citet[Sec 5.2]{dese}). For instance, in the Sleeping Beauty problem, GSH may think that the coin is more likely to come up Heads than Tails, from the perspective of before the coin toss (it reasons that since there are more future copies of Beauty with different observations if the coin comes up Tails, it would be less likely to find itself before the coin toss in this case, and hence updates against the Tails world from this observation).

In this section, we restrict our attention to decision problems in which the sequence of dependants that exists is fixed, regardless of any actions. This then removes any reason for theories of self-locating beliefs to update about the actions of dependants at other states (even if the agent can observe which dependant it is). We will then still allow for theories that make such updates, but merely assume that any such theory $X$ does not assign positive credence to dependants taking actions that they take with \emph{ex ante} probability 0. More precisely, we restrict attention to the following set of decision problems:
\begin{defn}[Simple]
    Call a decision problem $(S,S_T,P_0,n,i,A,T,u,\F)$ \emph{simple} if all of the following holds:
    \begin{enumerate*}[leftmargin=*,label=(\roman*)]
        \item The transition function is deterministic.
        \item $P_0$ is deterministic.
        \item The set of non-terminal states $S-S_T$ may be partitioned into sets $S_1,S_2,\ldots,S_t$ such that: the index function $i$ is constant on each set $S_j$; $P_0$ is zero except on $S_1$; for all $j= 1, \ldots, t-1$, $s\in S_j$ and $a\in A$, we have $T(\cdot\mid s,a)\in S_{j+1}$; and for all $s\in S_t$ and $a \in A$, we have $T(\cdot\mid s,a)\in S_T$.
    \end{enumerate*}
\end{defn}

Then, given a simple decision problem, we will consider generalised theories of self-locating beliefs that assign credences not only to states, but to states and future actions.
\begin{defn}[Self-locating beliefs (\Cref{sec:GSH} version)]\label{def:limitationsSLB}
    For the purposes of this section, given a simple decision problem with state partition $S_1,\ldots,S_t$, and agent policy $\pi_0$, \emph{self-locating beliefs} are probability distributions $P$ over non-terminal states and actions of future dependants. I.e., $P(s,a_{(j+1):t})$ is defined whenever $s\in S_j$ and $a_j,\ldots a_t \in A$, and $P$ defines a probability distribution over such tuples $(s,a_{(j+1):t})$.
\end{defn}

\begin{defn}[Generalised theory of self-locating beliefs (\Cref{sec:GSH} version)]\label{def:limitationsGTSLB}
    For this section, a \emph{(generalised) theory of self-locating beliefs} is a map from (some subset of) simple decision problems $D$ and policies $\pi_0$ to self-locating beliefs (\Cref{def:limitationsSLB}) and sets of transformation functions, independent of the utility function in $D$.
\end{defn}

\begin{notn}
    Given such a theory of self-locating beliefs, write $P_X(s\mid \pi_0) \defeq \sum_{a_{j+1},\ldots,a_t}P_X(s,a_{(j+1):t}\mid\pi_0)$. Also, write $P_X(j\mid \pi_0) = \sum_{s:i(s)=j}P_X(s\mid \pi_0)$.
\end{notn}

\begin{defn}[CDT+X utility (\Cref{sec:GSH} version)]
In this section, given a simple decision problem with non-terminal state partition $S_j,\ldots S_t$, and theory of self-locating beliefs $X$ (\Cref{def:limitationsGTSLB}), assigning credence $P_X(s,a_{(j+1):t})$ to state $s$ and future actions $a_{(j+1):t}$, the CDT+X expected utility is:
\begin{align*}
    &\Ex_X[u\mid \Do(a), \pi_0]\\
    &\defeq \sum_{\substack{j = 1,\ldots, t\\s\in S_j\\a_{(j+1):t}}}P_X(s,a_{(j+1):t}\mid \pi_0)\Ex^{T(\cdot\mid s,\tau_{i(s)}(a,\pi_0))}[u\mid a_{(j+1):t}]
\end{align*}
where $\Ex^s[u\mid a_{(j+1):t}]$ denotes the expected utility starting at state $s\in S_j$ and acting at future states according to action sequence $a_{(j+1):t}$.
\end{defn}

Note that, on simple decision problems, these are generalisations of our previous definitions.

Now, we will impose the following restriction, which is a generalisation of non-fancifulness (and is satisfied by any non-fanciful generalised theory of self-locating beliefs as defined in \ref{sec:setting}):
\begin{asm}\label{asm:zeroimposs}
     If $P_X(s,a_{(j+1):t}\mid \pi_0)>0$, we must have that state $s$ is reached with positive probability given policy $\pi_0$, and that actions $a_{j+1},\ldots a_t$ are taken by dependants $i(S_{j+1}),\ldots i(S_t)$ with positive probability given policy $\pi_0$.
\end{asm}
This essentially says that $X$ cannot assign positive credence to dependants at other states (past or future) taking actions that they take with $\textit{ex ante}$ probability 0.

Finally, we may additionally want to consider transformation functions $\tau_j(\cdot,\pi_0)$ taking as input a policy in $\Delta(A)$, rather than just an action, and transform it possibly non-linearly. We will refer to such transformation functions as \textit{non-linear transformation functions}, and state explicitly where we allow them.

\subsection{Results}
We obtain similar results to \Cref{sec:limitations}:

\begin{restatable}{prop}{faithfulimpossgeneral}\label{prop:faithful_imposs_general}
    There exists a simple decision problem for which the unique ex ante optimal policy is not a CDT+$X$ policy for any faithful theory of self-locating beliefs $X$ (possibly with non-linear transformation functions) satisfying \Cref{asm:zeroimposs}.
\end{restatable}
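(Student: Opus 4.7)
The plan is to reuse (with at most minor modifications) the decision problem $D$ constructed in the proof of \Cref{prop:faithful_imposs}. Let $\pi_0^*$ be its unique \textit{ex ante} optimal policy. I would arrange (or verify) that $D$ is simple in the sense of this section, that $\pi_0^*$ is pure, and that every dependant $j$ plays a pure action under $F_j(\pi_0^*)$. Simplicity can be enforced by splitting any stochastic transition into branching deterministic transitions and grouping states into layers. The remaining two conditions are natural consequences of the discontinuity in $\F$ around $\pi_0^*$ that makes the impossibility result work in the first place, and can be imposed without loss of generality.

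Given any faithful $X$ in the sense of \Cref{def:limitationsGTSLB} satisfying \Cref{asm:zeroimposs}, I would reduce it to a faithful, non-fanciful theory of self-locating beliefs in the original sense. Since every $F_j(\pi_0^*)$ is pure, \Cref{asm:zeroimposs} forces the future-action coordinates of $P_X$ to concentrate on the unique deterministic future $f(s)$ consistent with state $s$, so that $P_X(s, a_{(j+1):t}\mid\pi_0^*) = \bar{P}_X(s\mid\pi_0^*)\mathbf{1}[a_{(j+1):t} = f(s)]$ for the state marginal $\bar{P}_X(s\mid\pi_0^*) \defeq \sum_{a_{(j+1):t}} P_X(s, a_{(j+1):t}\mid\pi_0^*)$. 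Moreover $f(s)$ coincides action-by-action with what a CDT+$X$ agent following $\pi_0^*$ would expect in the original setting (each future dependant plays the unique pure action in the support of its policy under $F(\pi_0^*)$), so
\begin{equation*}
\Ex_X[u\mid \Do(a),\pi_0^*] = \sum_s \bar{P}_X(s\mid\pi_0^*)\,\Ex^{T(\cdot\mid s,\tau_{i(s)}(a,\pi_0^*))}[u\mid\pi_0^*],
\end{equation*}
matching the original CDT+$X$ expression. The state marginal $\bar{P}_X$ is non-fanciful by \Cref{asm:zeroimposs} and inherits faithfulness from $X$. The non-linear-transformation-function loophole is vacuous here, because the ratifiability test only evaluates $\tau_{i(s)}(a,\pi_0^*)$ at pure actions $a$, and faithfulness constrains those values for exact-copy $j$ identically in the linear and non-linear cases.

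The CDT+$X$ ratifiability condition at $\pi_0^*$ thus reduces to exactly the inequality shown impossible in \Cref{prop:faithful_imposs}, completing the argument. The main obstacle I expect is verifying (or arranging) the three structural conditions on $D$ stated in the first paragraph, especially purity of $F_j(\pi_0^*)$ at non-copy $j$; where this does not already hold, I would snap the non-copy dependants to an arbitrary pure action exactly at $\pi_0^*$, which is harmless because the impossibility argument of \Cref{prop:faithful_imposs} depends only on the behaviour of $\F$ near but not at $\pi_0^*$, and because changing $F_j$ only at a single pure policy $\pi_0^*$ preserves the \textit{ex ante} suboptimality of every other policy.
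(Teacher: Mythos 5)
Your proposal is correct and takes essentially the same approach as the paper, whose proof simply observes that the decision problem from \Cref{prop:faithful_imposs} is already simple and that \Cref{asm:zeroimposs} forces the future-action credences to concentrate on the unique pure continuation, so the original argument goes through verbatim; your reduction of the generalised beliefs to a state-marginal $\bar{P}_X$ spells out exactly why. The contingency modifications you describe (snapping non-copy dependants to pure actions at $\pi_0^*$) are never needed, since in that example $\pi_0^*=0$ is pure and $F_1(0)=F_2(0)=0$ already.
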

\begin{proof}
The proof is essentially identical to that of \Cref{prop:faithful_imposs}, noting the decision problem given is simple and that the arguments there go through due to \Cref{asm:zeroimposs}.
\end{proof}

Finally, we can consider dropping even \Cref{asm:zeroimposs} (but disallowing non-linear transformation functions). We then obtain the following result:

\begin{restatable}{prop}{pragmaticimposs}\label{prop:pragmaticimposs}
    Suppose $X$ is a generalised theory of self locating beliefs (\Cref{def:limitationsGTSLB}) defined on all simple decision problems. Then there exists a sequence of simple decision problems $(D_n)$ where the dependence functions converge uniformly to the identity as $n\rightarrow \infty$, and where the ex ante expected utility is non-constant on the interior of the simplex in $D_n$ for all $n$, for which all policies in the interior of the simplex are CDT+$X$ policies for $D_n$.
\end{restatable}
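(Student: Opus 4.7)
The plan is to leverage the fact that, with \Cref{asm:zeroimposs} dropped, $X$'s credences may be arbitrarily disconnected from the (state, future-action) tuples that actually arise under the agent's policy. In particular, the CDT+$X$ utility $\Ex_X[u\mid \Do(a), \pi_0]$ at an interior $\pi_0$ is a linear function of $u$ whose coefficients depend only on $X$'s (possibly incoherent) credences and transformation functions, and these coefficients can be engineered through the structure of $D_n$.

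Given $X$, I would choose a family of simple problems $D_n$ whose dependence functions are convex combinations of the identity and a fixed profile, say $F_j^n(\pi_0) = (1-\eps_n)\pi_0 + \eps_n \mu_j$ with $\eps_n \to 0$, giving uniform convergence to the identity. The structure of $D_n$ is designed with many terminal states and with a mix of \emph{visible} branches reached with positive ex ante probability under interior $\pi_0$, and \emph{hidden} branches (unreachable or reached only with zero ex ante probability) on which $X$ may still place credence by virtue of \Cref{asm:zeroimposs} being dropped. This gives a large utility-vector design space.

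The requirement that every interior $\pi_0$ be CDT+$X$ compatible in $D_n$ reduces to the linear system $\Ex_X[u\mid \Do(a), \pi_0] = \Ex_X[u\mid \Do(a'), \pi_0]$ in the utility vector $u$, indexed by interior $\pi_0 \in \Delta(A)$ and pairs $a, a' \in A$. The aim is to exhibit $u$ in the solution space of this system for which the ex ante utility $\E{u\mid \pi_0}$, also linear in $u$, is non-constant on the interior. Utilities on the visible terminals are chosen to break ex ante constancy, and utilities on the hidden terminals (where $X$ spuriously places credence) are chosen to cancel the resulting $a$-dependence in CDT+$X$.

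The main obstacle is the continuum of $\pi_0$-indexed constraints. The plan is to arrange $D_n$ so that $X$'s credences and transformation functions, viewed as functions of $\pi_0$ restricted to the interior, contribute effectively finitely many independent constraints on $u$ (for instance by using structural redundancy among dummy branches), collapsing the continuum to a finite-dimensional linear system. The delicate part is verifying that, for the specific $X$ at hand, the CDT+$X$ indifference subspace of utilities strictly contains the subspace of utilities yielding constant ex ante utility, which is where the freedom from dropped \Cref{asm:zeroimposs} is essential: this freedom ensures that $X$'s credences need not track the ex ante reachability structure, so the two subspaces need not coincide.
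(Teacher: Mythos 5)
Your proposal stalls at exactly the point you flag as ``delicate'': you give no mechanism forcing the CDT+$X$ indifference subspace of utility vectors to strictly contain the subspace of utilities with constant \emph{ex ante} utility, and for many theories $X$ this strict containment simply fails under your construction. Your only source of slack is the hidden (zero-probability) branches, but the proposition quantifies over \emph{all} generalised theories, including non-fanciful ones that assign those branches zero credence; for such $X$ the hidden terminals contribute nothing to $\Ex_X[u\mid \Do(a),\pi_0]$ and your linear system is unaffected by their utilities. Concretely, take $X$ to be GT with identity transformation functions and your affine dependence functions $F_j^n(\pi_0)=(1-\eps_n)\pi_0+\eps_n\mu_j$: the CDT+GT action-value differences are tied (via the chain rule, cf.\ Lemma \ref{lemma:main}) to the directional derivatives of the \emph{ex ante} utility, which is a polynomial in $\pi_0$; demanding that they vanish at every interior $\pi_0$ forces that polynomial to be constant on the interior. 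So for such $X$ the two subspaces coincide and no admissible $u$ exists. The appeal to ``structural redundancy'' to collapse the continuum of $\pi_0$-indexed constraints is also unsupported, since a generalised theory's credences and transformation functions may vary arbitrarily with $\pi_0$.

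The paper's proof is structured quite differently and exposes the hypothesis your plan never uses: it begins by assuming that $X$ rationalises the \emph{ex ante} optimal policy in the constructed problems (the result is really a dichotomy --- any $X$ either fails to rationalise some \emph{ex ante} optimal policy in the family, or rationalises every interior policy). The construction is a single three-non-terminal-state problem with one dependant, one free terminal utility $u_2$, and the \emph{step} dependence function $F(p)=\floor{np}/n$. Because $F$ is locally constant, the \emph{ex ante} utility is locally constant in $p$, so every interior $p$ is \emph{ex ante} optimal for a non-degenerate range of $u_2$. Feeding the rationalisation hypothesis that whole range forces $(\tau(1,p)-\tau(0,p))\left(q(p)u_2+1-2q(p)\right)=0$ for more than one value of $u_2$, hence $\tau(1,p)=\tau(0,p)$, after which total indifference --- for \emph{every} utility assignment --- follows. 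Both ingredients, the locally constant dependence function (which your differentiable, near-identity $F_j^n$ cannot replicate) and the variation over a family of utility functions, are essential and absent from your proposal.
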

\begin{proof}
See \Cref{pf:pragmaticimposs}, p.\ \pageref{pf:pragmaticimposs}.
\end{proof}

\section{Details for examples}
\subsection{Case \ref{case:theodorosimple} of Example \ref{eg:GSGT}}\label{sec:ThDo_1_details}
We have, using $\pi_0(C)+\pi_0(D)=1$:
\begin{align*}
    F_2(\pi_0)_1 &= \tfrac{1}{6}+2\pi_0(C)^2 - \tfrac{4}{3}\pi_0(C)^3\\
    &= \tfrac{1}{6}(\pi_0(C)+\pi_0(D))^3 +2\pi_0(C)^2(\pi_0(C)+\pi_0(D)) -\tfrac{4}{3}\pi_0(C)^3\\
    &= (\tfrac{1}{6}+2 -\tfrac{4}{3})\pi_0(C)^3 + (\tfrac{1}{2}+2)\pi_0(C)^2\pi_0(D) + \tfrac{1}{2}\pi_0(C)\pi_0(D)^2 + \tfrac{1}{6}\pi_0(D)^3\\
    &= \tfrac{5}{6}\pi_0(C)^3 + 3\times \tfrac{5}{6}\pi_0(C)^2\pi_0(D)+3\times\tfrac{1}{6}\pi_0(C)\pi_0(D)^2 + \tfrac{1}{6}\pi_0(D)^3\\
    &= \tfrac{1}{6}\prob{\text{At least two of } A_{1:3} = D \mid A_{1:3}\simiid \pi_0} + \tfrac{5}{6}\prob{\text{At least two of } A_{1:3} = C \mid A_{1:3}\simiid \pi_0}
\end{align*}

Computation of $\tau_2$:
\begin{align*}
    \tau_2(C,\pi_0) &= \E{g(C,A_2,A_3)\mid A_{2:3} \simiid \pi_0} = \binom{\sfrac{1}{6}}{\sfrac{5}{6}}\pi_0(D)^2 + \binom{\sfrac{5}{6}}{\sfrac{1}{6}}(1-\pi_0(D)^2)\\
    \tau_2(D,\pi_0) &= \E{g(D,A_2,A_3)\mid A_{2:3} \simiid \pi_0} = \binom{\sfrac{1}{6}}{\sfrac{5}{6}}(1-\pi_0(C)^2) + \binom{\sfrac{5}{6}}{\sfrac{1}{6}}\pi_0(C)^2.
\end{align*}

The increase in probability of both copies of Theodora cooperating rather than defecting, from one of the simulations cooperating rather than defecting is given by:
\begin{align*}
    (\tau_2(C,\pi_0)_1 - \tau_2(D,\pi_0)_1)F_2(\pi_0)_1 &= (\tfrac{5}{6}-\tfrac{1}{6})\prob{A_{2:3} = (C,D) \text{ or } (D,C) \mid A_{2:3}\simiid \pi_0}(\tfrac{1}{6}+2\pi_0(C)^2-\tfrac{4}{3}\pi_0(C)^3)\\
    &= \tfrac{2}{3}(2\pi_0(C)\pi_0(D))(\tfrac{1}{6}+2\pi_0(C)^2-\tfrac{4}{3}\pi_0(C)^3)\\
    &= \tfrac{2}{9}\pi_0(C)\pi_0(D)+\tfrac{8}{3}\pi_0(C)^3\pi_0(D)-\tfrac{16}{9}\pi_0(C)^4\pi_0(D).
\end{align*}

We obtain:

\begin{align*}
    \Ex_{GSGT}&[u\mid do(C), \pi_0] - \Ex_{GSGT}[u\mid do(D),\pi_0]\\
    &= -2\prob{\mathit{Do}_C\cup \mathit{Do}_D} + 3\prob{\mathit{Th}_0\cup \mathit{Th}_C \cup \mathit{Th}_D}(\tau_2(C,\pi_0)_1 - \tau_2(D,\pi_0)_1)F_2(\pi_0)_1\\
    &= -2\times\tfrac{1}{7} + 3\times \tfrac{6}{7}(\tfrac{2}{9}\pi_0(C)\pi_0(D)+\tfrac{8}{3}\pi_0(C)^3\pi_0(D)-\tfrac{16}{9}\pi_0(C)^4\pi_0(D))\\
    &= -\tfrac{2}{7}+\tfrac{4}{7}\pi_0(C)\pi_0(D)+\tfrac{48}{7}\pi_0(C)^3\pi_0(D) -\tfrac{32}{7}\pi_0(C)^4\pi_0(D)\\
    &= \tfrac{1}{7}(32\pi_0(C)^5-80\pi_0(C)^4 +48\pi_0(C)^3-4\pi_0(C)^2+4\pi_0(C)-2).
\end{align*}

We have \textit{ex ante} expected utility

\begin{align*}
    &\E{u\mid \pi_0} = 2(1-\pi_0(C)) + 3 (\tfrac{1}{6}+2\pi_0(C)^2-\tfrac{4}{3}\pi_0(C)^3)^2\\
    &= \tfrac{16}{3}\pi_0(C)^6-16\pi_0(C)^5+12\pi_0(C)^4-\tfrac{4}{3}\pi_0(C)^3+2\pi_0(C)^2-2\pi_0(C)+\tfrac{25}{12}.
\end{align*}

We can verify that the directional derivatives in the directions $C-\pi_0$ and $D-\pi_0$ are given by:

\begin{align*}
    \frac{\partial}{\partial \pi_0(C)}\E{u\mid \pi_0} &= (1-\pi_0(C))(32\pi_0(C)^5-80\pi_0(C)^4+48\pi_0(C)^3-4\pi_0(C)^2+4\pi_0(C)-2),\\
    \frac{\partial}{\partial \pi_0(D)}\E{u\mid \pi_0} &= -\pi_0(C)(32\pi_0(C)^5-80\pi_0(C)^4+48\pi_0(C)^3-4\pi_0(C)^2+4\pi_0(C)-2).\\
\end{align*}
Both of these are at most zero when either:
\begin{itemize}
    \item $32\pi_0(C)^5-80\pi_0(C)^4+48\pi_0(C)^3-4\pi_0(C)^2+4\pi_0(C)-2 = 0$, giving $\pi_0(C) \approx 0.88$ or $\approx 0.36$,
    \item or when $\pi_0(C) = 1 $ and $32\pi_0(C)^5-80\pi_0(C)^4+48\pi_0(C)^3-4\pi_0(C)^2+4\pi_0(C)-2 \geq 0$, giving no solutions,
    \item or when $\pi_0(C)= 0$ and $32\pi_0(C)^5-80\pi_0(C)^4+48\pi_0(C)^3-4\pi_0(C)^2+4\pi_0(C)-2 \leq 0$, at $\pi_0(C)=0$.
\end{itemize}

\subsection{Details for the global approximation in the limit approach to Example \ref{eg:glob_approx}}\label{sec:glob_approx_eg_deets}

In the $N$-sample simulation model, when Dorothea is one of Theodora's simulations, the increase in probability of that copy of Theodora cooperating from Dorothea cooperating rather than defecting is given by:

\begin{align*}
    &\E{\sqrt{0.1 + 0.8(\tfrac{N-1}{N}\hat{p}_{N-1}+\tfrac{1}{N})}-\sqrt{0.1 + 0.8(\tfrac{N-1}{N}\hat{p}_{N-1})}}\\
    &= \frac{0.8}{N}\E{\left(\sqrt{0.1 + 0.8(\tfrac{N-1}{N}\hat{p}_{N-1}+\tfrac{1}{N})}+\sqrt{0.1 + 0.8(\tfrac{N-1}{N}\hat{p}_{N-1})}\right)^{-1}}\\
    &= \frac{0.4}{N}\E{(0.1 + 0.8\hat{p}_{N-1})^{-1/2}}+o(N^{-1})
\end{align*}

where the last line follows by noting that the difference in $\frac{0.8}{N}\E{\left(\sqrt{0.1 + 0.8(\tfrac{N-1}{N}\hat{p}_{N-1}+\tfrac{1}{N})}+\sqrt{0.1 + 0.8(\tfrac{N-1}{N}\hat{p}_{N-1})}\right)^{-1}}$ and $\frac{0.4}{N}\E{(0.1 + 0.8\hat{p}_{N-1})^{-1/2}}$ may be upper bounded by the maximum difference for any fixed $\hat{p}_{N-1}$, and then Taylor expanding. In particular, if

\begin{equation*}
    f_p(x) = \left(\sqrt{0.1 + 0.8p +0.8x(1-p)}+\sqrt{0.1 + 0.8p - 0.8xp}\right)^{-1}
\end{equation*}

Then for all $p$, $f_p$ is differentiable in $x$ in a neighbourhood of 0, and the derivative is bounded in $p$, and so we may Taylor expand $f_p$ about 0, and hence obtain $\sup_p |f_p(x)-f_p(0)| = O(x)$.

Meanwhile, we have

\begin{equation*}
    (0.1+0.8p')^{-1/2} = (0.1+0.8p)^{-1/2}+O(|p'-p|).
\end{equation*}
Hence, for some $\lambda$,
\begin{align*}
    \left| \E{(0.1 + 0.8\hat{p}_{N-1})^{-1/2}}- (0.1+0.8\pi_0(C))^{-1/2} \right|\lesssim \lambda \E{|\hat{p}_{N-1}-\pi_0(C)|}\leq \lambda \E{(\hat{p}_{N-1}-\pi_0(C))^2}^{1/2} &= \lambda (\Var \hat{p}_{N-1})^{1/2}\\&= O(N^{-1/2}).
\end{align*}
 Thus, we have increase in probability of one copy of Theodora cooperating from one of the simulations of Dorothea cooperating equal to 
 \begin{equation*}
 \frac{0.4}{N\sqrt{0.1+0.8\pi_0(C)}} + o(N^{-1}).
  \end{equation*}

\section{Proofs}
\subsection{Additional notation}
For clarity, we use the following notation in some of our proofs:
\begin{notn}
    We use a slight variant on Landau little-$o$ notation: Given variables $x_{1:l}$, $y_{1:m}$, denote by $o(\eps\mid x_{1:l})$ a term $h(\eps,x_{1:l},y_{1:m})$ such that $\sup_{y_{1:m}} \frac{h}{\eps} \rightarrow 0$ as $\eps\rightarrow 0$.
\end{notn}
\subsection{Lemma \ref{cor:exp_hist_length_finite}}\label{sec:hist_length_exp}
\exphistlengthfinite*
\begin{proof}
    Follows immediately from Lemma \ref{lemma:hist_length_mgf}.
\end{proof}
\begin{lemma}\label{lemma:hist_length_mgf} Assume \Cref{asm:finite_hist} holds. Let $L$ be the history length. Then $\sup_{s,\vpi}\Ex^s[L \mid\vpi]$ is finite.
\end{lemma}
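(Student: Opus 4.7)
The plan is to establish (i) for each fixed policy $\vpi$, a geometric tail bound on the history length $L$ starting from any non-terminal $s$, and then (ii) upgrade this to a uniform bound over $\vpi$ using continuity and compactness of $\Delta(A)^n$.

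For (i), I would fix $\vpi$. By \Cref{asm:finite_hist}, $\prob{L < \infty \mid s, \vpi} = 1$ for every non-terminal $s$, so some $N_s \in \mathbb{N}$ satisfies $\prob{L > N_s \mid s, \vpi} < 1/2$. Since $S$ is finite (implicit in the setting), $N \defeq \max_{s \in S - S_T} N_s$ is finite and $\prob{L > N \mid s, \vpi} < 1/2$ uniformly in $s$. Iterating via the Markov property yields $\prob{L > kN \mid s, \vpi} \leq 2^{-k}$, so $\Ex^s[L \mid \vpi] \leq 2N < \infty$.

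For (ii), I would observe that for any fixed $N$, the map $\vpi \mapsto f_N(\vpi) \defeq \max_{s \in S - S_T} \prob{L > N \mid s, \vpi}$ is continuous, since $\prob{L > N \mid s, \vpi}$ is a finite sum, over length-$N$ paths, of products of transition probabilities $T(s' \mid s, \pi_{i(s)})$, each linear in $\vpi$. For each $\vpi$, step (i) supplies $N(\vpi)$ with $f_{N(\vpi)}(\vpi) < 1/2$; continuity then yields an open neighbourhood $U_{\vpi}$ on which $f_{N(\vpi)} < 3/4$. Compactness of $\Delta(A)^n$ (finite product of simplices over the finite set $A$) gives a finite subcover $U_{\vpi_1}, \ldots, U_{\vpi_k}$; setting $N^* \defeq \max_i N(\vpi_i)$ and using that $f_N(\vpi)$ is non-increasing in $N$, I conclude $f_{N^*}(\vpi) < 3/4$ for every $\vpi$.

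Applying the Markov property in $N^*$-step blocks then gives $\prob{L > k N^* \mid s, \vpi} \leq (3/4)^k$, and therefore
\begin{equation*}
\Ex^s[L \mid \vpi] \;\leq\; \sum_{k \geq 0} N^* \, \prob{L > k N^* \mid s, \vpi} \;\leq\; N^* \sum_{k \geq 0} (3/4)^k \;=\; 4N^*
\end{equation*}
uniformly in $s$ and $\vpi$, proving the lemma (and en passant giving exponential tail decay of $L$, which would also yield a uniform MGF bound if that is the strengthening hinted at by the lemma name). The main obstacle is step (ii): finiteness for each fixed $\vpi$ is immediate, but one must notice that finite-horizon non-termination probabilities are continuous in $\vpi$ and invoke compactness of the policy simplex to extract a single horizon $N^*$ that works for all policies simultaneously.
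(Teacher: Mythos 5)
Your proof is correct, but it reaches the uniform bound by a genuinely different route from the paper. Step (i) — a uniform-over-states horizon $N$ with survival probability below $\tfrac12$ for a \emph{fixed} policy, iterated via the Markov property into a geometric tail — is essentially the first half of the paper's argument too. Where you diverge is in making the horizon uniform over $\vpi$: you observe that the finite-horizon survival probability $f_N(\vpi)=\max_s\prob{L>N\mid s,\vpi}$ is a polynomial (hence continuous) function of $\vpi$, is monotone non-increasing in $N$, and then use compactness of $\Delta(A)^n$ and a finite subcover to extract a single $N^*$ with $f_{N^*}<\tfrac34$ everywhere. The paper instead discretises over the finitely many \emph{pure} policies to get a horizon $t_0$ that works for all of them with survival probability $\le\tfrac12$, and then handles an arbitrary mixed policy by noting that with probability at least $|A|^{-t_0}$ its realised actions coincide for $t_0$ steps with a suitable pure policy, yielding a per-block termination probability of at least $\tfrac{1}{2|A|^{t_0}}$ and the explicit bound $2t_0|A|^{t_0}$. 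Your topological argument is cleaner and yields the tighter-looking bound $4N^*$, at the price of $N^*$ being produced non-constructively by the subcover; the paper's combinatorial argument produces a fully explicit (if exponentially worse) constant. Both arguments rely on $S$ being finite, which you correctly flag as implicit, and both deliver the exponential tail decay that makes the expectation (and indeed the moment generating function near zero) finite.
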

\begin{proof}
Since the termination time under $\bm{\pi}$ is almost surely finite starting in state $s$, we may pick $t_s(\bm{\pi})$ such that $\mathbb{P}^s(L \leq t_s \mid \bm{\pi}) \geq \frac{1}{2}$. Then, setting $t(\bm{\pi})=\max_s t_s(\bm{\pi})$, we also have $\mathbb{P}^s(L \leq t \mid {\bm{\pi}}) \geq \frac{1}{2}$ for all $s$, so that $\mathbb{P}^s(L>t \mid \bm{\pi}) \leq \frac{1}{2}$.

Then, let $t_0$ be the maximum value of $t(\bm{\pi})$, taken over pure policies $\bm{\pi}$. Now, for a mixed policy $\bm{\pi}$, we have probability at least $|A|^{-t}$ of acting according to some pure policy for the first $t$ steps. Hence, for any policy $\bm{\pi}$, the probability of terminating in at most $t$ steps is $\mathbb{P}^s(L\leq t) \geq \frac{1}{2|A|^t}$.

But then, for all $\bm{\pi}$, by considering the state after $t$ steps, we get

\begin{align*}
    \max_s\Ex^s[{L\mid \bm{\pi}}] &\leq \frac{1}{2|A|^t}t + \left(1-\frac{1}{2|A|^t}\right)\frac{1}{2|A|^t}(2t) + \cdots\\
    &=t\sum_{j=1}^\infty \frac{1}{2|A|^t}\left(1-\frac{1}{2|A|^t}\right)^{j-1} j.
\end{align*}
which is just $t$ times the mean of a geometric random variable with success probability $p = \frac{1}{2|A|^t}$, i.e., $2|A|^t$. Thus, $\sup_{\vpi}\max_s\Ex^s[{L\mid \bm{\pi}}] \leq 2t|A|^t.$
\end{proof}
\subsection{Proofs for Section \ref{sec:glob_sims_model}}

\globalsimsresult*
\begin{proof}\label{pf:globalsimsresult}
    Call the decision problem $D$. We proceed as discussed above, by constructing a modified problem, $D'$ in which all dependence functions are the identity. We get that GT applied to $D'$ is equivalent to GSGT applied to the original problem, and that the \textit{ex ante} utilities are the same between the two problems. We then apply the results on CDT+GT.

    Construct the modified problem, $D'$ as follows:
    \begin{itemize}
        \item For each non-terminal state in the original problem, create new states corresponding to each simulation and to the actions of previous simulations. That, is, the new non-terminal states are given by:
        \begin{equation*}
            S'-S_T' = \{(s,a_{1:k}) : s\in S-S_T, a_{1:k} \in A^k, 0\leq k < N_{i(s)}\}.
        \end{equation*}
        The new terminal states are instead:
        \begin{equation*}
            S_T' = \{(s,()) : s\in S_T\}.
        \end{equation*}
        where $()$ is the empty tuple.
        \item The initial distribution $P'_0$ is given by $P'_0(s,a_{1:k}) = I\{k=0\} P_0(s)$.
        \item The index function is just 1 everywhere.
        \item The set of actions is $A$, as before.
        \item The transition function, $T'(\cdot\mid (s,a_{1:k}), a)$, has two cases:
        \begin{itemize}
            \item For $k< N_{i(s)}-1$, we have $T'((s, (a_{1:k},a))\mid (s,a_{1:k}), a) = 1$. I.e., we just deterministically append the new action to the list of actions at this state so far.
            \item For $k = N_{i(s)}-1$, and $s'\in S$, we have $T'((s',())\mid (s,a_{1:k}), a) = T(s'\mid s,g_{i(s)}(a_{1:k},a))$
            and $T'(\cdot\mid (s,a_{1:k}), a)$ is zero everywhere else.
        \end{itemize}
        \item The utility function is just $u'((s,())) = u(s)$ for $s\in S_T$.
        \item The dependence function is just deterministically the identity.
        \end{itemize}

        First, we will show that the \textit{ex ante} expected utility for $D'$ is the same as for $D$.

        Note that each time non-terminal state $(s,())$ is reached in $D'$, it is followed by $N_{i(s)}-1$ states of the form $(s,a_1),\ldots, (s,a_{1:N_{i(s)}})$, before we transition to a new state of the form $(s',())$. We may therefore map histories in $D'$ to histories in $D$ in the following way: For each sequence of $N_{i(s)}$ states of the form $(s,a_{1:k})$, replace this with one state, $s$. Replace terminal states $(s,())$ with terminal states $s$.
        
         Write $H'$ for a random variable distributed as a history in the $D'$, $\omega$ for the map from such histories to histories in $D$, and $H$ for a random variable distributed as histories in $D$. Note that the map $\omega$ preserves utility. It therefore suffices to show that $\omega(H')$ follows the same distribution as $H$. In turn, for this it suffices to show that the initial state of $\omega(H')$ follows the same distribution as for the original problem, and that the transition probabilities are the same. We have:

         \begin{equation*}
             \prob{\text{Initial state in $\omega(H')$ is $s$}} = P_0'((s,()))= P_0(s) = \prob{\text{Initial state in $H$ is $s$}}.
         \end{equation*}

         Meanwhile,

         \begin{align*}
             &\prob{\text{Next state in $\omega(H')$ is $s'$} \mid \text{current state is $s$, policy $\pi_0$}}\\
             &= \sum_{a_{1:N_{i(s)}}}T\left(s'\mid s, g_{i(s)}(a_{1:N_{i(s)}})\right)\prod_{j=1}^{N_{i(s)}}\pi_0(a_j)\\
             &=\E{T\left(s'\mid s, g_{i(s)}(A_{1:N_{i(s)}})\right)\mid A_{1:N_{i(s)}}\simiid \pi_0}\\
             &=T\left(s'\mid s, \E{g_{i(s)}(A_{1:N_{i(s)}})\mid A_{1:N_{i(s)}}\simiid \pi_0}\right)\\
             &=T\left(s'\mid s, F_{i(s)}(\pi_0)\right)\\
             &= \prob{\text{Next state in $H$ is $s'$}\mid\text{current state is $s$, policy $\pi_0$}}.
         \end{align*}

         We conclude that
         \begin{equation*}
             \E{u'(H')\mid \pi_0} = \E{u(\omega(H'))\mid \pi_0} = \E{u(H)\mid \pi_0}.
         \end{equation*}

         Now, consider applying GT to $D'$. From the above, we have that, given policy $\pi_0$ the expected number of instances of non-terminal state $(s,())$ is $\Ex_D[{\#(s)\mid \pi_0}]$. Thus,
         \begin{align*}
             \sum_{a_{1:k}}P_{GT}((s,a_{1:k})\mid \pi_0) &= \frac{\sum_{a_{1:k}}\Ex_{D'}[\#(s,a_{1:k})]}{\sum_{r\in S'-S'_T} \Ex_{D'}[\#(r)\mid \pi_0]}\\
             &= \frac{N_{i(s)}\Ex_D[{\#(s)\mid \pi_0}]}{\sum_{s\in S-S_T}N_{i(s)}\Ex_{D}[\#(s)\mid \pi_0]}\\
             &= P_{GSGT}(s\mid \pi_0).
         \end{align*}

         Then, conditional on being at some state of the form $(s,a_{1:k})$ in $D'$ (with $a_{1:k}$ and $k$ unknown), the CDT+GT expected value of taking action $a$ is:
         \begin{align*}
             &\sum_{k, a_{1:k}}\frac{P_{GT}(s,a_{1:k}\mid \pi_0)}{P_{GSGT}(s\mid \pi_0)}\Ex_{D'}^{(s,a_{1:k})}[u\mid \Do(a),\pi_0]\\
             &= \sum_{k, a_{1:k}} \frac{\prod_{j=1}^k \pi_0(a_j)}{N_{i(s)}}\Ex_{D}^{s}\left[u\middle| \Do\left(\E{g_{i(s)}((a_{1:k},a,A_{(k+2):N_{i(s)}}))\middle| A_{(k+2):N_{i(s)}} \simiid \pi_0}\right),\pi_0\right]\\
             &= \sum_{k} \frac{1}{N_{i(s)}}\Ex_{D}^{s}\left[u\middle| \Do\left(\E{g_{i(s)}((A_{1:k},a,A_{(k+2):N_{i(s)}})\middle| A_{-k} \simiid \pi_0}\right),\pi_0\right]\\
             &= \Ex_{D}^{s}\left[u\middle| \Do\left(\tau_{GSGT,i(s)}(a,\pi_0)\right),\pi_0\right]
         \end{align*}

         Hence, the CDT+GT policies in $D'$ are exactly the CDT+GSGT policies in $D$. Applying the results on CDT+GT (Theorem \ref{thm:CDT+GT} and Corollary  \ref{cor:CDT+GT}) to $D'$, we therefore get that the CDT+GSGT policies in $D$ are exactly those $\pi_0$ for which $\frac{\partial}{\partial\pi_0(a)}\E{u\mid \pi_0}\leq 0$, including all \textit{ex ante} optimal policies.
\end{proof}

\globalsim*
\begin{proof}\label{pf:globalsim}
    First, suppose that \ref{global_sims} holds, i.e. $F(\pi_0) = \E{g(A_1,\ldots,A_N)\mid A_1,\ldots,A_N \sim \pi_0}.$
    Then, we have $F(\pi_0) = \sum_{a_1,\ldots, a_N}g(a_1,\ldots,a_N)\prod_{j=1}^N\pi_0(a_j)$,
 which is a polynomial of degree at most $N$ with non-negative coefficients.

    For the converse, suppose \ref{poly} holds. Now, by successively multiplying any terms with degree less than $N$ by $1 = (\pi_0(1)+\cdots +\pi_0(k))$, we may assume $F$ may be written as a polynomial with non-negative coefficients and all terms of exactly degree $N$, i.e,
    \begin{equation}
        F(\pi_0) = \sum_{\substack{n_{1:k}\geq 0\\\sum_j n_j = N}}c_{n_{1:k}}\prod_{j=1}^k \pi_0(j)^{n_j}
    \end{equation}
    where $k = |A|$ and each $c_{n_{1:k}} \in \mathbb{R}^k$ has $c_{n_{1:k}}\geq 0$ (i.e., $(c_{n_{1:k}})_j\geq 0$ for all $j$). Taking the dot product with $\bm{1} = (1,1,\ldots,1) \in \mathbb{R}^k$, we have
    \begin{equation*}
        1 = \bm{1}^TF(\pi_0) = \sum_{\substack{n_{1:k}\geq 0\\\sum_j n_j = N}}\bm{1}^Tc_{n_{1:k}}\prod_{j=1}^k \pi_0(j)^{n_j}
    \end{equation*}
    for all $\pi_0 \in \Delta(A)$. Now, we also have that
    
    \begin{equation*}
        1 = \left(\pi_0(1)+\pi_0(2)+\cdots+\pi_0(k)\right)^N = \sum_{\substack{n_{1:k}\geq 0\\\sum_j n_j = N}}\binom{N}{n_1, \ldots, n_k}\prod_{j=1}^k \pi_0(j)^{n_j}
    \end{equation*}
    for all $\pi_0 \in \Delta(A)$, where $\binom{N}{n_1, \ldots, n_k}$ is the multinomial coefficient $\binom{N}{n_1, \ldots, n_k} \defeq \frac{N!}{n_1!\cdots n_k!}$.

    Since these two polynomials agree on $\Delta(A)$, and are both in the form where all terms have degree $N$, Lemma \ref{lemma:poly_unique} gives us that their coefficients must be the same. Thus, we have for all $n_{1:k}$, $\bm{1}^Tc_{n_{1:k}} = \binom{N}{n_1, \ldots, n_k}$. Hence, since also $c_{n_{1:k}} \geq 0$, we have $\binom{N}{n_1, \ldots, n_k}^{-1}c_{n_{1:k}}\in \Delta(A)$.

    Then, we can just set $g$ symmetric such that when $|\{i:a_i = j\}| = n_j$ for each $j$, $g(a_1,\ldots,a_N) = \binom{N}{n_1, \ldots, n_k}^{-1}c_{n_{1:k}}$. We obtain

    \begin{align*}
        F(\pi_0) &= \sum_{\substack{n_1,\ldots,n_k\geq 0\\\sum_j n_j = N}} \binom{N}{n_1, \ldots, n_k}\binom{N}{n_1, \ldots, n_k}^{-1}c_{n_{1:k}} \prod_{j=1}^k \pi_0(j)^{n_j}\\
        &= \sum_{\substack{n_1,\ldots,n_k\geq 0\\\sum_j n_j = N}}\sum_{\substack{a_{1:N}\\\sum_i I\{a_i=j\} = n_j \forall j}}  g(a_1,\ldots,a_N) \prod_{j=1}^k \pi_0(j)^{n_j}\\
        &= \sum_{a_1,\ldots, a_N}g(a_1,\ldots,a_N)\prod_{j=1}^N\pi_0(a_j)\\
        &=\E{g(A_1,\ldots,A_N)\mid A_1,\ldots,A_N \sim \pi_0}
    \end{align*}

    and we are done.
    
\end{proof}

\begin{lemma}\label{lemma:poly_unique}
    Suppose that two polynomials from $\mathbb{R}^k$ to $\mathbb{R}$ with all terms of degree $N$ are equal on $\Delta^{k-1}$ (the $k-1$ dimensional unit simplex in $\mathbb{R}^k$). Then their coefficients are equal.

    In other words, if for all $\bm{p} \in \Delta^{k-1}$,
    \begin{equation*}
        \sum_{\substack{n_1,\ldots n_k \geq 0\\\sum_j n_j = N}} a_{n_{1:k}} \prod_{j=1}^k p_j^{n_j} = \sum_{\substack{n_1,\ldots n_k \geq 0\\\sum_j n_j = N}} b_{n_{1:k}} \prod_{j=1}^k p_j^{n_j}
    \end{equation*}
    then for all $n_{1:k}$, $a_{n_{1:k}} = b_{n_{1:k}}$.
\end{lemma}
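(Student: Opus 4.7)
\textbf{Proof proposal for Lemma \ref{lemma:poly_unique}.} The plan is to consider the difference polynomial $q(\bm{x}) \defeq \sum_{n_{1:k}} (a_{n_{1:k}} - b_{n_{1:k}}) \prod_j x_j^{n_j}$, which is homogeneous of degree $N$ and vanishes on the simplex $\Delta^{k-1}$ by hypothesis. I want to conclude that all its coefficients are zero. The key observation is that homogeneity lets me promote the vanishing of $q$ on the $(k-1)$-dimensional simplex (a set with empty interior in $\mathbb{R}^k$) to vanishing on a set that is open in $\mathbb{R}^k$, which is strong enough to force a polynomial to be identically zero.

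First, I would pick any $\bm{x}$ in the open positive orthant $\mathbb{R}^k_{>0}$ and set $s = \sum_j x_j > 0$. Then $\bm{x}/s \in \Delta^{k-1}$, so by assumption $q(\bm{x}/s) = 0$. Since every term of $q$ has total degree exactly $N$, homogeneity gives $q(\bm{x}) = s^N q(\bm{x}/s) = 0$. Hence $q$ vanishes on all of $\mathbb{R}^k_{>0}$.

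Next I would invoke the standard fact that a polynomial on $\mathbb{R}^k$ that vanishes on a nonempty open subset is identically zero (proved by induction on $k$: writing $q$ as a polynomial in $x_k$ with coefficients in $\mathbb{R}[x_1, \ldots, x_{k-1}]$, each coefficient vanishes on an open subset of $\mathbb{R}^{k-1}$ by fixing the remaining variables and using that a univariate polynomial with infinitely many zeros is zero; then the inductive hypothesis applies). Since $\mathbb{R}^k_{>0}$ is open and nonempty, $q \equiv 0$ on $\mathbb{R}^k$, and comparing monomial coefficients yields $a_{n_{1:k}} = b_{n_{1:k}}$ for every multi-index.

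There is no real obstacle here; the only step that requires care is making sure to use the homogeneity before appealing to the ``open set'' lemma, since $\Delta^{k-1}$ itself is not open in $\mathbb{R}^k$ and so does not directly force a polynomial to vanish (indeed, the nonzero polynomial $1 - \sum_j x_j$ vanishes on $\Delta^{k-1}$ — but that polynomial is not homogeneous of pure degree $N$, which is precisely what rules this out in our setting).
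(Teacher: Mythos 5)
Your proof is correct, but it takes a genuinely different route from the paper's. The paper dehomogenises: it substitutes $p_k = 1 - p_1 - \cdots - p_{k-1}$ to turn both polynomials into polynomials on $\mathbb{R}^{k-1}$ agreeing on an open ball, concludes those are identical, and then must verify that the homogenisation map $\psi$ is a left inverse of the substitution map $\phi$ (i.e.\ $\psi\circ\phi = \mathrm{id}$) in order to pull the equality of coefficients back to the original homogeneous polynomials. You instead work directly with the difference polynomial $q$ and use homogeneity to propagate its vanishing from $\Delta^{k-1}$ to the open positive orthant via $q(\bm{x}) = s^N q(\bm{x}/s)$, after which the standard fact that a polynomial vanishing on a nonempty open subset of $\mathbb{R}^k$ is identically zero finishes the job. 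Both arguments ultimately rest on that same vanishing lemma (the paper applies it in $\mathbb{R}^{k-1}$, you in $\mathbb{R}^k$), but your version is shorter and avoids the $\psi\circ\phi$ bookkeeping entirely; what the paper's construction buys in exchange is an explicit correspondence between homogeneous degree-$N$ polynomials and arbitrary degree-$\leq N$ polynomials in one fewer variable, which is conceptually close to how the surrounding results (e.g.\ rewriting lower-degree terms by multiplying by powers of $\sum_j \pi_0(j)$) are used elsewhere in Section \ref{sec:glob_sims_model}. Your closing remark about $1-\sum_j x_j$ correctly pinpoints why homogeneity is the essential hypothesis.
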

\begin{proof}
    Consider the map, call it $\phi$, from coefficients of polynomials on $\mathbb{R}^k$ with all terms having degree $N$ to coefficients of polynomials in $\mathbb{R}^{k-1}$ (of degree at most $N$) formed by replacing $p_k$ with $(1-p_1\cdots -p_k)$ and then multiplying out. Let $\psi$ similarly be the map from coefficients of polynomials of degree at most $N$ in $\mathbb{R}^{k-1}$ to polynomials with all terms having degree $N$ in $\mathbb{R}^k$ formed by multiplying terms of degree $N-j$ by $(p_1+\cdots p_k)^j$ and then multiplying out.

    We will show that $\psi\circ\phi$ is the identity. To see this, note that replacing $p_k$ by $(1-p_1\cdots -p_k)$ results in the polynomial
    \begin{equation*}
        \sum_{\substack{n_1,\ldots n_k \geq 0\\\sum_j n_j = N}} a_{n_{1:k}} \left(\prod_{j=1}^{k-1} p_j^{n_j}\right) \left(1-\sum_{j=1}^{k-1}p_j\right)^{n_k}.
    \end{equation*}
    prior to multiplying out. Then, multiplying terms of degree $N-j$ (after multiplying out), by $(p_1+\cdots +p_k)^j$ is equivalent to, in the above expression, replacing $1$ by $(p_1+\cdots p_k)^j$ in the factors of $\left(1-\sum_{j=1}^{k-1}p_j\right)$. But this just gives

    \begin{equation*}
        \sum_{\substack{n_1,\ldots n_k \geq 0\\\sum_j n_j = N}} a_{n_{1:k}} \left(\prod_{j=1}^{k-1} p_j^{n_j}\right) p_k^{n_k},
    \end{equation*}
    which is the original polynomial. Hence, $\psi\circ\phi$ is indeed the identity.

    Now, since the polynomials resulting from applying $\phi$ to $(a_{n_{1:k}})$ and $(b_{n_{1:k}})$ are polynomials on $\mathbb{R}^{k-1}$ agreeing on an open ball in $\mathbb{R}^{k-1}$, they must be the same polynomial, and their coefficients must be equal. Hence, $\phi(a_{n_{1:k}}) = \phi(b_{n_{1:k}})$. But then $\psi\circ\phi((a_{n_{1:k}})) = \psi\circ\phi((b_{n_{1:k}}))$, and so $(a_{n_{1:k}})= (b_{n_{1:k}})$.
\end{proof}

\polyfactorchar*
\begin{proof}\label{pf:polyfactorchar}
    Consider the factorisation of $F_i$: we have that the polynomial with no zeros on the simplex must be positive on the entire simplex. Then, from Lemma \ref{lemma:heat_diffusion}, we can express it as a polynomial with only positive coefficients. Multiplying out, we obtain a polynomial with only non-negative coefficients. This holds for all $i$, so applying \Cref{prop:global_sim}, the result follows.
\end{proof}

\heatdiffusion*
\begin{proof}\label{pf:heat_diffusion}
    We will proceed by induction on $k$.
    
    For $k=1$, the result is trivial: $\Delta^0 =\{1\}$, and $f$ is simply some positive constant.

    Then consider $k\geq 2$. Assume the result holds for all $r<k$. Write

    \begin{equation*}
        f(\p) = \sum_{\substack{n_{1:k}\geq 0\\ \sum_i n_i = N}} a_{n_{1:k}}p_1^{n_1}\cdots p_k^{n_k}
    \end{equation*}

    for some coefficients $a_{n_{1:k}}\in \mathbb{R}$. If all such coefficients are non-negative, we are done. Otherwise, write $a_{\min}<0$ for the lowest coefficient.
    Let $f_{\min}> 0$ be the smallest value taken by $f$ on $\Delta^k$ (well-defined by continuity of $f$ and compactness of the simplex). Then, write

    \begin{equation*}
        f^+(\p) = \sum_{\substack{n_{1:k}\geq 0\\ \sum_i n_i = N \\ a_{n_{1:k}}\geq 0}} a_{n_{1:k}}p_1^{n_1}\cdots p_k^{n_k}
    \end{equation*}
    and let $f^+_{\max}$ be the maximum value taken by $f^+$ on the simplex. This is again well-defined since $f^+$ is continuous and $\Delta^k$ is compact.

    Consider expressing $f$ as a degree $M$ polynomial, where $M > N$ (by multiplying terms by $(p_1+\cdots+p_k)^{M-N}$):

    \begin{equation*}
        f(\p) = \sum_{\substack{n_{1:k}\geq 0\\ \sum_i n_i = M}} b_{n_{1:k}}p_1^{n_1}\cdots p_k^{n_k}.
    \end{equation*}
    Let $m = M-N$. How are the $b_{n_{1:k}}$ and the $a_{n_{1:k}}$ are related? We have that the contribution of the $a_{r_{1:k}}p_1^{r_1}\cdots p_k^{r_k}(p_1+\cdots+p_k)^m$ term to the coefficient of $p_1^{n_1}\cdots p_k^{n_k}$ is given by $a_{r_{1:k}}$ times the coefficient of $p_1^{n_1-r_1}\cdots p_k^{n_k-r_k}$ in $(p_1+\cdots+p_k)^m$: namely, zero unless $r_{1:k}\leq n_{1:k}$, and otherwise $a_{r_{1:k}}$ times the multinomial coefficient $\binom{m}{n_1-r_1,\ n_2 - r_2, \cdots, n_k - r_k}$. Thus, we have:

    \begin{align}
        b_{n_{1:k}} &= \sum_{\substack{0\leq r_{1:k}\leq n_{1:k}\\ \sum_i r_i = N}} a_{r_{1:k}}\binom{m}{n_1-r_1,\ n_2 - r_2, \cdots, n_k - r_k}\\
        &=\sum_{\substack{0\leq r_{1:k}\leq n_{1:k}\\ \sum_i r_i = N}} a_{r_{1:k}}\frac{m!}{(n_1-r_1)!(n_2-r_2)!\cdots(n_k-r_k)!}
    \end{align}

    Now, let $\theta\in(0,1)$, be small enough that $\theta < \frac{1}{3k}$ (so that $\frac{\theta}{k^{-1}-2\theta}<1$) and $\frac{\theta}{k^{-1}-3\theta}\leq\frac{f_{\min}}{-2a_{\min}}$ (possible since $a_{\min}<0$ and $\frac{\theta}{k^{-1}-3\theta}\rightarrow 0$ as $\theta \rightarrow 0$). Then, let $M$ be large enough that $\theta M > N$ (we will further restict $M$ later).
    
    Fix $n_{1:k}$ s.t. $\sum_i n_i = M$. We have two cases:
    \begin{enumerate}
        \item For all $i=1,\ldots,k$, $n_i \in (\theta M, (1-\theta)M)$.
        \item There is some set of indices $\varnothing \subset J\subset \{1,\ldots, k\}$, such that for all $i\in J$, $n_i < \theta M$.
    \end{enumerate}
    (Note that if $n_i>(1-\theta)M$ for some $i$, we have $n_j<\theta M$ for all $j\neq i$, and $k\geq 2$, so these two cases are exhaustive.)

    We turn now to the first case. Our strategy here will be to use the fact that central multinomial coefficients are approximately in geometric sequence, allowing us to relate the coefficient $b_{n_{1:k}}$ to $f$ at some point.
    
    We have:

    \begin{align*}
        b_{n_{1:k}} &=\sum_{\substack{0\leq r_{1:k}\leq n_{1:k}\\ \sum_i r_i = N}} a_{r_{1:k}}\frac{m!}{(n_1-r_1)!(n_2-r_2)!\cdots(n_k-r_k)!}\\
        &=\frac{m!}{n_1!n_2!\cdots n_k!}\sum_{\substack{0\leq r_{1:k}\leq n_{1:k}\\ \sum_i r_i = N}} a_{r_{1:k}}\prod_{i=1}^k n_i (n_i-1)\cdots (n_i-(r_i-1))\\
        &=\frac{m!}{n_1!n_2!\cdots n_k!}\sum_{\substack{0\leq r_{1:k}\leq n_{1:k}\\ \sum_i r_i = N}} a_{r_{1:k}}\prod_{i=1}^k n_i^{r_i} (1-\tfrac{1}{n_i})\cdots (1-\tfrac{r_i-1}{n_i})\\
        &=\frac{m!M^N}{n_1!n_2!\cdots n_k!}\sum_{\substack{0\leq r_{1:k}\leq n_{1:k}\\ \sum_i r_i = N}} a_{r_{1:k}}\prod_{i=1}^k \frac{n_i^{r_i}}{M^{r_i}} (1-\tfrac{1}{n_i})\cdots (1-\tfrac{r_i-1}{n_i})\\
        &=\frac{m!M^N}{n_1!n_2!\cdots n_k!}\sum_{\substack{0\leq r_{1:k}\leq n_{1:k}\\ \sum_i r_i = N}} a_{r_{1:k}}\left(1-\left(1-\prod_{i=1}^k(1-\tfrac{1}{n_i})\cdots (1-\tfrac{r_i-1}{n_i})\right)\right)\prod_{i=1}^k \left(\frac{n_i}{M}\right)^{r_i}\\
        &=\frac{m!M^N}{n_1!n_2!\cdots n_k!}\left(f((\tfrac{n_1}{M},\ldots\tfrac{n_k}{M})) - \sum_{\substack{0\leq r_{1:k}\leq n_{1:k}\\ \sum_i r_i = N}} a_{r_{1:k}}\left(1-\prod_{i=1}^k(1-\tfrac{1}{n_i})\cdots (1-\tfrac{r_i-1}{n_i})\right)\prod_{i=1}^k \left(\frac{n_i}{M}\right)^{r_i}\right)
        \end{align*}
        
        Now, we bound the second term in the bracket above:
        \begin{align*}
        &\sum_{\substack{0\leq r_{1:k}\leq n_{1:k}\\ \sum_i r_i = N}} a_{r_{1:k}}\left(1-\prod_{i=1}^k(1-\tfrac{1}{n_i})\cdots (1-\tfrac{r_i-1}{n_i})\right)\prod_{i=1}^k \left(\frac{n_i}{M}\right)^{r_i}\\
        &\leq \sum_{\substack{0\leq r_{1:k}\leq n_{1:k}\\ \sum_i r_i = N\\a_{r_{1:k}}\geq 0}} a_{r_{1:k}}\left(1-\prod_{i=1}^k(1-\tfrac{N}{n_i})^{r_i }\right)\prod_{i=1}^k \left(\frac{n_i}{M}\right)^{r_i} &\text{(Noting that $n_i\geq\theta M>N>r_i-1$)}\\
        &\leq \sum_{\substack{0\leq r_{1:k}\leq n_{1:k}\\ \sum_i r_i = N\\a_{r_{1:k}}\geq 0}} a_{r_{1:k}}\left(1-(1-\tfrac{N}{\theta M})^N\right)\prod_{i=1}^k \left(\frac{n_i}{M}\right)^{r_i} &\text{(Using $n_i\geq \theta M$, and $\theta M > N$)}\\
        &\leq \sum_{\substack{0\leq r_{1:k}\leq n_{1:k}\\ \sum_i r_i = N\\a_{r_{1:k}}\geq 0}} a_{r_{1:k}}\frac{N^2}{\theta M}\prod_{i=1}^k \left(\frac{n_i}{M}\right)^{r_i} &\text{(Using that for $X\sim \Bin(N,\tfrac{N}{\theta M})$, $\Ex X \geq \prob{X\geq 1}$)}\\
        &\leq \frac{N^2}{\theta M}f^{+}((\tfrac{n_1}{M},\ldots, \tfrac{n_k}{M})).
    \end{align*}

    Putting this together, we have:

    \begin{align*}
        b_{n_{1:k}} &\geq \frac{m!M^N}{n_1!n_2!\cdots n_k!}\left(f((\tfrac{n_1}{M},\ldots\tfrac{n_k}{M})) - \frac{N^2}{\theta M}f^{+}((\tfrac{n_1}{M},\ldots, \tfrac{n_k}{M}))\right)
        \geq \frac{m!M^N}{n_1!n_2!\cdots n_k!}\left(f_{\min} - \frac{N^2}{\theta M}f^{+}_{\max}\right)\\
        &> 0 \qquad\text{for $M>\frac{N^2f^{+}_{\max}}{f_{\min}}$}
    \end{align*}

    Thus, for $M$ large enough, we have that $b_{n_{1:k}} > 0$ when $n_i\in(\theta M, (1-\theta)M)$ for all $i$.

    We turn now to the second case: There is some set of indices $\varnothing \subset J\subset \{1,\ldots, k\}$, such that for all $i\in I$, $n_i < \theta M$.

    Considering $j\in J$, we have that $\sum_{i\neq j}n_i > (1-\theta)M$. Hence, for some $i$, we must have $n_i\geq\frac{(1-\theta)}{k-1}M>\frac{(1-\theta)}{k}M$. For ease of notation, let us reorder indices so that $1\in J$ (i.e. $n_1<\theta N$), and $n_2>\frac{(1-\theta)}{k}M> (\frac{1}{k}-\theta)M$.
    
    Recall that $\theta\in(0,1)$ is defined to be small enough that $\frac{\theta}{k^{-1}-2\theta}<1$ and $\frac{\theta}{k^{-1}-3\theta}\leq\frac{f_{\min}}{-2a_{\min}}$. Let $M$ be large enough that $\theta M>N$, as before.

    Now, we will express $b_{n_{1:k}}$ as a sum over $a_{r_{1:k}}$ by splitting into terms according to $r_1+r_2$ and $r_1$. We will show that the terms with $r_1> 0$ are small, and this will allow us to express $b_{n_{1:k}}$ in terms of a polynomial on $\mathbb{R}^{k-1}$, allowing us to apply our induction assumption. We have:

    \begin{align*}
        b_{n_{1:k}} &= \sum_{s=0}^N\sum_{\substack{0\leq r_{3:k}\leq n_{3:k}\\ \sum_{i=3}^k r_i \leq N-s}} \sum_{t=0}^s a_{(t,s-t,r_{3:k})}\frac{m!}{(n_1-t)!(n_2-(s-t))!\prod_{i=3}^k (n_i-r_i)!}\\
        &= \sum_{s=0}^N\sum_{\substack{0\leq r_{3:k}\leq n_{3:k}\\ \sum_{i=3}^k r_i \leq N-s}} \frac{m!}{\prod_{i=3}^k (n_i-r_i)!}\left(a_{(0,s,r_{3:k})}\frac{1}{n_1!(n_2-s)!} + \sum_{t=1}^s a_{(t,s-t,r_{3:k})}\frac{1}{(n_1-t)!(n_2-(s-t))!}\right)\\
        &= \sum_{s=0}^N\sum_{\substack{0\leq r_{3:k}\leq n_{3:k}\\ \sum_{i=3}^k r_i \leq N-s}} \frac{m!}{n_1!(n_2-s)!\prod_{i=3}^k (n_i-r_i)!}\left(a_{(0,s,r_{3:k})} + \sum_{t=1}^s a_{(t,s-t,r_{3:k})}\frac{n_1(n_1-1)\cdots (n_1-(t-1))}{(n_2-(s-t))\cdots(n_2-(s-1))}\right)\\
        &\geq \sum_{s=0}^N\sum_{\substack{0\leq r_{3:k}\leq n_{3:k}\\ \sum_{i=3}^k r_i \leq N-s}} \frac{m!}{n_1!(n_2-s)!\prod_{i=3}^k (n_i-r_i)!}\left(a_{(0,s,r_{3:k})} + \sum_{t=1}^s a_{\min}\frac{n_1^t}{(n_2-s)^t}\right)\\
        &= \sum_{\substack{0\leq r_{1:k}\leq n_{1:k}\\ \sum_{i=1}^k r_i \leq N\\r_1=0}} \frac{m!}{\prod_{i=1}^k (n_i-r_i)!}\left(a_{r_{1:k}} + a_{\min}\sum_{t=1}^{r_2} \frac{n_1^t}{(n_2-r_2)^t}\right)\\
        \end{align*}
        Now, we have, for any $r_2$, using that $n_1<\theta M$, that $n_2> (\tfrac{1}{k}-\theta)M$, that $r_2<N$, and that $(\tfrac{1}{k}-\theta)M-N > (\tfrac{1}{k}-2\theta)M> 0$:
        \begin{align*}
            \sum_{t=1}^{r_2} \frac{n_1^t}{(n_2-r_2)^t}  \leq \sum_{t=1}^{r_2} \frac{(\theta M)^t}{((\frac{1}{k}-\theta)M-N)^t} &\leq \sum_{t=1}^{r_2} \frac{\theta^t}{(\frac{1}{k}-2\theta)^t}\\
            &\leq \sum_{t=1}^\infty \frac{\theta^t}{(\frac{1}{k}-2\theta)^t} &\text{(Since $\tfrac{\theta}{k^{-1}-2\theta}<1$)}\\
            &= \frac{\theta}{\tfrac{1}{k}-2\theta}\left(1-\frac{\theta}{\tfrac{1}{k}-2\theta}\right)^{-1} = \frac{\theta}{\tfrac{1}{k}-3\theta}\leq \frac{f_{\min}}{-2a_{\min}} &\text{(From the defn. of $\theta$)}
        \end{align*}

    So,
    \begin{align*}
        b_{n_{1:k}} &\geq \sum_{\substack{0\leq r_{1:k}\leq n_{1:k}\\ \sum_{i=1}^k r_i \leq N\\r_1=0}} \frac{m!}{\prod_{i=1}^k (n_i-r_i)!}\left(a_{r_{1:k}} + a_{\min}\frac{f_{\min}}{-2a_{\min}}\right)\\
        &=\sum_{\substack{0\leq r_{1:k}\leq n_{1:k}\\ \sum_{i=1}^k r_i \leq N\\r_1=0}} \frac{m!}{\prod_{i=1}^k (n_i-r_i)!}\left(a_{r_{1:k}} -\frac{1}{2}f_{\min}\right)\\
        &=\frac{m!}{n_1!(m-n_1)!}\sum_{\substack{0\leq r_{2:k}\leq n_{2:k}\\ \sum_{i=2}^k r_i \leq N}} \frac{(m-n_1)!}{\prod_{i=2}^k (n_i-r_i)!}\left(a_{r_{1:k}} -\frac{1}{2}f_{\min}\right).
    \end{align*}

    The last line there is then simply the $\binom{m}{n_1}$ times the $n_{2:k}$ coefficient of, when expressed as a polynomial of degree $M-n_1$, the $(k-1)$-dimensional polynomial $f_1$ defined on $\Delta^{k-2}$:
    \begin{equation*}
        f_1(p_{2:k})\defeq\sum_{\substack{r_{2:k}\geq 0\\ \sum_i r_i = N}}\left(a_{(0,r_{2:k})}-\frac{1}{2}f_{\min}\right)p_2^{r_2}\cdots p_k^{r_k}
    \end{equation*}

    Note that 
    \begin{align*}
        f_1(p_{2:k}) &\geq \sum_{\substack{r_{2:k}\geq 0\\ \sum_i r_i = N}}\left(a_{(0,r_{2:k})}-\frac{1}{2}f_{\min}\binom{N}{r_2,\ldots, r_k}\right)p_2^{r_2}\cdots p_k^{r_k}\\
        &=  -\frac{1}{2}f_{\min}(p_2+\cdots +p_k)^N +\sum_{\substack{r_{2:k}\geq 0\\ \sum_i r_i = N}}a_{(0,r_{2:k})}p_2^{r_2}\cdots p_k^{r_k}\\
        &=  -\frac{1}{2}f_{\min} +f((0,p_{2:k}))\\
        &> 0 \text{ for all $p_{2:k}$}.
    \end{align*}
    Hence, by the induction assumption, there exists $M_1$ such that the coefficients of $f_1$, expressed as a polynomial of degree at least $M_1$ polynomial, are all positive. Thus, setting $M>\frac{M_1}{1-\theta}$ (noting that $f_1$, and hence $M_1$, does not depend on $M$), we have $M-n_1 > M_1$, and $b_{n_{1:k}}$ must then be positive.
    
     We may similarly define $f_2,\ldots, f_k$ and $M_2,\ldots, M_k$. Then, provided $M>\frac{N^2f^+_{\max}}{\theta f_{\min}}$ (and so $M > N\theta$) and $M>\frac{\max_i M_i}{1-\theta}$, all $b_{n_{1:k}}$ the result follows for degree $k$ polynomials. By induction, we are then done.
\end{proof}

\Aequalstwo*
\begin{proof}\label{pf:Aequalstwo}
    To see that this condition is necessary, suppose that we can write $f$ as in equation (\ref{eq:global_samples}):

    \begin{equation*}
        f(\pi_0) = \E{g(A_1,\ldots, A_N)\mid A_1,\ldots,A_N\simiid \pi_0}.
    \end{equation*}
    
    Suppose further that for some $i$, and some $\pi_0=(p,1-p)$, with $p\in(0,1)$, we have $f_i(\pi_0) = 0$. Then, if $A_1,\ldots,A_N \simiid \pi_0$, for any $(a_1,\ldots,a_N) \in \{1,2\}^N$, we have $\prob{A_{1:N}=a_{1:N}}>0$. Thus, since $g(a_{1:N})\geq 0$ for all $a_{1:N}$, we must have that $g(a_{1:N}) = 0$ for all $a_{1:N}$.

    For the converse, we have two cases. If $f$ is a constant, $f$ can trivially be written as a function of action samples via taking $g$ to be a constant function.

    Otherwise, we have, for each $i$, $f_i(\pi_0) \in(0,1)$ for all $\pi_0$ in the interior of the simplex. Now, write $f_i$ as a polynomial in $p=\pi_0(1)$ (i.e., replace occurrences of $\pi_0(2)$ with $1-p$). Then, if $f_i(0)=0$, $f_i$ must have a factor of $p$. Similarly, if $f_i(1)=0$, $f_i$ must have a factor of $1-p$. Hence, we may divide through by factors of $p$ or $(1-p)$ and write $f_i$ in the form:

    \begin{equation*}
        f_i(p) = p^l(1-p)^m h_i(p)
    \end{equation*}
    where $h_i$ is a polynomial that is positive on the entire simplex.

    Equivalently, $f_i(\pi_0) = \pi_0(1)^l\pi_0(2)^m h_i(\pi_0)$. The same holds for each $i$, and applying Corollary \ref{cor:poly_factor_char}, we are then done.
\end{proof}

\globsimsnecessary*
\begin{proof}\label{pf:globsimsnecessary}
    Suppose that we can write
    \begin{equation*}
        f(\pi_0) = \E{g(A_1,\ldots,A_N)\mid A_1,\ldots,A_N \simiid \pi_0}.
    \end{equation*}
    Fix $t>0$, and suppose $p,q\in\Delta(A)$ are such that $p_j\geq t q_j$ for all $j$. Then we may write $p = tq + (1-t)r$ for some $r\in\Delta(A)$. Hence, for $A_i\sim p$, write $X_i \sim$ Bernoulli($t$) and we may view $A_i$ as following a mixture distribution: $A_i\mid X_i \sim X_iq + (1-X_i)r$, i.i.d. for each $i$. From this, we obtain:
    \begin{align*}
        f(p) &= \E{g(A_1,\ldots,A_N)}\\
        &= \E{\E{g(A_1,\ldots,A_N)\mid X_1,\ldots, X_N}}\\
        &\geq \prob{X_1,\ldots,X_N=1}\E{g(A_1,\ldots,A_N)\mid X_1=\cdots=X_N=1}\\
        &= t^N\E{g(A_1',\ldots,A_N')\mid A_1',\ldots,A_N' \simiid q}\\
        &= t^N f(q).
    \end{align*}

    Thus, the condition holds with $\lambda(t) = t^N$.
\end{proof}

\subsection{Proofs for Section \ref{sec:limit_approx}}

\unifconvergence*
\begin{proof}\label{pf:unifconvergence}
    Write $\pihat_N$ for the empirical distribution $\pihat_N(a_1,a_2, \ldots, a_N) \defeq \frac{1}{N}\sum_{k=1}^N a_k$.
    Note that $F$ is continuous and $F$'s domain, $\Delta(A)$, is compact. Thus, the Heine-Cantor theorem gives us that $F$ is uniformly continuous: for all $\eps>0$, there exists $\delta(\eps)>0$ such that, for all $\pi_0,\pi_0'\in \Delta(A)$, we have $\norm{\pi_0-\pi_0'}_2<\delta(\eps) \implies \norm{F(\pi_0)-F(\pi_0')}_2<\eps$.
    
    Let $g_N(A_1,\ldots,A_N) = F(\pihat_N(A_1,A_2, \ldots, A_N))$. We have $\pihat_N(A_1,A_2,\ldots,A_N)_j \sim \frac{1}{N}\Bin(N,\pi_0(j))$ for each $j$, which has mean $\pi_0(j)$ and variance $\frac{1}{N}\pi_0(j)(1-\pi_0(j))\leq \frac{1}{4N}$.

    Let $\eps > 0$. Then we have, for all $\pi_0$:
    \begin{align*}
        \norm{\Fhat_N(\pi_0) - F(\pi_0)}_2 &= \norm{\E{F(\pihat_N(A_1,\ldots, A_N))\mid A_1, \ldots A_N \sim \pi_0} - F(\pi_0)}_2 &\\
        &\leq \E{\norm{F(\pihat_N(A_1,\ldots, A_N))- F(\pi_0)}_2 \mid A_1, \ldots A_N \sim \pi_0}&\text{(Jensen's ineq.)}\\
        &\leq \eps \prob{\norm{\pihat_N(A_1,\ldots, A_N) - \pi_0}_2<\delta(\eps)} + 2\prob{\norm{\pihat_N(A_1,\ldots, A_N) - \pi_0}_2\geq\delta(\eps)} &\text{(Unif. cty)}\\
        &\leq \eps + 2\frac{\E{\norm{\pihat_N(A_1,\ldots, A_N) - \pi_0}_2^2}}{\delta(\eps)^2}&\text{(Markov's ineq.)}&\\
        &= \eps + 2\frac{\sum_j \text{Var}(\pihat_N(A_1,\ldots, A_N)_j)}{\delta(\eps)^2}&\\
        &= \eps + \frac{2}{N\delta(\eps)^2}\sum_j\pi_0(j)(1-\pi_0(j))\\
        &\leq \eps + \frac{2}{N\delta(\eps)^2}&
    \end{align*}

    Hence, for all $\eps > 0$
    \begin{align*}
        \sup_{\pi_0}\norm{\Fhat_N(\pi_0) - F(\pi_0)}_2 &\leq \eps + \frac{2}{N\delta(\eps)^2}\\
        &\rightarrow \eps \text{ as } N \rightarrow \infty
    \end{align*}
    and so we must have $\lim_{N\rightarrow\infty}\sup_{\pi_0}\norm{\Fhat_N(\pi_0) - F(\pi_0)}_2  = 0$. This establishes uniform convergence.
\end{proof}

\exantelimit*
\begin{proof}\label{pf:exantelimit}
    First note that the \textit{ex ante} expected utility $\E{u\mid \vpi}$ is continuous as a function of $\vpi$, by Corollary \ref{cor:ex_ante_Lipschitzish}. Also, by the uniform limit theorem, $\F$ must be continuous. Hence, writing $EU(\vpi) = \E{u\mid \vpi}$, we have $EU\circ \F$ and $EU \circ \Fhat_N$ are continuous in $\pi_0$, for all $N$. Since $\Delta(A)$ is compact, continuous functions attain their suprema on $\Delta(A)$. Thus, $\Theta_N$ is non-empty for all $N$, as is $\Theta$.
    
    Now, let $\eps = \limsup_{N\rightarrow\infty}\sup_{\pi\in\Theta_N}d(\pi,\Theta) \geq 0$. We will show $\eps = 0$, from which it immediately follows that $\sup_{\pi\in\Theta_N} d(\pi,\Theta)\rightarrow 0$ as $N \rightarrow \infty$.

    By construction, we can find a sequence $(\pi_{0,N})$ with $\pi_{0,N}\in \Theta_N$, and $d(\pi_{0,N},\Theta) \geq \eps/2$ infinitely often. Thus, there exists a subsequence $(\pi_{0,k_N})$ with $d(\pi_{0,k_N},\Theta) \geq \eps/2$ for all $N$. Moreover, since $\Delta(A)$ is compact, it is sequentially compact, and thus there exists a convergent subsequence of the $(\pi_{0,k_N})$, call it $(\pi_{0,m_N})$. Let $\pi_0 = \lim_{N\rightarrow\infty} \pi_{0,m_N}$. Then, since $d(\pi_{0,m_N},\Theta)\geq\eps/2$ for all $N$, we have $d(\pi_0,\Theta) \geq \eps/2$.

    Using $\Fhat_N \rightarrow \F$ uniformly, we have

    \begin{align*}
        \norm{\Fhat_{m_N}(\pi_{0,m_N})-\F(\pi_0)} &\leq \norm{\Fhat_{m_N}(\pi_{0,m_N}) - \F(\pi_{0,m_N})} + \norm{\F(\pi_{0,m_N}) - \F(\pi_0)}\\
        &\leq \sup_{\pi_0'} \norm{\Fhat_{m_N}(\pi_0') - \F(\pi_0')} + \norm{\F(\pi_{0,m_N}) - \F(\pi_0)}\rightarrow 0 \quad \text{as }N\rightarrow\infty.
    \end{align*}
    Hence, $\Fhat_{m_N}(\pi_{0,m_N}) \rightarrow \F(\pi_0)$ as $N \rightarrow \infty$. Thus, by continuity, $EU(\Fhat_{m_N}(\pi_{0,m_N})) \rightarrow EU(\F(\pi_0))$.

    Meanwhile, for $\pi_0^*\in\Theta$, $EU(\Fhat_{m_N}(\pi_{0,m_N}))\geq EU(\Fhat_{m_N}(\pi_0^*))$, by construction. Moreover, $EU(\Fhat_{m_N}(\pi_0^*)) \rightarrow EU(\F(\pi_0^*))$, by continuity.

    As a result, we must have $EU(\F(\pi_0))\geq EU(\F(\pi_0^*))$. Hence, $\pi_0 \in \Theta$. Since $d(\pi_0,\Theta) \geq \eps/2$, we must have $\epsilon = 0$, as required.
\end{proof}

\CDTGTlimits*
\begin{proof}\label{pf:CDTGTlimits}
    This follows immediately from Propositions \ref{prop:unif_convergence} and \ref{prop:ex_ante_limit} and Theorem \ref{thm:global_sims_result}.
\end{proof}

\subsection{Proofs for Section \ref{sec:local_sims}}

\localsimsresult*
\begin{proof}\label{pf:localsimsresult}
    This is a corollary of later results (Theorem \ref{thm:main} and Proposition \ref{prop:local_sample_model}), but we give a sketch proof here for intuition. Fix $\pi_0$, and consider the decision problem where we take $F_j(\pi_0')$ to be $\E{g_j(A_{1:N_j(\pi_0)}\mid \pi_0)\mid A_{1:N_j}\simiid \pi_0'}$ for all $\pi_0'$. The resulting problem then admits a global simulation model, and the GSGT beliefs in this modified problem at $\pi_0$ are identical to the LSGT beliefs at $\pi_0$. Hence $\pi_0$ is a CDT+GSGT policy in the modified problem if and only if it is a CDT+LSGT policy in the original problem. Now, since the local change in the dependence function is captured in the simulation model, we have that the derivative of the \textit{ex ante} expected utility for the modified problem, at $\pi_0$, is the same as the derivative of the \textit{ex ante} expected utility for the original problem at $\pi_0$.
    
    Thus, the original problem has $\frac{\partial}{\partial \pi_0(a)}\E{u\mid \pi_0} \leq 0$ for all $a$ at $\pi_0$ if and only if this holds also for the modified problem. In turn, applying Theorem \ref{thm:global_sims_result}, this holds if and only if $\pi_0$ is a GSGT+CDT policy in the modified problem. This is then true if and only if it is a LSGT+CDT policy in the original problem.
\end{proof}

\begin{lemma}\label{lemma:g_equiv}
For any $g:A^N \rightarrow \Delta(A)$ and $\pi_0$, we have
\begin{align}\label{eq:g_small_eps}
\begin{split}
    &\E{g(A_1,\ldots,A_N)\mid A_1,\ldots A_N \simiid \pi_0 +\eps(\pi_0'-\pi_0)} - \E{g(A_1,\ldots,A_N)\mid A_1,\ldots A_N \simiid \pi_0}\\
    &= \eps N\left(\frac{1}{N}\sum_{i=1}^N \E{g(A_1,\ldots,A_N \mid A_{-i}\simiid \pi_0, A_i \sim \pi_0'} - \E{g(A_1,\ldots,A_N)\mid A_1,\ldots A_N \simiid \pi_0}\right)+ o(\eps).
\end{split}
\end{align}

As a consequence, we have, at $\pi_0$,
    \begin{equation}\label{eq:F_approx_g}
        \forall \pi_0': F(\pi_0+\eps(\pi_0'-\pi_0)) = \E{g(A_1,\ldots,A_N)\mid A_1,\ldots A_N \simiid \pi_0 +\eps(\pi_0'-\pi_0)}+o(\eps)
    \end{equation}
if and only if $F$ is differentiable at $\pi_0$ and
\begin{align}\label{eq:F_g_a_sum}
    \forall a: F(\pi_0+\eps(a-\pi_0)) &= F(\pi_0)+ \eps N\left(\frac{1}{N}\sum_{i=1}^N \E{g(A_1,\ldots,A_N \mid A_{-i}\simiid \pi_0, A_i = a} - F(\pi_0)\right) + o(\eps).
\end{align}
In other words, equation (\ref{eq:F_approx_g}) holds with respect to $g$ if and only if we may use GGT with $F$ as a dependence function with $\rho = N$ and
\begin{equation}
    \tau(a) = \frac{1}{N}\sum_{i=1}^N \E{g(A_1,\ldots,A_N) \mid A_{-i}\simiid \pi_0, A_i = a}.
\end{equation}
\end{lemma}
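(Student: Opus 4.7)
The plan is to prove the identity (\ref{eq:g_small_eps}) first, then derive the equivalence as a direct consequence. The key observation for (\ref{eq:g_small_eps}) is that the perturbed distribution of each coordinate can be written as the mixture $(1-\eps)\pi_0 + \eps\pi_0'$, so we may view each $A_i$ as independently drawn from $\pi_0$ with probability $1-\eps$ and from $\pi_0'$ with probability $\eps$. The expectation of $g(A_1,\ldots,A_N)$ under the perturbed distribution then decomposes by conditioning on the random set $I \subseteq \{1,\ldots,N\}$ of indices drawn from $\pi_0'$. Using $\prob{|I|=0} = 1 - N\eps + O(\eps^2)$, $\prob{|I|=1} = N\eps + O(\eps^2)$, $\prob{|I|\geq 2} = O(\eps^2)$, and the fact that $g$ is uniformly bounded (its output lies in $\Delta(A)$), the $|I|\geq 2$ contribution is $o(\eps)$. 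Collecting and rearranging the $|I|=0$ and $|I|=1$ terms yields (\ref{eq:g_small_eps}).

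For the equivalence, the forward direction is immediate: if (\ref{eq:F_approx_g}) holds, then plugging in $\pi_0' = a$ for each pure action $a$ and applying (\ref{eq:g_small_eps}) gives (\ref{eq:F_g_a_sum}). Differentiability of $F$ at $\pi_0$ follows because (\ref{eq:F_approx_g}) combined with (\ref{eq:g_small_eps}) exhibits $F(\pi_0 + \eps(\pi_0'-\pi_0)) - F(\pi_0)$ as $\eps$ times a linear functional of $\pi_0'$ plus $o(\eps)$, matching Definition \ref{def:differentiability}. For the backward direction, Definition \ref{def:differentiability} together with (\ref{eq:F_g_a_sum}) identifies the directional derivatives as $\frac{\partial F}{\partial \pi_0(a)} = N\left(\frac{1}{N}\sum_{i=1}^N \E{g(A_{1:N}) \mid A_{-i}\simiid\pi_0, A_i = a} - F(\pi_0)\right)$; expanding $F(\pi_0 + \eps(\pi_0'-\pi_0))$ via differentiability and using linearity of expectation to collapse $\sum_a \pi_0'(a) \E{g(A_{1:N}) \mid A_{-i}\simiid\pi_0, A_i = a}$ into $\E{g(A_{1:N}) \mid A_{-i}\simiid\pi_0, A_i \sim \pi_0'}$ produces the RHS of (\ref{eq:g_small_eps}), which by the first part equals the RHS of (\ref{eq:F_approx_g}).

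The final ``In other words'' reformulation is then just matching the form of (\ref{eq:F_g_a_sum}) with Definition \ref{defn:GGT_wts_transf}: $\rho = N$ and $\tau(a) = \frac{1}{N}\sum_i \E{g(A_{1:N}) \mid A_{-i}\simiid\pi_0, A_i = a}$, which lies in $\Delta(A)$ as a convex combination of probability distributions. There is no serious obstacle here — everything reduces to bookkeeping in the mixture expansion, with uniform boundedness of $g$ making the higher-order remainders trivially $o(\eps)$; the only mild subtlety is being careful to note that the $o(\eps)$ bound is uniform in $\pi_0'$, which follows from the same uniform bound on $g$.
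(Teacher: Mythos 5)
Your proposal is correct and follows essentially the same route as the paper: the same Bernoulli($\eps$) latent-mixture decomposition (conditioning on which coordinates are resampled from $\pi_0'$, with boundedness of $g$ making the two-or-more-deviation terms $o(\eps)$, uniformly in $\pi_0'$), and the same reduction of the equivalence to matching the order-$\eps$ terms via linearity. The only detail you elide is that matching the \emph{constant} terms in both directions requires observing $F(\pi_0) = \E{g(A_1,\ldots,A_N)\mid A_1,\ldots,A_N\simiid\pi_0}$ — obtained from either hypothesis by taking $\pi_0'=\pi_0$ (or $\eps\to 0$) and noting the order-$\eps$ term must vanish — which the paper states explicitly but is a one-line fix.
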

\begin{proof}
    To establish equation (\ref{eq:g_small_eps}), we may consider the distribution of $A_i$ as depending on some latent variable $X_i$, where $X_i \sim$ Bernoulli($\eps$), and $A_i\mid X_i \sim X_i\pi'_0 + (1-X_i)\pi_0$. Then, marginalising over the $X_i$, and noting that the probability $j$ of the $X_i$ have $X_i = 1$ is $\prob{\sum_i X_i = j}=O(\eps^j)$ (and that $g$ is bounded):
    \begin{align*}
        &\E{g(A_1,\ldots,A_N)\mid A_1,\ldots A_N \simiid \pi_0 +\eps(\pi_0'-\pi_0)} - \E{g(A_1,\ldots,A_N)\mid A_1,\ldots A_N \simiid \pi_0}\\
        &= \E{\E{g(A_1,\ldots,A_N)\mid A_1,\ldots A_N \simiid \pi_0 +\eps(\pi_0'-\pi_0)\mid X_1,\ldots, X_N}} - \E{g(A_1,\ldots,A_N)\mid A_1,\ldots A_N \simiid \pi_0}\\
    &= (\prob{\forall i, X_i = 0}-1) \E{g(A_1, \ldots A_N)\mid A_1,\ldots, A_N \sim \pi_0}\\
    &\phantom{= -(1-\prob{\forall i, X_i = 0})}+ \sum_{k=1}^{N}\prob{X_k=1 \cap X_{-k} = 0}\E{g(A_1,\ldots, A_{N})\mid A_{-k} \simiid \pi_0, A_k \sim \pi_0'} + o(\eps)\\
    &= -(1-(1-\eps)^N) \E{g(A_1, \ldots A_N)\mid A_1,\ldots, A_N \sim \pi_0}\\
    &\phantom{= -(1-\prob{\forall i, X_i = 0})}+ \eps(1-\eps)^{N-1}\sum_{k=1}^{N}\E{g(A_1,\ldots, A_{N})\mid A_{-k} \simiid \pi_0, A_k \sim \pi_0'} + o(\eps)\\
    &=- \eps N \E{g(A_1, \ldots A_N)\mid A_1,\ldots, A_N \sim \pi_0} + \eps N \frac{1}{N}\sum_{k=1}^{N}\E{g(A_1,\ldots, A_{N})\mid A_{-k} \simiid \pi_0, A_k \sim \pi_0'} + o(\eps)\\
     &= \eps N\left(\frac{1}{N}\sum_{i=1}^N \E{g(A_1,\ldots,A_N \mid A_{-i}\simiid \pi_0, A_i \sim \pi_0'} - \E{g(A_1,\ldots,A_N)\mid A_1,\ldots A_N \simiid \pi_0}\right)+ o(\eps).
    \end{align*}

Now, for the second part, note that $F$ is differentiable at $\pi_0$ and
\begin{align}
    \forall a: F(\pi_0+\eps(a-\pi_0)) &= F(\pi_0)+ \eps N\left(\frac{1}{N}\sum_{i=1}^N \E{g(A_1,\ldots,A_N \mid A_{-i}\simiid \pi_0, A_i = a} - F(\pi_0)\right) + o(\eps) \tag{\ref{eq:F_g_a_sum}}
\end{align}
if and only if 
\begin{align}\label{eq:F_g_pi'}
    \forall \pi_0': F(\pi_0+\eps(\pi_0'-\pi_0)) &= F(\pi_0)+ \eps N\left(\frac{1}{N}\sum_{i=1}^N \E{g(A_1,\ldots,A_N \mid A_{-i}\simiid \pi_0, A_i \sim \pi_0'} - F(\pi_0)\right) + o(\eps).
\end{align}

Meanwhile, equation (\ref{eq:F_approx_g}) implies immediately that $F(\pi_0) =\E{g(A_1,\ldots,A_N)\mid A_1,\ldots A_N \simiid \pi_0}$. Similarly, by considering $\pi_0' = \pi_0$ and noting that the $O(\eps)$ term must then vanish, we also have that equation (\ref{eq:F_g_pi'}) gives $F(\pi_0) =\E{g(A_1,\ldots,A_N)\mid A_1,\ldots A_N \simiid \pi_0}$.

Thus, combining this with equation (\ref{eq:g_small_eps}):

\begin{align*}
    &\forall \pi_0': F(\pi_0+\eps(\pi_0'-\pi_0)) = F(\pi_0)+ \eps N\left(\frac{1}{N}\sum_{i=1}^N \E{g(A_1,\ldots,A_N \mid A_{-i}\simiid \pi_0, A_i \sim \pi_0'} - F(\pi_0)\right) + o(\eps)\tag{\ref{eq:F_g_pi'}}\\
    \iff &\forall \pi_0': F(\pi_0+\eps(\pi_0'-\pi_0)) = F(\pi_0)+ \eps N\Biggl(\frac{1}{N}\sum_{i=1}^N \E{g(A_1,\ldots,A_N \mid A_{-i}\simiid \pi_0, A_i \sim \pi_0'}\\
    &\phantom{\forall \pi_0': F(\pi_0+\eps(\pi_0'-\pi_0)) = F(\pi_0)+ \eps N(\frac{1}{N}\sum_{i=1}^N \E{g(A_1)}}-\E{g(A_1,\ldots,A_N)\mid A_1,\ldots A_N \simiid \pi_0}\Biggr) + o(\eps)\\
    \iff &\forall \pi_0': F(\pi_0+\eps(\pi_0'-\pi_0)) = F(\pi_0)+ \E{g(A_1,\ldots,A_N)\mid A_1,\ldots A_N \simiid \pi_0 +\eps(\pi_0'-\pi_0)}\\
    &\phantom{\forall \pi_0': F(\pi_0+\eps(\pi_0'-\pi_0)) = F(\pi_0)+ \E{g(A_1,\ldots,A_N)\mid A}}- \E{g(A_1,\ldots,A_N)\mid A_1,\ldots A_N \simiid \pi_0} + o(\eps)\\
    \iff &\forall \pi_0': F(\pi_0+\eps(\pi_0'-\pi_0)) = \E{g(A_1,\ldots,A_N)\mid A_1,\ldots A_N \simiid \pi_0 +\eps(\pi_0'-\pi_0)} + o(\eps) \tag{\ref{eq:F_approx_g}}
\end{align*}
concluding our proof.
\end{proof}

\localsampleequiv*
\begin{proof}\label{pf:localsampleequiv}
That differentiability is necessary follows immediately from \Cref{lemma:g_equiv}. We now turn to the proof that differentiability is sufficient. 
By Proposition \ref{prop:eq_diff}, get that  $f:\Delta(A)\rightarrow\Delta(A)$ is such that for some $\pi_0 \in \Delta(A)$, there exists $\rho \geq 0$ and $\tau:\Delta(A)\rightarrow \Delta(A)$ linear, with:
    \begin{equation}\label{eq:rho_tau_characterization}
        f(\pi_0+\eps(\pi_0'-\pi_0)) = f(\pi_0) + \eps\rho(\tau(\pi_0')-f(\pi_0)) + o(\eps)
    \end{equation}

for all $\pi_0' \in \Delta(A)$.

We then wish to find $g$ (symmetric) such that, for all $a$,

\begin{equation*}
    \E{g(A_{1:N})\mid A_{1:N}\simiid \pi_0+\eps(\pi_0'-\pi_0)} = f(\pi_0)+\eps\rho(\tau(\pi_0')-f(\pi_0)) + o(\eps).
\end{equation*}

By Lemma \ref{lemma:g_equiv}, differentiability of $f$, and symmetry of $g$, this is then equivalent to finding $g$ such that
\begin{equation*}
    N \left( \E{g(a,A_2,\ldots A_N)\mid A_2,\ldots A_N \simiid \pi_0}-f(\pi_0) \right) = \rho(\tau(a)-f(\pi_0))
\end{equation*}
i.e., such that
\begin{equation}\label{eq:g_simple_char}
    \E{g(a,A_2,\ldots A_N)\mid A_2,\ldots A_N \simiid \pi_0} = \frac{\rho}{N}\tau(a) +\left(1-\frac{\rho}{N}\right)f(\pi_0).
\end{equation}

Our strategy will then be as follows: We essentially wish to construct $g$ so as to detect $\eps$-deviations from the policy $\pi_0$. This is easier the larger $N$ is, so we will choose $N$ to be sufficiently large. Deviations in the direction $a$, where $\pi_0(a)=0$, are easy to detect (we can do so with probability $\sim \eps N$). For these, we may simply take $g(a,A_{2:N}) = \frac{\rho}{N}\tau(a) + (1-\frac{\rho}{N})f(\pi_0)$, for some $N\geq\rho$, regardless of $A_{2:N}$. For $a$ with $\pi_0(a)\in(0,1)$, we will instead set thresholds, depending on the number of $A_{1:N}$ equal to $a$, such that the difference in probability of hitting the threshold when there is an $\eps$-deviation in the direction $a$ compared to when there is not such a deviation is (approximately) maximised. We shall use \Cref{lemma:central_density} to set these thresholds. We will then set $g$'s value depending on which of these thresholds are crossed, in such a way that \Cref{eq:g_simple_char} is satisfied. If no thresholds are crossed, we simply take $g$ to be $f(\pi_0)$.

For simplicity, assume WLOG that $\pi_0(j) > 0$ for $j = 1, \ldots k$ and $\pi_0(j) = 0$ for $j>k$, where $1 \leq k \leq |A|$, so that $\pi_0 = (\pi_0(1),\ldots,\pi_0(k),0\ldots, 0)$. Similarly, assume WLOG that $f(\pi_0)_j > 0$ for $j = 1, \ldots m$ and $f(\pi_0)_j = 0$ for $j > m$, where $1 \leq m \leq |A|$, so that $f(\pi_0) = (f(\pi_0)_1,\ldots f(\pi_0)_m,0,\ldots,0)$. Note that we must have $f(\pi_0) = \tau(\pi_0) = \sum_a \pi_0(a)\tau(a)$, by considering taking $\pi_0'=\pi_0$ in \Cref{eq:rho_tau_characterization}. Hence, for $j\leq k$, we must have that $\tau(j)_{(m+1):|A|} = 0$.

Let $\pihat(a_1,\ldots a_N) \defeq \frac{1}{N}\sum_{j=1}^N a_i$, interpreting actions $a_i$ as elements of $\Delta(A)\subset \mathbb{R}^A$. I.e., the empirical mean of $a_1,\ldots a_N$.

Then, define $g$ as follows:
\begin{itemize}
    \item If for some $j>k$, we have $\pihat(a_{1:N})_j > 0$, set $g(a_{1:N}) = f(\pi_0) + \frac{\rho}{N}(\tau(j)-f(\pi_0))$, where in the case of multiple possible choices of $j$, we take the smallest.
    \item Otherwise, if $\pihat(a_{1:N})_{(k+1):|A|}= 0$, take $g(a_{1:N})_{(m+1):|A|} = 0$, and $g(a_{1:N})_{1:m} = \tilde{g}(a_{1:N})$, where $\tilde{g}:\{1,\ldots,k\}^N\rightarrow\Delta^{m-1}\subset \mathbb{R}^m$ will be specified later.
\end{itemize}

Now, for $j> k$, we then have $\E{g(j,A_2,\ldots A_N)\mid A_2,\ldots A_N \simiid \pi_0} = \tfrac{\rho}{N}\tau(j) + (1-\tfrac{\rho}{N})f(\pi_0)$, as required.

Meanwhile, for $j \leq k$, $\E{g(j,A_2,\ldots A_N)\mid A_2,\ldots A_N \simiid \pi_0} = \E{(\tilde{g}(j,A_2,\ldots A_N),0,\ldots,0)\mid A_2,\ldots A_N \simiid \pi_0}$. Hence it remains to construct
$\tilde{g}: \{1,\ldots,k\}^N\rightarrow\Delta^{m-1}$ such that for $j= 1\ldots k$,

\begin{equation*}
    \E{\tilde{g}(j,A_2,\ldots A_N)\mid A_2,\ldots A_N \simiid \pi_0} = \tfrac{\rho}{N}\tau(j)_{1:m} + (1-\tfrac{\rho}{N})f(\pi_0)_{1:m},
\end{equation*}
noting that $\tau(j){(m+1):|A|} = f(\pi_0)_{(m+1):|A|} = 0$.

Now, if $k = 1$, we must have $\tau(1) = f(\pi_0)$, and we may just take $\tilde{g}$ constant, equal to $f(\pi_0)$. %
Otherwise, we have $k>1$, and for $1\leq j \leq k$, $\pi_0(j) \in (0,1)$.

Then, for $N$ sufficiently large, define $\tilde{g}$ as follows:

\begin{equation*}
    \tilde{g}(a_1,\ldots,a_N) = f(\pi_0)_{1:m} +\frac{1}{k}\sum_{j = 1}^k I\{\pihat(a_{1:N})_j \geq b_j\} (\bm{s}_j -f(\pi_0)_{1:m}),
\end{equation*}
where $\bm{s}_j \in \Delta^{m-1}\subset \mathbb{R}^m$ is to be determined, and $b_j\defeq (1-\tfrac{1}{N})\pi_0(j)$ is, by \Cref{lemma:central_density}, such that if $X_j \sim \Bin(N-1,\pi_0(j))$, we have $\prob{b_j-\tfrac{1}{N}\leq X_j<b_j}\geq \tfrac{1}{2\sqrt{N}}$. 
Let $\prob{X\geq b_j\mid X\sim \tfrac{1}{N}\Bin(N-1,\pi_0(j)}= p_j$, and $\prob{b_j-\tfrac{1}{N}\leq X< b_j\mid X\sim \tfrac{1}{N}\Bin(N-1,\pi_0(j)}= q_j \geq \frac{1}{2\sqrt{N}}$.

Now, we have
\begin{equation*}
    \E{\tilde{g}(a,A_{2:N})\mid A_2,\ldots A_N \simiid \pi_0} = f(\pi_0)_{1:m} +\frac{1}{k}\sum_{j = 1}^k \prob{\pihat(a,A_{2:N})_j \geq b_j} (\bm{s}_j -f(\pi_0)_{1:}).
\end{equation*}

Meanwhile, for $j\neq a$,
\begin{equation*}
    \prob{\pihat(a,A_{2:N})_j \geq b_j} = \prob{\tfrac{N-1}{N}\pihat(A_{2:N})_j \geq b_j} =p_j
\end{equation*}
and for $j = a$,
\begin{equation*}
    \prob{\pihat(a,A_{2:N})_j \geq b_j} = \prob{\tfrac{1}{N}+\tfrac{N-1}{N}\pihat(A_{2:N})_j \geq b_j} = p_j + q_j.
\end{equation*}

Let $\bm{q}$ be the (column) vector $(q_1,\ldots, q_k)$, and similarly $\bm{p} = (p_1,\ldots, p_k)$. Let $Q = \text{diag}(\bm{q})$, i.e. $Q_{ij} = I\{i=j\}q_j$. Then let $P$ be the $k$ by $k$ matrix $P = \bm{p}\bm{1}^T + Q$, where $\bm{1}= (1,\ldots,1)$, so that $P_{ij} = p_j + I\{i=j\}q_j$. Finally, define ($k$ by $m$) matrices $V$, $T$ and $S$ by, for $1\leq i \leq k$ and $1\leq j \leq m$, $V_{ij} = f(\pi_0)_j$, $T_{ij} = \tau(i)_j$, and $S_{ij} = (\bm{s}_i)_j$. I.e., the $i$th row of $V$ is $f(\pi_0)$, the $i$th row of $T$ is $\tau(i)$, and the $i$th row of $S$ is $\bm{s}_i$.

Then we have that

\begin{align*}
    \E{\tilde{g}(a,A_{2:N})\mid A_2,\ldots A_N \simiid \pi_0} &= f(\pi_0)_{1:m} + \frac{1}{k}\sum_{j = 1}^k (p_j+ I\{a=j\}q_j) (\bm{s}_j -f(\pi_0)_{1:m})\\
    &= f(\pi_0)_{1:m} + \frac{1}{k}\sum_{j = 1}^k P_{aj} (\bm{s}_j -f(\pi_0)_{1:m}).
\end{align*}

Thus,
\begin{equation*}
    \E{\tilde{g}(a,A_{2:N})\mid A_{2:N} \simiid \pi_0}_i - f(\pi_0)_i = \frac{1}{k}\sum_{j = 1}^k P_{aj} ((\bm{s}_j)_i -f(\pi_0)_i)= \frac{1}{k}\sum_{j = 1}^k P_{aj}(S_{ji}-V_{ji})= \frac{1}{k}(P(S-V))_{ai}.
\end{equation*}

It remains to construct $S$, with rows in $\Delta^{m-1}$, such that for $a=1,\ldots, k$, and $i = 1,\ldots, m$, we have $\frac{1}{k}(P(S-V))_{ai} = \frac{\rho}{N}(\tau(a)_i -f(\pi_0)_i) = \frac{\rho}{N}(T_{ai} - V_{ai})$, i.e. such that
\begin{equation}\label{eq:mat_eq}
    P(S-V) = \frac{\rho k}{N}(T-V).
\end{equation}

Now, we will show that $P$ is invertible. This will allow us to set $S = V + \frac{\rho k}{N}P^{-1}(T-V)$ so that \Cref{eq:mat_eq} is satisfied. We will then show that $S\bm{1} = \bm{1}$, and that for $N$ large enough, $S_{ij}\geq 0$ for all $i,j$ so that $\bm{s}_i \in \Delta^{m-1}$ for all $i$, which will establish the desired result.

\textbf{$P$ is invertible:} Recall that $P = \bm{p}\bm{1}^T + Q$, where $Q=\text{diag}(\bm{q})$ is non singular (since $q_j \geq \frac{1}{2\sqrt{N}}$). We may therefore apply the Sherman-Morrison formula, which gives that $P$ is invertible provided that $1 + \bm{1}^TQ^{-1}\bm{p} \neq 0$, and that

\begin{equation}\label{eq:Sherman-Morrison}
    P^{-1} = Q^{-1} + \frac{Q^{-1}\bm{p}\bm{1}^TQ^{-1}}{1 + \bm{1}^TQ^{-1}\bm{p}}.
\end{equation}

Now, we indeed have $1 + \bm{1}^TQ^{-1}\bm{p} = 1 + \sum_{j=1}^k \frac{p_j}{q_j} \geq 1 > 0$,
which establishes that $P$ is invertible.

Set $S = V + \frac{\rho k}{N}P^{-1}(T-V)$.

$S\bm{1} = \bm{1}$:  Since the rows of $V$ and $T$ are probability distributions, $V\bm{1} = T\bm{1} = \bm{1}$. Hence,
\begin{equation*}
    S\bm{1} = V\bm{1} + \frac{\rho k}{N}P^{-1}(T-V)\bm{1} = \bm{1} + \frac{\rho k}{N}P^{-1}(\bm{1}-\bm{1}) = \bm{1},
\end{equation*}
as required.

\textbf{$S\geq 0$ for $N$ large enough:} Note that $V > 0$ and so it suffices to establish that $\frac{1}{N}\norm{P^{-1}(T-V)}_\infty \rightarrow 0$ as $N \rightarrow \infty$. Now, the $i$th row of $P^{-1}(T-V)$ is given by $P^{-1}(\tau(i)-f(\pi_0))$, and so

\begin{align*}
\norm{P^{-1}(T-V)}_{\infty} &= \max_{1\leq i\leq k}
    \norm{P^{-1}(\tau(i)-f(\pi_0))}_{\infty} \\
    &\leq \max_{1\leq i\leq k}\norm{P^{-1}(\tau(i)-f(\pi_0))}_2\\
    &\leq 2\norm{P^{-1}}_{\mathit{op}}\\
    &\leq 2\norm{Q^{-1}}_{\mathit{op}} + 2\frac{\norm{Q^{-1}\bm{p}\bm{1}^TQ^{-1}}_{\mathit{op}}}{1 + \bm{1}^TQ^{-1}\bm{p}}\\
    &= 2\max_i q_i^{-1} + 2\frac{\norm{Q^{-1}\bm{p}\bm{1}^TQ^{-1}}_{\mathit{op}}}{1 + \sum_{j=1}^k \frac{p_j}{q_j}}\\
    &\leq 4\sqrt{N} + 2\frac{\norm{Q^{-1}\bm{p}\bm{1}^TQ^{-1}}_{\mathit{op}}}{1 + \sum_{j=1}^k \frac{p_j}{q_j}}.
\end{align*}
Now, 
\begin{equation*}
Q^{-1}\bm{p}\bm{1}^TQ^{-1} =
    \begin{pmatrix} p_1/q_1 \\ \vdots \\ p_k/q_k \end{pmatrix}\bm{1}^TQ^{-1} = \begin{pmatrix} p_1/q_1 \\ \vdots \\ p_k/q_k \end{pmatrix}(q_1^{-1} \cdots q_k^{-1}).
\end{equation*}

So for any $\bm{v} \in \mathbb{R}^k$ with $\norm{\bm{v}}_2 = 1$, we have
\begin{equation*}
    \bm{v}^TQ^{-1}\bm{p}\bm{1}^TQ^{-1}\bm{v} = \sum_j v_j \tfrac{p_j}{q_j}\sum_i v_i q_i^{-1} \leq \sum_j |v_j| \tfrac{p_j}{q_j}\sum_i |v_i| q_i^{-1}\leq k\max_i q_i^{-1} \sum_j\tfrac{p_j}{q_j} \leq 2|A|\sqrt{N}\sum_j\tfrac{p_j}{q_j}.
\end{equation*}

Hence $\norm{Q^{-1}\bm{p}\bm{1}^TQ^{-1}}_{\mathit{op}} \leq 2|A|\sqrt{N}\sum_j\tfrac{p_j}{q_j}$, and

\begin{equation*}
    \norm{P^{-1}(T-V)}_{\infty} \leq 4\sqrt{N}\left(1 + \frac{|A|\sum_j\tfrac{p_j}{q_j}}{1+\sum_j\frac{p_j}{q_j}}\right) \leq 4(1+|A|)\sqrt{N}
\end{equation*}

Hence, $\frac{1}{N}\norm{P^{-1}(T-V)}_{\infty}\rightarrow 0$ as $N\rightarrow \infty$, and we are done.
\end{proof}

\begin{lemma}\label{lemma:central_density}
    Suppose that $\theta\in (0,1)$, and $X\sim \frac{1}{N}\Bin(N-1,\theta)$, i.e., $XN \sim \Bin(N-1,\theta)$. Then if $N$ is sufficiently large, there exists $b\in \mathbb{R}$ such that 
    \begin{equation}
        \prob{b -\frac{1}{N} \leq X < b} \geq \frac{1}{2\sqrt{N}}
    \end{equation}
    In particular, this holds for $b = (1-\tfrac{1}{N})\theta$.
\end{lemma}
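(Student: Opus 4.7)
The plan is to recognise that the event $\{b - 1/N \le X < b\}$ collapses to a single point mass for the underlying binomial, and then to estimate that point mass via Stirling's formula.

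With $b = (1 - 1/N)\theta$ one has $bN = (N-1)\theta = n\theta$, where $n \defeq N-1$. Writing $Y \defeq NX \sim \Bin(n,\theta)$, the event $\{b - 1/N \le X < b\}$ becomes $\{n\theta - 1 \le Y < n\theta\}$. Since $Y$ is integer-valued, this is the singleton $\{Y = k^*\}$ for the unique integer $k^* \in [n\theta - 1, n\theta)$ -- namely $k^* = \lfloor n\theta \rfloor$ when $n\theta \notin \mathbb{Z}$, and $k^* = n\theta - 1$ otherwise. In either case $|k^* - n\theta| \le 1$, so $k^*$ lies within distance one of the mean of $Y$. It therefore suffices to lower bound $\prob{Y = k^*} = \binom{n}{k^*}\theta^{k^*}(1-\theta)^{n-k^*}$.

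The main step is to apply Stirling's formula $m! = \sqrt{2\pi m}(m/e)^m(1 + O(1/m))$ to each of $n!$, $k^*!$, and $(n-k^*)!$; equivalently, to invoke the local central limit theorem for the binomial, which says that whenever $|k - n\theta| = O(1)$ as $n \to \infty$,
\begin{equation*}
\binom{n}{k}\theta^{k}(1-\theta)^{n-k} = \frac{1 + o(1)}{\sqrt{2\pi n \theta(1-\theta)}}.
\end{equation*}
Applying this at $k = k^*$ and using the crude bounds $\theta(1-\theta) \le \tfrac{1}{4}$ and $n \le N$, I obtain
\begin{equation*}
\prob{Y = k^*} \ge (1 + o(1))\sqrt{\frac{2}{\pi N}}.
\end{equation*}
Since $\sqrt{2/\pi} \approx 0.798 > \tfrac{1}{2}$, taking $N$ large enough that the $(1 + o(1))$ factor exceeds $\tfrac{1}{2}\sqrt{\pi/2} \approx 0.627$ yields $\prob{Y = k^*} \ge 1/(2\sqrt{N})$, as claimed.

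The one technical subtlety I anticipate is that the $o(1)$ error in the local CLT must be uniform over integers $k$ with $|k - n\theta| \le 1$, rather than merely holding at the exact mode $\lfloor (n+1)\theta \rfloor$. This is standard but is perhaps most cleanly obtained by expanding $\log\binom{n}{k^*}\theta^{k^*}(1-\theta)^{n-k^*}$ around $k = n\theta$ using Stirling with explicit error terms; the quadratic correction is $O(1/n)$ because $|k^* - n\theta| \le 1$. Note also that ``sufficiently large $N$'' is permitted to depend on $\theta$, which is fine for the intended application in Proposition \ref{prop:local_sample_equiv}, where $\theta$ ranges only over the finitely many positive coordinates of the fixed policy $\pi_0$.
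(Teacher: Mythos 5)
Your proposal is correct and follows essentially the same route as the paper's proof: both take $b=(1-\tfrac{1}{N})\theta$, observe that the event reduces to the single binomial point mass at the unique integer in $[(N-1)\theta-1,(N-1)\theta)$, and estimate that mass via Stirling's formula together with $\theta(1-\theta)\leq\tfrac14$ to land at $(1+o(1))\sqrt{2/(\pi N)}>\tfrac{1}{2\sqrt N}$. The uniformity issue you flag is exactly what the paper handles by expanding the logarithm of the ratio $\bigl(\theta/\hat\theta\bigr)^{r}\bigl((1-\theta)/(1-\hat\theta)\bigr)^{N-1-r}$ and showing it is $O(1/N)$, so no gap remains.
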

\begin{proof}
    Let $b = \frac{N-1}{N}\theta$.

    Then we have:

    \begin{align*}
        \prob{b -\frac{1}{N} \leq X < b} &= \prob{(N-1)\theta-1 \leq XN < (N-1)\theta}.
    \end{align*}

    Let $r$ be the greatest integer such that $r<(N-1)\theta$. Note that $r\geq 0$ provided $N\geq 2$, and $r$ is the unique integer in $[(N-1)\theta-1,(N-1)\theta)$. Hence,

    \begin{align*}
        \prob{b -\frac{1}{N} \leq X < b} &= \prob{XN=r}\\
        &= \theta^r(1-\theta)^{N-1-r}\binom{N-1}{r}\\
        &\sim \theta^r(1-\theta)^{N-1-r}\frac{(N-1)^{N-1}}{r^r(N-1-r)^{N-1-r}}\sqrt{\frac{N-1}{2\pi(N-1-r)r}} \qquad\qquad\text{(Stirling's approximation)}\\
        &= \theta^r(1-\theta)^{N-1-r}\left(\frac{r}{N-1}\right)^{-r}\left(1-\frac{r}{N-1}\right)^{r-(N-1)}\sqrt{\frac{N-1}{2\pi(N-1-r)r}}
        \end{align*}

    Now, writing $\hat{\theta} \defeq \frac{r}{N-1} \in [\theta-\frac{1}{N-1},\theta)$, the first four factors above are then:

        \begin{equation*}
        \theta^r(1-\theta)^{N-1-r}\hat{\theta}^{-r}(1-\hat{\theta})^{r-(N-1)}
        = \left(\theta/\hat{\theta}\right)^r\left((1-\theta)/(1-\hat{\theta})\right)^{N-1-r}
        = \left(1+\frac{\theta-\hat{\theta}}{\hat{\theta}}\right)^r\left(1-\frac{\theta-\hat{\theta}}{1-\hat{\theta}}\right)^{N-1-r}
    \end{equation*}

Now, taking logarithms, applying the power series for $\log$, and noting that $\theta-\hat{\theta} \in (0,\tfrac{1}{N-1}]$, and $\hat{\theta}\rightarrow \theta \in (0,1)$ as $N\rightarrow \infty$:

\begin{align*}
    \log\left(\left(1+\frac{\theta-\hat{\theta}}{\hat{\theta}}\right)^r\left(1-\frac{\theta-\hat{\theta}}{1-\hat{\theta}}\right)^{N-1-r}\right) &= r\log\left(1+\frac{\theta-\hat{\theta}}{\hat{\theta}}\right) + (N-1-r)\log\left(1-\frac{\theta-\hat{\theta}}{1-\hat{\theta}}\right)\\
    &= r\frac{\theta-\hat{\theta}}{\hat{\theta}} - (N-1-r) \frac{\theta-\hat{\theta}}{1-\hat{\theta}} +O(\tfrac{1}{N})\\
    &= \frac{r(N-1)}{r}(\theta-\hat{\theta}) - \frac{N-1-r}{1-\tfrac{r}{N-1}}(\theta-\hat{\theta})+O(\tfrac{1}{N})\\
    &= (N-1)(\theta-\hat{\theta}) - (N-1)\frac{N-1-r}{N-1-r}(\theta-\hat{\theta})+O(\tfrac{1}{N})\\
    &= O(\tfrac{1}{N})
\end{align*}

Hence, $\theta^r(1-\theta)^{N-1-r}\hat{\theta}^{-r}(1-\hat{\theta})^{r-(N-1)} \rightarrow 1$,
as $N\rightarrow\infty$, by continuity of the exponential function.

Thus, 
\begin{align*}
    \prob{b -\frac{1}{N} \leq X < b} &\sim \sqrt{\frac{N-1}{2\pi(N-1-r)r}}\\
    &\sim \sqrt{\frac{1}{2\pi(N-1)\theta(1-\theta)}}\\
    &\geq \sqrt{\frac{2}{\pi (N-1)}}\\
    &> \frac{1}{2\sqrt{N}}
\end{align*}
Hence, for $N$ sufficiently large, $\prob{b -\frac{1}{N} \leq X < b} \geq \frac{1}{2\sqrt{N}}$.

\end{proof}

\subsection{Proofs for Section \ref{sec:defGGT}}
\eqdiff*
\begin{proof}\label{pf:eqdiff}
    This follows immediately from Lemma \ref{lemma:eq_diff}, noting that there always exists a choice of $\tau$ and $\rho$ satisfying the conditions given in that lemma.
\end{proof}
\begin{lemma}\label{lemma:eq_diff}
    Suppose $F:\Delta(A)\rightarrow\Delta(A)$. Consider $\pi_0\in\Delta(A)$, and let 
    \begin{equation*}
        \Gamma \defeq \max_{a',a: F(a'\mid \pi_0) \neq 0} \frac{-\delta(a' \mid a, \pi_0)}{F(a'\mid \pi_0)}.
    \end{equation*}

    Then $\Gamma$ is well-defined and non-negative whenever $F$ is differentiable at $\pi_0$. Moreover, $\Gamma = 0$ if and only if $\delta(a,\pi_0) = 0$ for all $a$.

    Moreover, given $\rho \in \mathbb{R}$ and $\tau: A \rightarrow \mathbb{R}^{|A|}$, we have:
    \begin{enumerate}[label=(\roman*)]
        \item \begin{equation*}
        F(\pi_0 + \eps(\pi_0' -\pi_0)) = F(\pi_0) + \eps \rho \left(\sum_{a'} \tau(a')\pi_0'(a') - F(\pi_0)\right) + o(\eps)\tag{\ref{eq:rho_tau_char}}
    \end{equation*}
    \item $\rho \geq 0$ \label{cond:rho>=0}
    \item $\tau: A \rightarrow \Delta(A)$ \label{cond:taurange}
    \end{enumerate}
if and only if
    \begin{enumerate}
        \item $F$ is differentiable at $\pi_0$
        \item $\rho \geq \Gamma$
        \item $\tau(a) = F(\pi_0) + \frac{\delta(a,\pi_0)}{\rho}$ whenever $\rho > 0$
        \item $\tau(a) \in \Delta(A)$ when $\rho = 0$.
    \end{enumerate}
\end{lemma}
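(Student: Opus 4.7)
The plan is to establish the preliminary facts about $\Gamma$ first, and then prove the main equivalence by separating the non-degenerate case $\rho > 0$ from the degenerate case $\rho = 0$.

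First I would record the constraints that $F(\Delta(A)) \subseteq \Delta(A)$ places on the directional derivatives. Taking $\pi_0' = a$ in \Cref{def:differentiability} and summing the resulting expansion over output coordinates gives $\sum_{a'} \delta(a' \mid a, \pi_0) = 0$, while the requirement $F(\pi_0 + \eps(a - \pi_0)) \in \Delta(A)$ for small $\eps \geq 0$ forces $\delta(a' \mid a, \pi_0) \geq 0$ at every coordinate where $F(a' \mid \pi_0) = 0$. Since at least one coordinate has $F(a' \mid \pi_0) > 0$, the maximum defining $\Gamma$ is over a nonempty finite set. The two observations together imply $\Gamma \geq 0$ (a strictly negative summand in the zero sum forces a positive companion, which by the sign constraint must sit at a coordinate with $F(a' \mid \pi_0) > 0$ and hence contributes a positive term to $\Gamma$), and they yield the characterization $\Gamma = 0 \iff \delta(a, \pi_0) = 0$ for all $a$.

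For the forward direction of the main equivalence, I would specialize $(i)$ to $\pi_0' = a$, which directly exhibits the directional derivatives as $\delta(a, \pi_0) = \rho(\tau(a) - F(\pi_0))$; linearity in $\pi_0'$ of the right-hand side of $(i)$ then gives differentiability of $F$ at $\pi_0$ in the sense of \Cref{def:differentiability}. When $\rho > 0$, solving for $\tau(a)$ recovers the stated formula, and rearranging the nonnegativity constraint $\tau(a' \mid a) \geq 0$ from $(iii)$ at coordinates with $F(a' \mid \pi_0) > 0$ gives exactly $\rho \geq -\delta(a' \mid a, \pi_0)/F(a' \mid \pi_0)$, i.e., $\rho \geq \Gamma$. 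When $\rho = 0$, the expansion in $(i)$ collapses to $\delta(a, \pi_0) = 0$ for all $a$, so $\Gamma = 0$ by the preliminary step and $\rho \geq \Gamma$ is trivial, while $\tau(a) \in \Delta(A)$ holds by $(iii)$.

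The reverse direction is a verification. Under differentiability of $F$ at $\pi_0$, \Cref{def:differentiability} supplies $F(\pi_0 + \eps(\pi_0' - \pi_0)) = F(\pi_0) + \eps \sum_a \pi_0'(a) \delta(a, \pi_0) + o(\eps)$; substituting $\delta(a, \pi_0) = \rho(\tau(a) - F(\pi_0))$ (which holds by the specified form of $\tau$ when $\rho > 0$, and vacuously when $\rho = 0$ since then $\Gamma = 0$ and $\delta \equiv 0$) and using $\sum_a \pi_0'(a) = 1$ yields $(i)$. Condition $(ii)$ is immediate from $\rho \geq \Gamma \geq 0$. For $(iii)$ when $\rho > 0$, the entries of $\tau(a)$ sum to $1$ using $\sum_{a'} \delta(a' \mid a, \pi_0) = 0$, and nonnegativity at each output coordinate follows by splitting into the $F(a' \mid \pi_0) = 0$ case (where $\delta(a' \mid a, \pi_0) \geq 0$ by the preliminary step) and the $F(a' \mid \pi_0) > 0$ case (where $\rho \geq \Gamma$ rearranges to give exactly the required inequality). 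The only real subtlety is the consistent handling of $\rho = 0$, which is resolved by the observation that $\rho \geq \Gamma = 0$ together with the $\Gamma = 0$ characterization forces all directional derivatives to vanish, so both sides of every condition degenerate consistently.
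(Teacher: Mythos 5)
Your proposal is correct and takes essentially the same route as the paper's proof of this lemma: the same two structural facts ($\sum_{a'}\delta(a'\mid a,\pi_0)=0$ and $\delta(a'\mid a,\pi_0)\geq 0$ wherever $F(a'\mid\pi_0)=0$) drive the claims about $\Gamma$, the forward direction reads off $\delta(a,\pi_0)=\rho(\tau(a)-F(\pi_0))$ by specialising to $\pi_0'=a$ and rearranges $\tau(a'\mid a)\geq 0$ into $\rho\geq\Gamma$, and the reverse direction is the same verification with the same case split on $\rho=0$. One wording slip worth fixing: in your argument that $\Gamma\geq 0$, it is the strictly \emph{negative} summand itself (not its positive companion) that the sign constraint forces to sit at a coordinate with $F(a'\mid\pi_0)>0$ and that contributes the nonnegative term $-\delta(a'\mid a,\pi_0)/F(a'\mid\pi_0)$ to the maximum.
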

\begin{proof}

    We will first check the claims regarding $\Gamma$.

    $\Gamma$ is well-defined: For at least one $a'$, $F(a'\mid \pi_0) > 0$, so the maximum is not over the empty set.
    
    $\Gamma \geq 0$: Note that for all $a$, $\sum_{a'} \delta(a'\mid a, \pi_0) = 0$, since otherwise $F(\pi_0+\eps(a-\pi_0))$ would cease to be in $\Delta(A)$ for $\eps$ sufficiently small. Thus, either $\delta(a,\pi_0)=0$ for all $a$, or there exist $a,a'$ with $\delta(a'\mid a,\pi_0) < 0$. In the former case, we immediately obtain $\Gamma = 0$. In the latter case, for $\delta(a'\mid a, \pi_0) < 0$ we cannot have $F(a'\mid \pi_0) = 0$, so we get $\Gamma > 0$.

    From the above, we see that $\Gamma = 0$ if and only if $\delta(a,\pi_0) = 0$ for all $a$.

    Now, $F$ satisfies (\ref{eq:rho_tau_char}) with respect to some $\rho$, $\tau$ at $\pi_0$, if and only if $F$ is differentiable at $\pi_0$ with $\delta(a,\pi_0) \defeq \frac{\partial F}{\partial \pi_0(a)} = \rho(\tau(a)-F(\pi_0))$.

    Then, suppose $F$ satisfies (\ref{eq:rho_tau_char}) with respect to some $\rho$, $\tau$ at $\pi_0$, and that $\rho\geq 0$ and $\tau(a)\in\Delta(A)$ for all $a$. $F$ must then be differentiable at $\pi_0$, giving us condition 1. Condition 4, $\tau(a)\in \Delta(A)$ if $\rho = 0$ is immediate. Then, for $\delta(a,\pi_0) = \rho(\tau(a)-F(\pi_0))$, we need that for $\rho > 0$, $\tau(a) = F(\pi_0) + \frac{\delta(a,\pi_0)}{\rho}$, giving us condition 3. Finally, either $\delta(a,\pi_0) = 0$ for all $a$ or $\rho>0$. If $\delta(a,\pi_0) = 0$ for all $a$, we have $\Gamma= 0$, and so $\rho\geq 0 = \Gamma$ (giving us condition 2). Otherwise, $\rho > 0$, and hence $\tau(a) = F(\pi_0) + \frac{\delta(a,\pi_0)}{\rho} \in \Delta(A)$ for all $a$. Thus, for all $a, a'$:
    \begin{equation*}
        F(a'\mid\pi_0) + \frac{\delta(a'\mid a,\pi_0)}{\rho} \geq 0
    \end{equation*}
    and so, provided $F(a'\mid \pi_0) \neq 0$,
    \begin{equation*}
        \rho \geq -\frac{\delta(a'\mid a,\pi_0)}{F(a'\mid \pi_0)}.
    \end{equation*}
    Since this holds for all $a$ and all $a'$ with $F(a'\mid \pi_0)>0$, we must have $\rho \geq \Gamma$ ($\implies$ condition 2).

    For the converse, suppose that $F$ is differentiable at $\pi_0$, and that $\rho \in \mathbb{R}$ and $\tau: A \rightarrow \mathbb{R}^{|A|}$ satisfy the conditions 2-4. Since $\rho \geq \Gamma \geq 0$, we have \ref{cond:rho>=0}: $\rho \geq 0$. If $\rho = 0$, we have $\tau(a) = F(\pi_0) \in \Delta(A)$ for all $a$ ($\implies$ condition \ref{cond:taurange}).
    
    Otherwise, if $\rho>0$, we get
    \begin{alignat*}{5}
            \sum_{a'}\tau(a' \mid a) &= \sum_{a'}F(a' \mid \pi_0) &&+ \frac{1}{\rho}\sum_{a'}\delta(a' \mid a, \pi_0) &\text{(Conditions 1 and 3)}\\
            &= 1 &&+ \frac{1}{\rho}\frac{\partial}{\partial\pi_0(a)}\sum\limits_{a'}F(a'\mid\pi_0) &\text{\qquad\quad (Def. of $\delta$)}\\
            &= 1 &&+ \frac{1}{\rho}\frac{\partial}{\partial\pi_0(a)}1\\
            &= 1.
    \end{alignat*}

    Now, note that when $F(a'\mid \pi_0) = 0$, we have $\delta(a'\mid a, \pi_0) \geq 0$ for all $a$, and so we have that $\forall a, a'$:

    \begin{equation}\label{eq:rhoFgeq-delta}
        \rho F(a'\mid \pi_0) \geq \Gamma F(a'
        \mid \pi_0) \geq -\delta(a' \mid a, \pi_0).
    \end{equation}
    
    Hence,
    \begin{align*}
        \tau(a'\mid a) &=
        F(a' \mid \pi_0) + \frac{\delta(a' \mid a, \pi_0)}{\rho} &&\\
        &\geq F(a' \mid \pi_0) + \frac{-\rho F(a'\mid\pi_0)}{\rho} &&(\text{Eq. }(\ref{eq:rhoFgeq-delta}))\\
        &= 0, &&
    \end{align*}

    which establishes that $\tau(a)$ defines a probability distribution over $A$, for each $a$, giving us condition \ref{cond:taurange}.

    Finally, it remains to check that $\delta(a,\pi_0) = \rho (\tau(a)-F(\pi_0))$. If $\rho = 0$, we must have $\Gamma = 0$ and hence $\delta(a,\pi_0) = 0$ for all $a$. Thus, we have $\delta(a,\pi_0) = 0 =\rho(\tau(a)-F(\pi_0))$, as required.

    Otherwise, $\rho > 0$ and $\tau(a) = F(\pi_0) + \frac{\delta(a,\pi_0)}{\rho}$, so that $\rho(\tau(a)-F(\pi_0)) = \delta(a,\pi_0)$, as required, concluding our proof.
    
\end{proof}
\subsection{Proofs for Section \ref{sec:main}}
\thmmain*
\begin{proof}\label{pf:thmmain}
First note that $\pi_0$ is CDT+GGT compatible if and only if $\mathbb{E}_{GGT}[u\mid do(a),\pi_0]- \mathbb{E}_{GGT}[u\mid do(\pi_0),\pi_0]\leq 0$ for all $a$. Hence, if $\pi_0$ is CDT+GGT compatible, we obtain immediately from Lemma \ref{lemma:main} that $\frac{\partial}{\partial \pi_0(a)}\E{u\mid\pi_0} \leq 0$.

Suppose then, conversely, that $\frac{\partial}{\partial \pi_0(a)}\E{u\mid\pi_0} \leq 0$.

If $\sum_j\rho_{j}(\pi_0)\E{\#(j)\mid\pi_0} > 0$, the result follows immediately from Lemma \ref{lemma:main}.

Otherwise, if $\sum_j\rho_{j}(\pi_0)\E{\#(j)\mid\pi_0}= 0$, we have that GGT divides credence between states $s$ with
$\E{\#(s)\mid \pi_0}>0$ (Definition \ref{def:GGT}). We must also have $\rho_{i(s)}(\pi_0)\E{\#(s)\mid \pi_0} = 0$ for all $s$. Thus, for all $s$ with positive GGT credence, $\rho_{i(s)}(\pi_0) = 0$, and so $\tau_{i(s)}(a,\pi_0)$ is by definition just $F_{i(s)}(\pi_0)$, and so constant. As a result, $\mathbb{E}_{GGT}[u\mid do(a),\pi_0] - {\mathbb{E}_{GGT}[u\mid do(\pi_0),\pi_0]}=0$, and so we obtain that $\pi_0$ is CDT+GGT compatible.
\end{proof}

\begin{restatable}{lemma}{lemmamain}\label{lemma:main}
Suppose that $(S,S_T,P_0,n,i,A,T,u,\F)$ is a decision problem and that $\F$ is differentiable. Then for any choice of GGT weights $(\rho_j)$,
    \begin{equation*}
    \frac{\partial}{\partial \pi_0(a)}\E{u\mid\pi_0} = \left(\sum_j\rho_{j}(\pi_0)\E{\#(j)\mid\pi_0}\right)\left(\Ex_{GGT}[u\mid do(a),\pi_0]- \Ex_{GGT}[u\mid do(\pi_0),\pi_0]\right)
\end{equation*}
\end{restatable}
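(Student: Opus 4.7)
The plan is to unpack both sides of the identity into a common explicit sum over non-terminal states, and then check they agree up to the GGT normalising factor. Write $U(\bm{\pi}) \defeq \E{u\mid\bm{\pi}}$, so that $\E{u\mid\pi_0} = U(\F(\pi_0))$, and let $Q^s_{a'}(\bm{\pi}) \defeq \sum_{s'} T(s'\mid s,a')\, \Ex^{s'}[u\mid\bm{\pi}]$ denote the state-action value.

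The technical heart of the argument is a policy-gradient-type identity: for each dependant $j$ and each direction $\delta\in\mathbb{R}^A$ with $\sum_a \delta(a) = 0$,
\begin{equation*}
    \lim_{\eps\to 0^+}\frac{U(\pi_1,\ldots,\pi_j+\eps\delta,\ldots,\pi_n) - U(\bm{\pi})}{\eps} = \sum_{s:\, i(s)=j} \E{\#(s)\mid\bm{\pi}}\sum_{a'} \delta(a')\, Q^s_{a'}(\bm{\pi}).
\end{equation*}
I would prove this by perturbing $\pi_j \mapsto \pi_j + \eps\delta$ and differentiating the Bellman identity $\Ex^s[u\mid\bm{\pi}] = \sum_{a'} \pi_{i(s)}(a')\, Q^s_{a'}(\bm{\pi})$ at $\eps = 0$. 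The derivatives $W^s$ of $\Ex^s[u\mid\cdot]$ then satisfy a linear fixed-point equation $W^s = c_s + \sum_{s'} T(s'\mid s,\pi_{i(s)}) W^{s'}$ with forcing $c_s = I\{i(s)=j\}\sum_{a'}\delta(a')\, Q^s_{a'}$, whose bounded solution is $W^s = \sum_{s'} \Ex^s[\#(s')\mid\bm{\pi}]\, c_{s'}$; averaging over $P_0$ yields the claim. Finiteness of the implicit sums and convergence of the Neumann-series solution rely on \Cref{asm:finite_hist} and \Cref{cor:exp_hist_length_finite}.

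With this identity in hand, the chain rule together with the GGT defining relation $\partial F_j/\partial \pi_0(a) = \rho_j(\pi_0)(\tau_j(a,\pi_0) - F_j(\pi_0))$ from \eqref{eq:rho_tau_char_j} gives
\begin{equation*}
    \frac{\partial}{\partial \pi_0(a)}\E{u\mid\pi_0} = \sum_j \rho_j(\pi_0) \sum_{s:\, i(s)=j} \E{\#(s)\mid\pi_0} \sum_{a'}\bigl(\tau_j(a'\mid a,\pi_0) - F_j(a'\mid\pi_0)\bigr) Q^s_{a'}.
\end{equation*}
For the other side, I would unfold $\Ex_{GGT}[u\mid \Do(a),\pi_0] - \Ex_{GGT}[u\mid \Do(\pi_0),\pi_0]$ using \Cref{def:GGT}, noting that $\tau_j(\pi_0,\pi_0) = F_j(\pi_0)$ whenever $\rho_j(\pi_0) > 0$ (set $\pi_0' = \pi_0$ in \eqref{eq:rho_tau_char_j}). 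Multiplying by $\sum_j \rho_j(\pi_0)\E{\#(j)\mid\pi_0} = \sum_s \rho_{i(s)}(\pi_0)\E{\#(s)\mid\pi_0}$ then cancels the $P_{GGT}$ denominator and reproduces the displayed expression for the left-hand side.

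The main obstacle is the policy-gradient identity itself: the calculation is in the spirit of standard MDP policy-gradient derivations (and mirrors the structure of \citeauthor{piccione}'s proof), but the possibly-unbounded horizon means one must justify differentiating term-by-term in the Bellman recursion, which is exactly where \Cref{cor:exp_hist_length_finite} enters. A brief final check handles the degenerate case $\sum_j \rho_j(\pi_0)\E{\#(j)\mid\pi_0} = 0$: each $j$ then has either $\rho_j(\pi_0) = 0$, making the $j$-term in the chain rule vanish because $\partial F_j/\partial\pi_0(a) = 0$, or $\E{\#(j)\mid\pi_0} = 0$, making the $j$-partial derivative of $U$ vanish by the identity above; either way both sides are zero.
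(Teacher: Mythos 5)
Your proposal is correct and reaches the paper's result by the same overall decomposition (derivative of the \textit{ex ante} utility as an occupation-measure-weighted sum of state-level advantage terms, then the chain rule through the GGT relation $\partial F_j/\partial\pi_0(a)=\rho_j(\tau_j(a,\pi_0)-F_j(\pi_0))$ and the identity $\tau_j(\pi_0,\pi_0)=F_j(\pi_0)$), but it proves the key derivative identity by a genuinely different route. The paper's \Cref{lemma:ex_ante_deriv} is probabilistic: it couples the perturbed and unperturbed chains via i.i.d.\ Bernoulli($\eps$) ``deviation'' indicators, conditions on the number of deviations (the zero-deviation term is $\vpi'$-independent, the $\geq 2$-deviation term is $O(\eps^2)$ by the Strong Markov Property, and the single-deviation term produces the occupation-measure sum), which directly yields the error term uniformly in $\vpi'$ — the uniformity that the chain-rule step then consumes via \Cref{cor:ex_ante_Lipschitzish}. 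Your route instead differentiates the Bellman fixed-point equation and inverts the resulting linear system by a Neumann series; this is finite-dimensional linear algebra once term-by-term differentiation is justified, and you correctly identify that justification as the crux. One caution there: finiteness of the expected history length (\Cref{cor:exp_hist_length_finite}) alone is not quite enough to push a derivative through the infinite series — you need the uniform geometric tail bound $\mathbb{P}^s(L>kt)\leq(1-\tfrac{1}{2|A|^t})^k$ established inside the proof of \Cref{lemma:hist_length_mgf}, which makes the series of derivatives of $T_\eps^t$ summable uniformly near $\eps=0$ (and, as a bonus, shows $\vpi\mapsto\E{u\mid\vpi}$ is analytic, so the full differentiability needed to compose with the $o(\eps)$ error in $\F$ comes for free). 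Your explicit treatment of the degenerate case $\sum_j\rho_j(\pi_0)\E{\#(j)\mid\pi_0}=0$ is sound and is something the paper only addresses later, in the proof of \Cref{thm:main}.
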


\begin{proof}\label{pf:lemmamain}
From Lemma \ref{lemma:ex_ante_deriv}, we have that
\begin{equation}\label{eq:ex_ante_deriv}
    \E{u\mid \vpi + \eps(\vpi'-\vpi)} = \E{u\mid\vpi} + \eps \sum_{s\in S -S_T}\E{\#(s)\mid \vpi}\left(\Ex^{T(\pi_{i(s)}',s)}[u\mid \bm{\pi}] -\Ex^{T(\pi_{i(s)},s)}[u\mid \bm{\pi}]\right) + o(\eps\mid \vpi)
\end{equation}

The rest is essentially chain rule on the simplex. First note that by differentiability on the simplex, we may write $\F(\pi_0 + \eps(\pi_0'-\pi_0)) = \F(\pi_0) + \eps\nabla_{\pi_0'-\pi_0}\F(\pi_0)+\eps H(\eps,\pi_0')$, where $\sup_{\pi_0'}\norm{H(\eps,\pi_0')}_1 \rightarrow 0$ as $\eps \rightarrow 0$. Then by \Cref{cor:ex_ante_Lipschitzish}, we have that, for some constant $\lambda\geq 0$:

\begin{align*}
    |\E{u\mid \F(\pi_0) + \eps\nabla_{\pi_0'-\pi_0}\F(\pi_0)+\eps H(\eps,\pi_0')}- \E{u\mid \F(\pi_0) + \eps\nabla_{\pi_0'-\pi_0}\F(\pi_0)}|
    &\leq \frac{\eps\norm{H(\eps,\pi_0')}_1}{1-\eps\norm{H(\eps,\pi_0')}_1}\lambda = o(\eps\mid\pi_0).
\end{align*}

Consider any set of GGT weights $(\rho_j,\tau_j)$. For some fixed $\pi_0$, writing $\rho_j = \rho_j(\pi_0)$ and $\tau_j(\pi_0') = \tau_j(\pi_0',\pi_0)$, we then have, for each $j$:

\begin{equation*}
    \nabla_{\pi_0'-\pi_0}F_j(\pi_0) = \rho_j(\tau_j(\pi_0')-F_j(\pi_0)).
\end{equation*}

Hence:
\begin{align*}
    \E{u\mid \F(\pi_0+\eps(\pi_0'-\pi_0))} &= \E{u\mid \F(\pi_0) + \eps(\rho_1(\tau_1(\pi_0')-F_1(\pi_0)), \ldots, \rho_n(\tau_n(\pi_0')-F_n(\pi_0))) } + o(\eps\mid \pi_0).
\end{align*}
Write $\bm{v}_j$ for the tuple $(F_1(\pi_0),\ldots F_{j-1}(\pi_0), \tau_j(\pi_0'), F_{j+1}(\pi_0),\ldots, F_n(\pi_0)) \in \Delta(A)^n$. Then, using linearity of the derivative, this is just:
\begin{align*}
    &\E{u\mid \F(\pi_0+\eps(\pi_0'-\pi_0))}\\
    &= \E{u\middle|  \F(\pi_0) + \sum_{j=1}^n \eps\rho_j (\bm{v}_j-\F(\pi_0))} + o(\eps\mid\pi_0)\\
    &= \E{u\mid \F(\pi_0)} + \sum_{j=1}^n(\E{u\mid \F(\pi_0) + \eps\rho_j(\bm{v}_j - \F(\pi_0))}-\E{u\mid \F(\pi_0)}) + o(\eps\mid\pi_0)\\
    &= \E{u\mid \F(\pi_0)} + \eps\sum_{j=1}^n\rho_j\sum_{s\in S-S_T}\E{\#(s)\mid \F(\pi_0)}\left(\Ex^{T((\bm{v}_j)_{i(s)},s)}[u\mid \F(\pi_0)] -\Ex^{T(F_{i(s)}(\pi_0),s)}[u\mid \F(\pi_0)]\right) + o(\eps\mid\pi_0)\\
    &= \E{u\mid \F(\pi_0)} + \eps\sum_{j=1}^n\rho_j\sum_{s\in S-S_T}\E{\#(s)\mid \pi_0}I\{i(s) = j\}\left(\Ex^{T(\tau_{i(s)}(\pi_0'),s)}[u\mid \pi_0] -\Ex^{T(F_{i(s)}(\pi_0),s)}[u\mid \pi_0]\right)+ o(\eps\mid\pi_0)\\
    &= \E{u\mid \pi_0} + \eps\sum_{s\in S-S_T}\rho_{i(s)}\E{\#(s)\mid \pi_0}\left(\Ex^{T(\tau_{i(s)}(\pi_0'),s)}[u\mid \pi_0] -\Ex^{T(F_{i(s)}(\pi_0),s)}[u\mid \pi_0]\right) + o(\eps\mid\pi_0)
\end{align*}

We conclude that $\E{u\mid \pi_0}$ is differentiable with respect to $\pi_0$, with derivative

\begin{align*}
    \frac{\partial}{\partial\pi_0(a)}\E{u\mid\pi_0} &= \lim_{\eps\rightarrow 0}\frac{\E{u\mid \F(\pi_0+\eps(a-\pi_0))}-\E{u\mid\F(\pi_0)}}{\eps}\\
    &= \sum_{s\in S-S_T}\rho_{i(s)}(\pi_0)\E{\#(s)\mid \pi_0}\left(\Ex^{T(\tau_{i(s)}(a,\pi_0),s)}[u\mid \pi_0] -\Ex^{T(F_{i(s)}(\pi_0),s)}[u\mid \pi_0]\right)\\
    &= \left(\sum_j \rho_{j}(\pi_0)\E{\#(j)\mid\pi_0}\right)\sum_{s\in S-S_T}P_{GGT}(s\mid\pi_0)\left(\Ex^{T(\tau_{i(s)}(a,\pi_0),s)}[u\mid \pi_0] -\Ex^{T(F_{i(s)}(\pi_0),s)}[u\mid \pi_0]\right)\\
    &= \left(\sum_j \E{\rho_{j}(\pi_0)\#(j)\mid\pi_0}\right)\left(\Ex_{GGT}[u\mid \Do(a),\pi_0] - \Ex_{GGT}[u\mid \Do(\pi_0),\pi_0]\right)
\end{align*}
where for the last line, we used that $\tau_j(\pi_0,\pi_0) = F_j(\pi_0)$ (Lemma \ref{lemma:tau_pi_0}).

\end{proof}

\begin{notn}
    Given a distribution $P$ over non-terminal states, define the CDT expected utility of the dependant in the current state acting according to $\bm{\pi'}$, and dependants otherwise acting according to $\bm{\pi}$ as:
    \begin{equation*}
        \Ex^{P}[u\mid\Do(\bm{\pi'}),\bm{\pi}] \defeq \E{\Ex^{T(s,\pi_{i(s)})}[u\mid \bm{\pi},s\sim P]} = \sum_s P(s) \sum_{s'} T(s'\mid \pi_{i(s)},s)\Ex^{s'}[u\mid \bm{\pi}].
    \end{equation*}
\end{notn}

\begin{lemma}\label{lemma:ex_ante_deriv}
    The ex ante expected utility is differentiable (in a similar sense to that described in Definition \ref{def:differentiability}) on $\Delta(A)^n$ (i.e., with respect to the policies of the dependants).

    More precisely, let $G(\bm{\pi}) = \E{u\mid \bm{\pi}}$ be the ex ante expected utility. Then for all $\bm{\pi}$ there exists a linear map $DG_{\bm{\pi}}:\Delta(A)\rightarrow \mathbb{R}$ with

    \begin{equation*}
        \lim_{\eps\rightarrow 0} \sup_{\bm{\pi'}\in \Delta(A)^n} \frac{\norm{G(\bm{\pi}+\eps(\bm{\pi'}-\bm{\pi}))-G(\bm{\pi})-\eps DG_{\bm{\pi}}(\bm{\pi'})}}{\eps} = 0.
    \end{equation*}

    In particular,

    \begin{align*}
        DG_{\bm{\pi}}(\bm{\pi'}) &= \sum_{s\in S-S_T} \E{\#(s)\mid \bm{\pi}}\left(\Ex^{T(\pi_{i(s)}',s)}[u\mid \bm{\pi}] -\Ex^{T(\pi_{i(s)},s)}[u\mid \bm{\pi}]\right)\\
        &= \E{\sum_{s\notin S_T} \#(s)\middle| \vpi}\left(\Ex_{GT}[u\mid \Do(\bm{\pi'}),\bm{\pi}] -\Ex_{GT}[u\mid \Do(\bm{\pi}),\bm{\pi}]\right).
    \end{align*}
\end{lemma}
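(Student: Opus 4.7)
The plan is to couple the process under $\bm{\pi}$ with that under $\bm{\pi}+\eps(\bm{\pi}'-\bm{\pi})$ via independent Bernoulli($\eps$) indicators $X_1, X_2, \ldots$: at step $k$ one draws the action from $\pi_{i(s_{k-1})}'$ if $X_k=1$ and from $\pi_{i(s_{k-1})}$ otherwise, so that each action has marginal law $(1-\eps)\pi_{i(s_{k-1})}+\eps\pi_{i(s_{k-1})}'$. Letting $T$ be the termination time and $M=\sum_{k=1}^T X_k$ the number of perturbed steps along the realized trajectory, I would split
\begin{equation*}
G(\bm{\pi}+\eps(\bm{\pi}'-\bm{\pi})) = \E{u(S_T)I\{M=0\}} + \E{u(S_T)I\{M=1\}} + \E{u(S_T)I\{M\geq 2\}}
\end{equation*}
and treat the $\{M\leq 1\}$ main terms and the $\{M\geq 2\}$ remainder separately.

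For the main terms, direct computation gives $\E{u(S_T)I\{M=0\}} = \sum_h u(s_t^h)(1-\eps)^{t^h}\prob{h\mid\bm{\pi}}$ and an analogous formula for $\{M=1\}$ in which each history is weighted by $\eps(1-\eps)^{t^h-1}P_0(s_0^h)\sum_j T_j^{h\prime}\prod_{k\neq j}T_k^h$ (with $T_k^h, T_k^{h\prime}$ the one-step transition probabilities under $\pi_{i(s_{k-1})}$ and $\pi_{i(s_{k-1})}'$). Expanding $(1-\eps)^t = 1-t\eps+O(\eps^2 t^2)$, summing over histories, and grouping each index $j$ by the state $s=s_{j-1}^h$ at the perturbed step, I would use the Markov property to collapse the inner sums into factors of $\E{\#(s)\mid\bm{\pi}}$. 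The $\eps$-coefficient from $\{M=0\}$ then reads $-\sum_s\E{\#(s)\mid\bm{\pi}}\Ex^{T(\pi_{i(s)},s)}[u\mid\bm{\pi}]$ (via the identity $\E{u(S_T)T\mid\bm{\pi}}=\sum_s\E{\#(s)\mid\bm{\pi}}\Ex^{T(\pi_{i(s)},s)}[u\mid\bm{\pi}]$), and that from $\{M=1\}$ reads $\sum_s\E{\#(s)\mid\bm{\pi}}\Ex^{T(\pi_{i(s)}',s)}[u\mid\bm{\pi}]$. Their sum is exactly the claimed $DG_{\bm{\pi}}(\bm{\pi}')$, which is manifestly linear in $\bm{\pi}'$.

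The main obstacle is uniform control of the remainder in $\bm{\pi}'$. The $\{M\geq 2\}$ piece satisfies $|\E{u(S_T)I\{M\geq 2\}}|\leq \|u\|_\infty\prob{M\geq 2}$, and a union bound gives $\prob{M\geq 2}\leq \eps^2\sum_{j<k}\prob{T\geq k\mid X_j=X_k=1}$. Conditioning on $X_j=X_k=1$ yields a time-inhomogeneous Markov chain whose per-step action distributions are all valid mixed policies, so the argument underlying Lemma \ref{lemma:hist_length_mgf}---which relies only on a policy-uniform lower bound on the chance of terminating within a fixed number of steps---extends to give subgeometric tails $\prob{T\geq k}\leq C\theta^k$ uniformly, whence $\prob{M\geq 2}\leq C'\eps^2$ uniformly in $\bm{\pi}'$. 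The residual $O(\eps^2 t^2)$ corrections from Taylor-expanding $(1-\eps)^t$ in the $\{M\leq 1\}$ contributions are likewise controlled by the uniform bound on $\E{T^2}$ implied by those tails. Combined, these estimates yield $\sup_{\bm{\pi}'\in\Delta(A)^n}\eps^{-1}|G(\bm{\pi}+\eps(\bm{\pi}'-\bm{\pi}))-G(\bm{\pi})-\eps DG_{\bm{\pi}}(\bm{\pi}')|\to 0$ as $\eps\to 0$, which is the required uniform differentiability.
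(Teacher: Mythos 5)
Your proposal is correct and follows essentially the same route as the paper's proof: the same Bernoulli($\eps$) deviation coupling and the same decomposition by the number of deviations into $\{0\}$, $\{1\}$ and $\{\geq 2\}$, with the $\{\geq 2\}$ piece bounded by $O(\eps^2)$ uniformly in $\bm{\pi}'$. The only real difference is in how the first-order term is extracted: you sum explicitly over histories and Taylor-expand $(1-\eps)^t$, which requires the policy-uniform geometric tail (hence second-moment) bounds that the proof of Lemma \ref{lemma:hist_length_mgf} does supply, whereas the paper applies the strong Markov property at the first deviation time together with a dominated-convergence argument to identify the coefficients $\E{\#(s)\mid\bm{\pi}}$; both variants are valid.
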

\begin{proof}
    Consider $\E{u\mid (\vpi+\eps(\vpi'-\vpi))}$. Let $\pitwiddle = \pitwiddle(\eps) = \vpi + \eps(\vpi'-\vpi)$.

    Now, let $X_1,X_2,\ldots \simiid$ Bernoulli$(\eps)$. Then, define random variables as follows: Let $S_1\sim P_0$ be the initial state. Let $S_{t+1}\mid S_t,X_t \sim (1-X_t)T(S_t,\pi_{i(S_t)})+ X_tT(S_t,\pi'_{i(S_t)})$. If $S_t$ is a terminal state, take $T(S_t,a)=S_t$ for all $a$ (i.e., make terminal states absorbing). Thus, $S_1,S_2,\ldots$ are distributed as the sequence of states when dependants follow $\pitwiddle$, and $X_t$ is a dummy variable that determines whether the dependant at time $t$ `deviates' and acts according to $\vpi'$ or continues according to $\vpi$.

    Let $L$ be the history length given by the sequence $S_1,S_2,\ldots$ (i.e., the hitting time of the first terminal/absorbing state). Let $D= \sum_{t=1}^{L-1}X_t$ be the total number of deviations.

    Now, let's condition on $D$. We have:

    \begin{align*}
        \E{u\mid \pitwiddle} &= \E{uI\{D=0\}\mid \pitwiddle} + \E{uI\{D=1\}\mid \pitwiddle} + \E{uI\{D\geq 2\}\mid \pitwiddle}.
    \end{align*}

    Now, note that until the first deviation, $S_t$ follows the same distribution regardless of $\vpi'$. Consequently, the distribution of $I\{D=0\}u(S_L)$ is also independent of $\vpi'$, since the distribution of $S_L$ depends on $\vpi'$ only when $X_t =1$ for some $t<L$. Hence, we obtain
    $\E{uI\{D=0\}\mid \pitwiddle} = \E{uI\{D=0\}\mid \vpi}$.

    Next, we argue that the $\E{uI\{D\geq 2\}\mid \pitwiddle}$ term is $O(\eps^2)$. We have $\E{uI\{D\geq 2\}\mid \pitwiddle} \leq \prob{D\geq 2 \mid \pitwiddle}\max |u|$. Let $T_1 = \min\{t:X_t=1\}$ be the time of the first deviation, and $T_2 = \min\{t>T_1:X_t=1\}$ be the time of the second deviation. Then $D\geq 2 \iff T_1,T_2<L$. Thus,

    \begin{equation*}
        \prob{D\geq 2 \mid \pitwiddle} = \prob{T_1<L\mid \pitwiddle}\prob{T_2<L \mid T_1<L,\pitwiddle}.
    \end{equation*}

    Now, note that $(S_t,X_t)_{t=1}^\infty$ forms a Markov chain. Moreover, $T_1,T_2$ and $L$ are stopping times. $T_1+1$ is then also a stopping time. Hence, the Strong Markov Property implies that $(S_{T_1+t},X_{T_1+t})_{t=1}^\infty$ has the same distribution as the original Markov chain with starting state $S_1 \sim T\left(S_{T_1},\vpi'_{i(S_{T_1})}\right)$ (and $X_1$ distributed as normal). For this Markov chain, $T_2$ is then the first deviation. Hence:

    \begin{align*}
        \prob{T_2<L \mid T_1<L,\pitwiddle} &= \sum_{s\in S-S_T} \prob{S_{T_1} = s \mid T_1<L,\pitwiddle}\prob{T_2<L \mid S_{T_1} = s, \pitwiddle}\\
        &= \sum_{s\in S-S_T} \prob{S_{T_1} = s \mid T_1<L,\pitwiddle}\mathbb{P}^{T(s,\pi'_{i(s)})}(T_1<L \mid \pitwiddle)\\
        &\leq \max_s \mathbb{P}^s(T_1<L\mid \pitwiddle).
    \end{align*}

    Meanwhile, since the distribution of $S_{T_1}$ does not depend on $\vpi'$, we have:
    \begin{equation*}
        \mathbb{P}^s(T_1<L\mid \pitwiddle) = \mathbb{P}^s(T_1<L\mid \vpi) = \mathbb{P}^s(D\geq 1\mid \vpi)\leq \Ex^s[D\mid \vpi].
    \end{equation*}
    Applying the law of total expectation, and noting that when $\vpi' = \vpi$, the distribution of $L$ is completely independent of the $(X_t)$:
    \begin{align*}
        \Ex^s[D\mid \vpi] = \Ex^s\left[\Ex^s[D\mid L,\vpi]\mid \vpi\right] = \Ex^s[\eps L\mid \vpi] = \eps \Ex^s[ L\mid \vpi].
    \end{align*}
    And we obtain $\mathbb{P}^s(T_1<L\mid \pitwiddle) \leq \eps \Ex^s[ L\mid \vpi]$. Putting this all together:
    \begin{align*}
    \E{uI\{D\geq 2\}\mid \pitwiddle} &\leq \prob{D\geq 2 \mid \pitwiddle}\max |u|\\
        &\leq \prob{T_1<L\mid \pitwiddle} \max_s \mathbb{P}^s(T_1<L\mid \pitwiddle)\max |u|\\
        &\leq \max_s \mathbb{P}^s(T_1<L\mid \pitwiddle)^2\max |u|\\
        &\leq \eps^2 \max_s\Ex^s[ L\mid \vpi]^2 \max|u|.
    \end{align*}
    Where for all $s$, $\Ex^s[ L\mid \vpi]$ is finite by Lemma \ref{lemma:hist_length_mgf}. This is an $O(\eps^2)$ term not depending on $\vpi'$. We then also have $\E{uI\{D\geq 2\}\mid \vpi} \leq \eps^2 \max_s\Ex^s[ L\mid \vpi]^2 \max|u|$.

    Consequently,
    \begin{equation*}
        \E{u\mid \pitwiddle} - \E{u\mid \vpi} = \E{uI\{D=1\}\mid \pitwiddle} - \E{uI\{D=1\}\mid \vpi} + O(\eps^2\mid \vpi).
    \end{equation*}

    Finally, noting that the distributions of $S_{T_1}$, and $u(S_L)I\{D=0\}$ do not depend on $\vpi'$:

    \begin{align*}
        \E{uI\{D=1\}\mid \pitwiddle} &= \E{\E{uI\{D=1\}\mid S_{T_1},\pitwiddle}\mid \pitwiddle} &&\text{(Law of total expectation)}\\
        &= \sum_{s\in S-S_T} \prob{S_{T_1}= s\mid \pitwiddle}\E{uI\{D=1\}\mid S_{T_1} =s,\pitwiddle}\\
        &= \sum_{s\in S-S_T} \prob{S_{T_1}= s\mid \pitwiddle}\Ex^{T(s,\vpi'_{i(s)})}[uI\{D=0\}\mid \pitwiddle] &&\text{(Strong Markov Property)}\\
        &= \sum_{s\in S-S_T} \prob{S_{T_1}= s\mid \vpi}\Ex^{T(s,\vpi'_{i(s)})}[uI\{D=0\}\mid \vpi]\\
        &= \sum_{s\in S-S_T}\sum_{t=1}^\infty \prob{S_t= s, T_1= t\mid \vpi}\Ex^{T(s,\vpi'_{i(s)})}[uI\{D=0\}\mid \vpi]\\
        &= \sum_{s\in S-S_T}\sum_{t=1}^\infty \eps(1-\eps)^{t-1}\prob{S_t= s\mid \vpi}\Ex^{T(s,\vpi'_{i(s)})}[uI\{D=0\}\mid \vpi]\\
        &= \eps\sum_{s\in S-S_T}\E{\sum_{t=1}^\infty I\{S_t=s\}(1-\eps)^{t-1}\middle|\vpi}\Ex^{T(s,\vpi'_{i(s)})}[uI\{D=0\}\mid \vpi].\\
    \end{align*}

    Now, for any $s$, we have $\Ex^s[uI\{D>0\}\mid \vpi] \leq \prob{T_1<L\mid \vpi}\max|u|$, which we know from previously is at most ${\eps\Ex^s[L\mid \vpi]\max|u|}$. Thus, $\Ex^{T(s,\vpi'_{i(s)})}[uI\{D=0\}\mid \vpi] = \Ex^{T(s,\vpi'_{i(s)})}[u\mid \vpi] + O(\eps)$.

    Meanwhile, for any $s\in S-S_T$, we have $\E{\sum_{t=1}^\infty I\{S_t=s\}(1-\eps)^{t-1}\mid\vpi} \geq \E{\sum_{t=1}^\infty I\{S_t=s\}(1-\eps)^{L-1}\mid \vpi}$, which is in turn equal to $\E{\#(s)(1-\eps)^{L-1}\mid\vpi}$. Now, since $L$ is almost surely finite, we have that $(1-\eps)^{L-1}$ almost surely converges to 1 as $\eps\rightarrow 0$, and so $\#(s)(1-\eps)^{L-1}$ converges almost surely to $\#(s)$. Then, since $\#(s)(1-\eps)^{L-1}\leq \#(s)$, and the latter has finite expectation, we obtain $L^1$ convergence, by the dominated convergence theorem:  $\E{\#(s)(1-\eps)^{L-1}\mid\vpi} \rightarrow \E{\#(s)\mid\vpi}$. Thus, we must also have $\E{\sum_{t=1}^\infty I\{S_t=s\}(1-\eps)^{t-1}\mid\vpi}\rightarrow \E{\#(s)\mid\vpi}$.

    We therefore get
    \begin{equation*}
        \E{uI\{D=1\}\mid \pitwiddle} = o(\eps\mid \vpi) +\eps\sum_{s\in S-S_T}\E{\#(s)\mid\vpi}\Ex^{T(s,\vpi'_{i(s)})}[u\mid \vpi].
    \end{equation*}

    Putting everything together:

    \begin{align*}
        \E{u\mid \pitwiddle} - \E{u\mid \vpi} &= o(\eps\mid \vpi) + \eps\sum_{s\in S-S_T}\E{\#(s)\mid\vpi}\left(\Ex^{T(s,\vpi'_{i(s)})}[u\mid \vpi] - \Ex^{T(s,\vpi_{i(s)})}[u\mid \vpi]\right)\\
        &= o(\eps\mid \vpi) + \eps\sum_{s\in S-S_T}\E{\#(s)\mid\vpi}\left(\Ex^s[u\mid \Do(\pi_{i(s)}'),\bm{\pi}] -\Ex^s[u\mid \Do(\pi_{i(s)}),\bm{\pi}]\right).
    \end{align*}
    Noting that the $O(\eps)$ term is indeed linear in $\vpi'$, we are then done.
\end{proof}

\begin{cor}\label{cor:ex_ante_Lipschitzish}
    For any decision problem, there exists $\lambda \geq 0$ such that $\forall \vpi,\vpi'$ with $\norm{\vpi-\vpi'}_1<1$, we have
    \begin{equation*}
        |\E{u\mid \vpi} - \E{u\mid \vpi'}|\leq \lambda \frac{\norm{\vpi-\vpi'}_1}{1-\norm{\vpi-\vpi'}_1}.
    \end{equation*}
\end{cor}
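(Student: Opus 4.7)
The plan is to deduce the corollary from Lemma~\ref{lemma:ex_ante_deriv} by establishing the stronger Lipschitz bound $|\E{u\mid\vpi}-\E{u\mid\vpi'}|\leq \lambda\delta$ with $\delta\defeq\norm{\vpi-\vpi'}_1$; since $\delta\leq \delta/(1-\delta)$ whenever $\delta<1$, the claimed form then follows trivially.

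First, I would use the explicit derivative formula
\begin{equation*}
DG_{\vpi}(\vpi'') = \sum_{s\in S-S_T}\E{\#(s)\mid\vpi}\bigl(\Ex^s[u\mid\Do(\pi''_{i(s)}),\vpi]-\Ex^s[u\mid\Do(\pi_{i(s)}),\vpi]\bigr)
\end{equation*}
from Lemma~\ref{lemma:ex_ante_deriv}. Since $\Ex^s[u\mid\Do(\cdot),\vpi]$ is linear in its acting policy, the difference in parentheses is bounded in absolute value by $\max_s|u(s)|\cdot\norm{\pi''_{i(s)}-\pi_{i(s)}}_1$. Grouping states by their associated dependant and using $\sum_j \E{\#(j)\mid\vpi}=\E{L-1\mid\vpi}$ yields the uniform bound $|DG_{\vpi}(\vpi'')|\leq \lambda\cdot\norm{\vpi''-\vpi}_1$, with $\lambda\defeq \max_s|u(s)|\cdot\sup_{\vpi'''}\E{L\mid\vpi'''}$; this constant is finite by Lemma~\ref{lemma:hist_length_mgf}.

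Next, I would set $\vpi_t\defeq \vpi+t(\vpi'-\vpi)$ and $f(t)\defeq G(\vpi_t)$ for $t\in[0,1]$, and compute $f'(t)$ by rewriting the small displacement as $h(\vpi'-\vpi)=\tfrac{h}{1-t}(\vpi'-\vpi_t)$, so that Lemma~\ref{lemma:ex_ante_deriv} applies with the admissible direction $\vpi'\in\Delta(A)^n$ from $\vpi_t$. This gives $f'(t)=DG_{\vpi_t}(\vpi')/(1-t)$ on $[0,1)$; since $\norm{\vpi'-\vpi_t}_1=(1-t)\delta$, the uniform derivative bound yields $|f'(t)|\leq \lambda\delta$. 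Applying the fundamental theorem of calculus on $[0,1-\eta]$, together with continuity of $f$ at $t=1$ (from differentiability of $G$), then gives $|f(1)-f(0)|\leq \lambda\delta$ upon letting $\eta\to 0^+$.

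The main obstacle is verifying, in the chain-rule step, that the $1/(1-t)$ blow-up in $f'(t)$ is cancelled exactly by the contraction $\norm{\vpi'-\vpi_t}_1=(1-t)\delta$; this relies on the linearity of $DG_{\vpi}$ in its direction argument. An alternative, more self-contained route sidesteps this by using a maximum coupling of the two Markov chains induced by $\vpi$ and $\vpi'$: a union bound on action samples gives $\prob{\text{trajectory divergence}}\leq (\delta/2)\cdot\sup_{\vpi''}\E{L\mid\vpi''}$, and since terminal utilities differ by at most $2\max_s|u(s)|$ once divergence occurs, one obtains the same Lipschitz bound directly.
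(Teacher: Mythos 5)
Your proposal is correct, and it takes a genuinely different route from the paper's. The paper only uses the \emph{uniform} bound $|DG_{\vpi}(\vpi'')|\leq \lambda/2$ (immediate from boundedness of $u$ and \Cref{lemma:hist_length_mgf}), applies the mean value theorem along the segment — where re-expressing the intermediate-point directional derivative as a derivative toward a simplex point introduces the $\tfrac{1}{1-\eps\theta}$ amplification responsible for the $\tfrac{\delta}{1-\delta}$ shape — and then passes through an auxiliary policy $\pitwiddle$ built from the componentwise minima of $\vpi$ and $\vpi'$, so that both are reached from $\pitwiddle$ by a perturbation of size $\norm{\vpi-\vpi'}_\infty$. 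You instead prove the sharper, displacement-proportional bound $|DG_{\vpi}(\vpi'')|\leq\lambda\norm{\vpi''-\vpi}_1$ (which is valid: the bracketed term in the derivative formula is linear in the acting policy and each $\Ex^s[u\mid\Do(a),\vpi]$ is bounded by $\max|u|$), so that the $\tfrac{1}{1-t}$ blow-up in $f'(t)$ is exactly cancelled by $\norm{\vpi'-\vpi_t}_1=(1-t)\norm{\vpi'-\vpi}_1$; this yields the genuine Lipschitz estimate $|\E{u\mid\vpi}-\E{u\mid\vpi'}|\leq\lambda\norm{\vpi-\vpi'}_1$, strictly stronger than the stated corollary, and dispenses with the $\pitwiddle$ construction entirely. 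Two small points of rigor: (i) invoke the mean value theorem on $[0,1-\eta]$ rather than the fundamental theorem of calculus, since you have not established integrability or continuity of $f'$; (ii) for two-sided differentiability of $f$ on $(0,1)$ you need the left and right directional derivatives to agree, which follows from linearity of $DG_{\vpi_t}$ together with $DG_{\vpi_t}(\vpi_t)=0$ — worth a sentence, but not a gap. Your alternative coupling argument is also sound and would give the same Lipschitz constant up to the factor from $\sup_{\vpi''}\E{L\mid\vpi''}$.
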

\begin{proof}
    We will use \Cref{lemma:ex_ante_deriv}. Again write $G(\vpi)\defeq \E{u\mid \vpi}$. First note that by \Cref{lemma:hist_length_mgf}, we then have that for all $\vpi,\vpi'$, $|DG_{\vpi}(\vpi')|$ is bounded, say by $\lambda/2$. Then, applying the mean value theorem to the directional derivative, we obtain that for any $\vpi$,$\vpi'$, we have for some $\theta\in (0,1)$:

    \begin{align*}
        G(\vpi+\eps(\vpi'-\vpi)) -G(\vpi) &= \eps \nabla_{\vpi'-\vpi}G(\vpi+\eps\theta(\vpi'-\vpi))\\
        &=\frac{\eps}{1-\eps\theta}\nabla_{\vpi'-(\vpi+\eps\theta(\vpi'-\vpi))}G(\vpi+\eps\theta(\vpi'-\vpi))\\
        &= \frac{\eps}{1-\eps\theta}DG_{\vpi+\eps\theta(\vpi'-\vpi)}(\vpi').
    \end{align*}

    Hence, for all $\vpi',\vpi\in \Delta(A)^n$, $|G(\vpi+\eps(\vpi'-\vpi)) -G(\vpi)| \leq \frac{\eps}{1-\eps}\lambda/2$.

    Now, fix $\vpi,\vpi'\in \Delta(A)^n$ and assume $\norm{\vpi-\vpi'}_1<1$. For each $i,j$, let $v_{i,j} = \min((\pi_i')_j,(\pi_i)_j)$, and $v_i = (v_{i,1},\ldots,v_{i,|A|})$. Note that $\norm{v_i}_1 = \sum_j \min((\pi_i')_j,(\pi_i)_j) \geq \sum_j (\pi_i)_j - |(\pi_i')_j-(\pi_i)_j| \geq 1 - \norm{\vpi'-\vpi}_\infty.$ Then, define $\pitwiddle_i = v_i/\norm{v_i}_1$. Since $\norm{v_i}_1\geq 1 - \norm{\vpi'-\vpi}_\infty$, we may therefore write $\vpi = (1 - \norm{\vpi'-\vpi}_\infty)\pitwiddle + \norm{\vpi'-\vpi}_\infty\bm{\gamma}$ and $\vpi = (1 - \norm{\vpi'-\vpi}_\infty)\pitwiddle + \norm{\vpi'-\vpi}_\infty\bm{\gamma'}$ for some $\bm{\gamma},\bm{\gamma'} \in \Delta(A)^n$.

    Putting everything together:
    \begin{align*}
        |G(\vpi)-G(\vpi')| &\leq |G(\vpi)-G(\pitwiddle)| + |G(\vpi')-G(\pitwiddle)|\\
        &\leq \frac{\norm{\vpi'-\vpi}_\infty}{1-\norm{\vpi'-\vpi}_\infty}\lambda\\
        & \leq \frac{\norm{\vpi'-\vpi}_1}{1-\norm{\vpi'-\vpi}_1}\lambda.
    \end{align*}
\end{proof}

\begin{lemma}\label{lemma:tau_pi_0}
    Let $\tau_j$ and $\rho_j$ be a GGT transformation function and weight with respect to $F_j$ (i.e. $F_j$, $\rho_j$, $\tau_j$ satisfy equation (\ref{eq:rho_tau_char_j}) for all $\pi_0$). Then we have $\sum_{a}\tau_j(a, \pi_0)\pi_0(a) = F_j(\pi_0)$.
    \end{lemma}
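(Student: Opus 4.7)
The plan is to substitute $\pi_0' = \pi_0$ directly into the defining equation (\ref{eq:rho_tau_char_j}) and read off the identity. The left-hand side simplifies because $\pi_0 + \eps(\pi_0 - \pi_0) = \pi_0$, so $F_j(\pi_0 + \eps(\pi_0 - \pi_0)) = F_j(\pi_0)$ for every $\eps$. The right-hand side becomes $F_j(\pi_0) + \eps \rho_j(\pi_0)\left(\sum_a \tau_j(a,\pi_0)\pi_0(a) - F_j(\pi_0)\right) + o(\eps)$. Subtracting $F_j(\pi_0)$, dividing by $\eps$, and letting $\eps \to 0$ yields
\begin{equation*}
\rho_j(\pi_0)\left(\sum_a \tau_j(a,\pi_0)\pi_0(a) - F_j(\pi_0)\right) = 0.
\end{equation*}

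When $\rho_j(\pi_0) > 0$, I divide through by $\rho_j(\pi_0)$ to get the stated identity. Alternatively, one can appeal to \Cref{lemma:eq_diff}: for $\rho_j(\pi_0) > 0$ it gives $\tau_j(a,\pi_0) = F_j(\pi_0) + \delta_j(a,\pi_0)/\rho_j(\pi_0)$, so $\sum_a \pi_0(a) \tau_j(a,\pi_0) = F_j(\pi_0) + \rho_j(\pi_0)^{-1} \sum_a \pi_0(a) \delta_j(a,\pi_0)$, and the last sum equals the directional derivative of $F_j$ at $\pi_0$ in the direction $\sum_a \pi_0(a)(a - \pi_0) = 0$, which is zero.

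The only real subtlety is the edge case $\rho_j(\pi_0) = 0$, where equation (\ref{eq:rho_tau_char_j}) ceases to constrain $\tau_j(\cdot,\pi_0)$. I will note that in this case $\tau_j(\cdot,\pi_0)$ may be chosen freely within $\Delta(A)$ (cf.\ \Cref{lemma:eq_diff}, condition 4), and only enters any downstream expression (in particular the CDT+GGT utility and the expression in \Cref{lemma:main}) multiplied by $\rho_j(\pi_0) = 0$. Hence we may without loss of generality adopt the convention $\tau_j(a,\pi_0) = F_j(\pi_0)$ for every $a$ when $\rho_j(\pi_0) = 0$, under which $\sum_a \tau_j(a,\pi_0)\pi_0(a) = F_j(\pi_0)$ trivially. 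This convention is the one tacitly used throughout the paper.

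I expect this last point -- handling $\rho_j(\pi_0) = 0$ cleanly -- to be the only nontrivial part of the argument, and it is handled by the convention above rather than by any deeper fact.
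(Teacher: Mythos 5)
Your proof is correct and takes essentially the same approach as the paper: your primary route (substituting $\pi_0'=\pi_0$ into \Cref{eq:rho_tau_char_j} and letting $\eps\to 0$) is a marginally more direct version of the paper's computation, and your stated alternative via \Cref{lemma:eq_diff} is literally the paper's argument, which writes $\tau_j(a,\pi_0)=F_j(\pi_0)+\delta_j(a,\pi_0)/\rho_j(\pi_0)$ and observes that the $\pi_0$-weighted sum of the $\delta_j$ terms is the directional derivative in the zero direction. You also handle the $\rho_j(\pi_0)=0$ edge case exactly as the paper does, via the convention $\tau_j(a,\pi_0)=F_j(\pi_0)$, and you are right that this convention is tacit rather than stated in \Cref{defn:GGT_wts_transf}.
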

    \begin{proof}

    If $\rho_j = 0$, we have $\sum_{a}\tau_j(a, \pi_0)\pi_0(a)= \sum_{a}F_j(\pi_0)\pi_0(a) = F(\pi_0)$ as required. Otherwise, we have, applying Lemma \ref{lemma:equiv_wts_transf} and using differentiability of $F_j$:

    \begin{align*}
        \sum_{a} \tau_j(a, \pi_0)\pi_0(a) &= \sum_{a}\pi_0(a) F_j(\pi_0) + \sum_{a}\pi_0(a)\frac{\delta_j(a, \pi_0)}{\rho_j(\pi_0)}\\
        &= F_j(\pi_0) + \frac{1}{\rho_j(\pi_0)}\sum_a \pi_0(a)\lim_{\eps\rightarrow 0}\frac{F_j(\pi_0+\eps(a-\pi_0))-F_j(\pi_0)}{\eps}\\
        &= F_j(\pi_0) + \frac{1}{\rho_j(\pi_0)}\lim_{\eps\rightarrow 0}\frac{F_j(\pi_0+\eps(\pi_0-\pi_0))-F_j(\pi_0)}{\eps}\\
        &= F_j(\pi_0).
    \end{align*}
    \end{proof}

\subsection{Proofs for Section \ref{sec:relating_to_sims}}
\localsamplemodel*
\begin{proof}\label{pf:localsamplemodel}
    From Lemma \ref{lemma:g_equiv}, we immediately get that, for all $\pi_0$ and $i$:

    \begin{align*}
    \forall a: F_i(\pi_0+\eps(a-\pi_0)) &= F_i(\pi_0)+ \eps N_i\left(\frac{1}{N_i}\sum_{k=1}^{N_i} \E{g(A_1,\ldots,A_{N_i} \mid A_{-k}\simiid \pi_0, A_k = a} - F(\pi_0)\right) + o(\eps).
    \end{align*}
    
    Then, we have that $F_i$ is differentiable on the simplex for all $i$, and since $\frac{1}{N_i}\sum_k \E{g_i(A_1,\ldots, A_{N_i})\mid A_{-k} \simiid \pi_0, A_k = a}$ is a valid probability distribution, we see that we may choose GGT weights $\rho_i = N_i$ and transformation functions $\tau_i(a, \pi_0) = \sum_{k= 1}^{N_i} \frac{1}{N_i}\E{g_i(A_1,\ldots, A_{N_i})\mid A_{-k} \simiid \pi_0, A_k = a, \pi_0}$. From the definition, the GGT credences and transformation functions are then identical to the LSGT credences and transformation functions.

    In the case of GSGT beliefs, everything is the same except the $g_i$s and $N_i$s don't depend on the policy $\pi_0$. In this case, GSGT beliefs and LSGT beliefs are identical.
\end{proof}

\Nrho*
\begin{proof}\label{pf:Nrho}
    Suppose that $A = \{0,1\}$ so that we may express policies $\pi_0$ as numbers $p\in [0,1]$.
    
    I.e., write $F(p)$ for $F(1\mid p)$, and let
    \begin{equation*}
        F(p) = \frac{16p^4}{1+ 16p^4}.
    \end{equation*}
    Then $F$ is everywhere differentiable with
    \begin{equation*}
        F'(p) = \frac{64p^3}{1+ 16p^4} - \frac{64p^3 \times 16p^4}{(1+ 16p^4)^2} = \frac{64p^3}{(1+16p^4)^2}.
    \end{equation*}
    Hence, recalling that $\delta(1\mid a, p) \defeq \lim_{\eps\rightarrow 0} \frac{F(p+\eps(a-p))}{\eps}$ we have $\delta(1\mid1,p) = (1-p)F'(p)$ and $\delta(1\mid0,p) = -p F'(p)$. Note that   $\delta(0\mid a,p)$ is then just $-\delta(1\mid a,p)$. Then, with $\Gamma$ is defined as in \Cref{lemma:equiv_wts_transf}, i.e. $\Gamma(p) \defeq \max_{a,a':F(a'\mid p)> 0}\frac{-\delta(a'\mid a,p)}{F(a'\mid p)}$, we have, for $p>0$:

    \begin{equation*}
        \Gamma(p) = \max_{a,a'} \frac{-\delta(a'\mid a,p)}{F(a'\mid p)}
        = \max\left(\frac{-\delta(1\mid 0,p)}{F(p)},\frac{-\delta(0\mid 1,p)}{1-F(p)}\right) = \max\left(\frac{pF'(p)}{F(p)},\frac{(1-p)F'(p)}{1-F(p)}\right).
    \end{equation*}

    Now, for $p=\frac{1}{2}$, we have $F(p)=\frac{1}{2}$ and $F'(p)=2$. Thus, we obtain $\Gamma(p) = \max(\frac{2/2}{1/2}, \frac{2/2}{1/2}) = 2$. Then, we may take $\rho(p) = \Gamma(p)$, by \Cref{lemma:equiv_wts_transf}.

    Now, suppose that we could write $F$ at $\tfrac{1}{2}$ in the desired form for some $N$ with $\E{N} \leq \rho = 2$. I.e.,
    \begin{equation*}
        F(\tfrac{1}{2} + \eps(a-\tfrac{1}{2})) = \sum_{m=1}^\infty \prob{N=m} \E{g_m(A_1,\ldots A_m) \mid A_{1:m}\simiid \tfrac{1}{2} + \eps(a-\tfrac{1}{2})} + o(\eps).
    \end{equation*}
    WLOG assume $g$ is invariant under permutations of its arguments. Then, applying \Cref{lemma:g_equiv}, and observing that we must have $F(\tfrac{1}{2}) = \E{g_N(A_1,\ldots, A_N)\mid A_{1:N} \simiid \tfrac{1}{2}}$, we obtain:

    \begin{align*}
        F(\tfrac{1}{2} + \eps(a-\tfrac{1}{2})) &= \sum_{m=1}^\infty \prob{N=m}\eps m\left(\E{g_m(a,A_{2:m}) \mid A_{2:m}\simiid \tfrac{1}{2}} - \E{g_m(A_{1:m}) \mid A_{1:m}\simiid \tfrac{1}{2}}\right) \\
        &\qquad+\sum_{m=0}^\infty \prob{N=m}\E{g_m(A_{1:m}) \mid A_{1:m}\simiid \tfrac{1}{2}} + o(\eps)\\
        &= F(\tfrac{1}{2}) + o(\eps)+ \sum_{m=1}^\infty \prob{N=m}\eps m\left(\E{g_m(a,A_{2:m}) \mid A_{2:m}\simiid \tfrac{1}{2}} - \E{g_m(A_{1:m}) \mid A_{1:m}\simiid \tfrac{1}{2}}\right)
    \end{align*}
    so that
    \begin{align*}
        \delta(1\mid a,p) &= \sum_{m=1}^\infty \prob{N=m} m\left(\E{g_m(a,A_2,\ldots A_m) \mid A_{2:m}\simiid \tfrac{1}{2}} - \E{g_m(A_1,A_2,\ldots A_m) \mid A_{1:m}\simiid \tfrac{1}{2}}\right).
    \end{align*}
    Then, setting $a=1$, we get $\delta(1\mid 1,p) = 1$, and so:
    \begin{align*}
        1 &= \sum_{m=1}^\infty m\prob{N=m} \left(\E{g_m(1,A_2,\ldots A_m) \mid A_{2:m}\simiid \tfrac{1}{2}} - \E{g_m(A_1,A_2,\ldots A_m) \mid A_{1:m}\simiid \tfrac{1}{2}}\right)\\
        &= \sum_{m=1}^\infty \tfrac{1}{2}m\prob{N=m} \left(\E{g_m(1,A_2,\ldots A_m) \mid A_{2:m}\simiid \tfrac{1}{2}} - \E{g_m(0,A_2,\ldots A_m) \mid A_{2:m}\simiid \tfrac{1}{2}}\right)\\
        & \leq \sum_{m=1}^\infty \tfrac{1}{2}m\prob{N=m}= \tfrac{1}{2}\E{N}\leq \rho/2 = 1.
    \end{align*}

    We have equality only when for all $m$ with $\prob{N=m}>0$:
    \begin{equation*}
        \E{g_m(1,A_2,\ldots A_m) \mid A_{2:m}\simiid \tfrac{1}{2}} - \E{g_m(0,A_2,\ldots A_m) \mid A_{2:m}\simiid \tfrac{1}{2}} = 1.
    \end{equation*}
    I.e., for $a= 0, 1$:
    \begin{equation*}
        \E{g_m(a,A_2,\ldots A_m) \mid A_{2:m}\simiid \tfrac{1}{2}} = a.
    \end{equation*}
    But this can only hold if $m = 1$, since, otherwise we would need $\E{g_m(0,1,A_3\ldots A_m) \mid A_{3:m}\simiid \tfrac{1}{2}}$ to be both 0 and 1.
    
    However, if $N$ is 1 with probability 1, $\E{N} = 1$, and so we would then have $1 \leq \tfrac{1}{2}\E{N} = \tfrac{1}{2}$, a contradiction.
\end{proof}

\subsection{Proofs for Section \ref{sec:limitations}}
\GGTfaithful*
\begin{proof}\label{pf:GGTfaithful}
    That GGT is not fanciful follows immediately from the definition.
    Now, note that when $F_j(\pi_0) = \pi_0$, the derivative of $F_j$ is not zero anywhere, and so we must have $\rho_j > 0$. Thus, when $\E{\#(j)\mid \pi_0} > 0$ we have $P_{GGT}(j\mid \pi_0)> 0$, from the definition of GGT. Then, we must have $\tau_j(a\mid \pi_0) = \pi_0 + \frac{1}{\rho_j}(a-\pi_0)$ (and $\rho_j \geq 1$), and so $\tau_j(a\mid\pi_0)-\tau_j(a'\mid \pi_0) = \frac{1}{\rho_j}(a-a')$, and we are done.
\end{proof}
\faithfulimposs*
\begin{proof}\label{pf:faithful_imposs}

    Consider the following decision problem. An agent is given a bottle of wine. The day before, Newcomb's Demon predicted whether the agent would drink the wine with positive probability. If the Demon predicted the agent would do so, it poisoned the wine. The agent prefers drinking unpoisoned wine to not drinking wine (utility 1 vs 0), but strongly prefers not to be poisoned (utility -100). We may formalise this decision problem as follows:
    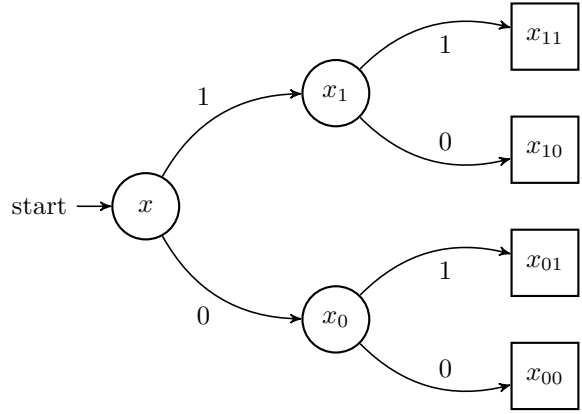
\begin{figure}[h]
    \centering
	\begin{tikzpicture}[->, >=stealth', auto, semithick, node distance= 0.75cm and 2.75cm, on grid,
	terminal/.style ={},
	accepting/.style = {shape = rectangle},
	every state/.style = {fill=none,draw=black,thick,text=black},
	every text node part/.style={align=center}]
	\node[state,initial]    (s) {$x$};
	\node[state]    (s1)[above right = 1.5cm and 2.5cm of s] {$x_1$};
	\node[state]    (s0)[below right = 1.5cm and 2.5cm of s]   {$x_0$};
    \node[state,accepting]    (s11)[above right = of s1]{$x_{11}$};
    \node[state,accepting]    (s10)[below right = of s1]{$x_{10}$};
    \node[state,accepting]    (s01)[above right = of s0]{$x_{01}$};
    \node[state,accepting]    (s00)[below right = of s0]{$x_{00}$};
	\path
	(s) edge[bend left, above left]	node{$1$}	(s1)
	(s) edge[bend right, below left]		node{$0$}	(s0)

	(s1) edge[bend left, below right] node{$1$} (s11)
	(s1) edge[bend right] node{$0$} (s10)
	(s0) edge[bend left, below right] node{$1$} (s01)
	(s0) edge[bend right] node{$0$} (s00);
	\end{tikzpicture}
	\caption{A graph of the decision problem for the proof of \Cref{prop:faithful_imposs}. The nodes show the states and the edges show the transitions (all deterministic).} 	\label{fig:graph}
\end{figure}
\begin{itemize}[nolistsep]
    \item We have non-terminal states $S-S_T = \{x,x_0,x_1\}$, and terminal states $S_T = \{x_{00},x_{01},x_{10},x_{11}\}$. We may partition the non-terminal states as $S_1 = \{x\}$ and $S_2 = \{x_0,x_1\}$.
    \item The initial distribution is deterministic, $P_0(x)=1$.
    \item The index function is given by $i(x) = 2$, $i(x_1) = i(x_0) =1$ (that is, the predictor is at state $x$, and the copy of the agent is at state $x_0$ or $x_1$).
    \item The action set is $A=\{0,1\}$, with 1 corresponding to drinking the wine (or poisoning it, for the predictor), and 0 to not drinking/poisoning it.
    \item The transition function is deterministic with $T(x,a) = x_a$ and $T(x_a,a') = x_{aa'}$.
    \item The dependence functions are, written as a function from $[0,1]$ to $[0,1]$ for convenience, $F_1(p) = p$ (for the copy of the agent) and $F_2(p) = I[p> 0]$ (for the predictor).
    \item The utility function is $u(x_{11}) = -100$ (drinking the poisoned wine), $u(x_{10}) = 0$ (not drinking the poisoned wine), $u(x_{01}) = 1$ (drinking unpoisoned wine) and $u(x_{00}) = 0$ (not drinking unpoisoned wine).
\end{itemize}

    Now, the \textit{ex ante} optimal policy is to never drink the wine $p = 0$, since this is the unique policy that doesn't result in a utility of $-100$ (rather in a utility of $0$). Suppose then that the agent expects to follow policy $p = 0$. If the agent is at state $x$, it will be neutral between all actions, regardless of the transformation function, since its copy will later drink the wine and receive utility 0 whatever the predictor does.

    Meanwhile, since $X$ is non-fanciful and faithful, it assigns positive credence to state $x_0$, and zero to state $x_1$. Then the overall CDT+$X$ utility of drinking the wine compared to not drinking the wine is $P_X(x_0)\left(\Ex^{x_0}_{X}[{u\mid \Do(1), p=0}] - \Ex_{X}^{x_0}[{u\mid \Do(0), p=0}]\right) = P_X(x_0)(\tau(1,0)_1-\tau(0,0)_1) > 0$, since $X$ is faithful. Hence, $p = 0$ is not a CDT+$X$ policy, so we are done.
\end{proof}

\pragmaticimposs*
\begin{proof}\label{pf:pragmaticimposs}
    Let $D_n$ be the following simple decision problem:

    \begin{figure}[h]
    \centering
	\begin{tikzpicture}[->, >=stealth', auto, semithick, node distance= 0.75cm and 2.75cm, on grid,
	terminal/.style ={},
	accepting/.style = {shape = rectangle},
	every state/.style = {fill=none,draw=black,thick,text=black},
	every text node part/.style={align=center}]
	\node[state,initial]    (s) {$s$};
	\node[state]    (s1)[above right = 1cm and 2cm of s] {$s_1$};
	\node[state]    (s0)[below right = 1cm and 2cm of s]   {$s_0$};
    \node[state,accepting]    (s11)[above right =0.75cm and 2cm of s1]{$2$};
    \node[state,accepting]    (s10)[below right =1cm and 2cm of s1]{$1$};
    \node[state,accepting]    (s00)[below right =0.75cm and 2cm of s0]{$0$};
	\path
	(s) edge[bend left, above left]	node{$1$}	(s1)
	(s) edge[bend right, below left]		node{$0$}	(s0)

	(s1) edge[bend left] node{$1$} (s11)
	(s1) edge[bend right, out = -30, in =195] node{$0$} (s10)
	(s0) edge[bend left, out = 30, in =-195] node[below]{$1$} (s10)
	(s0) edge[bend right] node{$0$} (s00);
	\end{tikzpicture}
	\caption{A graph of $D_n$ for the proof of \Cref{prop:pragmaticimposs}. The nodes show the states and the edges show the transitions (all deterministic).} 	\label{fig:graphprag}
\end{figure}
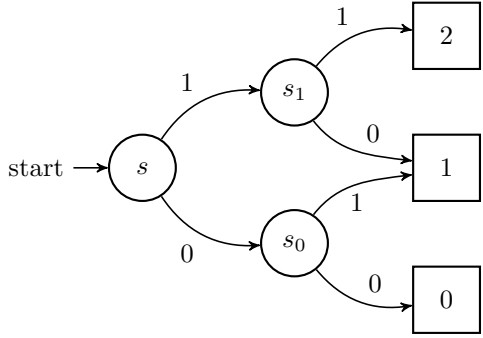
\begin{itemize}
    \item We have non-terminal states $S-S_T = \{s,s_0,s_1\}$, and terminal states $S_T = \{0,1,2\}$. We may partition the non-terminal states as $S_1 = \{s\}$ and $S_2 = \{s_0,s_1\}$.
    \item The initial distribution is deterministic, $P_0(s)=1$.
    \item The index function is 1 everywhere.
    \item The action set is $A=\{0,1\}$.
    \item The transition function is deterministic with $T(s,a) = s_a$ and $T(s_i,a) = i+a$.
    \item The sole dependence function is, written as a function from $[0,1]$ to $[0,1]$ for convenience, $F(p) = \frac{\floor{np}}{n}$. (See \Cref{fig:graph_dep_2}.)
\end{itemize}
    \begin{figure}[h]
\centering
\begin{tikzpicture}
\begin{axis}[
title={$F(p)=\frac{\floor{np}}{n}$},
width = 3cm,
height = 3cm,
 xlabel={$p$},
ylabel={$F(p)$},
xmin=0, xmax=1,
ymin=0, ymax=1,
ytick={1/5,2/5,3/5,4/5,1},
yticklabels ={$\sfrac{1}{5}$,$\sfrac{2}{5}$,$\sfrac{3}{5}$,$\sfrac{4}{5}$,$1$},
xtick={0,1/5,2/5,3/5,4/5,1},
xticklabels ={0,$\tfrac{1}{5}$,$\tfrac{2}{5}$,$\tfrac{3}{5}$,$\tfrac{4}{5}$,$1$},
enlargelimits=false,
scale only axis=true
]
\addplot[color=red,domain=0:1,samples = 1000]{floor(5*x)/5};

\end{axis}
\end{tikzpicture}
\caption{The dependence function for $D_n$.}\label{fig:graph_dep_2}
\end{figure}
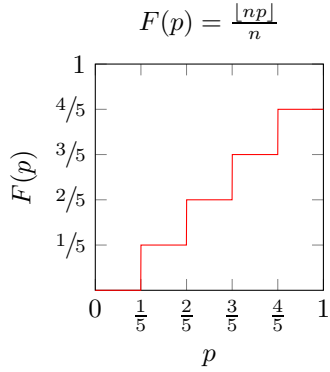
Note that $\norm{F-id}_\infty = \frac{1}{n} \rightarrow 0$ as $n\rightarrow \infty$, as required. We will not specify fully the utility function yet, but instead set $u(0) = 0$, $u(1) = 1$ and $u(2)=u_2$.

Now, the \textit{ex ante} expected utility is given by

\begin{equation*}
    \E{u\mid p} = 2F(p)(1-F(p)) + F(p)^2u_2 = F(p)^2(u_2-2) + 2F(p).
\end{equation*}

Let $X$ be a generalised theory of self-locating beliefs such that the \textit{ex ante} optimal policy in $D_n$ is always a CDT+$X$ policy.

We shall begin by finding, for each $p\in (0,1)$, a range of values for $u_2$ for which $p$ is the \textit{ex ante} optimal policy, and hence for which $p$ is a CDT+$X$ policy. Then, for $p\in (0,1)$, by varying $u_2$ within the range that $p$ is optimal, we will deduce that $\tau(a,p)$ must be constant in $a$. From this, the result will follow.

Note that, for any $u_2$, $F(p)^2(u_2-1)+2F(p)$ has at most one stationary point as a function of $F(p)$. Thus, for $p\in [\tfrac{k}{n},\tfrac{k+1}{n})$, where $k \in \{0,1,\ldots, n-1\}$, to be \textit{ex ante} optimal, it suffices that $\E{u\mid p}\geq \E{u\mid q}$, for $q=\tfrac{k+1}{n}$ and, if $k>0$, also for $q = \tfrac{k-1}{n}$.

Turning first to $k=0$, $p \in [0,\tfrac{1}{n})$ and $F(p) = 0$, we have then that $p$ is \textit{ex ante} optimal when
\begin{equation*}
    \frac{1}{n^2}(u_2-2)+\frac{2}{n} \leq 0\iff 2n + u_2 - 2 \leq 0\iff u_2 \leq -2(n-1).
\end{equation*}

Meanwhile, for $k\in \{1,\ldots, n-1\}$, $p\in [\tfrac{k}{n},\tfrac{k+1}{n})$ is then optimal when

\begin{align*}
    &\frac{k^2}{n^2}(u_2-2) + \frac{2k}{n} \geq \frac{(k-1)^2}{n^2}(u_2-2) + \frac{2(k-1)}{n} \qquad&&\text{and}\qquad \frac{k^2}{n^2}(u_2-2) + \frac{2k}{n} \geq \frac{(k+1)^2}{n^2}(u_2-2) + \frac{2(k+1)}{n}\\
    \iff &\frac{2k-1}{n^2}(u_2-2) +\frac{2}{n} \geq 0 \qquad&&\text{and}\qquad \frac{2k+1}{n^2}(u_2-2) + \frac{2}{n} \leq 0\\
    \iff &(2k-1)u_2 - 2(2k-1) \geq -2n \qquad&&\text{and}\qquad (2k+1)u_2 - 2(2k+1) \leq -2n\\
    \iff &u_2 \geq 2-\frac{2n}{2k+1} \qquad&&\text{and}\qquad u_2 \leq 2- \frac{2n}{2k+1}\\
    \iff &u_2 \in \left[2-\frac{n}{k-\tfrac{1}{2}},2-\frac{n}{k+\tfrac{1}{2}}\right].
\end{align*}

Now, assume that given policy $p \in [\tfrac{k}{n},\tfrac{k+1}{n})$, $X$ assigns overall credence $q(p)$ to the dependant at the other state choosing 1. I.e., $q(p) = P_X(s_1\mid p) + P_X(s,1\mid p)$. Let $\tau(a,p)\in [0,1]$ be the transformation function at $p$. Then the CDT+$X$ utility of taking action 1 rather than 0, given that the agent believes it will follow $p$ is given by:

\begin{align*}
    \Ex_X[u\mid \Do(p'),p] - \Ex_X[u\mid \Do(p),p] &= (\tau(1,p)-\tau(0,p))\left(q(p)(u_2-1)+(1-q(p))\right)\\
    &= (\tau(1,p)-\tau(0,p))\left(q(p)u_2 + 1-2q(p)\right).
\end{align*}
For $p\in (0,1)$, thist must be zero, for all $u_2 \in \left[2-\frac{n}{k-\tfrac{1}{2}},2-\frac{n}{k+\tfrac{1}{2}}\right]$. Now, note that $q(p)u_2+1 -2q(p)$ is zero only when $q(p)\neq 0$ and $u_2= 2 -\tfrac{1}{q(p)}$. Thus, for any $q(p)$, we may choose $u_2$ in the required interval such that $q(p)u_2+1 -2q(p)\neq 0$. We deduce that we must have $\tau(1,p)=\tau(0,p)$. Hence, for any assignment of utilities, all mixed policies must be CDT+$X$ policies.
\end{proof}

\end{document}